\title{Model-free Representation Learning and Exploration in Low-rank MDPs}
\date{}
\author{Aditya Modi$^{\dagger,1}$}
\author{Jinglin Chen$^{\dagger,2}$}
\author{Akshay Krishnamurthy$^3$}
\author{Nan Jiang$^2$}
\author{Alekh Agarwal$^4$\thanks{$^\dagger$equal contribution.$^\ddagger$Part of this work was done while AA was at Microsoft Research.\\admodi@umich.edu, jinglinc@illinois.edu, akshaykr@microsoft.com, nanjiang@illinois.edu, alekha@microsoft.com}}
\affil{$^1$Microsoft, $^2$University of Illinois at Urbana-Champaign, $^3$Microsoft Research, $^4$Google Research$^\ddagger$}
\newcommand{\alg}{\textsc{Moffle}\xspace}
\newcommand{\flambe}{\textsc{Flambe}\xspace}
\newcommand{\olive}{\textsc{Olive}\xspace}
\newcommand{\expl}{\textsc{Explore}\xspace}
\newcommand{\fqi}{\textsc{FQI}\xspace}
\newcommand{\fqe}{\textsc{FQE}\xspace}
\newcommand{\R}{\mathbb{R}}
\newcommand{\Ndim}{\mathrm{Ndim}}
\newcommand{\Pdim}{\mathrm{Pdim}}
\newcommand{\VCdim}{\mathrm{VCdim}}
\newcommand{\veps}{\varepsilon}
\newcommand{\unif}{\mathrm{unif}}
\newcommand{\etamin}{\eta_{\mathrm{min}}}
\newcommand{\be}[1]{\mathrm{b\_err}\rbr{#1}}
\newcommand{\phih}{\phi_h(x_h,a_h)}
\newcommand{\hphih}{\hat{\phi}_h(x_h,a_h)}
\newcommand{\sphih}{\phi^*_h(x_h,a_h)}
\newcommand{\hphitilH}{\hat \phi_{\tilde H-1}(x_{\tilde H-1},a_{\tilde H-1})}
\newcommand{\rhoxa}{\rho_{h-3}^{+3}} 
\newcommand{\rhox}{\rho_{h-3}^{+2}} 
\newcommand{\rhoxnxt}{\rho_{h-2}^{+2}} 
\newcommand{\poly}{\mathrm{poly}}
\newcommand{\defeq}{:=}
\newcommand{\ca}{c_1}
\newcommand{\cb}{c_2}
\newcommand{\cc}{c_3}
\newcommand{\cd}{c_4}
\newcommand{\ce}{c_5}
\newcommand{\cf}{c_6}
\newcommand{\repbar}{\bar \phi}
\newcommand{\rephat}{\hat \phi}
\newcommand{\downstream}{\mathrm{plan}}
\newtheorem*{theorem*}{Theorem}
\newtheorem*{lemma*}{Lemma}
\newtheorem*{corollary*}{Corollary}
\Crefname{equation}{Eq.}{Eqs.}
\Crefname{assumption}{Assumption}{Assumptions}
\Crefname{condition}{Condition}{Conditions}
\definecolor{darkred}{rgb}{0.7,0,0}
\definecolor{darkgreen}{rgb}{0,0.5,0}
\definecolor{orange}{rgb}{0.7,0.4,0}
\definecolor{purple}{rgb}{0.8,0.0,0.8}
\newcommand{\kibitz}[2]{\ifnum\Comments=1{\textcolor{#1}{\textsf{\footnotesize #2}}}\fi}
\begin{document}
\maketitle

\begin{abstract}
    The low-rank MDP has emerged as an important model for studying representation learning and exploration in reinforcement learning.  With a known representation, several model-free exploration strategies exist. In contrast, all algorithms for the unknown representation setting are model-based, thereby requiring the ability to model the full dynamics. In this work, we present the first model-free representation learning algorithms for low-rank MDPs. The key algorithmic contribution is a new minimax representation learning objective, for which we provide variants with differing tradeoffs in their statistical and computational properties. We interleave this representation learning step with an exploration strategy to cover the state space in a reward-free manner. The resulting algorithms are provably sample efficient and can accommodate general function approximation to scale to complex environments.
\end{abstract}

\section{Introduction}

A key driver of recent empirical successes in machine learning is the
use of rich function classes for discovering transformations of
complex data, a sub-task referred to as \emph{representation
  learning}. For example, when working with images or text, it is
standard to train extremely large neural networks in a self-supervised
fashion on large datasets, and then fine-tune the network on
supervised tasks of interest. The representation learned in the first
stage is essential for sample-efficient generalization on the
supervised tasks. Can we endow Reinforcement Learning (RL) agents with
a similar capability to discover representations that provably enable
sample efficient learning in downstream tasks?

In the empirical RL literature, representation learning often occurs
implicitly simply through the use of deep neural networks, for example
in DQN~\citep{mnih2015human}. Recent work has also
considered more explicit representation learning via auxiliary losses
like inverse dynamics~\citep{pathak2017curiosity}, the use of explicit
latent state space
models~\citep{hafner2019learning,sekar2020planning}, and via
bisimulation
metrics~\citep{gelada2019deepmdp,zhang2020learning}. Crucially, these
explicit representations are again often trained in a way that they
can be reused across a variety of related tasks, such as domains
sharing the same (latent state) dynamics but differing in reward
functions.

While these works demonstrate the value of representation
learning in RL, theoretical understanding of such approaches is
limited. Indeed obtaining sample complexity guarantees is quite subtle
as recent lower bounds demonstrate that various representations are
not useful or not
learnable~\citep{modi2019sample,du2019good,van2019comments,lattimore2019learning,hao2021online}.
Despite these lower bounds, some prior theoretical works do provide
sample complexity guarantees for non-linear function approximation~\citep{jiang2017contextual,sun2018model,osband2014model,wang2020provably,yang2020bridging},
but these approaches do not obviously enable generalization to related tasks. 
More direct representation learning approaches were recently
studied
in~\citet{du2019provably,misra2020kinematic,agarwal2020flambe}, who
develop algorithms that provably enable sample efficient learning in any downstream task that
shares the same dynamics.

Our work builds on the most general of the direct representation
learning approaches, namely the \flambe algorithm
of~\citet{agarwal2020flambe}, that finds features under which the
transition dynamics are nearly linear.  The main limitation of \flambe
is the assumption that the dynamics can be described in a parametric
fashion. 
In contrast, we take a model-free approach to this problem, thereby
accommodating much richer 
dynamics. 

Concretely, we study the low-rank MDP,  
in which the transition operator $T: (x,a)
\to \Delta(\Xcal)$ admits a low-rank factorization as $T(x' \mid x,a)
= \inner{\phi^*(x,a)}{\mu^*(x')}$ for feature maps $\phi^*,\mu^*$. For
model-free representation learning, we assume access to a function
class $\Phi$ containing the underlying feature map $\phi^*$. This is a
much weaker inductive bias than prior work in the ``known
features'' setting where $\phi^*$ is known in
advance~\citep{jin2019provably,yang2019reinforcement,agarwal2020pc}
and the model-based setting~\citep{agarwal2020flambe} that assumes
realizability for both $\mu^*$ and $\phi^*$.

While our model-free setting captures richer MDP models, addressing
the intertwined goals of representation learning and exploration is
much more challenging. In particular, the forward and inverse dynamics
prediction problems used in prior works are no longer admissible under
our weak assumptions. 
Instead, we address these challenges with a new representation learning
procedure based on the following insight: for any function $f:\Xcal
\to \RR$, the Bellman backup of $f$ is a linear function in the
feature map $\phi^*$. This leads to a natural minimax objective, where
we search for a representation $\hat{\phi}$ that can linearly
approximate the Bellman backup of all functions in some
``discriminator'' class $\Fcal$. Importantly, the discriminator class
$\Fcal$ is induced directly by the class $\Phi$, so no additional
realizability assumptions are required.  We also provide an
incremental approach for expanding the discriminator set, which leads
to a more computationally practical variant of our algorithm. 
The two algorithms reduce to minimax optimization problems over non-linear function classes.  While such problems can be solved empirically with modern deep learning libraries, they do not come with rigorous computational guarantees. To this end, we further show that when $\Phi$ is efficiently enumerable, our optimization problems can be reduced to eigenvector computations, which leads to provable computational efficiency.\footnote{For the enumerable case, our algorithm collects an exploratory dataset, which is sufficient for downstream planning but requires the entire feature class $\Phi$. See \pref{sec:enumerable} for more details.}  

\paragraph{Summary of contributions} 
Our main contributions and organization of the rest of this paper is summarized below:
\begin{itemize}[leftmargin=*, itemsep=0pt]
    \item In \pref{sec:problem_setting}, we formally describe the problem setting of this paper. The related work and comparison with existing literature is discussed in \pref{sec:related_literature}.
    \item In \pref{sec:algorithm}, we present our main algorithm \alg which interleaves the exploration and representation learning components. We further describe how the learned representation can be used for planning in downstream tasks by using a standard offline planning algorithm, namely, \fqi.
    \item In \pref{sec:min_max_min_rep_learn}, we present our novel representation learning objective for low-rank MDPs, a min-max-min optimization problem defined using the feature class $\Phi$. A sample complexity result is then presented for \alg under a min-max-min computational oracle assumption.
    \item To address the computational tractability of our representation learning objective, we propose a computationally friendly iterative greedy approach in \pref{sec:greedy_selection}. We state a formal guarantee on the iteration complexity of this approach and show that the resulting instance of \alg is provably sample efficient.
    \item In \pref{sec:enumerable}, we show that for the special case of enumerable feature classes, \alg can be used for sample efficient reward-free exploration and that the main representation learning objective can be reduced to a computationally tractable eigenvector computation problem.
    \item In \pref{sec:proof_main}, we give a proof outline of our main results along with the complete proofs for each instantiation of \alg and, finally, conclude in \pref{sec:conclusion}.
    \item The following supporting results are delegated to the appendix thereafter: (i) details and guarantees for an elliptical planning algorithm (\pref{app:FQI_ellip_planning}), (ii) deviation bounds for our main results (\pref{app:supp_dev_bounds_main}), (iii) sample complexity results for \fqi planning and \fqe methods (\pref{app:fqi} and \pref{app:fqe}), and (iv) auxiliary results e.g., deviation bounds for regression with squared loss (\pref{app:aux_lemmas}). 
\end{itemize}

\section{Problem Setting}
\label{sec:problem_setting}
We consider an episodic MDP $\Mcal$ with a state space $\Xcal$, a finite action space $\Acal = \{1,\ldots,K\}$ and horizon $H$. In each episode, an agent generates a trajectory $\tau = (x_0,a_0,x_1,\ldots, x_{H-1},$ $a_{H-1},x_H)$, where (i) $x_0$ is a starting state drawn from some initial distribution, (ii) $x_{h+1} \sim T_h(\cdot \mid x_h,a_h)$, and (iii) the actions are chosen by the agent according to some non-stationary policy $a_h \sim \pi(\cdot \mid x_h)$. Here, $T_h$ denotes the (possibly non-stationary) transition dynamics $T_h: \Xcal \times \Acal \rightarrow \Delta(\Xcal)$ for each timestep. For notation, $\pi_h$ denotes an $h$-step policy that chooses actions $a_0,\ldots,a_h$. We also use $\EE_{\pi}[\cdot]$ and $\PP_{\pi}[\cdot]$ to denote the expectations over states and actions and probability of an event respectively, when using policy $\pi$ in $\Mcal$. Further, we use $[H]$ to denote $\{0,1,\ldots, H-1\}$. 

We consider learning in a low-rank MDP defined as:

\begin{definition}
\label{def:lowrank}
An operator $T: \Xcal \times \Acal \rightarrow \Delta(\Xcal)$ admits a low-rank decomposition of dimension $d$ if there exists functions $\phi^*: \Xcal \times \Acal \rightarrow \R^d$ and $\mu^*: \Xcal \rightarrow \R^d$ such that: $\forall x, x' \in \Xcal , a \in \Acal: \, T(x'\mid x,a) = \inner{ \phi^*(x,a)}{\mu^*(x')}$,
and additionally $\|\phi^*(x,a)\|_2 \le 1$ and for all $g: \Xcal \rightarrow [0,1]$, $\left\|\int g(x) \mu^*(x) dx\right\|_2 \le \sqrt{d}$.
We assume that $\Mcal$ is low-rank with embedding dimension $d$, i.e., for each $h \in [H]$, the transition operator $T_h$ admits a rank-$d$ decomposition. 
\end{definition}

We denote the embedding for $T_h$ by $\phi^*_h$ and $\mu^*_h$. In addition to the low-rank representation, we also consider a latent variable representation of $\Mcal$, as defined in~\citet{agarwal2020flambe}, as follows: 
\begin{definition}
\label{def:latent_var}
The latent variable representation of a transition operator $T: \Xcal \times \Acal \rightarrow \Delta(\Xcal)$ is a latent space $\Zcal$ along with functions $\psi: \Xcal \times \Acal \rightarrow \Delta(\Zcal)$ and $\nu: \Zcal \rightarrow \Delta(\Xcal)$, such that $T(\cdot \mid x,a) = \int \nu(\cdot \mid z)\psi(z \mid x,a)dz$. The latent variable dimension of $T$, denoted $d_{\mathrm{LV}}$ is the cardinality of smallest latent space $\Zcal$ for which $T$ admits a latent variable representation. In other words, this representation gives a non-negative factorization of  $T$.
\end{definition}
When state space $\Xcal$ is finite, all transition operators $T_h(\cdot \mid x,a)$ admit a trivial latent variable representation. More generally, the latent variable representation enables us to augment the trajectory $\tau$ as: $\tau = \{x_0, a_0, z_1, x_1, \ldots, z_{H-1}, x_{H-1}, a_{H-1}, z_H, x_H\}$, where $z_{h+1} \sim \psi_h(\cdot\mid x_{h}, a_{h})$ and $x_{h+1} \sim \nu_{h}(\cdot \mid z_{h+1})$. In general we neither assume access
to nor do we learn this representation, and it is solely used to reason about the following reachability assumption:
\begin{assum}[Reachability]
\label{assum:reachability}
There exists a constant $\etamin> 0$, such that $\forall h \in [H],  z \in \Zcal_{h+1}:\, \max_\pi \PP_\pi \sbr{z_{h+1} = z} \ge \etamin$.
\end{assum}
\pref{assum:reachability} posits that in MDP $\Mcal$, for each factor (latent variable) at any level $h$, there exists a policy which reaches it with a non-trivial probability. This generalizes the reachability of latent states assumption from prior block MDP results~\citep{du2019provably,misra2020kinematic}.
Note that, exploring all latent states is still non-trivial, as a policy which chooses actions uniformly at random may hit these latent states with an exponentially small probability.

\paragraph{Representation learning in low-rank MDPs} 
We consider MDPs where the state space $\Xcal$ is large and the agent must employ function approximation to enable efficient learning. Given the low-rank MDP assumption, we grant the agent access to a class of representation functions mapping a state-action pair $(x,a)$ to a $d$-dimensional embedding. Specifically, the feature class is $\Phi=\bigcup_{h\in[H]}\Phi_h$, where each mapping $\phi_h \in \Phi_h$ is a function $\phi_h: \Xcal \times \Acal \rightarrow \R^d$. The feature class can now be used to learn $\phi^*$ and exploit the low-rank decomposition for efficient learning\footnote{Sometimes we drop $h$ in the subscript for brevity.}. We assume that our feature class $\Phi$ is rich enough:
\begin{assum}[Realizability]
\label{assum:realizability}
For each $h \in [H]$, we have $\phi^*_h \in \Phi_h$. Further, we assume that $\forall \phi_h \in \Phi_h, \forall (x,a) \in \Xcal \times \Acal$, $\|\phi_h(x,a)\|_2 \le 1$.
\end{assum}
\paragraph{Learning goal}
We focus on the problem of representation learning \citep{agarwal2020flambe} in low-rank MDPs where the agent tries to learn good enough features and collect a suitable dataset 
that enables offline optimization of any given reward in downstream tasks instead of optimizing a fixed and explicit reward signal. We consider a model-free setting and we provide this \emph{reward-free} learning guarantee for any reward function $R=R_{0:H-1}$ ($R_{0:H-1}\coloneqq \{R_0,\ldots R_{H-1}\}$ with $R_h:\Xcal\times\Acal\rightarrow [0,1],\forall h\in[H]$) in a bounded reward class $\Rcal$.\footnote{The subscript $i$:$j$ for any $i\le j$ is also used similarly in other variables.} Specifically, for such a bounded reward function $R$, the learned features $\{\bar \phi_h\}_{h \in [H]}$ and the collected data should allow the agent to compute a near-optimal policy $\pi_R$, such that \mbox{$v_R^{\pi_R} \ge v_R^* - \veps$}, where $v_R^\pi := \EE_\pi\sbr{\sum_{h=0}^{H-1} R_h(x_h,a_h)}$ is the expected return of policy $\pi$ under reward function $R$, and $v_R^*:= \max_\pi v_R^\pi$ is the optimal expected return for $R$. 
We desire (w.p.~$\ge 1-\delta$) sample complexity bounds which are \[\poly (d,H,K,1/\etamin, 1/\veps, \log (|\Phi|), \log (|\Rcal|), \log (1/\delta)).\] 

For the simplicity of presentation, we consider finite $\Phi$ and $\Rcal$ classes. The results can be straightforwardly extended to infinite function classes by applying standard tools in statistical learning theory \citep{natarajan1989learning,pollard2012convergence,devroye2013probabilistic}.

\section{Related Literature}
\label{sec:related_literature}
Much recent attention has been devoted to
linear function
approximation~\citep[c.f.,][]{jin2019provably,yang2019reinforcement}. These
results provide important building blocks for our work. In particular,
the low-rank MDP model we study is from~\citet{jin2019provably} who
assume that the feature map $\phi^*$ is known in advance. However, as we
are focused on nonlinear function approximation, it is more apt to
compare to related nonlinear approaches, which can be categorized in
terms of their dependence on the 
size of the
function class:

\paragraph{Polynomial in $|\Phi|$ approaches}
Many approaches, while not designed explicitly for our setting, can
yield sample complexity scaling polynomially with $|\Phi|$ in our setup. Note, however, that
polynomial-in-$|\Phi|$ scaling can be straightforwardly obtained by
concatenating all of candidate feature maps and running the algorithm
of~\citet{jin2019provably}. Further, this is the only obvious way to apply Eluder dimension results here~\citep{osband2014model,wang2020provably,ayoub2020model}, 
and it also pertains to
work on model selection~\citep{pacchiano2020regret,lee2021online}.  Indeed, the key observation that enables a
logarithmic-in-$|\Phi|$ sample complexity is that all value function
are in fact represented as \emph{sparse} linear functions of this
concatenated feature map.

However, exploiting sparsity in RL (and in contextual bandits) is
quite subtle. In both settings, it is not possible to obtain results
scaling logarithmically in both the ambient dimension \emph{and} the
number of actions~\citep{lattimore2020bandit,hao2021online}. That said,
it is possible to obtain results scaling polynomially with the number
of actions and logarithmically with the ambient dimension, as we do
here.

\paragraph{Logarithmic in $|\Phi|$ approaches} 
For logarithmic-in-$|\Phi|$ approaches, the assumptions and results
vary considerably. Several results focus on the block MDP
setting~\citep{du2019provably,misra2020kinematic,foster2020instance},
where the dynamics are governed by a discrete latent state space, 
which is decodable from the observations. This setting is a special
case of our low-rank MDP setting. 
Additionally, these works make stronger function approximation assumptions than we do.
As such our work can be seen as
generalizing and relaxing assumptions, when compared with existing
block MDP results.

Most closely related to our work are \olive \citep{jiang2017contextual}, \textsc{Witness Rank}\xspace\citep{sun2019model}, \flambe \citep{agarwal2020flambe}, and \textsc{BLin-Ucb}\xspace\citep{du2021bilinear}
algorithms. 
\olive is a
model-free RL algorithm that can be instantiated to produce a
logarithmic-in-$|\Phi|$ sample complexity guarantee in the reward-aware (single reward) low-rank MDP
setting (it also applies more generally). However, it is not
computationally efficient even in tabular
settings~\citep{dann2018oracle}. Like \olive, \textsc{Witness Rank}\xspace and \textsc{BLin-Ucb}\xspace are also statistically efficient in more general settings but computationally intractable in similar ways. \textsc{Witness Rank}\xspace is a model-based algorithm and can handle our setting given a stronger function approximation assumption.  \textsc{BLin-Ucb}\xspace works with a general hypothesis class, which can be either model-free or model-based and generalize the previous two approaches. 
All these three algorithms are restricted to the reward-aware setting, but do not require the reachability assumption as we do. 

\flambe is computationally efficient with Maximum Likelihood Estimation (MLE) and sampling oracles, but it is model-based, so the
function approximation assumptions are stronger than ours. Thus the
key challenge, as well as the main advancement, is our weaker model-free function approximation
assumption which does not allow modeling $\mu^*$ (and hence the MDP dynamics) whatsoever. On the other hand,
\flambe does not require the reachability assumption when two computational oracles are available. However, in the absence of the sampling oracle for the estimated model, it does require the reachability assumption to establish theoretical guarantees. We
leave fully eliminating the reachability assumption with a computationally efficient model-free approach
as an important open problem. 

Our proposed algorithms address the reward-free learning goal with differing tradeoffs in their statistical and computational properties. \alg is computationally efficient with the min-max-min oracle (\Cref{eq:emp_minmax_obj}) or the oracles for squared loss minimization and a saddle-point formulation (\pref{alg:greedy_selection}). For the special case of enumerable feature class, 
our optimization problems further reduce to a fully computationally tractable eigenvector computation problem (\Cref{eq:ev-enum}); note that even in this special case, the computation of \textsc{Olive}\xspace, \textsc{Witness rank}\xspace, and \textsc{BLin-Ucb}\xspace is still inefficient as they need to enumerate over infinite function classes (see \pref{app:comp} for more details). In addition, follow-up empirical evaluations \citep{zhang2022efficient} have also confirmed the practical feasibility of the optimization oracle (\pref{alg:greedy_selection}) required by our algorithm.

In \Cref{table:comparison_table}, we present a more detailed comparison between our algorithms and the closely related ones in the low-rank MDP setting. For comparisons among algorithms that tackle the more restricted block MDPs setting, we refer the reader to \citet{zhang2022efficient}. 
The details of how we instantiate \textsc{Olive}\xspace, \textsc{Witness rank}\xspace, and \textsc{BLin-Ucb}\xspace in our setting can be found in \pref{app:comp}.

\begin{table}[htb]
\centering\resizebox{\columnwidth}{!}{
 \begin{tabular}{|c|c|c|c|c|}
 \hline
 Algorithm & R-F? & Realizability & Sample Complexity & Computation \\  
  \hline\hline
 \textsc{Olive}\xspace  & \multirow{2}*{No} & \multirow{2}*{$\phi^*\in\Phi$} &\multirow{2}*{$\frac{d^3 KH^5\log(|\Phi|/\delta)}{ \varepsilon^2  }$}  & \multirow{2}*{Enumeration over the value class}  \\ 
 \citep{jiang2017contextual} & ~ &~ & ~ & ~\\
 \hline
  \textsc{Witness rank}\xspace  & \multirow{2}*{No} & $\phi^*\in\Phi$ & \multirow{2}*{$\frac{d^3 KH^5\log(|\Phi||\Upsilon|/\delta)}{ \varepsilon^2 }$}  & \multirow{2}*{Enumeration over the model class}  \\ 
 \citep{sun2019model} & ~ & $\mu^*\in\Upsilon$ & ~ & ~\\
 \hline
\textsc{BLin-Ucb}\xspace  & \multirow{2}*{No}  & \multirow{2}*{$\phi^*\in\Phi$} & \multirow{2}*{$\frac{d^3 KH^7\log(|\Phi|/\delta)}{ \varepsilon^2}$}  & \multirow{2}*{Enumeration over the hypothesis class}  \\ 
\citep{du2021bilinear} & ~ & ~ & ~ & ~\\
 \hline
\flambe\xspace  & \multirow{2}*{Yes} & $\phi^*\in\Phi$ & \multirow{2}*{$\frac{d^7 K^9H^{22}\log(|\Phi||\Upsilon|/\delta)}{\varepsilon^{10}}$}  & \multirow{2}*{MLE oracle + sampling oracle}  \\ 
\citep{agarwal2020flambe}& ~ & $\mu^*\in\Upsilon$ & ~ & ~\\
 \hline
 \textsc{Rep-Ucb}\xspace  & \multirow{2}*{No} & $\phi^*\in\Phi$ & \multirow{2}*{$\frac{d^4 K^2H^5\log(|\Phi||\Upsilon|/\delta)}{\varepsilon^2 }$}  & \multirow{2}*{MLE oracle + sampling oracle}  \\ 
\citep{uehara2021representation}& ~ & $\mu^*\in\Upsilon$ & ~ & ~\\
\hline
\rowcolor{Gray!10} ~ & ~ & ~ & ~ & ~  \\
 \rowcolor{Gray!10} \multirow{-2}*{\alg{} (Ours)} & \multirow{-2}*{Yes} & \multirow{-2}*{$\phi^*\in\Phi$} & \multirow{-2}*{$\frac{d^{11} K^{14}H^7\log(|\Phi||\Rcal|/\delta) }{\min\{\varepsilon^2\etamin,\etamin^5\}}$}  & \multirow{-2}*{Min-max-min oracle (\Cref{eq:emp_minmax_obj})} \\
\hline
\rowcolor{Gray!10} ~ & ~ & ~ & ~ & Squared loss minimization + \\
\rowcolor{Gray!10} \multirow{-2}*{\alg{} (Ours)} & \multirow{-2}*{Yes} & \multirow{-2}*{$\phi^*\in\Phi$} & \multirow{-2}*{$\frac{d^{19} K^{32}H^{19}\log(|\Phi||\Rcal|/\delta) }{\min\{\varepsilon^6\etamin^3,\etamin^{11}\}}$}  & saddle-point formulation (\pref{alg:greedy_selection}) \\
\hline
\rowcolor{Gray!10} \expl{} (Ours) + & ~ & ~ & ~ & Enumeration over $\Phi$ +  \\
\rowcolor{Gray!10} \fqi & \multirow{-2}*{Yes} & \multirow{-2}*{$\phi^*\in\Phi$} & \multirow{-2}*{$\frac{d^{25}K^{50}H^7 \log^5 (|\Phi||\Rcal|/\delta)}{\min\{\veps^2\etamin,\etamin^{17}\}}$}  & eigenvector computation (\Cref{eq:ev-enum})\\
\hline
\end{tabular}}
\caption{Comparisons among algorithms for low-rank MDPs (with unknown features). R-F column refers to whether the algorithm can handle reward-free learning. $\Upsilon$ is the additional candidate feature class used in \citet{sun2019model}, \citet{agarwal2020flambe} and \citet{uehara2021representation} to capture the model-based relizability. For the sample complexity, we only show the orders and hide $\text{polylog}$ terms (i.e., using $\tilde O(\cdot)$ notation). Since the sample complexity bounds for our proposed algorithms are too long, we only show their simplified versions here. We convert the $1/(1-\gamma)$ horizon dependence in \textsc{Rep-Ucb} \citep{uehara2021representation} to $H$. See the text for more discussions on realizability and computation.}
 \label{table:comparison_table}
\end{table}

\paragraph{Related algorithmic approaches}
Central to our approach is the idea of embedding plausible futures into a ``discriminator'' class and using this class to guide the learning process. \citet{bellemare2019avf} also propose a min-max representation learning objective using a class of \emph{adversarial value functions}, but their work only empirically demonstrates its usefulness as an auxiliary task during learning and does not study exploration. 
Similar ideas of using a discriminator class have been deployed in model-based RL \citep{farahmand2017value, sun2018model, modi2019sample, ayoub2020model}, but the application to model-free representation learning and exploration is novel to our knowledge. 

\paragraph{Subsequent works} 
After the initial version of our work was released, there have been several follow-up papers that also investigate representation learning. Similar to \flambe, \citet{uehara2021representation} develop a model-based representation-learning algorithm in low-rank MDPs, which is computationally efficient with MLE and sampling oracles. Their algorithm \textsc{Rep-Ucb}\xspace requires stronger function approximation assumptions (model realizability) and does not come with reward-free learning guarantees. On the other hand, \textsc{Rep-Ucb}\xspace does not require the reachability assumption and has a sharper rate. \citet{ren2021free} also proposes a model-based representation-learning algorithm, which exploits the noise assumption in the stochastic control model. Their setting has a low-rank structure but is different from our low-rank MDP setting.

Another recent work \citep{zhang2022efficient} builds on our representation learning oracle and analysis. They present various experimental results, which can be regarded as empirical evidence and support that the representation learning oracle proposed in our work is empirically tractable and effective. As for theoretical guarantees, their results are restricted to the reward-dependent block MDP setting---which is more restrictive than low-rank MDPs---but they do not need the reachability assumption. 
It is still an open question whether such an assumption can be removed in the computationally tractable model-free reward-free setting (see the discussion on a similar setting in \citet{zanette2020provably}). 

Orthogonal to our work, \citet{zhang2021provably} and \citet{papini2021reinforcement} assume a candidate set of ``correct'' representations is given and propose algorithms to select the ``good'' one (in a certain technical sense) from this candidate set. In contrast, our function-approximation assumption is much weaker: we only require \emph{one} ``correct'' representation (in their terminology) to lie in the candidate set.

Finally, the recent work \citep{huang2021towards} studies deployment-efficient RL in linear MDPs, where the goal is to minimize the number of policy changes (``deployment complexity'') for real-world deployment considerations. 
Our algorithm \alg only requires $H$ deployments, thus matching the optimal $\tilde \Omega(H)$ deployment complexity up to polylog terms and in a strictly more general setting.\footnote{Our earlier version of \alg employs the online ``elliptical planner'', which needs $\tilde O(Hd^3 K^4/\etamin^2)$ deployments overall. Inspired by the techniques in \citet{huang2021towards}, we integrate the more advanced offline ``elliptical planner'' to \alg and achieve the optimal deployment complexity.} 

\section{Main Algorithmic Framework}
\label{sec:algorithm}
In this section, we describe the overall algorithmic framework that we propose for representation learning for low-rank MDPs. A key component in this general framework, which specifies how to learn a good representation once exploratory data (at previous levels) has been acquired, is left unspecified in this section and instantiated with two different choices in the subsequent sections. We also present sample complexity guarantees for each choice in the corresponding sections. 

At the core of our model-free approach is the following well-known property of a low-rank MDP due to~\citet{jin2019provably}. We provide a proof in \pref{app:pf_lem1} for completeness. 
\begin{lemma}[\citet{jin2019provably}]
\label{lem:linMDP_expn}
For a low-rank MDP $\Mcal$ with embedding dimension $d$, for any $f : \Xcal \rightarrow [0,1]$, we have: $\EE \sbr{f(x_{h+1})\mid x_h,a_h } = \langle \sphih,\theta^*_f\rangle$, where $\theta^*_f \in \R^d$ and $\|\theta^*_f\|_2 \le \sqrt{d}$.
\end{lemma}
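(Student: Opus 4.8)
The plan is to prove this directly from the low-rank factorization of $T_h$ in \pref{def:lowrank}, with essentially no heavy machinery: the entire statement follows from linearity of the inner product together with the norm normalization that is built into the definition.

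First I would rewrite the one-step conditional expectation as an integral against the transition kernel,
$$\EE\sbr{f(x_{h+1}) \mid x_h, a_h} = \int_{\Xcal} f(x')\, T_h(x' \mid x_h, a_h)\, dx',$$
and then substitute $T_h(x' \mid x_h, a_h) = \inner{\sphih}{\mu^*_h(x')}$ from the rank-$d$ decomposition guaranteed at every level $h$.

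Next, since $\sphih$ does not depend on the integration variable $x'$, I would interchange the (finite-dimensional, hence componentwise) integral with the inner product and factor $\sphih$ out, giving
$$\EE\sbr{f(x_{h+1}) \mid x_h, a_h} = \inner{\sphih}{\int_{\Xcal} f(x')\, \mu^*_h(x')\, dx'}.$$
This identifies the desired coefficient vector $\theta^*_f \defeq \int_{\Xcal} f(x')\,\mu^*_h(x')\, dx' \in \R^d$, which crucially is independent of $(x_h,a_h)$, as the statement demands.

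Finally, for the norm bound I would simply invoke the normalization condition in \pref{def:lowrank}, which states that $\|\int g(x)\mu^*_h(x)\,dx\|_2 \le \sqrt{d}$ for every $g:\Xcal \to [0,1]$. Applying this with $g = f$ (valid since $f$ takes values in $[0,1]$) immediately yields $\|\theta^*_f\|_2 \le \sqrt{d}$. There is no real obstacle here; the only point worth flagging is that the $\sqrt{d}$ bound is not something we derive but is exactly the normalization assumption placed on $\mu^*$ in \pref{def:lowrank}. This is precisely the ingredient that makes the Bellman-backup coefficients uniformly bounded, which is what later renders the associated regression and discriminator arguments well-posed.
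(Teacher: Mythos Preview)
Your proposal is correct and matches the paper's own proof essentially step for step: write the conditional expectation as an integral against $T_h$, substitute the low-rank factorization, pull $\sphih$ out of the integral to identify $\theta^*_f = \int f(x')\mu^*_h(x')\,dx'$, and then invoke the normalization on $\mu^*$ from \pref{def:lowrank} to get $\|\theta^*_f\|_2 \le \sqrt{d}$. The paper additionally records the analogous statement for $f:\Xcal\times\Acal\to[0,1]$ with an action drawn from a policy $\pi_{h+1}$, but that is a trivial extension and not part of the lemma as stated.
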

We turn this property into an algorithm by finding a feature map in the candidate class $\Phi$ which can certify this condition for a sufficiently rich class of functions $\Fcal$. The key insight in our algorithm is that this property depends solely on $\phi^*$, so we
do not require additional modeling assumptions.

Before turning to the algorithm description, we clarify a useful notation for $h$ step policies. An $h$-step policy\footnote{We use $\pi_h$ to denote a standard (single) policy and $\rho_h$ to denote an (exploratory) mixture policy, i.e., uniformly sampling from a set of policies.} $\rho_h$ chooses actions $a_0,\ldots,a_h$, consequently inducing a distribution over $(x_h,a_h,x_{h+1})$. We routinely append several random actions to such a policy, and we use $\rho_h^{+i}$ to denote the policy that chooses $a_{0:h}$ according to $\rho_h$ and then takes actions uniformly at random for $i$ steps, inducing a distribution over $(x_{h+i}, a_{h+i}, x_{h+i+1})$. As an edge case, for $i \geq j \geq 0$, $\rho_{-j}^{+i}$ takes actions $a_0,\ldots,a_{i-j}$ uniformly. The mnemonic is that the last action taken by $\rho_{j}^{+i}$ is $a_{i+j}$.

Our algorithm, Model-Free Feature Learning and Exploration (\alg) shown in \pref{alg:main_alg}, takes as input a feature set $\Phi$, a reward class $\Rcal$, the reachability coefficient $\etamin$ (\pref{assum:reachability}), the sub-optimality parameter $\varepsilon$, and the high-probability parameter $\delta$. It outputs feature maps $\bar \phi_{0:H-1}$ and a dataset $\Dcal=(\Dcal_{0:H-1})$ such that Fitted Q-Iteration (FQI), using linear functions of the returned features, can be run with the returned dataset to obtain a $\veps$-optimal policy for any reward function in  $\Rcal$. The algorithm runs in the following two stages.

\begin{algorithm}
\caption{\alg($\Rcal, \Phi, \etamin, \veps, \delta$): Model-Free Feature Learning and Exploration}
\label{alg:main_alg}
\begin{algorithmic}[1]
\STATE Set 
$\beta \leftarrow  \tilde O\rbr{\frac{\etamin^2}{dK^4B^2}}$
, $\kappa \leftarrow  \frac{64dK^4\log \rbr{1+8/\beta}}{\etamin}$, and  $\veps_{\mathrm{apx}} \leftarrow \frac{\veps^2}{16H^4 \kappa K}$.
\STATE Compute the exploratory policy cover: $\cbr{\rhoxa}_{h\in[H]} \leftarrow \textsc{Explore}\rbr{\Phi, \etamin, \delta}$. \label{line:mf_explore} 
\FOR{$h \in [H]$}
\STATE Collect dataset $\Dcal^{\repbar}_h$ of size $n_{\repbar}$ using $\rhoxa$. \label{line:rep_bar_data} 
\STATE Learn representation $\bar \phi_h$ by solving \Cref{eq:emp_minmax_obj} (or calling \pref{alg:greedy_selection}) with feature class $\Phi_h$, discriminator class $\Vcal=\Gcal_{h+1}$, dataset $\Dcal^{\repbar}_h$ and tolerance $\veps_{\mathrm{apx}}$. \label{line:mf_learn_phi}
\STATE Collect dataset $\Dcal_h$ of size $n_{\downstream}$ using $\rhoxa$. \label{line:mf_collect}
\STATE Set $\Dcal \leftarrow \Dcal \bigcup \{\Dcal_{h}\}$.
\ENDFOR
\STATE \textbf{return} $\Dcal, \bar \phi_{0:H-1}$.
\end{algorithmic}
\end{algorithm}

\paragraph{Exploration} In \pref{line:mf_explore} of \alg, we use the \textsc{Explore} sub-routine (\pref{alg:explore}) to compute exploratory policies $\rhoxa$ for each timestep $h \in [H]$. It takes as input a feature set $\Phi$, the reachability coefficient $\etamin$, and the high-probability parameter $\delta$. Intuitively, \pref{alg:explore} returns a set of exploratory policies $\rhoxa$ for each timestep $h \in [H]$ such that $\rhoxa$ hits each latent state in the set $\Zcal_h$ with a large enough probability. 

\begin{algorithm}[htb]
\caption{\textsc{Explore} ($\Phi, \etamin, \delta$)}
\label{alg:explore}
\begin{algorithmic}[1]
\STATE Set 
$\beta \leftarrow \tilde O\rbr{\frac{\etamin^2}{dK^4B^2}}$, and $\veps_{\mathrm{reg}} \leftarrow \tilde{\Theta}\rbr{\frac{\etamin^3}{d^2K^9 \log^2 (1+8/\beta)}}$. 
\FOR{$h=0,\ldots,H-1$}
\STATE Set exploratory policy for step $h$ to $\rhoxa$. \label{line:explore_policy}
\STATE Collect dataset $\Dcal^{\rephat}_h$ of size $n_{\rephat}$ using $\rhoxa$.\label{line:rep_hat_data} 
\STATE Learn representation $\hat \phi_h$ for timestep $h$ by solving \Cref{eq:emp_minmax_obj} (or calling \pref{alg:greedy_selection}) with feature class $\Phi_h$, discriminator class $\Fcal_{h+1}$, dataset $\Dcal^{\rephat}_h$ and tolerance $\veps_{\mathrm{reg}}$. \label{line:learn_phi}
\STATE Collect dataset $\Dcal^{\mathrm{ell}}_h$ of size $n_{\mathrm{ell}}$ using $\rhoxa$. \label{line:collect} \vspace{-0.1cm}
\STATE Call offline elliptical planner (\pref{alg:FQI_ell_planner}) with features $\hat \phi$, dataset $\Dcal^{\mathrm{ell}}_{0:h}$ and $\beta$ to obtain policy $\rho_{h}$. \label{line:plan}
\ENDFOR
\STATE \textbf{return} Exploratory policies $\cbr{\rhoxa}_{h\in[H]}$.
\end{algorithmic}
\end{algorithm}

\pref{alg:explore} uses a step-wise forward exploration scheme similar to \flambe \citep{agarwal2020flambe}. The algorithm proceeds in stages. For each level $h$, we first collect a dataset $\Dcal^{\rephat}_h$ of size $n_{\rephat}$ (the superscript $\hat\phi$ implies that the dataset will be used to learn $\hat \phi$) with the exploratory policy $\rhoxa$ constructed in the previous level (\pref{line:rep_hat_data}) and then use such dataset to learn feature $\hat \phi_h$ (\pref{line:learn_phi}) by calling a feature learning sub-routine (discussed in the sequel).
The feature $\hat \phi_h$ is computed to approximate the property in \pref{lem:linMDP_expn} for the discriminator function class $\Fcal_{h+1} \subseteq (\Xcal \to[0, 1])$ defined as 
\begin{align}
\label{eq:def_Fcal_clipped}
\Fcal_{h+1} \hspace{-.15em}:= \hspace{-.15em}\cbr{ \hspace{-.15em} \mathrm{clip}_{[0,1]}(\EE_{\unif(\Acal)} \hspace{-.15em}\inner {\phi_{h+1}(x_{h+1}, a)}{\theta})\hspace{-.15 em}: \hspace{-.15em} \phi_{h+1} \in \Phi_{h+1},\|\theta\|_2 \le B\hspace{-.15em}} \hspace{-.15em}, \text{ where } B \ge \sqrt{d}.
\end{align}

Using the policy $\rhoxa$, we also collect the exploratory dataset $\Dcal^{\mathrm{ell}}_h$ of size $n_{\mathrm{ell}}$ (``ell'' stands for elliptical) 
for step $h$ (\pref{line:collect}). With the collected datasets\footnote{For a dataset $\Dcal_h$, subscript $h$ denotes that it is a collection of tuples $(x_h,a_h,x_{h+1})$. $\Dcal_{0:h}^{\mathrm{ell}}$ denotes $\{\Dcal_{0}^{\mathrm{ell}},\ldots,\Dcal_{h}^{\mathrm{ell}}\}$.} $\Dcal^{\mathrm{ell}}_{0:h}$ and features $\hat \phi_h$ we call an \emph{offline} ``elliptical'' planning subroutine (\pref{alg:FQI_ell_planner}) to compute the policy $\rho_h$. This planning algorithm is inspired by techniques used in reward-free exploration \citep{wang2020on, zanette2020provably,huang2021towards}. Most elliptical planning algorithms in the literature require online interactions with the environment, where in each round the agent sets the reward appropriately to collect data from a previously  unexplored direction. In contrast, our offline elliptical planner only uses the offline data, which is more complicated and built on \citet{huang2021towards}. The detailed description with the pseudocode is deferred to \pref{app:FQI_ellip_planning}.

In \pref{alg:explore}, parameters $\beta,\veps_{\mathrm{reg}}$ are set according to \pref{thm:explore_general_feat}, which is deferred to \pref{sec:proof_sketch}. The missing values $B, n_{\rephat}$, and $n_{\mathrm{ell}}$ are specifically assigned for different instantiations, and we will present them in detail when later stating the formal theoretical guarantees.

\paragraph{Representation learning} In \alg, we subsequently learn a feature $\bar \phi_h$ for each level---again by invoking the representation learning subroutine---that allows us to use \fqi to plan for any reward $R \in \Rcal$ afterwards. Similar as \textsc{Explore} sub-routine (\pref{alg:explore}), we collect a dataset $\Dcal^{\repbar}_h$ of size $n_{\repbar}$ (the superscript $\bar\phi$ implies that the dataset will be used to learn $\bar \phi$) in \pref{line:rep_bar_data}, which is then used to learn features $\bar\phi_h$ in \pref{line:mf_learn_phi}. Additionally, we use the exploratory mixture policy $\rhoxa$ to collect a dataset $\Dcal_h$ of size $n_{\downstream}$ for the downstream planning (\pref{line:mf_collect}). Here for learning feature $\bar\phi_h$, we use a discriminator function class $\Gcal_{h+1}  \subseteq (\Xcal \rightarrow[0,H])$ defined as
\begin{align}
\label{eq:def_Gcal_main}
\Gcal_{h+1}:=\Big\{\mathrm{clip}_{[0,H]}\Big(\max_{a}&(R_{h+1}(x_{h+1},a)+\inner{\phi_{h+1}(x_{h+1},a)}{\theta})\Big): \nonumber
\\
&R  \in \Rcal, \phi_{h+1} \in  \Phi_{h+1}, \|\theta\|_2  \le  B \Big\} \text{ where } B \ge H\sqrt{d}.
\end{align}
Note that the class $\Gcal_{h+1}$, while still derived from $\Phi$, is quite different from the class $\Fcal_{h+1}$ (\Cref{eq:def_Fcal_clipped}) used to learn features inside the exploration module. Recall that in $\Fcal_{h+1}$, we clip the functions to $[0,1]$, set $R_{h+1}(x_{h+1},a)=0$ and, take expectation with respect to $a \sim \unif(\Acal)$ instead of a maximum. 
Finally, \alg returns the computed features $\bar \phi_{0:H-1}$ and the exploratory dataset $\Dcal_{0:H-1}$.

In \pref{alg:main_alg}, $\beta$ and $\kappa$ are set according to \pref{thm:explore_general_feat}, and $\veps_{\mathrm{apx}}$ is set according to \pref{thm:mf_fqi}. The missing values $B, n_{\repbar}$, and $n_{\downstream}$ are again specifically chosen for different instantiations, and we discuss them in detail later.

Careful readers may have noticed that we use the same exploratory mixture policy $\rhoxa$ to collect different datasets in several places. In the practical implementation, we can equivalently collect a large enough dataset $\Dcal_h$ once and then use it to unify current $\Dcal^{\rephat}_h,\Dcal^{\repbar}_h,\Dcal^{\mathrm{ell}}_h,$ and $\Dcal_h$. Therefore, in our subsequent discussions, sometimes we will simply use $\Dcal_h$ to refer to such a dataset and do not differentiate them. 

\paragraph{Planning in downstream tasks} For downstream planning with any reward $R \in \Rcal$, we use FQI \citep{ernst2005tree,antos2007fitted,antos2008learning,munos2008finite,szepesvari2010algorithms,chen2019information} with the following Q-function class defined using the features $\bar \phi_{0:H-1}$:
\begin{gather}
\label{eq:linear_Q}
\Qcal(\bar\phi,R):=\bigcup_{h\in[H]}
\Qcal_h(\bar\phi_h,R_h),
\\
\Qcal_h(\bar{\phi}_h,R_h) := \Big\{\mathrm{clip}_{[0,H]}(R_h(x_h,a_h) + \langle \bar \phi_h(x_h,a_h), w \rangle ) : \|w\|_2 \leq B\Big\}. \nonumber
\end{gather}
Note that features $\bar \phi$ are computed to approximate the backup of candidate functions of this form (class $\Gcal_{h+1}$), and thus, satisfy the conditions stated in \citet{chen2019information} for using \fqi.

\section{Min-Max-Min Representation Learning}
\label{sec:min_max_min_rep_learn}
In this section, we describe our novel representation learning objective for finding $\hat \phi$ and $\bar \phi$. The key insight is that the low-rank property of the MDP $\Mcal$ can be used to learn a feature map $\hat \phi$ which can approximate the Bellman backup of all linear functions under feature maps $\phi \in \Phi$, and that approximating the backups of these functions enables subsequent near-optimal planning.

We present the algorithm with an abstract discriminator class $\Vcal \subseteq (\Xcal \to [0,L])$ that is instantiated either with $\Fcal_{h+1}$ (with $L=1$) or $\Gcal_{h+1}$ (with $L=H$) defined in \Cref{eq:def_Fcal_clipped} and \Cref{eq:def_Gcal_main}, respectively. In order to describe our objective, it is helpful to introduce the shorthand
\begin{align} \label{eq:be}
    \be{\pi_h, \phi_h, v; B} = \min_{\|w\|_2 \le B} \EE_{\pi_h} \sbr{\rbr{ \inner{\phih}{w} - \EE \sbr{ v(x_{h+1}) \mid x_h,a_h}}^2}
\end{align}
for any policy $\pi_h$, feature $\phi_h$, function $v$, and constant $B$, which we set so that $B \geq L\sqrt{d}$. This is the error in approximating the conditional expectation of $v(x_{h+1})$ using linear functions in the features $\phih$. For approximating backups of all functions $v \in \Vcal$, we seek feature $\hat \phi_h$ which minimizes $\max_{v \in \Vcal} \be{\rhoxa,\hat{\phi}_h,v;B}$ up to an error of $\veps_{\mathrm{tol}}$.

Unfortunately, the quantity $\be{\cdot}$ contains a conditional expectation inside the square loss, so we cannot estimate it from samples $(x_h,a_h,x_{h+1})\sim\rhoxa$. This is an instance of the well-known double sampling issue \citep{baird1995residual, antos2008learning}. Instead, we introduce the loss function
\begin{align*}
    \Lcal_{\rhoxa}(\phi_h,w,v) = \EE_{\rhoxa}\sbr{\rbr{ \inner{\phih}{w} - v(x_{h+1})}^2},
\end{align*}
which is amenable to estimation from samples. However this loss function contains an undesirable conditional variance term, since via the bias-variance decomposition we have
\[
\Lcal_{\rhoxa}(\phi_h,w,v) = \be{\rhoxa,\phi_h,v;B} + \EE_{\rhoxa}\sbr{\VV\sbr{v(x_{h+1})\mid x_h,a_h}}.
\]

The excess variance term can lead the agent to erroneously select a bad feature $\hat{\phi}_h$. 
However, via \pref{lem:linMDP_expn}, we can rewrite the conditional variance as $\Lcal_{\rhoxa}(\phi_h^*,\theta_v^*,v)$ for some $\nbr{\theta_v^*}_2\leq L\sqrt{d}$. Therefore, we can instead optimize the following \emph{variance-corrected} objective which includes a correction term:
\begin{align}
    \argmin_{\phi_h \in \Phi_h} & \max_{v \in \Vcal} \Big\{  \min_{\|w\|_2 \le B} \Lcal_{\Dcal_h}(\phi_h, w, v) - \min_{\tilde{\phi}_h \in \Phi_h, \|\tilde{w}\|_2 \le L\sqrt{d}} \Lcal_{\Dcal_h}(\tilde{\phi}_h, \tilde{w}, v) \Big\}.
    \label{eq:emp_minmax_obj}
\end{align}
Here we set the constant $B \ge L\sqrt{d}$, and $\Lcal_{\Dcal_h}(\cdot)$ is the empirical estimate of $\Lcal_{\rhoxa}(\cdot)$ using dataset $\Dcal_h=\cbr{\rbr{x^{(i)}_h,a^{(i)}_h,x^{(i)}_{h+1}}}_{i=1}^n$, which is defined as
\[
\Lcal_{\Dcal_h}(\phi_h,w,v):=\sum_{i=1}^n \rbr{ \inner{\phi_h\rbr{x^{(i)}_h,a^{(i)}_h}}{w} - v\rbr{x_{h+1}^{(i)}}}^2.
\]

We now state our first result, which is an information-theoretic guarantee and assumes that an oracle solver for the objective in \Cref{eq:emp_minmax_obj} is available when we run \alg. 
A complete proof will be given in \pref{sec:flo_analysis}.
\begin{theorem}
\label{thm:flo_result}
Fix $\delta \in (0,1)$ and consider an MDP $\Mcal$ that satisfies \pref{def:lowrank} and \pref{assum:reachability}, \pref{assum:realizability} hold. If an oracle solution to \Cref{eq:emp_minmax_obj} is available, then by setting
\begin{gather*}
B=\sqrt d, \quad n_{\rephat}  = \tilde{O}\rbr{ \frac{d^4 K^9 \log(|\Phi|/\delta)}{\etamin^3} }, \quad	n_{\mathrm{ell}} = \tilde{O}\rbr{\frac{H^5d^{11}K^{14} \log (|\Phi|/\delta)}{\etamin^5}}   ,
\\
n_{\repbar} =\tilde{O}\rbr{ \frac{H^6 d^3 K^5 \log \rbr{|\Phi||\Rcal|/\delta}}{\veps^2 \etamin} }, \quad	n_{\downstream} =  \tilde{O}\rbr{\frac{H^6 d^2 K^5 \log (|\Phi||\Rcal|/\delta)}{\veps^2 \etamin}},
\end{gather*}
with probability at least $1-\delta$, \alg returns an exploratory dataset $\Dcal$ s.t. for any $R \in \Rcal$, running \fqi with value function class $\Qcal(\bar \phi, R)$ defined in \Cref{eq:linear_Q} returns an $\veps$-optimal policy for MDP $\Mcal$. The total number of episodes used by the algorithm is
\begin{align*}
    \tilde{O}\rbr{\frac{H^6d^{11}K^{14} \log (|\Phi|/\delta)}{\etamin^5} + \frac{H^7 d^3 K^5 \log (|\Phi||\Rcal|/\delta)}{\veps^2 \etamin}}.
\end{align*}
\end{theorem}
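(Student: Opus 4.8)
The plan is to decompose the guarantee into three pieces that mirror the structure of \alg: (i) a per-level \emph{representation learning} bound showing that the oracle solution to \Cref{eq:emp_minmax_obj} yields features with small Bellman-backup error against the relevant discriminator class; (ii) an \emph{exploration} bound showing that these features, fed into the offline elliptical planner, produce a policy cover $\cbr{\rhoxa}$ that uniformly covers all reachable latent directions; and (iii) a \emph{downstream planning} bound showing that \fqi with the class $\Qcal(\bar\phi,R)$ on the collected data returns an $\veps$-optimal policy. The overall result then follows by choosing the stated sample sizes so that each piece succeeds with probability $1-\delta/\poly(H)$ and summing the episode counts over the $H$ levels.

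The first and most self-contained step is the representation bound. Here I would argue that the population version of the variance-corrected objective in \Cref{eq:emp_minmax_obj} is \emph{exactly} $\max_{v\in\Vcal}\be{\rhoxa,\phi_h,v;B}$: expanding $\Lcal_{\rhoxa}$ via its bias--variance decomposition, the cross term vanishes upon conditioning on $(x_h,a_h)$, and by \pref{lem:linMDP_expn} the subtracted inner minimum $\min_{\tilde\phi,\tilde w}\Lcal_{\rhoxa}(\tilde\phi,\tilde w,v)$ equals the conditional-variance term $\EE_{\rhoxa}\sbr{\VV\sbr{v(x_{h+1})\mid x_h,a_h}}$, attained at $\tilde\phi=\phi^*_h,\ \tilde w=\theta^*_v$ (feasible since $\nbr{\theta^*_v}_2\le L\sqrt d\le B$). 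The subtraction therefore cancels the excess variance, and since $\phi^*_h\in\Phi_h$ drives this objective to zero, so does the oracle minimizer $\hat\phi_h$ in the population. The remaining work is converting the empirical optimum to a population guarantee: I would apply uniform concentration of $\Lcal_{\Dcal_h}$ over the finite classes $\Phi_h$, $\Vcal$, and a covering of the $B$-ball for $w$, giving $\max_{v}\be{\rhoxa,\hat\phi_h,v;B}\le\veps_{\mathrm{tol}}$ at the claimed $n_{\rephat},n_{\repbar}$ scaling as $\log(|\Phi||\Rcal|/\delta)$ over the tolerance.

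The exploration step is where I expect the main difficulty. The challenge is the circular dependence between representation and exploration: learning an accurate $\hat\phi_h$ requires a policy that already covers level $h-1$ well, while obtaining coverage at level $h$ requires an accurate $\hat\phi_h$ to drive the elliptical planner. I would break this circularity by induction on $h$, maintaining the invariant that the cover induces a $\phi^*$-feature covariance whose smallest relevant eigenvalue is bounded below in terms of $\etamin$; \pref{assum:reachability} guarantees such a well-conditioned cover is achievable. The crux is to show that a \emph{small backup error} on the rich discriminator class $\Fcal_{h+1}$ (rather than exact feature recovery) already suffices for the offline elliptical planner's elliptical-potential argument to go through, so that $\rho_h$ explores every under-covered direction. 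This amounts to carefully propagating the backup-approximation error through the planner's value estimates, and is precisely the content of the deferred exploration guarantee \pref{thm:explore_general_feat}, whose requirement dictates the dominant $n_{\mathrm{ell}}$ term.

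For the final step I would invoke a standard \fqi analysis in the style of \citet{chen2019information}: the features $\bar\phi_h$ are learned to approximate the backups of $\Gcal_{h+1}$, which contains the one-step-lookahead targets arising during \fqi, so the requisite low inherent Bellman error holds; combined with the coverage of $\Dcal$ from step (ii), this yields an $\veps$-optimal policy via \pref{thm:mf_fqi}. Assembling the three bounds, taking a union bound over the $H$ levels, and adding the episode counts $H\rbr{n_{\rephat}+n_{\mathrm{ell}}+n_{\repbar}+n_{\downstream}}$ reproduces the stated total, whose two terms have denominators $\etamin^5$ and $\veps^2\etamin$ reflecting the exploration and planning costs respectively.
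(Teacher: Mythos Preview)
Your proposal is correct and follows essentially the same approach as the paper: you decompose into the representation-learning deviation bound (the paper's \pref{lem:dev_bound_flo}, whose population argument you reproduce accurately via the bias--variance cancellation and \pref{lem:linMDP_expn}), invoke \pref{thm:explore_general_feat} for exploration and \pref{thm:mf_fqi} for downstream \fqi, and sum the per-level episode counts over $H$. The only cosmetic difference is that the paper states the exploration inductive invariant as latent-state coverage $\max_\pi \PP_\pi[z_{h+1}=z]\le\kappa\,\PP_{\rhoxnxt}[z_{h+1}=z]$ rather than a $\phi^*$-covariance eigenvalue bound, but since you defer the internals to \pref{thm:explore_general_feat} this does not affect the argument.
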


When compared with the most related work \flambe, our sample complexity bound has worse dependence on $K,d$ and better dependence on $\varepsilon,H$. We additionally pay $1/\etamin$ and $\log(|\Rcal|)$ since our algorithm is reward-free model-free and requires the reachability assumption. On the other side, \flambe has a $\log(|\Upsilon|)$ dependence, where $\Upsilon$ is another function class that captures the second component of the low-rank decomposition. We refer the reader to \Cref{table:comparison_table} for more detailed comparisons.

\section{Iterative Greedy Representation Learning}
\label{sec:greedy_selection}

The min-max-min objective (\Cref{eq:emp_minmax_obj}) in the previous section is not provably computationally tractable for non-enumerable and non-linear function classes. However, recent empirical work~\citep{linMAMRL} has considered a heuristic approach for solving similar min-max-min objectives by alternating between updating the outer min and inner max-min components.
In this section, we show that a similar iterative approach that alternates between a \emph{squared loss minimization problem} and a \emph{max-min objective} in each iteration can be used to provably solve our representation learning problem. 

This iterative procedure is displayed in~\pref{alg:greedy_selection}. Given the discriminator class $\Vcal$ (instantiated with $\Fcal_{h+1}$ as defined in \Cref{eq:def_Fcal_clipped} for learning $\hat \phi$ or $\Gcal_{h+1}$ as defined in \Cref{eq:def_Gcal_main} for learning $\bar \phi$), the algorithm grows finite subsets $\Vcal^1,\Vcal^2,\ldots\subseteq\Vcal$ in an incremental and greedy fashion with $\Vcal^1 = \{v_1\}$ initialized arbitrarily. In the $t^{\textrm{th}}$ iteration, we have the discriminator class $\Vcal^t$ and we estimate a feature $\hat{\phi}_{t,h}$ which has a low total squared loss with respect to all functions in $\Vcal^t$ (\pref{line:min}). Importantly, the total \emph{squared loss} (sum) avoids the double sampling issue that arises with the worst case loss over class $\Vcal^t$ (max), so no correction term is required. More specifically, it is easy to see from \Cref{eq:emp_minmax_obj} and \Cref{eq:fit_feature} that once all $v_i$ are fixed, the conditional variance terms and their sum is also fixed. Thus it can be dropped when we minimize over $\phi_h$ and $W$. 

Next, we try to certify that $\hat{\phi}_{t,h}$ is a good representation by searching for a \emph{witness} function $v_{t+1} \in \Vcal$ for which $\hat{\phi}_{t,h}$ has large excess square loss (\pref{line:maxmin}). The  optimization problem in~\Cref{eq:adv_func} does require a correction term to address double sampling, but since $\hat{\phi}_{t,h}$ is fixed, it can be written as a simpler \emph{max-min} program when compared to the previous oracle approach. If the objective value $l$ (\pref{line:obj_l_val}) is smaller than the threshold $\veps_{\mathrm{tol}}$ (instantiated with $\veps_{\mathrm{reg}}$ for learning $\hat \phi$ or $\veps_{\mathrm{apx}}$ for learning $\bar \phi$), then our certification successfully verifies that $\hat{\phi}_{t,h}$ can approximate the Bellman backup of all functions in $\Vcal$, so we terminate and output $\hat{\phi}_{t,h}$. On the other hand, if the objective is large, we add the witness $v_{t+1}$ to our growing discriminator class and advance to the next iteration. 

\begin{algorithm}[!ht]
\caption{Feature Selection via Greedy Improvement}
\label{alg:greedy_selection}
\begin{algorithmic}[1]
\STATE \textbf{input:} Feature class $\Phi_h$, discriminator class $\Vcal$, dataset $\Dcal$ and tolerance $\veps_{\mathrm{tol}}$.
\STATE Set $\Vcal^0 \leftarrow \emptyset$ and choose $v_1 \in \Vcal$ arbitrarily. 
\STATE Set $\veps_0 \leftarrow \veps_{\mathrm{tol}}/52d^2$, $t \leftarrow 1$ and $l \leftarrow \infty$.
\REPEAT
\STATE Set $\Vcal^{t} \leftarrow \Vcal^{t-1} \bigcup \{v_t\}$. \label{line:increment}
\STATE \textbf{\textcolor{blue}{(Fit feature)}} Compute $\hat \phi_{t,h}$ as: \vspace{-0.2cm}
\begin{align}\hat \phi_{t,h}, W_t = \argmin_{\phi_h \in \Phi_h, W \in \R^{d\times t}, \|W\|_{2,\infty} \le L\sqrt{d}\;}  \sum_{i=1}^t \Lcal_{\Dcal_h}(\phi_h, W^i, v_i).
\label{eq:fit_feature}
\end{align}\label{line:min} \vspace{-0.2cm}
\STATE \textbf{\textcolor{blue}{(Find witness)}} Find test witness function:\label{line:maxmin}
\vspace{-0.2cm}
\begin{align}
    v_{t+1} = \argmax_{v \in \Vcal} \max_{\substack{\tilde{\phi}_h \in \Phi_h, \|\tilde{w}\|_2 \le L\sqrt{d}}} \rbr{ \min_{\|w\|_2 \le \frac{L\sqrt{dt}}{2}}  \Lcal_{\Dcal_h}(\hat \phi_{t,h},w,v) - \Lcal_{\Dcal_h}( \tilde{\phi}_h, \tilde{w}, v)}. \label{eq:adv_func}
\end{align} \vspace{-0.2cm}
\STATE Set test loss $l$ to the objective value in~\Cref{eq:adv_func}. \label{line:obj_l_val}
\UNTIL{$l < 24d^2\veps_0 + \veps_0^2$}. 
\STATE \textbf{return} Feature $\hat \phi_{T,h}$ from last iteration $T$.
\end{algorithmic}
\end{algorithm}

One technical point worth noting is that in~\Cref{eq:adv_func} we relax the norm constraint on $w$ to allow it to grow with $\sqrt{t}$ (\Cref{eq:sqrt_t_grow} in the proof of \pref{lem:dev_bound_greedy_final}). This is required by our iteration complexity analysis which we summarize in the following lemma:
\begin{lemma} (Informal)
\label{lem:greedy_selection}
Fix $\delta \in (0,1)$. If the dataset $\Dcal$ is sufficiently large, then with probability at least $1-\delta$, \pref{alg:greedy_selection} terminates after $T = \frac{52L^2d^2}{\veps_{\mathrm{tol}}}$ iterations and returns a feature $\hat \phi_h$ such that:
\begin{align}
\label{eq:greedy_error}
    \max_{v \in \Vcal} \be{\rhoxa, \hat \phi_h, v; \sqrt{\frac{13L^4d^3}{\veps_{\mathrm{tol}}}}} \le \veps_{\mathrm{tol}}.
\end{align}
\end{lemma}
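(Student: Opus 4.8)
The plan is to establish the two claims separately: the approximation guarantee \Cref{eq:greedy_error} that holds whenever the loop exits, and the iteration bound $T=52L^2d^2/\veps_{\mathrm{tol}}$ guaranteeing that it does exit. I would begin by pinning down what a single call to the witness step \Cref{eq:adv_func} actually measures. Using the bias--variance decomposition of $\Lcal_{\rhoxa}$ recorded in \pref{sec:min_max_min_rep_learn}, the inner term $\min_{\|w\|}\Lcal_{\Dcal_h}(\hat\phi_{t,h},w,v)$ and the subtracted term $\Lcal_{\Dcal_h}(\tilde\phi_h,\tilde w,v)$ share the same conditional-variance summand $\EE_{\rhoxa}\sbr{\VV\sbr{v(x_{h+1})\mid x_h,a_h}}$, so in the population this term cancels and the objective reduces to $\max_{v}\{\be{\rhoxa,\hat\phi_{t,h},v;\cdot}-\min_{\tilde\phi_h}\be{\rhoxa,\tilde\phi_h,v;\cdot}\}$. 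By \pref{assum:realizability} together with \pref{lem:linMDP_expn}, choosing $\tilde\phi_h=\phi^*_h$ and $\tilde w=\theta^*_{v}$ drives the second minimum to zero, so the population witness objective equals $\max_v\be{\rhoxa,\hat\phi_{t,h},v;\cdot}$. A uniform squared-loss deviation bound over $\Phi_h\times\Vcal$ (as in \pref{app:aux_lemmas}) then lets me pass between this population quantity and the empirical objective $l$ at the cost of additive sampling error, provided $\Dcal$ is large enough; this is where the ``sufficiently large'' hypothesis is consumed.

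With this translation in hand, the approximation guarantee is immediate: at termination $l<24d^2\veps_0+\veps_0^2$, and since $v_{t+1}$ is the $\argmax$ over the whole class $\Vcal$, the empirical excess loss is below threshold for \emph{every} $v\in\Vcal$ simultaneously. Transferring to the population and absorbing the deviation together with the $24d^2\veps_0$ slack into the choice $\veps_0=\veps_{\mathrm{tol}}/52d^2$ gives $\max_v\be{\rhoxa,\hat\phi_{T,h},v;\cdot}\le\veps_{\mathrm{tol}}$. The norm budget in \Cref{eq:greedy_error} is not free: it is dictated by the relaxed constraint $\|w\|_2\le L\sqrt{dt}/2$ used in \Cref{eq:adv_func}, evaluated at the final iteration $t=T$, which gives exactly $L\sqrt{dT}/2=\sqrt{13L^4d^3/\veps_{\mathrm{tol}}}$; matching this constant is precisely the reason the $\sqrt t$-growth is built into the algorithm.

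For the iteration bound I would run a potential (volumetric) argument on the backup-coefficient vectors. Any non-terminating iteration produces a witness $v_{t+1}$ whose population $\be{\rhoxa,\hat\phi_{t,h},v_{t+1};\cdot}$ is bounded below by a constant multiple of $d^2\veps_0$. On the other hand, the fit step \Cref{eq:fit_feature} makes $\hat\phi_{t,h}$ approximate the backups of $v_1,\dots,v_t$ with small total error---again because $\phi^*_h$ with weights $\theta^*_{v_1},\dots,\theta^*_{v_t}$ is feasible (the $\sqrt t$-relaxed norm is exactly what keeps these weights admissible), so the minimum fit loss is near zero. Writing each backup via \pref{lem:linMDP_expn} as $\inner{\phi^*_h}{\theta^*_{v_i}}$ and measuring distances in the second-moment geometry $M=\EE_{\rhoxa}[\phi^*_h(\phi^*_h)^\top]$, a large $\be{\cdot}$ for $v_{t+1}$ combined with a good fit of $v_1,\dots,v_t$ forces $\theta^*_{v_{t+1}}$ to carry a definite $M$-orthogonal component relative to $\mathrm{span}(\theta^*_{v_1},\dots,\theta^*_{v_t})$. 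Feeding these vectors into a regularized determinant (elliptical) potential, each iteration multiplies the potential by a factor bounded away from $1$, while the potential is bounded above because the coefficients lie in $\R^d$ with $\|\theta^*_{v}\|_2\le L\sqrt d$; balancing the per-step gain $\sim d^2\veps_0$ against the bounded total yields $T=O(L^2d^2/\veps_{\mathrm{tol}})$, with constants tuned to the stated $52$.

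The main obstacle is this last potential argument, and specifically making the ``new direction'' claim rigorous under two sources of slack. First, $\hat\phi_{t,h}$ only \emph{approximately} represents the earlier backups, so ``far from the span of the linear functions of $\hat\phi_{t,h}$'' must be converted into ``far from $\mathrm{span}(\theta^*_{v_1},\dots,\theta^*_{v_t})$'' with the fit error explicitly controlled; this is the step that forces both the $\sqrt t$ norm relaxation and a careful choice of regularization so that the inversions in the potential stay well-conditioned. Second, the separation is only approximate (of order $d^2\veps_0$ rather than exact orthogonality), so the clean rank bound $T\le d$ must be replaced by the volumetric count, and the sampling error from the deviation bound must be kept strictly below the separation threshold across all $T$ iterations via a union bound---which is ultimately what determines how large $\Dcal$ must be.
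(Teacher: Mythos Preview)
Your architecture is right, and the termination half of the argument is essentially the paper's. The iteration bound is where the substantive divergence lies, and the route you sketch has a gap.

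You propose to show that $\theta^*_{v_{t+1}}$ carries a large $M$-orthogonal component to $\mathrm{span}(\theta^*_{v_1},\dots,\theta^*_{v_t})$ with $M=\EE_{\rhoxa}[\phi^*_h(\phi^*_h)^\top]$, and then feed this into a log-determinant elliptical potential. You correctly flag the conversion from ``$\hat\phi_{t,h}$-inapproximability'' to a statement about the $\theta^*$ vectors as the obstacle, but the resolution you describe does not close it: writing $\theta^*_{t+1}=\sum_i c_i\theta^*_i+r$ and approximating $\inner{\phi^*_h}{\theta^*_{t+1}}$ by $\inner{\hat\phi_{t,h}}{\sum_i c_iw_{t,i}}$ leaves the coefficients $c_i$ unconstrained, so both the accumulated fit error and the norm of the candidate $w$ can blow up. The second-moment matrix $M$ never enters the paper's argument. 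You also misattribute the $\sqrt t$ relaxation: in the fit step \Cref{eq:fit_feature} the constraint is per-column, $\|W\|_{2,\infty}\le L\sqrt d$, so $\phi^*_h$ with columns $(\theta^*_1,\dots,\theta^*_t)$ is already feasible without any relaxation.

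The paper works instead in the Gram-matrix geometry $\Sigma_t=A_tA_t^\top+\lambda I_{d\times d}$ with $A_t=[\theta^*_1\mid\cdots\mid\theta^*_t]$, and plugs the explicit \emph{regularized} approximant $\hat w_t=W_tA_t^\top\Sigma_t^{-1}\theta^*_{t+1}$ into the witness lower bound. This construction has $\|\hat w_t\|_2\le L^2d\sqrt{t/(4\lambda)}$---which is the actual reason for the $\sqrt t$ relaxation in \Cref{eq:adv_func}---and after a Cauchy--Schwarz step yields $\veps_1\le 2\|\Sigma_t^{-1}\theta^*_{t+1}\|_2^2\,(L^2dT\veps_0+\lambda^2)$. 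The bound on $T$ then comes not from a log-det potential but from the \emph{generalized} elliptic potential lemma (\pref{lem:gen_elliptic_pot}), which controls $\sum_t\|\Sigma_t^{-1}\theta^*_{t+1}\|_2\le 2\sqrt{Td/\lambda}$; note this tracks the $\Sigma_t^{-2}$-norm, not the $\Sigma_t^{-1}$-norm that the standard log-det potential would give. Combining the per-round lower bound with this sum and setting $\lambda=L^2d$ delivers $T\le 52L^2d^2/\veps_{\mathrm{tol}}$.
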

The size of $\Dcal$ scales polynomially with the relevant parameters, e.g., for $\Vcal = \Fcal_{h+1}$, we set $n = \tilde{O}\rbr{ \frac{d^7 \log \rbr{|\Phi|/\delta}}{\veps^3_{\mathrm{tol}}}}$. 

A formal statement (\pref{lem:dev_bound_greedy_final}) along with its complete proof is provided in \pref{sec:greedy_analysis}. \cref{eq:greedy_error} in  \pref{lem:greedy_selection} shows that the learned feature $\hat \phi_h$ does have a small Bellman backup error (as defined in \cref{eq:be}) for all discriminator functions in the class $\Fcal_{h+1}$. Notice that the norm bound in \cref{eq:greedy_error} scales with the accuracy parameter $\veps_{\mathrm{tol}}$ and degrades the overall sample complexity when compared with using the oracle approach (\Cref{eq:emp_minmax_obj}). However, it can be used in \alg leading to a more computationally viable algorithm. In the following we state sample complexity result for \alg when \pref{alg:greedy_selection} is used as the feature learning sub-routine. The detailed analysis for the result can be found in \pref{sec:greedy_analysis}. 
\begin{theorem}
\label{thm:greedy_result}
Fix $\delta \in (0,1)$ and consider an MDP $\Mcal$ that satisfies \pref{def:lowrank} and \pref{assum:reachability}, \pref{assum:realizability} hold. If \Cref{eq:emp_minmax_obj} is solved via the iterative greedy approach (\pref{alg:greedy_selection}), then by setting
\begin{gather*}
B =\tilde O\rbr{\sqrt{\frac{d^5K^9}{\etamin^3}}},\; n_{\rephat}  = \tilde{O}\rbr{ \frac{d^{13}K^{27} \log (|\Phi|/\delta)}{\etamin^9}}, \;	n_{\mathrm{ell}} =  \tilde{O}\rbr{\frac{H^5d^{19}K^{32} \log (|\Phi|/\delta)}{\etamin^{11}}},
\\
n_{\repbar}  = \tilde{O}\rbr{ \frac{H^{18}d^{10}K^{15} \log (|\Phi||\Rcal|/\delta)}{\veps^6 \etamin^3}},\quad n_{\downstream} =  \tilde{O}\rbr{\frac{H^6 d^2 K^5 \log (|\Phi||\Rcal|/\delta)}{\veps^2 \etamin}},
\end{gather*}
with probability at least $1-\delta$, \alg returns an exploratory dataset $\Dcal$ s.t. for any $R \in \Rcal$, running \fqi with value function class $\Qcal(\bar \phi, R)$ defined in \Cref{eq:linear_Q} returns an $\veps$-optimal policy for MDP $\Mcal$. The total number of episodes used by the algorithm is
\begin{align*}
    \tilde{O}\rbr{\frac{H^6d^{19}K^{32} \log (|\Phi|/\delta)}{\etamin^{11}} + \frac{H^{19}d^{10}K^{15} \log (|\Phi||\Rcal|/\delta)}{\veps^6 \etamin^3}}.
\end{align*}
\end{theorem}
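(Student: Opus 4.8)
The plan is to reuse the entire analysis skeleton of \pref{thm:flo_result} and change only the feature-learning guarantee that feeds into it. The proof of \pref{thm:flo_result} is organized around two reward-free, representation-agnostic ingredients: the exploration guarantee \pref{thm:explore_general_feat}, which shows that if each $\hat\phi_h$ approximates the Bellman backups of the discriminator class $\Fcal_{h+1}$ (\Cref{eq:def_Fcal_clipped}) to tolerance $\veps_{\mathrm{reg}}$ using weights of norm at most $B$, then the offline elliptical planner returns policies $\rhoxa$ with good latent-state coverage; and the planning guarantee \pref{thm:mf_fqi}, which shows that if each $\bar\phi_h$ approximates the backups of $\Gcal_{h+1}$ (\Cref{eq:def_Gcal_main}) to tolerance $\veps_{\mathrm{apx}}$ using weights of norm at most $B$, then \fqi on the class $\Qcal(\bar\phi,R)$ (\Cref{eq:linear_Q}) returns an $\veps$-optimal policy. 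Both statements are phrased abstractly in terms of the backup error and the norm bound $B$, so they are agnostic to how the features were produced. In \pref{thm:flo_result} these guarantees are supplied by the exact oracle for \Cref{eq:emp_minmax_obj}, which certifies backup error under $B=L\sqrt d$; here I would instead supply them by \pref{lem:greedy_selection}, which certifies the identical backup error \Cref{eq:greedy_error} but with the inflated norm bound $B=\sqrt{13L^4d^3/\veps_{\mathrm{tol}}}$.

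Concretely, I would carry out two instantiations of \pref{lem:greedy_selection}. First, for exploration, set $\Vcal=\Fcal_{h+1}$ (so $L=1$) and $\veps_{\mathrm{tol}}=\veps_{\mathrm{reg}}$; the lemma then yields $\hat\phi_h$ with backup error $\veps_{\mathrm{reg}}$ under norm bound $B=\sqrt{13d^3/\veps_{\mathrm{reg}}}=\tilde O(\sqrt{d^5K^9/\etamin^3})$, and requires a dataset of size $n_{\rephat}=\tilde O(d^7\log(|\Phi|/\delta)/\veps_{\mathrm{reg}}^3)=\tilde O(d^{13}K^{27}\log(|\Phi|/\delta)/\etamin^9)$. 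Defining $\Fcal_{h+1}$ with this $B$ and feeding the resulting backup guarantee into \pref{thm:explore_general_feat} gives the coverage guarantee and fixes $n_{\mathrm{ell}}$, whose dependence on $B$ through the elliptical planner produces the stated $\tilde O(H^5d^{19}K^{32}\log(|\Phi|/\delta)/\etamin^{11})$. Second, for downstream planning, set $\Vcal=\Gcal_{h+1}$ (so $L=H$) and $\veps_{\mathrm{tol}}=\veps_{\mathrm{apx}}=\veps^2/(16H^4\kappa K)$; the lemma yields $\bar\phi_h$ with backup error $\veps_{\mathrm{apx}}$ under norm bound $B=\sqrt{13H^4d^3/\veps_{\mathrm{apx}}}$, and requires $n_{\repbar}=\tilde O(\cdots/\veps_{\mathrm{apx}}^3)=\tilde O(H^{18}d^{10}K^{15}\log(|\Phi||\Rcal|/\delta)/(\veps^6\etamin^3))$. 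I would then set the $B$ appearing in both $\Gcal_{h+1}$ and the \fqi class $\Qcal(\bar\phi,R)$ to this same value, invoke \pref{thm:mf_fqi} with $n_{\downstream}$ unchanged from the oracle case, and finally union-bound the per-level failure events over all $h\in[H]$ and sum the episode counts to obtain the stated total.

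The main obstacle, and the source of every worse exponent relative to \pref{thm:flo_result}, is propagating the inflated norm bound $B=\Theta(1/\sqrt{\veps_{\mathrm{tol}}})$ consistently through the construction. I must check that the norm bound certified in \Cref{eq:greedy_error} coincides with the $B$ used to define $\Fcal_{h+1}$, $\Gcal_{h+1}$, and $\Qcal(\bar\phi,R)$, so that the backup of every $Q\in\Qcal_{h+1}$ still lies in $\Gcal_{h+1}$ and the completeness condition of \citet{chen2019information} required by \fqi continues to hold; since the certified norm bound of \pref{lem:greedy_selection} depends only on $L$ and $\veps_{\mathrm{tol}}$ (and not on the $B$ defining the discriminator class), this is a self-consistent choice rather than a fixed-point equation. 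The quantitative cost then comes from two places that must be tracked carefully but are otherwise mechanical: the $\veps_{\mathrm{tol}}^{-3}$ dataset requirement of \pref{lem:greedy_selection}, which combined with $\veps_{\mathrm{apx}}\propto\veps^2$ turns the oracle's $\veps^{-2}$ in $n_{\repbar}$ into $\veps^{-6}$, and the polynomial appearance of $B$ (hence of $1/\etamin$, $d$, and $K$) in the elliptical-planner and \fqi sample complexities, which inflates the $d,K,1/\etamin$ exponents. I would finally verify that all these enlarged sample sizes still dominate the ``sufficiently large dataset'' hypotheses of \pref{lem:greedy_selection}, \pref{thm:explore_general_feat}, and \pref{thm:mf_fqi}, and that $\log B$ contributes only polylog factors absorbed into $\tilde O(\cdot)$.
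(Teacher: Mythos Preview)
Your proposal is correct and follows essentially the same approach as the paper: it instantiates \pref{lem:greedy_selection} (the formal version of \pref{lem:dev_bound_greedy_final}) twice, once with $\Vcal=\Fcal_{h+1}$, $L=1$, $\veps_{\mathrm{tol}}=\veps_{\mathrm{reg}}$ to obtain $\hat\phi_h$ and the value of $B$, then with $\Vcal=\Gcal_{h+1}$, $L=H$, $\veps_{\mathrm{tol}}=\veps_{\mathrm{apx}}$ to obtain $\bar\phi_h$, and feeds the resulting backup-error guarantees into \pref{thm:explore_general_feat} and \pref{thm:mf_fqi} exactly as in the proof of \pref{thm:flo_result}, union-bounding over $h$ at the end. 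Your discussion of the self-consistency of $B$ is a point the paper handles only implicitly (by stating \pref{lem:dev_bound_greedy_final} with the discriminator class using $\|\theta\|_2\le L\sqrt d$ rather than $B$), but your reasoning about it is sound.
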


Applying \pref{alg:greedy_selection} leads to a more computationally viable algorithm by breaking the feature learning objective in \Cref{eq:emp_minmax_obj} into familiar computational primitives: \emph{squared loss minimization} (\cref{eq:fit_feature}) and a \emph{saddle point problem} (\cref{eq:adv_func}). Apart from the squared loss minimization which is considered tractable, saddle point problems have also become common in recent off-policy RL methods like \citet{dai2018sbeed,zhang2019gendice} where scalable optimization heuristics are presented as well. For the practical implementation of \pref{alg:greedy_selection}, we can choose parameterized $\Phi$ and $\Vcal$ classes, which allow us to use gradient descent based methods for learning features and finding witness functions (discriminators) in a scalable manner. We want to highlight that the subsequent work of \citet{zhang2022efficient}, which builds on the same representation learning oracle as ours (with minor differences in the discriminator class), provides empirical evidence that our iterative greedy representation learning oracle (\pref{alg:greedy_selection}) is indeed implementable and more computationally viable.

\section{Provably Computationally-Tractable Reward-Free RL with an Enumerable Feature Class}
\label{sec:enumerable}

A critical component of \alg (\pref{alg:main_alg}) is the stage of collecting exploratory data. The problem of reward-free exploration only asks the agent to collect a dataset with good coverage over the state space and does not require the agent to learn a representation. This problem has been studied in recent literature for tabular \citep{jin2020reward,kaufmann2021adaptive,menard2021fast,zhang2021near}, block MDPs \citep{misra2020kinematic}, and linear-MDP/low inherent Bellman error/linear-mixture setting with known features \citep{wang2020on,zanette2020provably,zhang2021reward,huang2021towards,wagenmaker2022reward}.\footnote{We only provide an incomplete list here.} In this section, we describe a special case where the min-max-min objective (\Cref{eq:emp_minmax_obj}) from \alg results in a provably computationally-tractable reward-free exploration scheme. 

In particular, we show that when $\Phi$ is efficiently enumerable, we can use \pref{alg:explore} to compute an exploratory policy cover in a computationally tractable manner. We can learn $\hat \phi$ using a slightly different min-max-min objective
\begin{align}
\label{eq:ridge_objective}
    \argmin_{\phi \in \Phi_h}\max_{f \in \Fcal_{h+1},\tilde{\phi} \in \Phi_{h},\|\tilde{w}\|_2 \le B  } \cbr{ \min_{\|w\|_2 \le B}  \Lcal_{\Dcal_h}(\phi, w, f) - \Lcal_{\Dcal_h}(\tilde{\phi}, \tilde{w}, f)},
\end{align}
where with a slight abuse of notation $\Fcal_{h+1}$ (different from that in \Cref{eq:def_Fcal_clipped}) now is the discriminator class that contains all \emph{unclipped} functions $f$ in form of 
\begin{align}
\label{eq:def_Fcal_unclipped}
\Fcal_{h+1}:=\cbr{\EE_{\unif(\Acal)}\sbr{\inner {\phi_{h+1}(x_{h+1}, a)}{\theta}}:\phi_{h+1} \in \Phi_{h+1},\|\theta\|_2 \le \sqrt{d}}.
\end{align}

Consider the min-max-min objective \Cref{eq:ridge_objective} and fix $\phi, \tilde{\phi} \in \Phi_h$. We show that it can be can be reduced to
\begin{align}
\label{eq:max_f_ridge}
    \max_{f \in \Fcal_{h+1}} f(\Dcal_h)^\top \rbr{A(\phi)^{\top} A(\phi) -  A(\tilde{\phi})^{\top} A(\tilde{\phi})} f(\Dcal_h)
\end{align}
where $A(\phi) = I_{n\times n} - X \rbr{\tfrac{1}{n}X^\top X + \lambda I_{d\times d}}^{-1}\rbr{\tfrac{1}{n} X^\top}$, $A(\tilde \phi) = I_{n\times n} - \tilde X \rbr{\tfrac{1}{n}\tilde X^\top \tilde X + \lambda I_{d\times d}}^{-1}$ $\rbr{\tfrac{1}{n} \tilde X^\top}$ for a parameter $\lambda$. Additionally, $X, \tilde{X} \in \R^{n \times d}$ are the sample covariate matrices for features $\phi, \tilde{\phi}$ respectively, and we overload the notation and use $f(\Dcal_h) \in \R^n$ to denote the value of any $f\in\Fcal_{h+1}$ on the $n$ samples. The objective in \Cref{eq:max_f_ridge} is obtained by using a ridge regression solution for $w, \tilde{w}$ in \Cref{eq:ridge_objective} and the details are deferred to \pref{sec:enumerable_analysis}. 

Finally, for any fixed feature $\phi'$ in the definition of $f = X'\theta \in \Fcal_{h+1}$, we can rewrite \Cref{eq:max_f_ridge} as
\begin{align*}
    \max_{\|\theta\|_2 \le \sqrt{d}} \theta^\top X'^\top \rbr{A(\phi)^{\top} A(\phi) -  A(\tilde{\phi})^{\top} A(\tilde{\phi})} X' \theta
\end{align*}
where $X' \in \R^{n \times d}$ is again a sample matrix defined using $\phi' \in \Phi_{h+1}$.

Thus, for a fixed tuple of $(\phi, \tilde{\phi}, \phi')$, the maximization problem reduces to a tractable eigenvector computation problem. As a result, we can efficiently solve the min-max-min objective in \Cref{eq:ridge_objective} by enumerating over each candidate feature in $(\phi, \tilde{\phi}, \phi')$ to solve
\begin{equation}
    \argmin_{\phi\in\Phi_h}\max_{\tilde{\phi} \in \Phi_h,\phi' \in \Phi_{h+1},\|\theta\|_2 \le \sqrt{d}} \theta^\top X'^\top \rbr{A(\phi)^{\top} A(\phi) -  A(\tilde{\phi})^{\top} A(\tilde{\phi})} X' \theta.
    \label{eq:ev-enum}
\end{equation}

While the analysis is more technical, \Cref{eq:ev-enum} still allows us to plan using $\hat \phi_h$ in \expl (\pref{alg:explore}) to guarantee that the policies $\rhoxa$ are exploratory. With the exploratory data, we can subsequently call \fqi with the following Q-function class defined using the entire feature class for the downstream planning:
\begin{gather}
    \label{eq:full_Q}
    \Qcal(R):=\bigcup_{h\in[H]}\Qcal_h(R),
    \\
    \Qcal_h(R) := \Big\{\mathrm{clip}_{[0,H]}\rbr{R_h(x,a) + \langle \phi_h(x,a), w \rangle }: \|w\|_2 \leq B, \phi_h \in \Phi_h\Big\},\text{ where } B \ge H\sqrt{d}.\nonumber
\end{gather}

We summarize the overall result as follows and its proof can be found in \pref{sec:enumerable_analysis}.
\begin{theorem}
\label{thm:enumerable_result}
Fix $\delta \in (0,1)$ and consider an MDP $\Mcal$ that satisfies \pref{def:lowrank} and \pref{assum:reachability}, \pref{assum:realizability} hold. In \expl (\pref{alg:explore}), if $\hat \phi_h$ is learned using the eigenvector formulation \Cref{eq:ev-enum}, then by setting
\begin{gather*}
B =\tilde{\Theta}\rbr{\frac{d^4K^9\log\rbr{|\Phi|/\delta}}{\etamin^3}}, \quad n_{\rephat} = \tilde{O}\rbr{ \frac{d^{12} K^{27} \log^3(|\Phi|/\delta)}{\etamin^9} },
\\
n_{\mathrm{ell}}  = \tilde{O}\rbr{\frac{H^5d^{25}K^{50} \log^5 (|\Phi|/\delta)}{\etamin^{17}}}, \quad n_{\downstream}  = \tilde{O}\rbr{\frac{H^6d^3 K^5 \log(|\Phi||\Rcal|/\delta)}{\veps^2\etamin}},
\end{gather*}
\alg returns an exploratory dataset $\Dcal$ such that for any $R \in \Rcal$, running \fqi with the collected dataset and the value function class $\Qcal(R)$ defined in \Cref{eq:full_Q} returns an $\veps$-optimal policy with probability at least $1-\delta$. The total number of episodes used by the algorithm is 
\begin{align*}
    \tilde{O}\rbr{\frac{H^6d^{25}K^{50} \log^5 (|\Phi|/\delta)}{\etamin^{17}} + \frac{H^7d^3 K^5 \log (|\Phi||\Rcal|/\delta)}{\veps^2\etamin} }.
\end{align*}
\end{theorem}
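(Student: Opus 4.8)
The plan is to follow the same three-stage template used for \pref{thm:flo_result}---representation learning, exploration, and downstream planning---but adapted to the peculiarities of the enumerable formulation. The first task is purely algebraic: to confirm that the min-max-min objective in \Cref{eq:ridge_objective} is genuinely solved by the eigenvector program \Cref{eq:ev-enum}. For fixed $\phi,\tilde\phi\in\Phi_h$ and a fixed discriminator $f$, the two inner minimizations over $w$ and $\tilde w$ are ridge-regression problems with closed-form solutions; substituting them rewrites the objective as the quadratic form \Cref{eq:max_f_ridge} built from the residual-projection matrices $A(\phi),A(\tilde\phi)$. Since every $f\in\Fcal_{h+1}$ has the parametric form $f=X'\theta$ with $\|\theta\|_2\le\sqrt d$, the outer maximization over $f$ collapses to a top-eigenvector computation, and enumerating over the triple $(\phi,\tilde\phi,\phi')$ recovers \Cref{eq:ev-enum}. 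The subtlety to check is that the ridge parameter $\lambda$ now plays the role of the constraints $\|w\|_2\le B$ in \Cref{eq:ridge_objective}; I would choose $\lambda$ small enough that the regularization bias is dominated by the target tolerance while keeping the matrices well-conditioned.

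The heart of the argument---and the step I expect to be the main obstacle---is the statistical guarantee that the feature $\hat\phi_h$ returned by \Cref{eq:ev-enum} has small Bellman backup error $\be{\rhoxa,\hat\phi_h,f;B}$ uniformly over $f\in\Fcal_{h+1}$. Two features of the enumerable setting complicate the deviation analysis relative to \pref{lem:greedy_selection}. First, the discriminators in \Cref{eq:def_Fcal_unclipped} are \emph{unclipped}: although $\|\theta\|_2\le\sqrt d$, the values $f(x_{h+1})$ need not lie in a bounded range, so establishing concentration of $\Lcal_{\Dcal_h}$ around $\Lcal_{\rhoxa}$ demands a more delicate argument than in the clipped case. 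Second, the ridge solution carries a bias of order $\lambda$ that must be traded off against the sampling error. I would combine a uniform-convergence bound over the enumerable classes $\Phi_h,\Phi_{h+1}$ (the source of the $\log(|\Phi|/\delta)$ factor via a union bound over $(\phi,\tilde\phi,\phi')$) with the variance-correction identity of \pref{lem:linMDP_expn}, which lets us subtract off the conditional-variance term exactly as in the main objective. The conclusion is $\max_f\be{\rhoxa,\hat\phi_h,f;B}\le\veps_{\mathrm{reg}}$ for the stated (large) $B$, which absorbs both the ridge penalty and the norm growth.

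With the representation guarantee in hand, exploration reduces to invoking \pref{thm:explore_general_feat}: feeding features whose backup error is at most $\veps_{\mathrm{reg}}$ into the offline elliptical planner (\pref{alg:FQI_ell_planner}) produces, level by level, policies $\rho_h$ whose induced $\phi^*$-covariance has a lower-bounded minimum eigenvalue. By \pref{assum:reachability} this certifies that $\rhoxa$ reaches every latent factor at step $h$ with probability $\Omega(\etamin)$---exactly the coverage property needed downstream. The only change from \pref{thm:flo_result} is the worse dependence on $d,K,1/\etamin$ inherited from the enlarged $B$ and the enumeration, which determines the stated $n_{\rephat}$ and $n_{\mathrm{ell}}$.

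Finally, for downstream planning I would verify that \fqi with the full class $\Qcal(R)$ of \Cref{eq:full_Q} meets the approximate realizability and Bellman-completeness conditions of the \fqi analysis in \pref{app:fqi}. Realizability of $Q^*_h$ follows from \pref{lem:linMDP_expn} together with $\phi^*_h\in\Phi_h$ (\pref{assum:realizability}), and completeness holds because the backup of any clipped linear-in-$\Phi$ function is again linear in $\phi^*$ and hence approximable within $\Qcal_h(R)$. Combining this with the coverage guarantee and the usual distribution-shift accounting yields an $\veps$-optimal policy, with $n_{\downstream}$ supplying the $\veps^{-2}\etamin^{-1}$ contribution. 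Summing the per-level sample sizes over $h\in[H]$ gives the two-term bound; the higher log-powers $\log^5(|\Phi|/\delta)$ in the episode count arise because $B$ itself scales with $\log(|\Phi|/\delta)$ and enters $n_{\mathrm{ell}}$ polynomially.
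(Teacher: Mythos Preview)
Your proposal is correct and follows essentially the same three-stage template as the paper: reduce \Cref{eq:ridge_objective} to the eigenvector program \Cref{eq:ev-enum} via ridge closed forms, establish the backup-error bound for $\hat\phi_h$ by trading off the ridge bias (of order $\lambda d^2$) against the sampling error (of order $d/(\lambda^2 n)$, hence $\lambda=\tilde\Theta(n^{-1/3})$ and $B=1/\lambda$), feed this into \pref{thm:explore_general_feat} for exploration, and finish with \fqi over the full class $\Qcal(R)$ using the \pref{app:fqi} analysis. One minor slip: the unclipped discriminators in \Cref{eq:def_Fcal_unclipped} \emph{are} bounded (by $\sqrt d$, since $\|\phi_{h+1}\|_2\le 1$ and $\|\theta\|_2\le\sqrt d$), so the real delicacy is only the ridge bias, not unboundedness---but this does not affect your argument.
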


From the result, we can see that although \expl with \Cref{eq:ev-enum} enumerates over the candidate feature class $\Phi$, the sample complexity is still logarithmic in $|\Phi|$. We believe there may be still room for further improving the bound, and we leave it to future work.

\section{Proofs}
\label{sec:proof_main}
In this section, we present detailed proofs. We start with the overall proof outline for \alg in \pref{sec:proof_sketch}, and then provide the proofs for the exploration and representation learning components in \pref{sec:explore_analysis} and \pref{sec:reward_free_planning} respectively. The concrete results for min-max-min oracle representation learning (\Cref{eq:emp_minmax_obj}), iterative greedy representation learning (\pref{alg:greedy_selection}), and enumerable case (\Cref{eq:ev-enum}) are shown in \pref{sec:flo_analysis}, \pref{sec:greedy_analysis}, and \pref{sec:enumerable_analysis} respectively. We defer the statements and proofs for the offline elliptical planner (\pref{alg:FQI_ell_planner}), \fqi (\pref{alg:linear_fqi}), \fqe (\pref{alg:linear_fqe}), and auxiliary results (e.g., concentration arguments and deviation bounds for regression with squared loss) to the appendix.
\subsection{Proof Outline}
\label{sec:proof_sketch}
We provide some intuition behind the design choices in \alg and give a sketch of the proof of the main results. 
We divide the proof sketch into four stages: 
(i) establishing the exploratory nature of the policies $\rho^{+3}$, 
(ii) representation learning guarantees for features $\bar\phi$ used for the downstream planning, (iii) concentration arguments for learned features $\hat \phi$ and $\bar\phi$, and (iv) final planning in downstream tasks.

\paragraph{Computing exploratory policies}
To understand the intuition behind \expl (\pref{alg:explore}), it is helpful to consider how we can discover a policy cover over the latent state space $\Zcal_{h+1}$. If we knew the mapping to latent states, we could create the reward functions $\one[z_{h+1}=z]$ for all $z\in \Zcal_{h+1}$ and compute policies to optimize such rewards, but here we do not have access to this mapping. Additionally, we do not have access to the true features $\phi^*$ to enable tractable planning even for the known rewards. \expl tackles both of these challenges. Note that in this section, we will establish the coverage over $\Zcal_{h+1}$ through learning feature $\hat \phi_{h-2}$ and calling the offline ``elliptical planner'' (\pref{alg:FQI_ell_planner}) to build an exploratory mixture policy $\rho_{h-2}$. This is for the simplicity of presentation. To connect it to the computation at level $h$ in \expl, we need to add all subscripts by 2, i.e., changing $\Zcal_{h+1}$ to $\Zcal_{h+3}$, $\hat \phi_{h-2}$ to $\hat \phi_h$, $\rho_{h-2}$ to $\rho_h$, etc.

For the first challenge, we note that by~\pref{def:latent_var} of the latent variables and~\pref{lem:linMDP_expn},  there always exists $f(x_h,a_h)=\PP[z_{h+1}=z \mid x_h,z_h]$ such that 
\begin{align*}
  &~\PP[z_{h+1}=z\mid  x_{h-1},a_{h-1}]= \EE[\one[z_{h+1}=z]\mid  x_{h-1},a_{h-1}] 
  \notag\\
  =&~ \EE[f(x_h, a_h)\mid  x_{h-1},a_{h-1}]= \langle \phi^*_{h-1}(x_{h-1},a_{h-1}), \theta^*_f\rangle.
\end{align*}
This essentially says the desired indicator function for reaching latent states at level $h+1$ can be written as some function $f$ at level $h$. Although such $f$ cannot be directly captured by the given the candidate feature class $\Phi$, after one Bellman backup, it can be represented by a linear function of the true feature $\phi_{h-1}^*$ (at level $h-1$). This inspires us to construct the discriminator class $\Fcal_{h-1}$ (the clipped version is defined in \Cref{eq:def_Fcal_clipped} and the unclipped version is defined in \Cref{eq:def_Fcal_unclipped}), which includes $\inner{ \phi^*_{h-1}(x_{h-1},a_{h-1})}{\theta^*_f}$. This overcomes the first challenge.

Now we discuss how to tackle the second challenge: finding good features ($\hat\phi_{h-2}$) to enable planning for the known rewards. Given the discriminator class, in the sequel, we specify the objective of our feature learning step. In \expl (\pref{alg:explore}), we learn $\hat \phi_{h-2}$ so that for any appropriately bounded $\theta$, there is a $w$ such that
\begin{align}
\EE[\langle \phi^*_{h-1}(x_{h-1},a_{h-1}), \theta\rangle \mid x_{h-2},a_{h-2}] \approx \langle\hat\phi_{h-2}(x_{h-2},a_{h-2}), w\rangle,\label{eq:alekh_backup}
\end{align}
More specifically, in the feature learning step (\pref{line:learn_phi} of \pref{alg:explore}), we learn $\hat \phi_{h-2}$ such that, for some fixed scalar $B$ and the discriminator class $\Fcal_{h-1}$ discussed above, it satisfies
\begin{align}
   \max_{f\in \Fcal_{h-1}} \be{\rho_{h-5}^{+3}, \hat \phi_{h-2}, f; B} \le \veps_{\mathrm{reg}} \label{eq:inv_phi},
\end{align}
where (recall that) $\be{\cdot}$ is defined in \Cref{eq:be}.

Intuitively, it is guaranteed that we can find such a good $\hat \phi_{h-2}\in\Phi_{h-2}$ because again from \pref{lem:linMDP_expn}, we know that true feature $\phi^*_{h-2}$ satisfies \Cref{eq:alekh_backup}. We defer the detailed discussion on how to find $\hat\phi_{h-2}$ that satisfies \Cref{eq:inv_phi} to the later part as we focus on computing exploratory policies here. 

For building an (exploratory) mixture policy $\rho_{h-2}$ that effectively covers all directions spanned by $\hat{\phi}_{h-2}$, we employ the \emph{offline} ``elliptical planner'' for reward-free exploration \citep{huang2021towards} and optimize reward functions that are quadratic in the learned features $\hat \phi_{h-2}$. 
To do so, in \pref{alg:FQI_ell_planner},
we repeatedly (i) update the elliptical reward, (ii) invoke \fqi subroutine (\pref{alg:linear_fqi}) to obtain a greedy policy w.r.t. the elliptical reward, and (iii) call \fqe (Fitted Q-Evaluation, \citet{le2019batch}) subroutine (\pref{alg:linear_fqe}) with the policy obtained from \fqi to estimate the covariance matrix (to update the elliptical reward in the next iteration) and estimate the expected return (to check the stopping criterion). For both \fqi and \fqe, we use a function class comprising of all reward-appended linear functions of $\phi \in \Phi$. 

Instead of using the offline ``elliptical planner'', we can also employ an online version, where we substitute the \fqe component with Monte Carlo rollout to estimate. However, this requires collecting additional samples and leads to worse sample complexity bound due to lack of data reuse. One appealing guarantee we can obtain by using our offline ``elliptical planner'' is the optimal deployment complexity formulated in \citet{huang2021towards} and it can be easily verified. On the technical side, our ``elliptical planner'' builds on \citet{huang2021towards}, but is established without discretization over the value function class and the reward functions. This advance makes the algorithm more computationally friendly. However, we need to apply the more involved concentration analysis (uniform Bernstein's inequality) for the infinite function class to achieve sharp rates. We adapt the tools and analysis from \citet{dong2019sqrt} and show a key concentration result in \pref{corr:uni_bern_conf}, which is then applied in the squared loss deviation result in \pref{app:dev_bound}. We provide a complete description and analysis for the ``elliptical planner'' in \pref{app:FQI_ellip_planning}, and its induced guarantee for proving the distribution shift argument in \pref{lem:planner_result}. The related \fqi (\pref{alg:linear_fqi}) and \fqe (\pref{alg:linear_fqe}) analyses used in the ``elliptical planner'' are presented in \pref{app:fqi_ellip} and \pref{app:fqe} respectively. 

With the help of \pref{lem:planner_result} and based on our earlier intuition for covering $\Zcal_{h+1}$ by translating the indicator reward at level $h+1$ to feature $\hat \phi_{h-2}$, we show that the policy $\rhoxnxt = \rho_{h-2} \circ \unif(\Acal) \circ \unif(\Acal)$ is exploratory and hits all latent states $z \in \Zcal_{h+1}$,
\begin{align}
\label{eq:explore_latent_state}
    \max_\pi \PP_{\pi} \sbr{z_{h+1} = z} \le \kappa \PP_{\rhoxnxt} \sbr{z_{h+1} = z},
\end{align}
where $\kappa > 0$ is a constant specified in \pref{thm:explore_general_feat}. 

The formal proof is established in an inductive way, i.e., we assume \Cref{eq:explore_latent_state} holds for all $h'\le h$ and then show it also holds for $h+1$. The main reason is that we need exploratory policies/datasets at the prior levels to be fed into the offline ``elliptical planner''. For the induction base ($h=0$), it is easy to verify that the null policy $\rho_{-2}^{+2}$ satisfies the exploration guarantee in \Cref{eq:explore_latent_state}.

One thing that eluded the objective \Cref{eq:inv_phi} for learning $\hat\phi$ is that we need exploratory policies. More specifically, we can see that \Cref{eq:inv_phi} is defined with an exploratory policy $\rho_{h-5}$. Therefore, the goals of representation learning and exploration are intertwined. However, this is indeed not a concern since we need $\hat \phi_{h-2}$ when exploring/covering $\Zcal_{h+1}$ (or building $\rho_{h-2}$) while learning $\hat \phi_{h-2}$ only requires $\rho_{h-5}$ (exploratory policies at the prior step). By our inductive proof, we can observe that $\rho_{h-5}$ has already been established at this stage. 

Also notice that we plan in the previously learned features $\hat\phi_{h-2}$ to obtain a cover over $\Zcal_{h+1}$. This way, planning trails feature learning like \flambe, but with an additional step of lag due to differences between model-free and model-based reasoning.

Based on the exploratory property in \Cref{eq:explore_latent_state}, taking another action $a_{h+1}$ uniformly at random further returns an exploratory policy $\rho_{h-2}^{+3}$ for state-action pairs $(x_{h+1},a_{h+1})$. 
In summary, we provide the following result for the policies returned by \pref{alg:explore} with details in \pref{sec:explore_analysis}:

\begin{theorem}
\label{thm:explore_general_feat}
Fix $\delta \in (0,1)$ and consider an MDP $\Mcal$ that satisfies \pref{def:lowrank} and \pref{assum:reachability}, \pref{assum:realizability} hold. If the features $\hat \phi_h$ learned in \pref{line:learn_phi} of \expl (\pref{alg:explore}) satisfy the condition in \Cref{eq:inv_phi} for $B \ge \sqrt{d}$, and $\veps_{\mathrm{reg}} = \tilde{\Theta}\rbr{\frac{\etamin^3}{d^2K^9 \log^2 (1+8/\beta)}}$, then with probability at least $1-\delta$, the sub-routine \textsc{Explore} collects an exploratory mixture policy $\rhoxa$ for each level $h$ such that
\begin{align}
\label{eq:dist_shift}
    \forall \pi, \forall f: \Xcal \times \Acal \rightarrow \R^+\text{, we have } \, \EE_{\pi} [f(x_h,a_h)] \le \kappa K \EE_{\rhoxa} [f(x_h,a_h)],
\end{align}
where $\kappa = \frac{64dK^4\log \rbr{1+8/\beta}}{\etamin}$. The total number of episodes used in \pref{line:plan} by \expl is
\begin{align*}
    \tilde{O} \rbr{\frac{H^5d^9K^{14}B^4 \log (|\Phi|/\delta)}{\etamin^5}},
\end{align*}
with $\beta$ chosen to satisfy $\beta \log \rbr{1+8/\beta} \le \frac{\etamin^2}{128dK^4B^2}$ and a sufficient one is $\beta =\tilde O\rbr{\frac{\etamin^2}{dK^4B^2}}$.

\end{theorem}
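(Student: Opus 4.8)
The plan is to prove the coverage guarantee \Cref{eq:dist_shift} by induction on the level $h$, after first reducing the state–action coverage statement to a purely latent-state coverage statement of the form \Cref{eq:explore_latent_state}, and then establishing the latter inductively using the feature-learning guarantee \Cref{eq:inv_phi} together with the offline elliptical planner guarantee \pref{lem:planner_result}. Two structural reductions turn the target into \Cref{eq:explore_latent_state}, and they account for the two factors in the coverage constant $\kappa K$. First, since $\rhoxa = \rho_{h-3}^{+3}$ takes its last action $a_h$ uniformly at random while its $x_h$-marginal coincides with that of $\rhox$, for any $\pi$ and any $f\ge 0$ we bound $f(x_h,a_h)\le \sum_a f(x_h,a)$ under $\pi$ and rewrite the uniform average under $\rhoxa$; this costs a factor of $K$ and reduces the claim to a state-only bound $\EE_{x_h\sim\pi}[g(x_h)]\le \kappa\,\EE_{x_h\sim\rhox}[g(x_h)]$ for all $g\ge 0$. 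Second, using the latent-variable factorization (\pref{def:latent_var}), where $x_h\sim \nu_{h-1}(\cdot\mid z_h)$ depends on the trajectory only through $z_h$ and is policy-independent, I would push this state coverage through to the latent variable $z_h$, so that latent-state coverage in the sense of \Cref{eq:explore_latent_state} (with indices shifted by one) implies the required state coverage.

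The heart of the argument is the inductive proof of \Cref{eq:explore_latent_state}. The base case is immediate, since at the earliest levels the exploratory policy degenerates to taking uniformly random actions and trivially covers the reachable latent states. For the inductive step I would assume coverage at all prior levels (so that the exploratory datasets $\Dcal^{\mathrm{ell}}_{0:h}$ fed to the planner in \pref{line:plan} are valid) and then, for each target $z\in\Zcal_{h+1}$: (i) write its reaching probability as a one-step backup $\PP[z_{h+1}=z\mid x_{h-1},a_{h-1}]=\inner{\phi^*_{h-1}(x_{h-1},a_{h-1})}{\theta^*_f}$ via \pref{lem:linMDP_expn}, which places the relevant discriminator into $\Fcal_{h-1}$; (ii) invoke the feature-learning guarantee \Cref{eq:inv_phi}, ensuring $\hat\phi_{h-2}$ linearly approximates this backup up to $\veps_{\mathrm{reg}}$; and (iii) apply the offline elliptical planner guarantee \pref{lem:planner_result} to conclude that $\rho_{h-2}$ covers every direction in the $\hat\phi_{h-2}$-space. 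Combining (iii) with reachability (\pref{assum:reachability}, giving $\max_\pi \PP_\pi[z_{h+1}=z]\ge \etamin$) then yields \Cref{eq:explore_latent_state} with the stated $\kappa=\tfrac{64dK^4\log(1+8/\beta)}{\etamin}$, closing the induction.

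I expect the main obstacle to be step (iii): carefully propagating the $\veps_{\mathrm{reg}}$-approximation error of $\hat\phi_{h-2}$ through the offline elliptical planner so as to obtain a clean \emph{multiplicative} coverage bound \Cref{eq:explore_latent_state} rather than a merely additive one, while simultaneously managing the two-level lag (planning at level $h-2$ to cover $\Zcal_{h+1}$) intrinsic to model-free reasoning and the fact that the planner consumes offline rather than online data. The appearance of $\etamin$ in the denominator of $\kappa$, together with the calibrated setting $\veps_{\mathrm{reg}}=\tilde\Theta\!\left(\tfrac{\etamin^3}{d^2K^9\log^2(1+8/\beta)}\right)$, is forced precisely here: the additive $\veps_{\mathrm{reg}}$ slack from the backup approximation must be made small relative to the $\etamin$-level reachability signal (after amplification by the planner's coverage factor $\sim dK^4\log(1+8/\beta)/\etamin$) so that rare latent states remain covered.

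Finally, the episode count $\tilde O\!\left(\tfrac{H^5d^9K^{14}B^4\log(|\Phi|/\delta)}{\etamin^5}\right)$ would follow by summing the samples used across the $H$ calls to the planner in \pref{line:plan}, substituting the choice $\beta=\tilde O\!\left(\tfrac{\etamin^2}{dK^4B^2}\right)$, which satisfies $\beta\log(1+8/\beta)\le \tfrac{\etamin^2}{128dK^4B^2}$, into the per-level planner complexity, and taking a union bound over the $H$ levels (the only remaining randomness, since \Cref{eq:inv_phi} is assumed to hold) to obtain the overall $1-\delta$ confidence.
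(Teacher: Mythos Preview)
Your proposal is correct and follows essentially the same route as the paper: establish the latent-state coverage invariant \Cref{eq:explore_latent_state} by induction, combining the backup representation from \pref{lem:linMDP_expn}, the feature-learning guarantee \Cref{eq:inv_phi}, the elliptical planner result \pref{lem:planner_result}, and reachability to absorb the additive errors; the state--action coverage \Cref{eq:dist_shift} then follows as in your reductions (this is exactly \pref{lem:dist_shift}). Two small points you gloss over that the paper makes explicit: first, the inductive hypothesis is used not only to make the planner's input datasets exploratory but also to \emph{transfer} the guarantee \Cref{eq:inv_phi} from the data-collection distribution $\rho_{h-5}^{+3}$ to the arbitrary roll-in $\pi_{h-2}$ (see \Cref{eq:apx_error_abr_dist}); second, the concrete mechanism for your step (iii) is Cauchy--Schwarz and AM--GM against the planner's covariance, followed by the observation that $\langle\phi^*_{h-2},\theta^*_g\rangle\in[0,1]$ so that the resulting quadratic is bounded by the linear quantity $\PP_{\rhoxnxt}[z_{h+1}=z]$, which is what converts the planner output into a multiplicative bound.
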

This result specifies the required size $n_{\mathrm{ell}}$ of dataset $\Dcal^{\mathrm{ell}}$ in \pref{alg:FQI_ell_planner}. We need to choose different values of $B$ to guarantee approximation error bound for $\hat \phi$ (\Cref{eq:inv_phi}) holds for different instantiations, and we discuss the details in the deviation bounds for learned features part.

The precise dependence on parameters $d, H, K$, and $\etamin$ is likely improvable. The exponent on $K$ arises from multiple importance sampling steps over the uniform action choice and can be improved when the features $\phi(x,a) \in \Delta(d)$ for all $x,a$, and $\phi\in\Phi$ (\pref{sec:explore_simplex}). Improving these dependencies further is an interesting avenue for future progress.

\paragraph{Representation learning for downstream tasks}
For showing planning guarantees using \fqi, we need to ensure that the following requirements stated in \citet{chen2019information} are satisfied: (i) (\emph{concentrability}) we have adequate coverage over the state space, (ii) (\emph{realizability}) we can express $Q^*$ (more specifically it is $Q^*_R$, i.e., $Q^*$ under the reward function $R$ that we consider) with our function class, and (iii) (\emph{completeness}) our class is closed under Bellman backups. Condition (i) is implied by \pref{thm:explore_general_feat}. For (ii) and (iii), we learn features $\bar \phi_{0:H-1} \in \Phi$ such that $\bar \phi_h$ satisfies 
\begin{align}
    \label{eq:fqi_rep_error}
    \max_{g \in \Gcal_{h+1}} \be{\rhoxa, \bar \phi_h, g; B} \le \veps_{\mathrm{apx}}.
\end{align}
Here $\Gcal_{h+1}$ as defined in \Cref{eq:def_Gcal_main} is the discriminator class containing all reward-appended linear candidate Q-value functions, which in turn includes the true $Q^*$ value function for all $R\in\Rcal$. The main conceptual difference over \Cref{eq:inv_phi} is that the discriminator class $\Gcal_{h+1}$ now incorporates reward information, which enables downstream planning. The objective for learning $\bar \phi$ (\Cref{eq:fqi_rep_error}) again includes the exploratory policies $\rho$, but they have been fully established in the exploration phase. 
Using \Cref{eq:fqi_rep_error} and the low-rank MDP properties, we show that this function class satisfies approximate realizability and approximate completeness, so we can invoke results for FQI and obtain the following representation learning guarantee. The details can be found in \pref{sec:reward_free_planning}.
\begin{theorem}
\label{thm:mf_fqi}
Fix $\delta \in (0,1)$ and consider an MDP $\Mcal$ that satisfies \pref{def:lowrank} and \pref{assum:reachability}, \pref{assum:realizability} hold. If the features $\bar \phi_{0:H-1}$ learned by \alg satisfy the condition in \Cref{eq:fqi_rep_error} for all $h$ with $\veps_{\mathrm{apx}} = \tilde{O}\rbr{\frac{\veps^2 \etamin}{dH^4 K^5}}$, then for any reward function $R \in \Rcal$, running \fqi with the value function class $\Qcal(\bar \phi, R)$ in \Cref{eq:linear_Q} and an exploratory dataset $\Dcal$, returns a policy $\hat \pi$, which satisfies $v^{\hat \pi}_R \ge v^*_R - \veps$ with probability at least $1-\delta$. The total number of episodes collected by \alg in \pref{line:mf_collect} is:
\begin{align*}
    \tilde{O}\rbr{\frac{H^7 d^2 K^5 \log (|\Phi||\Rcal|B/\delta) }{\veps^2 \etamin}}.
\end{align*}
\end{theorem}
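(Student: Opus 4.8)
The plan is to reduce the guarantee to the standard analysis of \fqi by verifying the three sufficient conditions of \citet{chen2019information} for the value-function class $\Qcal(\bar\phi,R)$ under the data distribution: concentrability, (approximate) realizability, and (approximate) completeness. Concentrability comes essentially for free, since \pref{thm:explore_general_feat} already guarantees the distribution-shift bound \Cref{eq:dist_shift}: for any policy $\pi$ and any nonnegative $f$, $\EE_\pi[f(x_h,a_h)] \le \kappa K\,\EE_{\rhoxa}[f(x_h,a_h)]$ with $\kappa=\tilde O(dK^4/\etamin)$. Taking $f$ to be a squared Bellman residual is exactly what will let me transfer an $L_2(\rhoxa)$ error bound (which is what \Cref{eq:fqi_rep_error} controls) to an $L_2$ bound under the occupancy of an arbitrary comparator policy, which is what the \fqi error-propagation argument consumes.

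The heart of the proof is showing that \Cref{eq:fqi_rep_error} yields approximate realizability and completeness. The key structural fact is that every value function whose backup must be approximated already lies in the discriminator class $\Gcal_{h+1}$ of \Cref{eq:def_Gcal_main}. For realizability, by \pref{lem:linMDP_expn} the optimal value $V^*_{R,h+1}(x)=\max_a Q^*_{R,h+1}(x,a)$ backs up to a linear function $\inner{\sphih}{\theta^*}$ with $\|\theta^*\|_2\le H\sqrt d$; and since $\phi^*\in\Phi$ by \pref{assum:realizability} and $Q^*_{R,h+1}\in[0,H]$, the function $V^*_{R,h+1}$ has exactly the reward-appended clipped-max form defining $\Gcal_{h+1}$. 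For completeness, any $Q'\in\Qcal_{h+1}(\bar\phi_{h+1},R_{h+1})$ has greedy value $V'(x)=\max_a \mathrm{clip}_{[0,H]}(R_{h+1}(x,a)+\inner{\bar\phi_{h+1}(x,a)}{w})$; because clipping to an interval is monotone it commutes with the maximum, so $V'=\mathrm{clip}_{[0,H]}(\max_a(R_{h+1}+\inner{\bar\phi_{h+1}}{w}))$, which again lies in $\Gcal_{h+1}$ using $\bar\phi_{h+1}\in\Phi_{h+1}$. Hence \Cref{eq:fqi_rep_error} applies to both $V^*_{R,h+1}$ and every such $V'$: there is a $w$ with $\|w\|_2\le B$ so that $\inner{\bar\phi_h}{w}$ matches the backup $\EE[\,\cdot\mid x,a]$ up to squared $L_2(\rhoxa)$ error $\veps_{\mathrm{apx}}$. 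Appending $R_h$ and clipping (which only reduces error since the target is in $[0,H]$) shows the true backup $R_h(x,a)+\EE[\max_{a'}Q'(x_{h+1},a')\mid x,a]$ is within $\sqrt{\veps_{\mathrm{apx}}}$ in $L_2(\rhoxa)$ of an element of $\Qcal_h(\bar\phi_h,R_h)$, giving completeness and, specializing to $V^*$, realizability.

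With the three conditions in hand, I would invoke the \fqi sample-complexity result of \pref{app:fqi}. The per-level statistical error is governed by the log-covering number of $\Qcal(\bar\phi,R)$, which is $\tilde O(\log|\Phi|+\log|\Rcal|+d\log B)$; choosing $n_{\downstream}=\tilde O\rbr{H^6 d^2K^5\log(|\Phi||\Rcal|B/\delta)/(\veps^2\etamin)}$ per level (collected $H$ times in \pref{line:mf_collect}) drives the squared statistical error down to the order of $\veps_{\mathrm{apx}}$. The approximation and statistical errors are then propagated across the $H$ steps by the telescoping/performance-difference argument, where at each step the $L_2(\rhoxa)$ residual bound $\sqrt{\veps_{\mathrm{apx}}}$ is inflated by the concentrability factor $\sqrt{\kappa K}$ via \Cref{eq:dist_shift} to control the residual under the rollout distributions. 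Tracking the horizon and concentrability factors gives a final suboptimality of order $H^2\sqrt{\kappa K\,\veps_{\mathrm{apx}}}$, so setting $\veps_{\mathrm{apx}}=\tilde O\rbr{\veps^2\etamin/(dH^4K^5)}$ yields $v^{\hat\pi}_R\ge v^*_R-\veps$, with the guarantee holding uniformly over $R\in\Rcal$ after a union bound (absorbed into the $\log|\Rcal|$ term).

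The main obstacle, and where the analysis must be careful, is the error propagation in this last step. The condition \Cref{eq:fqi_rep_error} only controls Bellman error in squared $L_2(\rhoxa)$ under a single exploratory distribution, whereas \fqi error accumulates along the trajectories of the learned and comparator policies and compounds through the backward recursion over value estimates. Converting between these distributions through \Cref{eq:dist_shift} is precisely what forces the factor $\kappa K$ into the bound, and the extra horizon factor arises from this backward accumulation; combining these with the squared-loss statistical error while keeping every polynomial exponent tight is the delicate part. The square-root dependence of the policy suboptimality on $\veps_{\mathrm{apx}}$ is exactly what dictates the $\veps^2$ scaling appearing in both $\veps_{\mathrm{apx}}$ and $n_{\downstream}$.
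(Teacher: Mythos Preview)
Your proposal is correct and follows essentially the same approach as the paper: verify concentrability via \Cref{eq:dist_shift}, verify (approximate) realizability/completeness by showing the relevant greedy value functions lie in $\Gcal_{h+1}$ so that \Cref{eq:fqi_rep_error} applies, and then run the \fqi error-propagation argument to obtain suboptimality $H^2\sqrt{\kappa K\,\veps_{\mathrm{apx}}}$ plus a statistical term. One small accounting difference: you attribute the $\log|\Phi|$ factor to the covering number of $\Qcal(\bar\phi,R)$, whereas the paper obtains it by union-bounding over all possible learned $\bar\phi\in\Phi$ (since $\bar\phi$ is random); either route yields the same final dependence.
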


\paragraph{Deviation bounds for learned features} 
One thing we skipped in the earlier discussion is how to establish guarantees for learning $\hat \phi$ that satisfies \Cref{eq:inv_phi} and $\bar\phi$ that satisfies \Cref{eq:fqi_rep_error}. The key technical component used in all proofs is the Bernstein's version of uniform concentration result (\pref{corr:uni_bern_conf}). With this careful concentration argument, in \pref{lem:dev_bound_flo}, we show that the min-max-min oracle subroutine (\Cref{eq:emp_minmax_obj}) can be used to achieve these goals with appropriate choices of $n_{\rephat}$ and $n_{\repbar}$. The corresponding guarantees of the parameters for iterative greedy representation learning (\pref{alg:greedy_selection}) is presented in \pref{lem:dev_bound_greedy_final}, where the analysis is more complicated and we additionally use a potential argument to give the bound on its iteration complexity.

For feature learning in the enumerable case (\pref{sec:enumerable}), we only provide the guarantee to learn $\hat \phi$ that satisfies \Cref{eq:inv_phi} with large enough $n_{\rephat}$ in \pref{lem:ridge_error}, as we only present the reward-free exploration result for this version. We invoke the Bernstein's type concentration result on the ridge regression estimator. Therefore, we need to treat the scale of the regularizer and the bias term carefully, which leads to a worse rate in the sample complexity bound. 

The detailed choices of $n_{\rephat}$ and $n_{\repbar}$ are calculated from the deviation bounds and the thresholds $\veps_{\mathrm{reg}}$ (\pref{thm:explore_general_feat}) and $\veps_{\mathrm{apx}}$ (\pref{thm:mf_fqi}). Recall that previously we skipped the choices of $B$ for setting $n_{\mathrm{ell}}$ in \pref{thm:mf_fqi}. The proper way to specify them are also discussed in the deviation results (\pref{lem:dev_bound_flo}, \pref{lem:dev_bound_greedy_final}, and \pref{lem:ridge_error}).

\paragraph{Planning in downstream tasks} 
We combine the representation learning guarantees with the sample complexity analysis for \fqi to set the size $n_{\downstream}$ of dataset $\Dcal$ for planning in downstream tasks. The specific values are set according to \pref{thm:mf_fqi}. For the enumerable feature instance, we integrate the reward-free exploration guarantee and the \fqi result for planning for a reward class with the full representation class (\pref{corr:fqi_full_class}), which also gives us the choice of $n_{\downstream}$.

\subsection{Proofs for Exploration and Sample Complexity Results for \pref{alg:explore}}
\label{sec:explore_analysis}
In this section, we present proofs for the exploration and sample complexity results for \expl (\pref{alg:explore}). We provide the result for the low-rank setting in \pref{sec:explore_low_rank} and an improved result for the simplex feature setting in \pref{sec:explore_simplex}.

\subsubsection{Proof of \pref{thm:explore_general_feat}}
\label{sec:explore_low_rank}
\paragraph{Proof of \pref{thm:explore_general_feat}} We will now prove the result assuming that the following condition from \Cref{eq:inv_phi} is satisfied by $\hat \phi_h$ for all $h \in [H]$ with probability at least $1-\delta/4$
\begin{align}
    \max_{f \in \Fcal_{h+1}} \min_{\|w\|_2 \le B} \EE_{\rhoxa} \sbr{\Big( \inner{\hphih}{w} - \EE \sbr{ f(x_{h+1})\mid x_h,a_h}\Big)^2}  \le \veps_{\mathrm{reg}}. \label{eq:app_inv_phi}
\end{align}

Now, let us turn to the inductive argument to show that the constructed policies $\rhoxa$ are exploratory for every $h$. We will establish the following inductive statement for each timestep $h$:
\begin{align}
\label{eq:inv_explore}
        \forall z \in \Zcal_{h+1}: \, \max_\pi \PP_{\pi}\sbr{z_{h+1} = z} \le \kappa \PP_{\rhoxnxt}\sbr{z_{h+1} = z}.
\end{align}
Assume that the exploration statement \Cref{eq:inv_explore} is true for all timesteps $h' \le h$. We first show an error guarantee similar to \Cref{eq:dist_shift} under distribution shift:
\begin{lemma}
\label{lem:dist_shift}
If the inductive assumption in \Cref{eq:inv_explore} is true for all $h' \le h$, then for all $v: \Xcal \times \Acal \rightarrow \R^+$ we have
\begin{align}
\label{eq:inv_explore_1}
    \forall \pi: \, \EE_{\pi} [v(x_h, a_h)] \le \kappa K \EE_{\rhoxa} [v(x_h, a_h)].
\end{align}
\end{lemma}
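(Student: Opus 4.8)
The plan is to prove \Cref{eq:inv_explore_1} by peeling off, in order, the action, then the observation, then the latent state, at which point the inductive hypothesis applies. Write $g(x_h) := \EE_{a \sim \unif(\Acal)}[v(x_h,a)] = \frac{1}{K}\sum_{a \in \Acal} v(x_h,a) \ge 0$. The identity I would record first is that $\rhoxa$ and $\rhox$ induce the same law of $x_h$ (they agree on the choice of $a_{0:h-1}$), while $\rhoxa$ additionally draws $a_h$ uniformly; hence $\EE_{\rhoxa}[v(x_h,a_h)] = \EE_{\rhox}[g(x_h)]$. Then, since $v \ge 0$ and $\pi(a \mid x_h) \le 1$, the action can be reduced to uniform at the cost of a factor $K$:
\begin{align*}
\EE_\pi[v(x_h,a_h)] = \EE_{x_h \sim \PP_\pi}\sbr{\sum_{a \in \Acal} \pi(a \mid x_h)\, v(x_h,a)} \le \EE_{x_h \sim \PP_\pi}\sbr{\sum_{a \in \Acal} v(x_h,a)} = K\, \EE_{x_h \sim \PP_\pi}[g(x_h)].
\end{align*}
This leaves a comparison of the laws of $x_h$ under $\pi$ and under $\rhox$, applied to the nonnegative test function $g$.

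Next I would pass from observations to latent states using the latent-variable representation (\pref{def:latent_var}). The structural fact I would lean on is that, conditioned on $z_h$, the emission $x_h \sim \nu_{h-1}(\cdot \mid z_h)$ is independent of the policy that produced $z_h$. Setting $G(z) := \EE_{x_h \sim \nu_{h-1}(\cdot \mid z)}[g(x_h)] \ge 0$, this gives for any policy $\mu$ the decomposition $\EE_{x_h \sim \PP_\mu}[g(x_h)] = \sum_{z \in \Zcal_h} \PP_\mu[z_h = z]\, G(z)$. Applying it with $\mu = \pi$, bounding $\PP_\pi[z_h = z] \le \max_{\pi'}\PP_{\pi'}[z_h = z]$ term by term (legal since $G(z) \ge 0$), and invoking the inductive hypothesis \Cref{eq:inv_explore} at level $h-1$---which reads $\max_{\pi'}\PP_{\pi'}[z_h = z] \le \kappa\, \PP_{\rhox}[z_h = z]$ for every $z \in \Zcal_h$---gives
\begin{align*}
\EE_{x_h \sim \PP_\pi}[g(x_h)] \le \kappa \sum_{z \in \Zcal_h} \PP_{\rhox}[z_h = z]\, G(z) = \kappa\, \EE_{x_h \sim \PP_{\rhox}}[g(x_h)].
\end{align*}
Chaining this with the action-reduction display and the opening identity yields $\EE_\pi[v(x_h,a_h)] \le \kappa K\, \EE_{\rhoxa}[v(x_h,a_h)]$, which is exactly \Cref{eq:inv_explore_1}.

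The only genuinely nontrivial ingredient is the observation-to-latent step: policy-independence of the emission $\nu_{h-1}(\cdot \mid z_h)$ is what collapses an otherwise intractable comparison of observation laws over the large space $\Xcal$ into a comparison of hitting probabilities over the finite latent space $\Zcal_h$, which is the only place the inductive coverage guarantee is available. Everything else is bookkeeping licensed by the nonnegativity of $v$ (hence of $g$ and $G$): it is what permits both the $\pi(a \mid x_h) \le 1$ bound and the term-by-term use of the inductive hypothesis. I would also note that the edge case $h=0$ needs no latent-state step at all, since the law of $x_0$ is policy-independent and the factor $K$ alone suffices (using $\kappa \ge 1$).
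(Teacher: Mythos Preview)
Your proof is correct and follows essentially the same route as the paper's: both use the latent-variable factorization of the law of $x_h$ to reduce to the inductive hypothesis on $\Zcal_h$, together with a factor-$K$ importance-weighting step from $\pi_h$ to $\unif(\Acal)$. The only cosmetic difference is the order: the paper first decomposes over latent states (keeping $\EE_{\pi_h}[v(x_h,a_h)]$ inside) and applies the action bound last, whereas you apply the action bound first and then decompose over latent states.
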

\begin{proof}
Consider any timestep $h$ and non-negative function $v$. 
Using the inductive assumption, we have
\begin{align*}
    \EE_{\pi} \sbr{v(x_h,a_h)} = {} & \sum_{z \in \Zcal_{h}} \PP_\pi[z_h = z] \cdot \int \EE_{\pi_h}[v(x_h, a_h)] \nu^*(x_h\mid z) d(x_h) \\
    \le {} & \kappa \sum_{z \in \Zcal_h} \PP_{\rhox}[z_h = z] \cdot \int \EE_{\pi_h}[v(x_h, a_h)] \nu^*(x_h\mid z) d(x_h) \\
    = {} & \kappa \EE_{\rhox}[\EE_{\pi_h}[v(x_h, a_h)]]
    \\
    \le {} & \kappa K \EE_{\rhoxa}[v(x_h,a_h)].
\end{align*}
Therefore, the result holds for any policy $\pi$, timestep $h' \le h$, and non-negative function $v$.
\end{proof}

Choosing $v(x_h,a_h) =  \rbr{\inner{\hphih}{w} - \EE\sbr{f(x_{h+1}) \mid x_h,a_h} }^2$ and using the feature learning guarantee in \Cref{eq:app_inv_phi} along with \Cref{eq:inv_explore_1}, we have
\begin{align}
\label{eq:apx_error_abr_dist}
\forall \pi, \forall f \in \Fcal_{h+1}: \, \min_{\|w\|_2 \le B} \EE_{\pi}\sbr{ \rbr{\inner{\hphih}{w} - \EE\sbr{f(x_{h+1}) \mid x_h,a_h} }^2 } \le \kappa K \veps_{\mathrm{reg}}.
\end{align}

We now outline our key argument to establish exploration: Fix a latent variable $z \in \Zcal_{h+1}$ and let $\pi \coloneqq \pi_h$ be the policy which maximizes $\PP_\pi [z_{h+1} = z]$. Thus, with 
\[f(x_h,a_h) = \PP_\pi[z_{h+1} = z \mid x_h, a_h]\]
we have
\begin{align}
    \EE_{\pi} \sbr{f(x_h, a_h)} \le {} & K^2 \EE_{\pi_{h-2} \circ \,\unif(\Acal) \circ \,\unif(\Acal)} \sbr{f(x_h, a_h)} \nonumber \\ 
    = {} & K^2 \EE_{\pi_{h-2}} \sbr{\EE_{\unif(\Acal)}\sbr{g(x_{h-1},a_{h-1})\mid x_{h-2},a_{h-2}}} \nonumber \\
    \le {} & K^2 \EE_{\pi_{h-2}} \sbr{\abr{\langle \hat \phi_{h-2}(x_{h-2},a_{h-2}), w_g\rangle}} + \sqrt{\kappa K^5 \veps_{\mathrm{reg}}}. \label{eq:ex_transfer} 
\end{align}
The first inequality follows by using importance weighting on timesteps $h-1$ and $h$, where we choose actions uniformly at random among $\Acal$. In the next step, we define 
\[g(x_{h-1}, a_{h-1}) = \EE_{\unif(\Acal)} [f(x_h,a_h)\mid x_{h-1}, a_{h-1}] = \inner{\phi^*_{h-1}(x_{h-1},a_{h-1})}{\theta^*_f}\]
with $\|\theta^*_f\|_2 \le \sqrt{d}$. For the last inequality, we first use the result from \pref{lem:dist_shift} that $\hat \phi_{h-2}$ has a small squared loss for the regression target specified by $g(\cdot)$ with a vector $w_g$ defined as
\[w_g:= \argmin_{\|w\|_2 \le B} \EE_{\rho_{h-5}^{+3}} \sbr{\Big( \inner{\hat \phi_{h-2}(x_{h-2},a_{h-2})}{w} - \EE_{\unif(\Acal)}\sbr{g(x_{h-1},a_{h-1})\mid x_{h-2},a_{h-2}}\Big)^2}.\]
We further use \Cref{eq:apx_error_abr_dist} to translate the error from  $\rho_{h-5}^{+3}$ to $\pi_{h-2}$ and apply the weighted RMS-AM inequality in the same step to bound the mean absolute error using the squared error bound. 

\begin{lemma}
\label{lem:planner_result}
If the offline elliptical planner (\pref{alg:FQI_ell_planner}) is called with a sample of size 
\[\tilde{O}\rbr{\frac{H^4d^6\kappa K^2 \log(|\Phi|/\delta)}{\beta^2}},\]
then with probability at least $1-\delta$, for all $h\in[H]$, we have
\begin{align*}
    \EE_{\pi_{h-2}} \sbr{\abr{\langle \hat \phi_{h-2}(x_{h-2},a_{h-2}), w_g\rangle}} \le {} & \frac{\alpha}{2}\EE_{\rho_{h-2}} \sbr{\rbr{ \langle \hat \phi_{h-2}(x_{h-2},a_{h-2}), w_g\rangle}^2} + \frac{T\beta}{2\alpha} \\
    {} & + \frac{\alpha \|w_g\|^2_2}{2T} + \frac{\alpha \beta \|w_g\|^2_2}{2}.
\end{align*}
\end{lemma}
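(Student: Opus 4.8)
The plan is to reduce the claimed mean-absolute-value bound to the elliptical coverage guarantee produced by the offline planner (established in \pref{app:FQI_ellip_planning}) through a single Cauchy--Schwarz step followed by a weighted AM--GM split. Throughout I would write $u := \langle \hat\phi_{h-2}(x_{h-2},a_{h-2}), w_g\rangle$ and let $\Lambda_T = \sum_{t=1}^T \EE_{\pi_t}[\hat\phi_{h-2}\hat\phi_{h-2}^\top] + \lambda I$ be the cumulative regularized covariance accumulated over the $T$ planner iterations, with the returned mixture $\rho_{h-2} = \unif(\{\pi_1,\ldots,\pi_T\})$. Setting $\bar\Sigma := \frac{1}{T}\Lambda_T$, this $\bar\Sigma$ is, up to the $\frac{\lambda}{T}I$ regularizer and estimation error, the population feature covariance under $\rho_{h-2}$. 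The mnemonic is that the additive error terms in the statement are meant to track exactly these two sources of slack.

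First I would invoke the planner's termination guarantee, which certifies that after $T$ iterations no policy can place significant mass on a direction that $\Lambda_T$ covers poorly, i.e. $\max_\pi \EE_\pi[\hat\phi_{h-2}^\top \Lambda_T^{-1}\hat\phi_{h-2}] \le \beta$, equivalently $\EE_{\pi_{h-2}}[\|\hat\phi_{h-2}\|^2_{\bar\Sigma^{-1}}] \le T\beta$. Next, by Cauchy--Schwarz in the $\bar\Sigma$-geometry, $|u| \le \|\hat\phi_{h-2}\|_{\bar\Sigma^{-1}}\,\|w_g\|_{\bar\Sigma}$; taking $\EE_{\pi_{h-2}}$ and applying Jensen yields $\EE_{\pi_{h-2}}[|u|] \le \|w_g\|_{\bar\Sigma}\sqrt{\EE_{\pi_{h-2}}[\|\hat\phi_{h-2}\|^2_{\bar\Sigma^{-1}}]} \le \sqrt{T\beta}\,\|w_g\|_{\bar\Sigma}$. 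I would then expand $\|w_g\|^2_{\bar\Sigma} = \EE_{\rho_{h-2}}[u^2] + \frac{\lambda}{T}\|w_g\|^2 + (\text{estimation slack})$, and close with the weighted AM--GM inequality $\sqrt{T\beta}\cdot\sqrt{\|w_g\|^2_{\bar\Sigma}} \le \frac{\alpha}{2}\|w_g\|^2_{\bar\Sigma} + \frac{T\beta}{2\alpha}$. Substituting the expansion with $\lambda = 1$ produces precisely the four terms $\frac{\alpha}{2}\EE_{\rho_{h-2}}[u^2]$, $\frac{T\beta}{2\alpha}$, $\frac{\alpha\|w_g\|^2}{2T}$, and $\frac{\alpha\beta\|w_g\|^2}{2}$.

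The main obstacle is establishing that $\|w_g\|^2_{\bar\Sigma}$ equals the population second moment $\EE_{\rho_{h-2}}[u^2]$ up to the $\frac{\|w_g\|^2}{T}$ regularization term and an additive slack of order $\beta\|w_g\|^2$ (which becomes the $\frac{\alpha\beta\|w_g\|^2}{2}$ term). Since the planner builds $\Lambda_T$ from finite data and relies on \fqe estimates of the per-policy covariances, converting the empirical covariance into the population covariance under $\rho_{h-2}$ requires a uniform concentration argument over the (possibly infinite) feature class $\Phi$. I would deploy the uniform Bernstein bound (\pref{corr:uni_bern_conf}) together with the inductive distribution-shift factor $\kappa K$ from \pref{lem:dist_shift}, which is exactly what forces the sample size $\tilde O(H^4 d^6 \kappa K^2 \log(|\Phi|/\delta)/\beta^2)$: the $\beta^{-2}$ dependence reflects the demand that the covariance be estimated to additive accuracy of order $\beta$. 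Given the planner guarantee from the appendix, the remaining manipulation (Cauchy--Schwarz plus AM--GM) is routine; essentially all the delicacy lives in the empirical-to-population covariance conversion and the resulting inflation of $\bar\Sigma$ that yields the final $\beta\|w_g\|^2$ slack.
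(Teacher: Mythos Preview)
Your proposal is correct and follows essentially the same route as the paper: Cauchy--Schwarz in the $\widehat\Sigma$-geometry, AM--GM with parameter $\alpha$, the planner's termination guarantee $\max_\pi \EE_\pi[\|\hat\phi_{h-2}\|^2_{\widehat\Sigma^{-1}}]\le T\beta$, and then expanding $\|w_g\|^2_{\widehat\Sigma}$ into the population second moment plus the $\tfrac{1}{T}\|w_g\|^2$ regularizer plus an $O(\beta)\|w_g\|^2$ empirical-to-population slack. Two minor points: (i) the paper applies AM--GM directly inside the expectation (so the extra Jensen step you insert is unnecessary, though harmless), and (ii) the paper works with the \emph{empirical} normalized matrix $\widehat\Sigma=\Gamma_T/T$ returned by the planner and invokes the operator-norm estimate $\|\widehat\Sigma-\Sigma\|_{\mathrm{op}}\le\beta$ from \pref{lem:FQI_ellip_planning} as a black box---your $\Lambda_T$ is written with population expectations but discussed as if empirical, so be careful to keep those straight.
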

\begin{proof}
Applying Cauchy-Schwarz inequality followed by AM-GM, for any matrix $\widehat{\Sigma}$, we have
\begin{align*}
    \EE_{\pi_{h-2}} \sbr{\abr{\langle \hat \phi_{h-2}(x_{h-2},a_{h-2}), w_g\rangle}} \le {} & \EE_{\pi_{h-2}} \sbr{\nbr{\hat \phi_{h-2}(x_{h-2},a_{h-2})}_{\widehat{\Sigma}^{-1}} \cdot \nbr{w_g}_{\widehat{\Sigma}}}\\
    \le {} & \frac{1}{2\alpha}\EE_{\pi_{h-2}} \sbr{\nbr{\hat \phi_{h-2}(x_{h-2},a_{h-2})}_{\widehat{\Sigma}^{-1}}^2} + \frac{\alpha}{2}\nbr{w_g}^2_{\widehat{\Sigma}}.
\end{align*}
Here, we choose $\widehat{\Sigma}$ to be the (normalized) matrix returned by the elliptic planner in \pref{alg:FQI_ell_planner}. As can be seen in the algorithm pseudocode, $\widehat \Sigma$ is obtained by summing up a (normalized) identity matrix and the empirical estimates of the population covariance matrix $\Sigma_{\pi_\tau} = \EE_{\pi_\tau} \hat \phi_{h-2}(x_{h-2}, a_{h-2})\hat \phi_{h-2}(x_{h-2}, a_{h-2})^\top$, where $\{\pi_\tau\}_{1\le\tau \le T}$ are the $T$ policies computed by the planner. Noting that $\rho_{h-2}$ is a mixture of these $T$ policies, we consider the following empirical and population quantities:
\begin{align*}
    \Sigma_{\rho_{h-2}} = \frac{1}{T} \sum_{t=1}^T \Sigma_{\pi_t}, \qquad \Sigma = \Sigma_{\rho_{h-2}} + \frac{1}{T}I_{d\times d}, \qquad \widehat \Sigma = \frac{1}{T} \Gamma_T  = \frac{1}{T}\sum_{i=1}^T \widehat \Sigma_{\pi_i} + \frac{1}{T} I_{d \times d}.
\end{align*}

Now, we use the termination conditions satisfied by the elliptic planner (shown in \pref{lem:FQI_ellip_planning}) in the following steps:
\begin{align}
    & \EE_{\pi_{h-2}}\sbr{\abr{\langle \hat \phi_{h-2}(x_{h-2},a_{h-2}), w_g\rangle}} \nonumber
    \\
    \le {} & \frac{1}{2\alpha} \EE_{\pi_{h-2}}\sbr{ \nbr{\hat \phi_{h-2}(x_{h-2},a_{h-2})}_{\widehat{\Sigma}^{-1}}^2} + \frac{\alpha}{2}\nbr{w_g}^2_{\widehat{\Sigma}}\label{eq:fqi_reward}
    \\
    \le {} & \frac{T\beta}{2\alpha} + \frac{\alpha}{2}\nbr{w_g}^2_{\widehat{\Sigma}} 
    \le {}  \frac{T\beta}{2\alpha} + \frac{\alpha}{2}\nbr{w_g}^2_{\Sigma} + \frac{\alpha}{2}\beta \|w_g\|^2_2 \label{eq:ex_planner}
    \\ 
    = {} & \frac{T\beta}{2\alpha} + \frac{\alpha}{2}\EE_{\rho_{h-2}} \sbr{\rbr{ \langle \hat \phi_{h-2}(x_{h-2},a_{h-2}), w_g\rangle}^2} + \frac{\alpha \|w_g\|^2_2}{2T} + \frac{\alpha \beta \|w_g\|^2_2}{2}. \notag
\end{align}
For the second inequality, note that $\tfrac{1}{T}\nbr{\hat \phi_{h-2}(x_{h-2},a_{h-2})}_{\widehat{\Sigma}^{-1}}^2$ is the reward function optimized by the offline elliptical planner in the last iteration. Let $v_T^\pi$ denote the expected return of the policy $\pi$ for this reward function and MDP $\Mcal$. From the termination condition and the results for the offline elliptical planner  in \pref{lem:FQI_ellip_planning}, we get
\begin{align*}
    \max_\pi v_T^{\pi} \le v_T^{\pi_T} + \beta/8 \le \hat v_T^{\pi_T} + \beta/4 \le \beta.
\end{align*}
Therefore, the first term on the RHS in \Cref{eq:fqi_reward} can be bounded by $T\beta/(2\alpha)$. In \Cref{eq:ex_planner}, we use the estimation guarantee for $\Sigma = \Gamma_T/T$ for the \fqi planner shown in \pref{lem:FQI_ellip_planning}. Then, in the last equality step, we expand the norm of $w_g$ using the definition of $\Sigma$ to arrive at the desired result.

Putting everything together, we now compute the number of samples used during elliptical planning for the required error tolerance. \pref{lem:FQI_ellip_planning} states that for a sample of size $n$, the computed policy is sub-optimal by a value difference of order upto $\tilde{O}\rbr{\sqrt{\frac{H^4d^6 \kappa K^2 \log( |\Phi|/\delta)}{n}}}$. Setting the failure probability of elliptical planning to be $\delta/(4H)$ for each level $h \in [H]$, and setting the planning error to $\beta/8$, we conclude that the total number of episodes used by \pref{alg:FQI_ell_planner} for each timestep $h$ is $\tilde{O}\rbr{\frac{H^4d^6\kappa K^2\log (|\Phi|/\delta)}{\beta^2}}$.
\end{proof}

Using \pref{lem:planner_result} in \Cref{eq:ex_transfer}, we get
\begin{align}
     \EE_{\pi} \sbr{f(x_h, a_h)} 
    \le {} & \frac{\alpha K^2}{2}\EE_{\rho_{h-2}} \sbr{\rbr{ \langle \hat \phi_{h-2}(x_{h-2},a_{h-2}), w_g\rangle}^2} + \frac{\beta K^2T}{2\alpha} + \frac{\alpha K^2\|w_g\|^2_2}{2T} \nonumber\\
    &~~~~+ \frac{\alpha \beta K^2\|w_g\|^2_2}{2} + \sqrt{\kappa K^5 \veps_{\mathrm{reg}}} \nonumber \\
    \le {} & \alpha K^2\EE_{\rho_{h-2}} \sbr{\rbr{ \langle \phi^*_{h-2}(x_{h-2},a_{h-2}), \theta^*_g\rangle}^2} + \alpha\kappa K^3 \veps_{\mathrm{reg}} + \frac{K^2T\beta}{2\alpha} + \frac{\alpha K^2\|w_g\|^2_2}{2T} \nonumber\\
    &~~~~+ \frac{\alpha \beta K^2\|w_g\|^2_2}{2} + \sqrt{\kappa K^5 \veps_{\mathrm{reg}}}. \label{eq:ex_transf2}
\end{align}
The second inequality uses the approximation guarantee for features $\hat \phi_{h-2}$ in \Cref{eq:apx_error_abr_dist} (derived from \Cref{eq:app_inv_phi}), the definition of $w_g$, and the inequality $(a+b)^2 \le 2a^2 + 2b^2$. 
Finally, we note that the inner product inside the expectation is always bounded between $[0,1]$ which allows use to use the fact that $f(x)^2 \le f(x)$ for $f: \Xcal \rightarrow [0,1]$. Substituting the upper bound for $\|w_g\|_2$, we get
\begin{align}
    &\EE_{\pi} \sbr{f(x_h, a_h)}\nonumber\\
    \le {} & \alpha K^2 \EE_{\rho_{h-2}} \sbr{\inner{ \phi_{h-2}^*(x_{h-2},a_{h-2})}{\theta^*_g}} + \alpha \kappa K^3 \veps_{\mathrm{reg}} + \sqrt{\kappa K^5 \veps_{\mathrm{reg}}}  \nonumber\\
    &~~~~+ \frac{\beta K^2T}{2\alpha} + \frac{\alpha \beta K^2 B^2}{2} + \frac{\alpha K^2 B^2}{2T} \nonumber \\
    = {} & \alpha K^2 \PP_{\rhoxnxt} \sbr{z_{h+1} = z} + \alpha \kappa K^3 \veps_{\mathrm{reg}} + \sqrt{\kappa K^5 \veps_{\mathrm{reg}}} + \frac{\beta K^2T}{2\alpha} + \frac{\alpha \beta K^2 B^2}{2} + \frac{\alpha K^2 B^2}{2T}. \label{eq:ex_latent}
\end{align}
\Cref{eq:ex_latent} follows by the definition of the function $g(\cdot)$.

We now set $\kappa \ge 2\alpha K^2$ in \Cref{eq:ex_latent}. Therefore, if we set the parameters $\alpha, \beta, \veps_{\mathrm{reg}}$ such that
\begin{align}
    \max \cbr{ \alpha \kappa K^3 \veps_{\mathrm{reg}} + \sqrt{\kappa K^5 \veps_{\mathrm{reg}}}, \frac{\beta K^2T}{2\alpha}, \frac{\alpha  \beta K^2 B^2 }{2} , \frac{\alpha  K^2 B^2}{2T }} \le \etamin/8,
    \label{eq:param_constraint_flo}
\end{align}
\Cref{eq:ex_latent} can be re-written as
\begin{align*}
    \max_{\pi} \PP_{\pi}\sbr{z_{h+1}=z} \le \frac{\kappa}{2} \PP_{\rhoxnxt} \sbr{z_{h+1} = z} + \frac{\etamin}{2} \le \kappa \PP_{\rhoxnxt} \sbr{z_{h+1} = z} 
\end{align*}
where in the last step, we use \pref{assum:reachability}. Hence, we prove the exploration guarantee in \pref{thm:explore_general_feat} by induction. 

To find the feasible values for the constants in \Cref{eq:param_constraint_flo}, we first note that $T \le 8d \log \rbr{1+8/\beta}/\beta$ (\pref{lem:FQI_ellip_planning}). We start by setting $\frac{\beta K^2 T}{2\alpha} = \etamin/8$ which gives $\alpha/T = \frac{4\beta K^2}{\etamin}$. Using the upper bound on $T$, we get $\alpha \le \frac{32dK^2\log \rbr{1+8/\beta}}{\etamin}$. Next, we set the term $\alpha \kappa K^3 \veps_{\mathrm{reg}} + \sqrt{\kappa K^5 \veps_{\mathrm{reg}}} \le \etamin/8$. Using the value of $\kappa = 2\alpha K^2$ we get
\begin{align*}
    2\alpha^2 K^5 \veps_{\mathrm{reg}} + \sqrt{2\alpha K^7 \veps_{\mathrm{reg}}} \le \etamin/8,
\end{align*}
which is satisfied by $\veps_{\mathrm{reg}} = \Theta \rbr{\frac{\etamin^3}{d^2 K^9 \log^2 \rbr{1+8/\beta}}}$.

Lastly, we will consider the term $\frac{\alpha \beta K^2 B^2}{2}$ and by setting it less than $\etamin/8$, we get
\begin{align*}
    \beta \log \rbr{ 1+8/\beta } \le \frac{\etamin^2}{128d B^2 K^4}.
\end{align*}

One can verify that under this condition we also have $\frac{\alpha K^2B^2}{2T}\le \etamin/8$, and setting $\beta = \tilde{O}\rbr{\frac{\etamin^2}{dB^2K^4}}$ satisfies the feasibility constraint for $\beta$. Here, we assume that $B$ only has a $\mathrm{polylog}$ dependence on $\beta$ and show later that this is true for all our feature selection methods. Notably, the only cases when $B$ depends on $\beta$ in our results is when $B = O\rbr{\frac{1}{\veps_{\mathrm{reg}}^c}}$ for a constant $c=\{1/2,1\}$ which has a $\log^2(1+8/\beta)$ term.  

Substituting the value of $\kappa$ and $\beta$ in \pref{lem:planner_result} with an additional factor of $H$ to account for all $h$ gives us the final sample complexity bound in \pref{thm:explore_general_feat}. The change of measure guarantee (\Cref{eq:dist_shift}) follows from the result in \pref{lem:dist_shift}. 

\subsubsection{Improved Sample Complexity Bound for Simplex Features} 
\label{sec:explore_simplex}
We can obtain more refined results when the agent instead has access to a latent variable feature class $\{\Psi_h\}_{h \in [H]}$
with $\psi_h : \Xcal \times \Acal \to \Delta(d_{\mathrm{LV}})$. We
call this the \emph{simplex features} setting \citep{agarwal2020flambe} and show the improved results in this section. For notation simplicity, we still use $\Phi_h$ and $\phi_h$ to represent the features. In order to achieve this improved result, we make two modifications to \textsc{Explore}: (i) We use a smaller discriminator function class $\Fcal_{h+1} \coloneqq \{f(x_{h+1}, a_{h+1}) = \EE_{\unif(\Acal)}[\phi_{h+1}(x_{h+1}, a_{h+1})[i]]: \phi_{h+1} \in \Phi_{h+1}, i \in [d_{\mathrm{LV}}] \}$ and (ii) in \textsc{Explore}, instead of calling the planner with learned features $\hat \phi_{h-2}$ and taking three uniform actions, we plan for the features $\hat \phi_{h-1}$ and add two uniform actions to collect data for feature learning in timestep $h$. The key idea here is that instead of estimating the expectation of any bounded function $f$, we only need to focus on the expectation of coordinates of $\phi^*$ as included in class $\Fcal_{h+1}$. Further, since $\phi^*_{h+1}[i]$ is already a linear function of the feature $\phi^*_{h+1}$, we take only one action at random at timestep $h$.

\begin{theorem}[Exploration with simplex features]
\label{thm:explore_simplex_feat}
Fix $\delta \in (0,1)$. Consider an MDP $\Mcal$ which admits a low-rank factorization with dimension $d$ in \pref{def:lowrank} and satisfies \pref{assum:reachability}. If \pref{assum:realizability} holds, the features $\hat \phi_h$ learned in \pref{line:learn_phi} in \pref{alg:explore} satisfy the condition in \Cref{eq:inv_phi} for $B \ge \sqrt{d}$, and $\veps_{\mathrm{reg}} = \tilde{\Theta}\rbr{\frac{\etamin^3}{d^2K^5 \log^2 (1+8/\beta)}}$, then with probability at least $1-\delta$, the sub-routine \textsc{Explore} collects an exploratory mixture policy $\rhoxa$ for each level $h$ such that
\begin{align}
\label{eq:app_dist_shift_simplex}
    \forall \pi: \, \EE_{\pi} [f(x_h,a_h)] \le \kappa K \EE_{\rhoxa} [f(x_h,a_h)]
\end{align}
for any $f: \Xcal \times \Acal \rightarrow \R^+$ and $\kappa = \frac{64dK^2\log \rbr{1+8/\beta}}{\etamin}$. The total number of episodes used in \pref{line:plan} by \pref{alg:explore} is
\begin{align*}
    \tilde{O} \rbr{\frac{H^5d^9K^{8}B^4 \log(|\Phi|/\delta)}{\etamin^5}}.
\end{align*}
$\beta$ is chosen such that $\beta \log \rbr{1+8/\beta} \le \frac{\etamin^2}{128dK^2B^2}$ and a sufficient one is $\beta =\tilde O\rbr{\frac{\etamin^2}{dK^4B^2}}$.
\end{theorem}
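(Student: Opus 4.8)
The plan is to mirror the inductive argument that establishes \pref{thm:explore_general_feat}, keeping the inductive hypothesis \Cref{eq:inv_explore}, the change-of-measure \pref{lem:dist_shift}, and the offline elliptical-planner guarantee \pref{lem:planner_result} intact, and reworking only the core transfer step (the analog of \Cref{eq:ex_transfer}), which is where the dependence on $K$ is generated. First I would fix $z\in\Zcal_{h+1}$ and let $\pi$ maximize $\PP_\pi[z_{h+1}=z]$, and then re-express this reach-probability through the learned feature and the planned policy, exactly as in the general proof but with one fewer backup.

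The single observation driving the improvement is that in the simplex/latent-variable setting one may take $\phi^*=\psi$, so that $\PP[z_{h+1}=z\mid x_h,a_h]=\phi^*_h(x_h,a_h)[z]$ holds exactly: the reach-probability of a latent state is already one \emph{coordinate} of the true feature at level $h$ itself, rather than a generic $[0,1]$-function whose conditional expectation only becomes linear in $\phi^*_{h-1}$ after a Bellman backup (as in the general low-rank case). Consequently its uniform-action average $\EE_{\unif(\Acal)}[\phi^*_h(x_h,a)[z]]$ is a function of $x_h$ that lies in the reduced coordinate discriminator class $\Fcal_h$, and it suffices to learn $\hat\phi_{h-1}$ approximating the \emph{one-step} backup of $\Fcal_h$ via \Cref{eq:inv_phi} and to plan with $\hat\phi_{h-1}$ --- shifting every lagged subscript down by one relative to the general proof, i.e.\ planning with $\hat\phi_{h-1}$ and appending two random actions in place of three. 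Realizability of such an $\hat\phi_{h-1}$ still follows from \pref{lem:linMDP_expn} because each coordinate is $[0,1]$-valued (so $L=1$ and $B\ge\sqrt d$ as before); the only extra cost is a union bound over the finite class $\Fcal_h$, of size $|\Phi_h|\,d_{\mathrm{LV}}$, contributing an additive $\log d_{\mathrm{LV}}$ that is absorbed into $\tilde O(\log|\Phi|)$.

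With this substitution, the transfer step requires importance weighting over a \emph{single} uniform action $a_h$ (to pass from $\EE_\pi[\phi^*_h(x_h,a_h)[z]]$ to $\EE_{\pi_{h-1}\circ\unif}[\cdot]$) rather than the two uniform actions $a_{h-1},a_h$ used in \Cref{eq:ex_transfer}, so I would repeat the chain \Cref{eq:ex_transfer}--\Cref{eq:ex_latent} carrying one fewer factor of $K$ throughout: the leading term becomes $\alpha K\,\PP_{\rho_{h-1}^{+1}}[z_{h+1}=z]$, the feature-approximation residual drops from $\alpha\kappa K^3\veps_{\mathrm{reg}}$ to $\alpha\kappa K^2\veps_{\mathrm{reg}}$, and the square-root residual drops from $\sqrt{\kappa K^5\veps_{\mathrm{reg}}}$ to $\sqrt{\kappa K^3\veps_{\mathrm{reg}}}$. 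Setting $\kappa\ge 2\alpha K$ and re-balancing the four terms of \Cref{eq:param_constraint_flo} against $\etamin/8$ then gives $\alpha\le\tilde O(dK/\etamin)$ and hence $\kappa=\tilde O(dK^2/\etamin)$; the binding residual constraint $2\alpha^2K^3\veps_{\mathrm{reg}}\lesssim\etamin$ re-solves to the claimed $\veps_{\mathrm{reg}}=\tilde\Theta(\etamin^3/(d^2K^5\log^2(1+8/\beta)))$, while the term $\tfrac12\alpha\beta K B^2$ now yields the relaxed constraint $\beta\log(1+8/\beta)\le\etamin^2/(128dK^2B^2)$. Feeding $\kappa=\tilde O(dK^2/\etamin)$ and $\beta=\tilde\Theta(\etamin^2/(dK^2B^2))$ back into the per-level planner cost $\tilde O(H^4d^6\kappa K^2\log(|\Phi|/\delta)/\beta^2)$ of \pref{lem:planner_result} and summing over $H$ levels produces the stated episode count $\tilde O(H^5d^9K^8B^4\log(|\Phi|/\delta)/\etamin^5)$.

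The main obstacle, and where I would be most careful, is the bookkeeping of the $K$-exponents through this coupled balance: since $\alpha$, $\kappa$, $\veps_{\mathrm{reg}}$ and the planner sample size all feed into one another, I must verify that dropping \emph{exactly one} uniform action lowers the exponent of $K$ in $\kappa$ from $4$ to $2$, in the denominator of $\veps_{\mathrm{reg}}$ from $9$ to $5$, and in the final episode count from $14$ to $8$, with no stray factor reintroduced through the $(a+b)^2\le 2a^2+2b^2$ split or the weighted RMS--AM step. A secondary point that must be stated explicitly is that the shrunken class $\Fcal_h$ is still rich enough to certify coverage, i.e.\ that the witness $\EE_{\unif(\Acal)}[\phi^*_h(\cdot,a)[z]]$ genuinely belongs to $\Fcal_h$ for every $z\in\Zcal_{h+1}$; this is immediate from realizability $\phi^*_h\in\Phi_h$ but is precisely what licenses invoking \Cref{eq:inv_phi} on the smaller discriminator class and thereby harvesting the improved rate.
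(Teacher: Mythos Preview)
Your proposal is correct and follows essentially the same approach as the paper: the key observation that $\PP[z_{h+1}=z\mid x_h,a_h]=\phi^*_h(x_h,a_h)[z]$ already lies in the coordinate discriminator class $\Fcal_h$, so one fewer Bellman backup (and one fewer uniform-action importance weight) is needed, followed by re-balancing the constraints of \Cref{eq:param_constraint_flo} with a single factor of $K$ shaved from each term. Your parameter derivations---$\kappa=2\alpha K$, $\alpha\le\tilde O(dK/\etamin)$, $\veps_{\mathrm{reg}}=\tilde\Theta(\etamin^3/(d^2K^5))$, the $\beta$ constraint, and the final episode count from \pref{lem:planner_result}---match the paper's exactly, and you are in fact more explicit than the paper about why the shrunken $\Fcal_h$ still contains the needed witness and about the policy-subscript shift.
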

\begin{proof}
For simplex features, the key observation is that for any latent state $z \in \Zcal_{h+1}$, the function $f(x_h) = \EE_{\unif(\Acal)}\sbr{\PP[z_{h+1} = z|x_h,a_h]}$ is already a member of the discriminator function class $\Fcal_{h} \coloneqq \{f(x_h) = \EE_{\unif(\Acal)}\sbr{\phi_h(x_h, a_h)[i]}: \phi_h \in \Phi_h, i \in [d_{\mathrm{LV}}] \}$. Thus, when we rewrite the term $\EE_\pi[f(x_h,a_h)]$ as a linear function, we only need to backtrack one timestep to use the feature selection guarantee
\begin{align}
    \EE_{\pi} \sbr{f(x_h, a_h)} \le {} & K \EE_{\pi_{h-1} \circ \,\unif(\Acal)} \sbr{f(x_h, a_h)} = K \EE_{\pi_{h-1}} \sbr{g(x_{h-1},a_{h-1})} \nonumber \\
    \le {} & K \EE_{\pi_{h-1}} \sbr{\abr{\langle \hat \phi_{h-1}(x,a), w_g \rangle}} + \sqrt{\kappa K^3 \veps_{\mathrm{reg}}},
\end{align}
where we define $g(x_{h-1}, a_{h-1}) = \EE_{\unif(\Acal)} [f(x_h,a_h)| x_{h-1}, a_{h-1}].$
Therefore, the new value of $\kappa$ becomes $2\alpha K$ and by shaving off this $K$ factor in the chain of inequalities, we get the following constraint set for the parameters:
\begin{align}
    \max \cbr{ \alpha \kappa K^2 \veps_{\mathrm{reg}} + \sqrt{\kappa K^3 \veps_{\mathrm{reg}}}, \frac{\beta KT}{2\alpha}, \frac{\alpha \beta K B^2}{2} , \frac{\alpha K B^2}{2T }} \le \etamin/8.
\end{align}
Thus, the values of these parameters for the simplex features case are as follows:
\begin{align*}
    \frac{\alpha}{T} = \frac{4\beta K}{\etamin}, \quad \alpha \le \frac{32dK\log(1+8/\beta)}{\etamin}, \quad \veps_{\mathrm{reg}} = \tilde{\Theta}\rbr{\frac{\etamin^3}{d^2K^5 \log^2 (1+8/\beta)}}.
\end{align*}

Hence, the updated constraint for $\beta$ is
\begin{align*}
    \beta \log \rbr{1+8/\beta} \le \frac{\etamin^2}{64dB^2K^2}.
\end{align*}

Other than the values for these parameters, the algorithm remains the same. Therefore, substituting the new values of $\kappa$ and $\beta$ in the expression $\tilde{O} \rbr{\frac{H^4d^6\kappa K^2 \log( |\Phi|/\delta)}{\beta^2}}$ as before, we get the improved sample complexity result. 
\end{proof}

\subsection{Proofs for Representation Learning Guarantees for Downstream Tasks}
\label{sec:reward_free_planning}
We show that after obtaining the exploratory policies $\rhoxa$ for all $h \in [H]$ using \alg, we can collect a dataset $\Dcal$ to learn a feature $\bar \phi_h \in \Phi_h$ for all levels and use \fqi to plan for any reward function $R \in \Rcal$. Specifically, with min-max-min oracle or iterative greedy representation learning, we compute a feature $\bar \phi_h \in \Phi_h$ such that 
\begin{align}
    \label{eq:fqi_rep_error_app}
    \max_{g \in \Gcal_{h+1}} \min_{\|w\|_2 \le B} \EE_{\rhoxa} \sbr{\Big( \inner{\bar \phi_h(x_h,a_h)}{w} - \EE \sbr{ g(x_{h+1})\mid x_h,a_h}\Big)^2 }\le \veps_{\mathrm{apx}},
\end{align}
where $\Gcal_{h+1} \subseteq (\Xcal \rightarrow [0,H])$ is defined in \Cref{eq:def_Gcal_main}. For ease of discussion, we also present it here:
$\Gcal_{h+1}:=\Big\{\mathrm{clip}_{[0,H]}\Big(\max_{a}(R_{h+1}(x_{h+1},a)+\inner{\phi_{h+1}(x_{h+1},a)}{\theta})\Big):R  \in \Rcal, \phi_{h+1} \in  \Phi_{h+1}, \|\theta\|_2  \le  B \Big\}$, where $B \ge H\sqrt{d}.$
Also recall that $\Qcal(\bar \phi,R)$ as defined in \Cref{eq:linear_Q} is:
$\Qcal(\bar\phi,R):=\bigcup_{h\in[H]}
\Qcal_h(\bar\phi_h,R_h),\Qcal_h(\bar{\phi}_h,R_h) := \Big\{\mathrm{clip}_{[0,H]}(R_h(x_h,a_h) + \langle \bar \phi_h(x_h,a_h), w \rangle ) : \|w\|_2 \leq B\Big\}.$\\

The learned feature serves two purposes as discussed before:
\begin{itemize}
    \item (\emph{realizability}) The optimal value function for any timestep $h+1$ and reward $R_{h+1} \in \Rcal$, is defined as $V^*_{h+1}(x') = \max_a \big(R_{h+1}(x',a) + \EE[Q^*_{h+2}(\cdot)\mid x',a]\big) = \max_a \big(R_{h+1}(x',a) + \langle \phi^*_{h+1}, \theta^*_{h+1}\rangle\big)$. Thus, we have realizability as $V^*_{h+1} \in \Gcal_{h+1}$, which in turn implies that $\exists Q_h \in \Qcal_h(\bar\phi_h,R_h)$, s.t. $Q_h \approx R_h+\EE[V^*_{h+1}(\cdot)]$.
    \item (\emph{completeness}) For completeness, note that $\Gcal_{h+1}$ contains the Bellman backup of all possible $Q_{h+1}(\cdot)$ value functions we may encounter while running \fqi with $\Qcal(\bar \phi,R)$. Therefore, for any such $Q_{h+1}$, we have that $\exists Q_h \in \Qcal_h(\bar\phi_h,R_h)$, s.t. $Q_h \approx \mathcal{T}Q_{h+1}$.
\end{itemize}

\begin{proof}{\textbf{of \pref{thm:mf_fqi}}}
We run \fqi with the learned representation $\bar \phi_h$ using the value function class $\Qcal_h(\bar \phi_h, R_h)$ defined for each $h \in [H]$. \pref{lem:fqi_reward_class} shows that when \Cref{eq:fqi_rep_error_app} is satisfied with an error $\veps_{\mathrm{apx}}$, running \fqi using a total of $n_h =  \tilde{O}\rbr{\frac{H^6 d \kappa K \log(|\Rcal|B/\delta')}{\beta^2}}$ episodes collected from each exploratory policy $\{\rhoxa\}$ returns a policy $\hat \pi$ which satisfies
\begin{align*}
    \EE_{\hat{\pi}}\sbr{\sum_{h=0}^{H-1} R_h(x_h,a_h)} \geq \max_{\pi} \EE_\pi \sbr{\sum_{h=0}^{H-1}R_h(x_h,a_h)} - \beta - H^2\sqrt{\kappa K\veps_{\mathrm{apx}}}
\end{align*}
with probability at least $1-\delta'$. 

Then union bounding over all possible $\bar\phi$, and setting $\delta=\delta'/|\Phi|$, $\beta = \veps/2$, $\veps_{\mathrm{apx}} = \frac{\veps^2}{16H^4\kappa K}$, we get the final planning result with a value error of $\veps$ and probability at least $1-\delta$. Substituting $\kappa = \tilde{O}\rbr{\frac{32dK^4}{\etamin}}$, we get $n_h = \tilde{O} \rbr{\frac{H^6 d^2 K^5 \log (|\Phi||\Rcal|B/\delta )}{\veps^2 \etamin}}$. The final sample complexity is $\tilde{O} \rbr{\frac{H^7 d^2 K^5 \log(|\Phi||\Rcal|B/\delta)}{\veps^2 \etamin}}$, where we sum up the collected episodes across all levels.
\end{proof}

\subsection{Proofs for Oracle Representation Learning}\label{sec:flo_analysis}
In this section, we present the sample complexity result and the proof for \alg when a computational oracle \textsc{Flo} is available. Since we need to set $B \ge L\sqrt{d}$ in the min-max-min objective (\Cref{eq:emp_minmax_obj}), we assume \textsc{Flo} solves \Cref{eq:emp_minmax_obj} with $B=L\sqrt{d}$. The computational oracle is defined as follows:
\begin{definition}[Optimization oracle, \textsc{Flo}]
Given a feature class $\Phi_h$ and an abstract discriminator class $\Vcal \subseteq(\Xcal\rightarrow [0,L])$, we define the Feature Learning Oracle ($\textsc{Flo}$) as a subroutine that takes a dataset $\Dcal$ of tuples $(x_h,a_h,x_{h+1})$ and returns a solution to the following objective:
\begin{align}
\label{eq:flo_obj}
    \hat \phi_h = \argmin_{\phi \in \Phi_h} \max_{v \in \Vcal} \cbr{ \min_{\|w\|_2 \le L\sqrt{d}} \Lcal_{\Dcal}(\phi, w, v) - \min_{\tilde{\phi} \in \Phi_h, \|\tilde{w}\|_2 \le L\sqrt{d}} \Lcal_{\Dcal}(\tilde{\phi}, \tilde{w}, v)}.
\end{align}
\end{definition}

With this definition of \textsc{flo}, we will use the sample complexity result in \pref{lem:dev_bound_flo}, shown for the min-max-min objective (\Cref{eq:emp_minmax_obj}) against a general discriminator function class $\Vcal$ consisting of the set of functions 
\begin{align*}
v(x_{h+1}) = {} & \mathrm{clip}_{[0,L]} (\EE_{a_{h+1}
\sim \pi_{h+1}(x_{h+1})}[ R(x_{h+1},a_{h+1}) + \langle\phi_{h+1}(x_{h+1},a_{h+1}),\theta\rangle])
\end{align*}
where $\phi_{h+1} \in \Phi_{h+1}, \|\theta\|_2 \le L\sqrt{d}, R \in \Rcal$ for a prespecified policy $\pi_{h+1}$ over $x_{h+1}$. Note that, $\Fcal_{h}$ in the main text uses a singleton reward class $R(x_{h+1},a_{h+1}) = 0$ with $L=1$ and $\pi_{h+1} = \unif(\Acal)$. Similarly, $\Gcal_h$ uses $L=H$ with $\pi_{h+1}$ as the greedy arg-max policy. 

\paragraph{Sample Complexity of \alg with Min-Max-Min Oracle}
We now give a proof for the final sample complexity result for \alg as instantiated with the oracle \textsc{flo}.

\begin{proof}[\textbf{Proof of \pref{thm:flo_result}}]
Let us start with any fixed $h\in[H]$ and calculate the required number of samples per level. 

Firstly, we consider learning $\hat \phi_h$ that satisfies \Cref{eq:inv_phi}. We use the discriminator class $\Vcal=\Fcal$ as defined in \Cref{eq:def_Fcal_clipped} and set $B=\sqrt d$. Then applying \pref{lem:dev_bound_flo} with $L=1$, we know that condition \Cref{eq:inv_phi} holds with probability at least $1-\delta/(4H)$, if
\begin{align*}
    n \ge \frac{16\cc d^2 \log( 2n\sqrt d|\Phi_h||\Phi_{h+1}||\Rcal|/(\delta/4H))}{\veps_{\mathrm{reg}}},
\end{align*}
where $\cc$ is the constant in \pref{lem:fqi_fastrate_unclipped}.

Setting $\veps_{\mathrm{reg}} = \tilde{\Theta}\rbr{\frac{\etamin^3}{d^2K^9 \log^2 (1+8/\beta)}}$ and noting $\beta = \tilde O\rbr{\frac{\etamin^2}{dK^4B^2}}$, we get
\begin{align*}
    n_{\rephat} = \tilde{O}\rbr{\frac{d^2\log(|\Phi_h||\Phi_{h+1}||\Rcal|/\delta)}{\veps_{\mathrm{reg}}}} = \tilde{O}\rbr{ \frac{d^4 K^9 \log(|\Phi|/\delta)}{\etamin^3} }.
\end{align*}

Substituting the value $B = \sqrt{d}$ in \pref{thm:explore_general_feat}, we know that we can get an exploratory dataset with probability at least $1-\delta/(4H)$ and the corresponding sample complexity for the elliptic planner is
\begin{align*}
    n_{\mathrm{ell}} = \tilde{O} \rbr{\frac{H^5d^9K^{14}B^4 \log (|\Phi|/\delta)}{\etamin^5}} = \tilde{O}\rbr{\frac{H^5d^{11}K^{14} \log (|\Phi|/\delta)}{\etamin^5}}
\end{align*}

Then we consider learning $\bar\phi_h$ that satisfies \Cref{eq:fqi_rep_error}. We use the discriminator class $\Vcal=\Gcal$ as defined in \Cref{eq:def_Gcal_main} and set $B=\sqrt d$. Noticing that $\kappa = \frac{64dK^4\log \rbr{1+8/\beta}}{\etamin}$ and $\beta$ is a polynomial term, we know that $\frac{\veps^2}{16H^4 \kappa K}= \tilde{O}\rbr{\frac{\veps^2 \etamin}{dH^4 K^5}}$.  Setting $\veps_{\mathrm{apx}}=\frac{\veps^2}{16H^4 \kappa K}$ and applying \pref{lem:dev_bound_flo} with $L=H$, we have that condition \Cref{eq:fqi_rep_error} is satisfied with probability at least $1-\delta/(4H)$ if
\begin{align*}
    n_{\repbar} = \tilde{O}\rbr{ \frac{H^6 d^3 K^5 \log \rbr{|\Phi||\Rcal|/\delta}}{\veps^2 \etamin} }.
\end{align*}

Notice that \Cref{eq:fqi_rep_error} holds and we collect an exploratory dataset by applying \pref{thm:explore_general_feat}. Then \pref{thm:mf_fqi} implies the required sample complexity for offline FQI planning with $\bar \phi_{0:H-1}$ to learn an $\veps$-optimal policy with probability at least $1-\delta/(4H)$ is
\begin{align*}
       n_{\downstream} =  \tilde{O}\rbr{\frac{H^6 d^2 K^5 \log (|\Phi||\Rcal|/\delta)}{\veps^2 \etamin}}.
\end{align*}

Union bounding over $h\in[H]$, the final sample complexity is $H(n_{\rephat} + n_{\mathrm{ell}} + n_{\repbar}+ n_{\downstream})$, and the result holds with probability  $1-\delta$. Reorganizing terms completes the proof.
\end{proof}

\subsection{Proofs for Iterative Greedy Representation Learning Method}
\label{sec:greedy_analysis}
We start by showing the main iteration complexity result and a feature selection guarantee for \pref{alg:greedy_selection} below.

\begin{lemma}[Iteration complexity for \pref{alg:greedy_selection}]
\label{lem:dev_bound_greedy_final}
Fix $\delta \in (0,1)$. If the iterative greedy feature selection algorithm (\pref{alg:greedy_selection}) is run with a sample $\Dcal$ of size $n = \tilde{O}\rbr{\frac{L^6d^7 \log (|\Phi_{h}||\Phi_{h+1}||\Rcal|/\delta)}{\veps^3_{\mathrm{tol}}}}$, then with $B = \sqrt{\frac{13L^4d^3}{\veps_{\mathrm{tol}}}}$, it terminates after $T = \frac{52L^2d^2}{\veps_{\mathrm{tol}}}$ iterations and returns a feature $\hat \phi_h$ such that for $\Vcal \subseteq (\Xcal \rightarrow [0,L])$, $\Vcal\coloneqq \{v(x_{h+1}) = \mathrm{clip}_{[0,L]} (\EE_{a_{h+1}
\sim \pi_{h+1}(x_{h+1})}[R_{h+1}(x_{h+1},a_{h+1}) +$ $\langle\phi_{h+1}(x_{h+1},a_{h+1}),\theta\rangle]): \phi_{h+1} \in \Phi_{h+1}, \|\theta\|_2 \le L\sqrt{d}, R \in \Rcal \}$, where policy $\pi$ is the greedy policy or the uniform policy, we have
\begin{align*}
    \max_{v \in \Vcal} \be{\rhoxa, \hat \phi_h, v; B} \le \veps_{\mathrm{tol}}.
\end{align*}
\end{lemma}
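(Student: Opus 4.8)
The plan is to combine a uniform deviation argument, which lets me pass between the empirical objectives $\Lcal_{\Dcal_h}$ and their population counterparts under $\rhoxa$, with a potential-based bound on the number of rounds; the final guarantee then drops out of the termination test. Throughout I exploit \pref{lem:linMDP_expn}: every discriminator $v \in \Vcal$ has a backup $\EE[v(x_{h+1})\mid x_h,a_h] = \inner{\phi^*_h(x_h,a_h)}{\theta^*_v}$ with $\|\theta^*_v\|_2 \le L\sqrt d$, so the true feature $\phi^*_h$ drives every $\be{\cdot}$ to $0$ and is in particular a feasible point of the fit step \Cref{eq:fit_feature} with per-task weight $\theta^*_{v_i}$. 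This realizability is exactly what lets me subtract the fixed conditional-variance floor (as noted after \Cref{eq:fit_feature}) and identify the certification value \Cref{eq:adv_func} with an \emph{excess} squared loss relative to that floor.

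First I establish the deviation bounds. Using the Bernstein-type uniform concentration (\pref{corr:uni_bern_conf}) with the fast rate for squared losses, I argue that uniformly over $\phi_h \in \Phi_h$, over discriminators $v$ built from $\Phi_{h+1},\Rcal$ with $\|\theta\|_2 \le L\sqrt d$, and over predictor weights of norm up to $B = L\sqrt{dT}/2$, the empirical loss $\Lcal_{\Dcal_h}$ is within $\veps_0$ of the population loss $\Lcal_{\rhoxa}$. Since predictions range over $[-B,B]$ (features have norm $\le 1$), the scale entering the bound is $B^2 \asymp L^4 d^3/\veps_{\mathrm{tol}}$, and demanding accuracy $\veps_0 = \veps_{\mathrm{tol}}/(52 d^2)$ forces $n = \tilde O\!\left(L^6 d^7 \log(|\Phi_h||\Phi_{h+1}||\Rcal|/\delta)/\veps_{\mathrm{tol}}^3\right)$. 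On this good event the correction term $\min_{\tilde\phi,\tilde w}\Lcal_{\Dcal_h}(\tilde\phi,\tilde w,v)$, attained up to $\veps_0$ by $\phi^*_h,\theta^*_v$, estimates the variance floor, so the objective $l$ of \Cref{eq:adv_func} equals $\be{\rhoxa,\hat\phi_{t,h},v;B}$ for the maximizing $v$ up to an additive $O(d^2\veps_0)$; this is precisely why the threshold is $24 d^2 \veps_0 + \veps_0^2$ and why the final bound reads $\max_v \be{\rhoxa,\hat\phi_h,v;B} \le \veps_{\mathrm{tol}}$.

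The heart of the proof, and the main obstacle, is bounding $T$ by a potential argument on the true backup directions. The crucial observation explaining the $\sqrt t$ relaxation in \Cref{eq:adv_func} is reconstruction: if $\theta^*_{v_{t+1}} = \sum_{i\le t} c_i \theta^*_{v_i}$, then $\hat\phi_{t,h}$ predicts the backup of $v_{t+1}$ with weight $\sum_i c_i W_t^i$, whose norm is at most $\|c\|_1 L\sqrt d \le \sqrt t\,\|c\|_2 L\sqrt d$ by Cauchy--Schwarz, which lies inside the enlarged ball of radius $L\sqrt{dt}/2$ exactly when $\|c\|_2 \le \tfrac12$. Contrapositively, a large certification value at round $t$ certifies that $\theta^*_{v_{t+1}}$ cannot be reconstructed from the previously selected directions with $\ell_2$-coefficient norm $\le \tfrac12$, so it contributes genuinely new content relative to the ellipsoid generated by $\{\theta^*_{v_i}\}_{i \le t}$. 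I would formalize this as a monotone potential measuring the backup energy captured by the selected directions (the Gram matrix $\sum_{i\le t}\theta^*_{v_i}(\theta^*_{v_i})^\top$ read against $\EE_{\rhoxa}[\phi^*_h(\phi^*_h)^\top]$), show each non-terminating round raises it by at least $\veps_0$, and bound the total by $L^2$ since each backup lies in $[0,L]$ and thus has second moment $\le L^2$. The conversion from the excess-loss lower bound $24d^2\veps_0$ to the energy increment $\veps_0$ loses a $d^2$ factor through these linear-algebraic steps, yielding $T \le L^2/\veps_0 = 52 L^2 d^2/\veps_{\mathrm{tol}}$ and, at termination, $B = L\sqrt{dT}/2 = \sqrt{13 L^4 d^3/\veps_{\mathrm{tol}}}$.

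Finally I assemble the pieces: the termination test forces the empirical certification value below $24d^2\veps_0 + \veps_0^2$, and combined with the deviation bound and with $\phi^*_h$ attaining the variance floor, this upgrades to $\be{\rhoxa,\hat\phi_{T,h},v;B} \le \veps_{\mathrm{tol}}$ simultaneously for all $v \in \Vcal$, since the certification maximizes over the entire class. The two instantiations ($\Vcal=\Fcal_{h+1}$, $L=1$ and $\Vcal=\Gcal_{h+1}$, $L=H$) only change $L$, and a union bound over the at most $T$ rounds and over $\Phi,\Rcal$ is absorbed into the stated logarithmic factors. I expect the delicate point to be making the potential increment rigorous under sampling noise: the reconstruction argument is clean in population, but the fitted weights $W_t^i$ solve an \emph{empirical} regression with the relaxed norm, so I must track how the $\veps_0$-level deviations propagate through the $\sqrt t$-scaled budget without the accumulated error swamping the per-round progress — this is exactly where the choices of $\veps_0$, the threshold, and the $\sqrt t$ growth are forced to be mutually consistent.
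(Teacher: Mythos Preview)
Your overall architecture (deviation bound plus potential on the true backup directions $\theta^*_{v_i}$) matches the paper, and your sample-complexity accounting is right.  The gap is in the potential step itself.

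Your proposed potential, ``$\sum_{i\le t}\theta^*_{v_i}(\theta^*_{v_i})^\top$ read against $\EE_{\rhoxa}[\phi^*_h(\phi^*_h)^\top]$'', is $\sum_{i\le t}\EE[(\phi^{*\top}\theta^*_{v_i})^2]$, a sum of $t$ non-negative terms each bounded by $L^2$.  That sum is bounded by $tL^2$, not $L^2$, so it grows linearly for free and cannot cap $T$.  If instead you meant the projected energy $\tr(\Sigma^* P_t)$ with $P_t$ the orthogonal projection onto $\mathrm{span}\{\theta^*_{v_i}\}_{i\le t}$, then the total is bounded (by $\tr\Sigma^*\le 1$, not $L^2$), but now the increment argument breaks: your contrapositive only says $\theta^*_{v_{t+1}}$ cannot be written as $\sum_i c_i\theta^*_{v_i}$ with $\|c\|_2\le\tfrac12$, which is a statement about the \emph{conditioning} of the Gram matrix, not about the span.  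A new $\theta^*_{v_{t+1}}$ can lie exactly in the existing span (zero increment to $P_t$) while still requiring large coefficients.

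The paper sidesteps this by choosing a \emph{regularized} reconstruction.  With $A_t=[\theta^*_{v_1}|\cdots|\theta^*_{v_t}]$ and $\Sigma_t=A_tA_t^\top+\lambda I$, it plugs in the specific witness $\hat w_t=W_tA_t^\top\Sigma_t^{-1}\theta^*_{v_{t+1}}$, whose norm is $\le L^2d\sqrt{t/(4\lambda)}$ (this is exactly why $B_t=\tfrac{L\sqrt{dt}}{2}$ with $\lambda=L^2d$).  Combining the certification lower bound with the fit-step guarantee $\sum_i\EE[(\hat\phi_t^\top W_t^i-\phi^{*\top}\theta^*_{v_i})^2]\le t\tilde\veps$ gives, after Cauchy--Schwarz and $\sigma_1(A_t)\le L\sqrt{dt}$, the inequality $\veps_1\le 2\|\Sigma_t^{-1}\theta^*_{v_{t+1}}\|_2^2(L^2dT\veps_0+\lambda^2)$.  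The iteration bound then comes from the \emph{generalized elliptic potential lemma} (\pref{lem:gen_elliptic_pot}): $\sum_{t\le T}\|\Sigma_t^{-1}\theta^*_{v_{t+1}}\|_2\le 2\sqrt{Td/\lambda}$.  Balancing these two displays with $\veps_1=16L^2d^2\veps_0/\lambda$ yields $T\le \lambda^2/(L^2d\veps_0)$, and the choice $\lambda=L^2d$ recovers $T\le 52L^2d^2/\veps_{\mathrm{tol}}$.  The $d^2$ loss you mention is not a direct ``energy-to-increment'' conversion; it enters through $\sigma_1(A_t)^2\le L^2dt$ and the $d/\lambda$ factor in the elliptic potential bound.
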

\begin{proof}
For ease of notation, we will not use the subscript $\rhoxa$ in the expectations below ($\Lcal(\cdot) \coloneqq \Lcal_{\rhoxa}(\cdot)$). Similarly, we will use $\phi_t$ to denote feature $\phi_{t,h}(x_h,a_h)$ of iteration $t$ and $(x',a')$ for $(x_{h+1}, a_{h+1})$ unless required by context. Further, for any iteration $t$, let $W_t = [w_{t,1} \mid w_{t,2} \mid \ldots \mid w_{t,t}] \in \R^{d \times t}$ be the matrix with columns $W_t^i$ as the linear parameter $w_{t,i} = \argmin_{\|w\|_2 \le L\sqrt{d}} \Lcal_{\Dcal}(\hat \phi_{t,h}, w, v_i)$. Similarly, let $A_t = [\theta^*_1 \mid \theta^*_2 \mid \ldots \mid \theta^*_t]$.

In the proof, we assume that the total number of iterations $T$ does not exceed $\frac{52L^2d^2}{\veps_{\mathrm{tol}}}$ and set parameters accordingly. We later verify that this assumption holds. Further, let $\tilde{\veps} = \frac{\veps^2_{\mathrm{tol}}}{2704 L^2 d^3}$ and $\veps_0 = T_{\max} \cdot \tilde{\veps} = \frac{\veps_{\mathrm{tol}}}{52d}$.

To begin, based on the deviation bound in \pref{lem:dev_bound_greedy}, we note that if the sample $\Dcal$ in \pref{alg:greedy_selection} is of size $n = \tilde{O}\rbr{\frac{L^6d^7 \log (|\Phi_{h}||\Phi_{h+1}||\Rcal|/\delta)}{\veps^3_{\mathrm{tol}}}}$ and the termination loss cutoff is set to $3\veps_1/2 + \tilde{\veps}$ such that, with probability at least $1-\delta$, for all non-terminal iterations $t$ we have
\begin{gather}
    \sum_{v_i \in \Vcal^t} \EE \sbr{\rbr{ \hat \phi_t^\top W_t^i - \phi^{*\top} A_t^i}^2} \le {}  t \tilde{\veps} \le \veps_0, \label{eq:greedy_reg} \\
    \label{eq:testloss_lowbnd}
    \EE \sbr{\rbr{ \hat \phi_t^\top w - \phi^{*\top} \theta^*_{t+1}}^2} \ge \veps_1
\end{gather}
where $\tilde{\veps}$ is an error term dependent on the size of $\Dcal$ and $w$ is any vector with $\|w\|_2 \le B_t \le B_T\le B$. Further, when the algorithm does terminate, we get the loss upper bound to be $3\veps_1 + 4\tilde{\veps}$. 

Using ~\Cref{eq:greedy_reg} and \Cref{eq:testloss_lowbnd}, we will now show that the maximum iterations in \pref{alg:greedy_selection} is bounded.
At round $t$, for functions $v_1,\ldots,v_t \in \Vcal$ in \pref{alg:greedy_selection}, let $\theta^*_i = \theta^*_{v_i}$ as before and further let $\Sigma_t = A_tA_t^\top + \lambda I_{d \times d}$. Using the linear parameter $\theta^*_{t+1}$ of the adversarial test function $v_{t+1}$, define $\hat{w}_t = W_t A_t^\top \Sigma_t^{-1} \theta^*_{t+1}$. For this $\hat w_t$, we can bound its norm as
\begin{align}
\label{eq:sqrt_t_grow}
    \|W_t A_t^\top \Sigma_t^{-1} \theta^*_{t+1}\|_2 \le \|W_t\|_2 \|A_t^\top \Sigma_t^{-1}\|_2 \|\theta^*_{t+1}\|_2 \le L^2d\sqrt{\frac{t}{4\lambda}}.
\end{align}
Here $\|W_t\|_2 \le L\sqrt{dt}$ and $\|\theta^*_{t+1}\|_2 \le L\sqrt{d}$. Applying SVD decomposition and the property of matrix norm, $\|A_t^\top \Sigma_t^{-1}\|_2$ can be upper bounded by $\max_{i \le d} \frac{\sqrt{\lambda_i}}{\lambda_i + \lambda} \le \frac{1}{\sqrt {4\lambda}}$, where $\lambda_i$ are the eigenvalues of $A_t A_t^\top$. Then noticing AM-GM inequality, we get $\|A_t^\top \Sigma_t^{-1}\|_2\le\sqrt{1/4\lambda}$. 

Setting $B_t = L^2d\sqrt{\frac{t}{4\lambda}}$, from \Cref{eq:testloss_lowbnd}, we have
\begin{align*}
    \veps_1 \leq {} & \EE \sbr{\rbr{ \hat \phi_t^\top \hat{w}_t - \phi^{*\top} \theta^*_{t+1}}^2} 
    \\
    = {}& \EE \sbr{\rbr{\hat{\phi}_t^\top W_t A_t^\top \Sigma_t^{-1} \theta_{t+1}^* - \phi^*{}^\top \Sigma_t \Sigma_t^{-1}\theta_{t+1}^*}^2}
    \\
    \leq {} & \| \Sigma_t^{-1} \theta_{t+1}^*\|_2^2 \cdot \EE \sbr{\| \hat{\phi}_t^\top W_t A_t^\top  - \phi^*{}^\top \Sigma_t\|_2^2}
    \\
    \leq {} & 2\| \Sigma_t^{-1} \theta_{t+1}^*\|_2^2 \cdot  \EE \sbr{\| \hat{\phi}_t^\top W_t A_t^\top  - \phi_t^\top A_t A_t^\top \|_2^2 + \lambda^2 \|\phi^*{}^\top\|_2^2} \\
    \leq {} & 2 \| \Sigma_t^{-1} \theta^*_{t+1}\|_2^2 \cdot \left(\sigma_1^2(A_t) \EE\sbr{\|\hat{\phi}_t^\top W_t - \phi^{*}{}^{\top} A_t\|_2^2} + \lambda ^2\right)
    \leq {}  2 \|\Sigma_t^{-1} \theta^*_{t+1}\|_2^2 \cdot\left( L^2dt\veps_0 + \lambda^2\right).
\end{align*}
The second inequality uses Cauchy-Schwarz. The last inequality applies the upper bound $\sigma_1(A_t) \le L\sqrt{dt}$ and the guarantee from \Cref{eq:greedy_reg}. Using the fact that $t \leq T$, this implies that
\begin{align*}
    \|\Sigma_t^{-1}\theta^*_{t+1}\|_2 \geq \sqrt{ \frac{\veps_1}{2(L^2dT\veps_0 + \lambda^2)}}.
\end{align*}

We now use the generalized elliptic potential lemma from \citet{alex2020elliptical} to upper bound the total value of $\|\Sigma_t^{-1}\theta^*_{t+1}\|_2$. From \pref{lem:gen_elliptic_pot} in \pref{app:gen_elliptic_pot}, if $\lambda \ge L^2d$ and we do not terminate in $T$ rounds, then
\begin{align*}
    T\sqrt{ \frac{\veps_1}{2(L^2dT\veps_0 + \lambda^2)}} \leq \sum_{t=1}^T \|\Sigma_t^{-1}\theta^*_{t+1}\|_2 \leq 2 \sqrt{\frac{Td}{\lambda}}.
\end{align*}

From this chain of inequalities, we can deduce
 $T\veps_1 \leq 8 (d/\lambda)  \left( L^2dT\veps_0 + \lambda^2\right),$
therefore
$T \leq \frac{8d\lambda}{\veps_1 - 8 L^2d^2\veps_0/\lambda}.$ 
Now, if we set $\veps_1 = 16L^2d^2\veps_0/\lambda$ in the above inequality, we can deduce 
\begin{align*}
    T \leq \frac{\lambda^2}{L^2d\veps_0}.
\end{align*}

Putting everything together, for input parameter $\veps_{\mathrm{tol}}$, the termination threshold for the loss $l$ is set such that  $\frac{48L^2d^2\veps_0}{\lambda} + \frac{4L^2d\veps_0^2}{\lambda^2} \le \veps_{\mathrm{tol}}$ which is satisfied for $\veps_0 = \frac{\lambda \veps_{\mathrm{tol}}}{52L^2d^2}$. In addition, with $\lambda=L^2d$, we set the constants for \pref{alg:greedy_selection} as follows: 
\begin{align*}
    T \le \frac{52L^2d^2}{\veps_{\mathrm{tol}}}, \qquad \veps_0 = \frac{\veps_{\mathrm{tol}}}{52d}, \qquad B_t \coloneqq \sqrt{\frac{L^2dt}{4}}, \qquad B \coloneqq \sqrt{\frac{13L^4d^3}{\veps_{\mathrm{tol}}}}.
\end{align*}
Further, for \pref{lem:dev_bound_greedy}, we set $\tilde{\veps}$ to $\veps_0/T =  O\rbr{\frac{\veps^2_{\mathrm{tol}}}{L^2d^3}}$. Note that from \pref{lem:dev_bound_greedy}, the loss upper bound is $3\veps_1 + 4\tilde{\veps}$ when the algorithm terminates. By our choice of the parameters, we can verify that $3\veps_1 + 4\tilde{\veps}\le\veps_{\mathrm{tol}}$ and $T$ does not exceed $\frac{52L^2d^2}{\veps_{\mathrm{tol}}}$, which completes the proof.
\end{proof}

\paragraph{Sample Complexity of \alg with Iterative Greedy Representation Learning}
With the feature selection guarantee in \pref{lem:dev_bound_greedy_final}, we can now finish the proof for the final sample complexity result for the greedy iterative algorithm.

\begin{proof}[\textbf{Proof of \pref{thm:greedy_result}}]
Let us start with any fixed $h\in[H]$ and calculate the required number of samples per level. 

Firstly, we consider learning $\hat \phi_h$ that satisfies \Cref{eq:inv_phi}. We use the discriminator class $\Vcal=\Fcal$ as defined in \Cref{eq:def_Fcal_clipped} and set $B = \sqrt{\frac{13d^3}{\veps_{\mathrm{reg}}}}=\tilde O\rbr{\sqrt{\frac{d^5K^9}{\etamin^3}}}$ (since $\veps_{\mathrm{reg}} =  \tilde{\Theta}\rbr{\frac{\etamin^3}{d^2K^9 \log^2 (1+8/\beta)}}$ and $\beta$ is a polynomial term). Applying \pref{lem:dev_bound_greedy_final}, we know that for an approximation error of $\veps_{\mathrm{tol}}$, we need to set the sample size to $n = \tilde{O}\rbr{ \frac{L^6d^7 \log (|\Phi||\Rcal|/\delta)}{\veps^3_{\mathrm{tol}}}}$.

Setting the values of the parameter $\veps_{\mathrm{tol}}=\veps_{\mathrm{reg}} = \tilde{\Theta}\rbr{\frac{\etamin^3}{d^2K^9 \log^2 (1+8/\beta)}}$ (according to \pref{thm:explore_general_feat}) and $L=1$ in \pref{lem:dev_bound_greedy_final}, we get the number of episodes for learning $\hat \phi_h$ that satisfies \Cref{eq:inv_phi} with probability at least $1-\delta/(4H)$ is
\begin{align*}
    n_{\rephat} = \tilde{O}\rbr{ \frac{L^6d^7 \log (|\Phi||\Rcal|/\delta)}{\veps^3_{\mathrm{reg}}}} = \tilde{O}\rbr{ \frac{d^{13}K^{27} \log (|\Phi|/\delta)}{\etamin^9}}.
\end{align*}

Substituting the value $B = \sqrt{\frac{13d^3}{\veps_{\mathrm{reg}}}}=\tilde O\rbr{\sqrt{\frac{d^5K^9}{\etamin^3}}}$ in \pref{thm:explore_general_feat}, we know that we can get an exploratory dataset with probability at least $1-\delta/(4H)$ and the corresponding sample complexity for the elliptic planner is
\begin{align*}
    n_{\mathrm{ell}} = \tilde{O} \rbr{\frac{H^5d^9K^{14}B^4 \log (|\Phi|/\delta)}{\etamin^5}} = \tilde{O}\rbr{\frac{H^5d^{19}K^{32} \log (|\Phi|/\delta)}{\etamin^{11}}}.
\end{align*}

Next, we consider learning $\bar \phi_h$ that satisfies \Cref{eq:fqi_rep_error}. Noticing that $\kappa = \frac{64dK^4\log \rbr{1+8/\beta}}{\etamin}$ and $\beta$ is a polynomial term, we have that $\frac{\veps^2}{16H^4 \kappa K}= \tilde{O}\rbr{\frac{\veps^2 \etamin}{dH^4 K^5}}$. Setting $\veps_{\mathrm{tol}}=\veps_{\mathrm{apx}} =\frac{\veps^2}{16H^4 \kappa K}$ and applying \pref{lem:dev_bound_greedy_final} with $L=H$, we know that if
\begin{align*}
    n_{\repbar} = \tilde{O}\rbr{ \frac{L^6d^7 \log (|\Phi||\Rcal|/\delta)}{\veps^3_{\mathrm{apx}}}} = \tilde{O}\rbr{ \frac{H^{18}d^{10}K^{15} \log (|\Phi||\Rcal|/\delta)}{\veps^6 \etamin^3}},
\end{align*}
then condition \Cref{eq:fqi_rep_error} is satisfied with probability at least $1-\delta/(4H)$.

Notice that \Cref{eq:fqi_rep_error} holds and we collect an exploratory dataset by applying \pref{thm:explore_general_feat}. Then \pref{thm:mf_fqi} implies the required sample complexity for offline FQI planning with $\bar \phi_{0:H-1}$ to learn an $\veps$-optimal with probability at least $1-\delta/(4H)$ is
\begin{align*}
       n_{\downstream} =  \tilde{O}\rbr{\frac{H^6 d^2 K^5 \log (|\Phi||\Rcal|/\delta)}{\veps^2 \etamin}}.
\end{align*}

Union bounding over $h\in[H]$, the final sample complexity is $H(n_{\rephat} + n_{\mathrm{ell}} + n_{\repbar}+ n_{\downstream})$, and the result holds with probability $1-\delta$. Reorganizing terms completes the proof.
\end{proof}

\subsection{Proofs for Enumerable Representation Class}
\label{sec:enumerable_analysis}
We first derive the ridge regression based reduction of the min-max-min objective to eigenvector computation problems. Recall that for the enumerable feature class, we solve the following modified objective (\Cref{eq:ridge_objective}) in \pref{alg:explore}
\begin{align*}
    \argmin_{\phi \in \Phi_h}  \max_{\substack{f \in \Fcal_{h+1},\tilde{\phi} \in \Phi_{h},\|\tilde{w}\|_2 \le B  }}  \cbr{ \min_{\|w\|_2 \le B}  \Lcal_{\Dcal_h}(\phi, w, f) - \Lcal_{\Dcal_h}(\tilde{\phi}, \tilde{w}, f)}
\end{align*}
where $\Fcal_{h+1}$ is now the discriminator class that contains all \emph{unclipped} functions $f$ in form of
\begin{align*}
f(x_{h+1}) = \EE_{\unif(\Acal)} \sbr{\inner {\phi_{h+1}(x_{h+1}, a)}{\theta}}, \text{ for }  \phi_{h+1} \in \Phi_{h+1}, \|\theta\|_2 \le \sqrt{d}.
\end{align*}

Consider the min-max-min objective and fix $\phi, \tilde{\phi} \in \Phi_h$. Rewriting the objective for a sample of size $n$, we get the following updated objective:
\begin{align*}
    \max_{f \in \Fcal_{h+1}} \min_{\|w\|_2 \le \sqrt{d}} \|X w - f(\Dcal_h)\|_2^2 - \min_{\|\tilde{w}\|_2 \le \sqrt{d}} \|\tilde{X} \tilde{w} - f(\Dcal_h)\|_2^2
\end{align*}
where $X, \tilde{X} \in \R^{n \times d}$ are the covariate matrices for features $\phi$ and $\tilde{\phi}$ respectively. 

We overload the notation and use $f(\Dcal_h) \in \R^n$ to denote the value of any $f \in \Fcal_{h+1}$ on the $n$ samples. Now, instead of solving the constrained least squares problem, we use a ridge regression solution with regularization parameter $\lambda$. Thus, for any target $f$ in the min-max objective, for feature $\phi$, we get
\begin{gather*}
    w_f =  \rbr{\tfrac{1}{n}X^\top X + \lambda I_{d\times d}}^{-1}\rbr{\tfrac{1}{n} X^\top f(\Dcal_h)} \\
    \| X w - f(\Dcal_h) \|_2^2 = {}  \left\| X \rbr{\tfrac{1}{n}X^\top X + \lambda I_{d\times d}}^{-1}\rbr{\tfrac{1}{n} X^\top f(\Dcal_h)} - f(\Dcal_h)\right\|_2^2 = \|A(\phi) f(\Dcal_h)\|_2^2
\end{gather*}
where $A(\phi) = I_{n\times n} - X \rbr{\tfrac{1}{n}X^\top X + \lambda I_{d \times d}}^{-1}\rbr{\tfrac{1}{n} X^\top}$. 

Similarly, for the feature $\tilde \phi$, we have
\[\| \tilde X \tilde w - f(\Dcal_h) \|_2^2 = \|A(\tilde \phi) f(\Dcal_h)\|_2^2,
\]
where $A(\tilde \phi) = I_{n\times n} - \tilde X \rbr{\tfrac{1}{n}\tilde X^\top  \tilde X + \lambda I_{d \times d}}^{-1}\rbr{\tfrac{1}{n} \tilde X^\top}$.

In addition, any regression target $f$ can be rewritten as $f = X'\theta$ for a feature $\phi' \in \Phi_{h+1}$ and $\|\theta\|_2 \le \sqrt{d}$. Thus, for a fixed $\phi'$, $\phi$ and $\tilde{\phi}$, the maximization problem for $\Fcal_{h+1}$ is the same as
\begin{align}
    \max_{\|\theta\|_2 \le \sqrt{d}} \theta^\top X'^\top \rbr{A(\phi)^{\top} A(\phi) -  A(\tilde{\phi})^{\top} A(\tilde{\phi})} X' \theta.
\end{align}
where $X' \in \R^{n \times d}$ is again the sample matrix defined using $\phi' \in \Phi_{h+1}$. 

For each tuple of $(\phi, \tilde{\phi}, \phi')$, the maximization problem reduces to an eigenvector computation. As a result, we can efficiently solve the min-max-min objective in \Cref{eq:ridge_objective} by enumerating over each candidate feature in $(\phi, \tilde{\phi}, \phi')$ to solve 
\begin{equation}
    \argmin_{\phi\in\Phi_h}\max_{\substack{\tilde{\phi} \in \Phi_h, \phi' \in \Phi_{h+1}, \|\theta\|_2 \le \sqrt{d}}} \theta^\top X'^\top \rbr{A(\phi)^{\top} A(\phi) -  A(\tilde{\phi})^{\top} A(\tilde{\phi})} X' \theta.
    \label{eq:ev-enum_pf}
\end{equation}

\paragraph{Sample Complexity of \alg for the Enumerable Feature Class}
We now prove the sample complexity result.
\begin{proof}[\textbf{Proof of \pref{thm:enumerable_result}}]
Let us start with any fixed $h\in[H]$ and calculate the required number of samples per level.

Firstly, we consider learning $\hat \phi_h$ that satisfies \Cref{eq:inv_phi}. We use the discriminator class $\Vcal=\Fcal$ as defined in \Cref{eq:def_Fcal_unclipped} and the error threshold $\veps_{\mathrm{reg}}$. Setting the values of the parameter $\veps_{\mathrm{reg}} = \tilde{\Theta}\rbr{\frac{\etamin^3}{d^2K^9 \log^2 (1+8/\beta)}}$ (according to \pref{thm:explore_general_feat}) in \pref{lem:ridge_error} and noting $\beta $ is a polynomial term, we get the number of episodes for learning $\hat \phi_h$ that satisfies \Cref{eq:inv_phi} with probability at least $1-\delta/(3H)$ is
\begin{align*}
    n_{\rephat} = \tilde{O}\rbr{\frac{d^6\log^3(|\Phi_h||\Phi_{h+1}|/\delta)}{\veps^3_{\mathrm{reg}}}} = \tilde{O}\rbr{ \frac{d^{12} K^{27} \log^3(|\Phi|/\delta)}{\etamin^9} }.
\end{align*}

Now, substituting the value $B =1/\lambda= \tilde{\Theta}\rbr{n^{1/3}_{\rephat}}=\tilde{\Theta}\rbr{\frac{d^4K^9\log\rbr{|\Phi|/\delta}}{\etamin^3}}$ in \pref{thm:explore_general_feat}, we know that we can get an exploratory dataset with probability at least $1-\delta/(3H)$ and the corresponding sample complexity for the elliptic planner is
\begin{align*}
    n_{\mathrm{ell}} = \tilde{O} \rbr{\frac{H^5d^9K^{14}B^4 \log (|\Phi|/\delta)}{\etamin^5}} = \tilde{O}\rbr{\frac{H^5d^{25}K^{50} \log^5 (|\Phi|/\delta)}{\etamin^{17}}}.
\end{align*}

Finally, using \pref{corr:fqi_full_class} from \pref{app:fqi_full_class} and noticing $\kappa = \frac{64dK^4\log \rbr{1+8/\beta}}{\etamin}$, the number of episodes collected for running \fqi with $\Qcal(R)$ to learn an $\veps$-optimal policy with probability at least $1-\delta/(3H)$ can be bounded by
\begin{align*}
    n_{\downstream} = \tilde{O}\rbr{\frac{H^6d^2\kappa K \log(|\Phi||\Rcal|/\delta)}{\veps^2}} = \tilde{O}\rbr{\frac{H^6d^3 K^5 \log(|\Phi||\Rcal|/\delta)}{\veps^2\etamin}}.
\end{align*}

Union bounding over $h\in[H]$, the final sample complexity is  $H(n_{\rephat} + n_{\mathrm{ell}} + n_{\downstream})$, and the result holds with probability at least $1-\delta$. Reorganizing terms completes the proof.
\end{proof}

\section{Conclusion}
\label{sec:conclusion}
In this paper, we present \alg, a new model-free algorithm, for representation learning and exploration in low-rank MDPs. We develop several representation learning schemes that vary in their computational and statistical properties, each yielding a different instantiation of the overall algorithm. Importantly \alg can leverage a general function class $\Phi$ for representation learning, which provides it with the expressiveness and flexibility to scale to rich observation environments in a provably sample-efficient manner. 

\section*{Acknowledgements}
Part of this work was done while AM was at University of Michigan and was supported in part by a grant from the Open Philanthropy Project to the Center for Human-Compatible AI, and in part by NSF grant CAREER IIS-1452099. JC would like to thank Kefan Dong for helpful discussions related to Bernstein's version of uniform deviation bounds. NJ acknowledges funding support from ARL Cooperative Agreement W911NF-17-2-0196, NSF IIS-2112471, NSF CAREER IIS-2141781, and Adobe Data Science Research Award.

\clearpage
\bibliography{arxiv-jmlr.bib}

\begin{thebibliography}{69}
\providecommand{\natexlab}[1]{#1}
\providecommand{\url}[1]{\texttt{#1}}
\expandafter\ifx\csname urlstyle\endcsname\relax
  \providecommand{\doi}[1]{doi: #1}\else
  \providecommand{\doi}{doi: \begingroup \urlstyle{rm}\Url}\fi

\bibitem[Agarwal et~al.(2020{\natexlab{a}})Agarwal, Henaff, Kakade, and
  Sun]{agarwal2020pc}
Alekh Agarwal, Mikael Henaff, Sham Kakade, and Wen Sun.
\newblock {PC-PG}: Policy cover directed exploration for provable policy
  gradient learning.
\newblock In \emph{Advances in Neural Information Processing Systems},
  2020{\natexlab{a}}.

\bibitem[Agarwal et~al.(2020{\natexlab{b}})Agarwal, Kakade, Krishnamurthy, and
  Sun]{agarwal2020flambe}
Alekh Agarwal, Sham Kakade, Akshay Krishnamurthy, and Wen Sun.
\newblock Flambe: Structural complexity and representation learning of low rank
  mdps.
\newblock \emph{Advances in Neural Information Processing Systems},
  2020{\natexlab{b}}.

\bibitem[Antos et~al.(2007)Antos, Szepesv{\'a}ri, and Munos]{antos2007fitted}
Andr{\'a}s Antos, Csaba Szepesv{\'a}ri, and R{\'e}mi Munos.
\newblock Fitted q-iteration in continuous action-space mdps.
\newblock \emph{Advances in neural information processing systems}, 20, 2007.

\bibitem[Antos et~al.(2008)Antos, Szepesv{\'a}ri, and Munos]{antos2008learning}
Andr{\'a}s Antos, Csaba Szepesv{\'a}ri, and R{\'e}mi Munos.
\newblock Learning near-optimal policies with bellman-residual minimization
  based fitted policy iteration and a single sample path.
\newblock \emph{Machine Learning}, 2008.

\bibitem[Ayoub et~al.(2020)Ayoub, Jia, Szepesvari, Wang, and
  Yang]{ayoub2020model}
Alex Ayoub, Zeyu Jia, Csaba Szepesvari, Mengdi Wang, and Lin Yang.
\newblock Model-based reinforcement learning with value-targeted regression.
\newblock In \emph{International Conference on Machine Learning}, 2020.

\bibitem[Baird~III(1995)]{baird1995residual}
Leemon~C Baird~III.
\newblock Residual algorithms: reinforcement learning with function
  approximation.
\newblock In \emph{International Conference on Machine Learning}, 1995.

\bibitem[Bellemare et~al.(2019)Bellemare, Dabney, Dadashi, Ali~Taiga, Castro,
  Le~Roux, Schuurmans, Lattimore, and Lyle]{bellemare2019avf}
Marc Bellemare, Will Dabney, Robert Dadashi, Adrien Ali~Taiga, Pablo~Samuel
  Castro, Nicolas Le~Roux, Dale Schuurmans, Tor Lattimore, and Clare Lyle.
\newblock A geometric perspective on optimal representations for reinforcement
  learning.
\newblock In \emph{Advances in Neural Information Processing Systems}, 2019.

\bibitem[Carpentier et~al.(2020)Carpentier, Vernade, and
  Abbasi-Yadkori]{alex2020elliptical}
Alexandra Carpentier, Claire Vernade, and Yasin Abbasi-Yadkori.
\newblock The elliptical potential lemma revisited.
\newblock \emph{arxiv:2010.10182}, 2020.

\bibitem[Chen and Jiang(2019)]{chen2019information}
Jinglin Chen and Nan Jiang.
\newblock Information-theoretic considerations in batch reinforcement learning.
\newblock In \emph{International Conference on Machine Learning}, 2019.

\bibitem[Dai et~al.(2018)Dai, Shaw, Li, Xiao, He, Liu, Chen, and
  Song]{dai2018sbeed}
Bo~Dai, Albert Shaw, Lihong Li, Lin Xiao, Niao He, Zhen Liu, Jianshu Chen, and
  Le~Song.
\newblock Sbeed: Convergent reinforcement learning with nonlinear function
  approximation.
\newblock In \emph{International Conference on Machine Learning}, pages
  1125--1134. PMLR, 2018.

\bibitem[Daniely et~al.(2011)Daniely, Sabato, Ben-David, and
  Shalev-Shwartz]{daniely2011multiclass}
Amit Daniely, Sivan Sabato, Shai Ben-David, and Shai Shalev-Shwartz.
\newblock Multiclass learnability and the erm principle.
\newblock In \emph{Proceedings of the 24th Annual Conference on Learning
  Theory}, pages 207--232. JMLR Workshop and Conference Proceedings, 2011.

\bibitem[Dann et~al.(2018)Dann, Jiang, Krishnamurthy, Agarwal, Langford, and
  Schapire]{dann2018oracle}
Christoph Dann, Nan Jiang, Akshay Krishnamurthy, Alekh Agarwal, John Langford,
  and Robert~E Schapire.
\newblock On oracle-efficient pac rl with rich observations.
\newblock In \emph{Advances in Neural Information Processing Systems}, 2018.

\bibitem[Devroye et~al.(2013)Devroye, Gy{\"o}rfi, and
  Lugosi]{devroye2013probabilistic}
Luc Devroye, L{\'a}szl{\'o} Gy{\"o}rfi, and G{\'a}bor Lugosi.
\newblock \emph{A probabilistic theory of pattern recognition}, volume~31.
\newblock Springer Science \& Business Media, 2013.

\bibitem[Dong et~al.(2020)Dong, Peng, Wang, and Zhou]{dong2019sqrt}
Kefan Dong, Jian Peng, Yining Wang, and Yuan Zhou.
\newblock $\sqrt{n}$-regret for learning in {M}arkov decision processes with
  function approximation and low {B}ellman rank.
\newblock In \emph{Conference on Learning Theory}, 2020.

\bibitem[Du et~al.(2019{\natexlab{a}})Du, Krishnamurthy, Jiang, Agarwal, Dudik,
  and Langford]{du2019provably}
Simon Du, Akshay Krishnamurthy, Nan Jiang, Alekh Agarwal, Miroslav Dudik, and
  John Langford.
\newblock Provably efficient rl with rich observations via latent state
  decoding.
\newblock In \emph{International Conference on Machine Learning},
  2019{\natexlab{a}}.

\bibitem[Du et~al.(2021)Du, Kakade, Lee, Lovett, Mahajan, Sun, and
  Wang]{du2021bilinear}
Simon Du, Sham Kakade, Jason Lee, Shachar Lovett, Gaurav Mahajan, Wen Sun, and
  Ruosong Wang.
\newblock Bilinear classes: A structural framework for provable generalization
  in rl.
\newblock In \emph{International Conference on Machine Learning}, pages
  2826--2836. PMLR, 2021.

\bibitem[Du et~al.(2019{\natexlab{b}})Du, Kakade, Wang, and Yang]{du2019good}
Simon~S Du, Sham~M Kakade, Ruosong Wang, and Lin~F Yang.
\newblock Is a good representation sufficient for sample efficient
  reinforcement learning?
\newblock In \emph{International Conference on Learning Representations},
  2019{\natexlab{b}}.

\bibitem[Duan et~al.(2020)Duan, Jia, and Wang]{duan2020minimax}
Yaqi Duan, Zeyu Jia, and Mengdi Wang.
\newblock Minimax-optimal off-policy evaluation with linear function
  approximation.
\newblock In \emph{International Conference on Machine Learning}, pages
  2701--2709. PMLR, 2020.

\bibitem[Ernst et~al.(2005)Ernst, Geurts, and Wehenkel]{ernst2005tree}
Damien Ernst, Pierre Geurts, and Louis Wehenkel.
\newblock Tree-based batch mode reinforcement learning.
\newblock \emph{Journal of Machine Learning Research}, 6:\penalty0 503--556,
  2005.

\bibitem[Farahmand et~al.(2017)Farahmand, Barreto, and
  Nikovski]{farahmand2017value}
Amir-massoud Farahmand, Andre Barreto, and Daniel Nikovski.
\newblock Value-aware loss function for model-based reinforcement learning.
\newblock In \emph{Artificial Intelligence and Statistics}, 2017.

\bibitem[Foster et~al.(2020)Foster, Rakhlin, Simchi-Levi, and
  Xu]{foster2020instance}
Dylan~J Foster, Alexander Rakhlin, David Simchi-Levi, and Yunzong Xu.
\newblock Instance-dependent complexity of contextual bandits and reinforcement
  learning: A disagreement-based perspective.
\newblock In \emph{Advances in Neural Information Processing Systems}, 2020.

\bibitem[Gelada et~al.(2019)Gelada, Kumar, Buckman, Nachum, and
  Bellemare]{gelada2019deepmdp}
Carles Gelada, Saurabh Kumar, Jacob Buckman, Ofir Nachum, and Marc~G Bellemare.
\newblock Deep{MDP}: Learning continuous latent space models for representation
  learning.
\newblock In \emph{International Conference on Machine Learning}, 2019.

\bibitem[Hafner et~al.(2019)Hafner, Lillicrap, Fischer, Villegas, Ha, Lee, and
  Davidson]{hafner2019learning}
Danijar Hafner, Timothy Lillicrap, Ian Fischer, Ruben Villegas, David Ha,
  Honglak Lee, and James Davidson.
\newblock Learning latent dynamics for planning from pixels.
\newblock In \emph{International Conference on Machine Learning}, 2019.

\bibitem[Hao et~al.(2021)Hao, Lattimore, Szepesv{\'a}ri, and
  Wang]{hao2021online}
Botao Hao, Tor Lattimore, Csaba Szepesv{\'a}ri, and Mengdi Wang.
\newblock Online sparse reinforcement learning.
\newblock In \emph{International Conference on Artificial Intelligence and
  Statistics}, 2021.

\bibitem[Haussler(1995)]{haussler1995sphere}
David Haussler.
\newblock Sphere packing numbers for subsets of the boolean n-cube with bounded
  vapnik-chervonenkis dimension.
\newblock \emph{Journal of Combinatorial Theory, Series A}, 69\penalty0
  (2):\penalty0 217--232, 1995.

\bibitem[Haussler(2018)]{haussler2018decision}
David Haussler.
\newblock Decision theoretic generalizations of the pac model for neural net
  and other learning applications.
\newblock In \emph{The Mathematics of Generalization}, pages 37--116. CRC
  Press, 2018.

\bibitem[Huang et~al.(2021)Huang, Chen, Zhao, Qin, Jiang, and
  Liu]{huang2021towards}
Jiawei Huang, Jinglin Chen, Li~Zhao, Tao Qin, Nan Jiang, and Tie-Yan Liu.
\newblock Towards deployment-efficient reinforcement learning: Lower bound and
  optimality.
\newblock In \emph{International Conference on Learning Representations}, 2021.

\bibitem[Jiang and Agarwal(2018)]{jiang2018open}
Nan Jiang and Alekh Agarwal.
\newblock Open problem: The dependence of sample complexity lower bounds on
  planning horizon.
\newblock In \emph{Conference On Learning Theory}, pages 3395--3398. PMLR,
  2018.

\bibitem[Jiang et~al.(2017)Jiang, Krishnamurthy, Agarwal, Langford, and
  Schapire]{jiang2017contextual}
Nan Jiang, Akshay Krishnamurthy, Alekh Agarwal, John Langford, and Robert~E
  Schapire.
\newblock Contextual decision processes with low {B}ellman rank are
  {PAC}-learnable.
\newblock In \emph{International Conference on Machine Learning}, 2017.

\bibitem[Jin et~al.(2020{\natexlab{a}})Jin, Krishnamurthy, Simchowitz, and
  Yu]{jin2020reward}
Chi Jin, Akshay Krishnamurthy, Max Simchowitz, and Tiancheng Yu.
\newblock Reward-free exploration for reinforcement learning.
\newblock In \emph{International Conference on Machine Learning},
  2020{\natexlab{a}}.

\bibitem[Jin et~al.(2020{\natexlab{b}})Jin, Yang, Wang, and
  Jordan]{jin2019provably}
Chi Jin, Zhuoran Yang, Zhaoran Wang, and Michael~I Jordan.
\newblock Provably efficient reinforcement learning with linear function
  approximation.
\newblock In \emph{Conference on Learning Theory}, 2020{\natexlab{b}}.

\bibitem[Kaufmann et~al.(2021)Kaufmann, M{\'e}nard, Domingues, Jonsson,
  Leurent, and Valko]{kaufmann2021adaptive}
Emilie Kaufmann, Pierre M{\'e}nard, Omar~Darwiche Domingues, Anders Jonsson,
  Edouard Leurent, and Michal Valko.
\newblock Adaptive reward-free exploration.
\newblock In \emph{Algorithmic Learning Theory}, pages 865--891. PMLR, 2021.

\bibitem[Lattimore and Szepesvari(2020)]{lattimore2019learning}
Tor Lattimore and Csaba Szepesvari.
\newblock Learning with good feature representations in bandits and in rl with
  a generative model.
\newblock In \emph{International Conference on Machine Learning}, 2020.

\bibitem[Lattimore and Szepesv{\'a}ri(2020)]{lattimore2020bandit}
Tor Lattimore and Csaba Szepesv{\'a}ri.
\newblock \emph{Bandit algorithms}.
\newblock Cambridge University Press, 2020.

\bibitem[Le et~al.(2019)Le, Voloshin, and Yue]{le2019batch}
Hoang Le, Cameron Voloshin, and Yisong Yue.
\newblock Batch policy learning under constraints.
\newblock In \emph{International Conference on Machine Learning}, pages
  3703--3712. PMLR, 2019.

\bibitem[Lee et~al.(2021)Lee, Pacchiano, Muthukumar, Kong, and
  Brunskill]{lee2021online}
Jonathan Lee, Aldo Pacchiano, Vidya Muthukumar, Weihao Kong, and Emma
  Brunskill.
\newblock Online model selection for reinforcement learning with function
  approximation.
\newblock In \emph{International Conference on Artificial Intelligence and
  Statistics}, 2021.

\bibitem[Lin et~al.(2020)Lin, Thomas, Yang, and Ma]{linMAMRL}
Zichuan Lin, Garrett Thomas, Guangwen Yang, and Tengyu Ma.
\newblock Model-based adversarial meta-reinforcement learning.
\newblock In \emph{Advances in Neural Information Processing Systems}, 2020.

\bibitem[Massart(1986)]{massart1986rates}
Pascal Massart.
\newblock Rates of convergence in the central limit theorem for empirical
  processes.
\newblock In \emph{Annales de l'IHP Probabilit{\'e}s et statistiques},
  volume~22, pages 381--423, 1986.

\bibitem[M{\'e}nard et~al.(2021)M{\'e}nard, Domingues, Jonsson, Kaufmann,
  Leurent, and Valko]{menard2021fast}
Pierre M{\'e}nard, Omar~Darwiche Domingues, Anders Jonsson, Emilie Kaufmann,
  Edouard Leurent, and Michal Valko.
\newblock Fast active learning for pure exploration in reinforcement learning.
\newblock In \emph{International Conference on Machine Learning}, pages
  7599--7608. PMLR, 2021.

\bibitem[Misra et~al.(2020)Misra, Henaff, Krishnamurthy, and
  Langford]{misra2020kinematic}
Dipendra Misra, Mikael Henaff, Akshay Krishnamurthy, and John Langford.
\newblock Kinematic state abstraction and provably efficient rich-observation
  reinforcement learning.
\newblock In \emph{International conference on machine learning}, 2020.

\bibitem[Mnih et~al.(2015)Mnih, Kavukcuoglu, Silver, Rusu, Veness, Bellemare,
  Graves, Riedmiller, Fidjeland, Ostrovski, Petersen, Beattie, Sadik,
  Antonoglou, King, Kumaran, Wierstra, Legg, and Hassabis]{mnih2015human}
Volodymyr Mnih, Koray Kavukcuoglu, David Silver, Andrei~A. Rusu, Joel Veness,
  Marc~G. Bellemare, Alex Graves, Martin Riedmiller, Andreas~K. Fidjeland,
  Georg Ostrovski, Stig Petersen, Charles Beattie, Amir Sadik, Ioannis
  Antonoglou, Helen King, Dharshan Kumaran, Daan Wierstra, Shane Legg, and
  Demis Hassabis.
\newblock Human-level control through deep reinforcement learning.
\newblock \emph{Nature}, 2015.

\bibitem[Modi et~al.(2020)Modi, Jiang, Tewari, and Singh]{modi2019sample}
Aditya Modi, Nan Jiang, Ambuj Tewari, and Satinder Singh.
\newblock Sample complexity of reinforcement learning using linearly combined
  model ensembles.
\newblock In \emph{Conference on Artificial Intelligence and Statistics}, 2020.

\bibitem[Munos and Szepesv{\'a}ri(2008)]{munos2008finite}
R{\'e}mi Munos and Csaba Szepesv{\'a}ri.
\newblock Finite-time bounds for fitted value iteration.
\newblock \emph{Journal of Machine Learning Research}, 9\penalty0 (5), 2008.

\bibitem[Natarajan(1989)]{natarajan1989learning}
Balas~K Natarajan.
\newblock On learning sets and functions.
\newblock \emph{Machine Learning}, 4\penalty0 (1):\penalty0 67--97, 1989.

\bibitem[Osband and Roy(2014)]{osband2014model}
Ian Osband and Benjamin~Van Roy.
\newblock Model-based reinforcement learning and the eluder dimension.
\newblock In \emph{Advances in Neural Information Processing Systems}, 2014.

\bibitem[Pacchiano et~al.(2020)Pacchiano, Dann, Gentile, and
  Bartlett]{pacchiano2020regret}
Aldo Pacchiano, Christoph Dann, Claudio Gentile, and Peter Bartlett.
\newblock Regret bound balancing and elimination for model selection in bandits
  and rl.
\newblock \emph{arXiv:2012.13045}, 2020.

\bibitem[Papini et~al.(2021)Papini, Tirinzoni, Pacchiano, Restelli, Lazaric,
  and Pirotta]{papini2021reinforcement}
Matteo Papini, Andrea Tirinzoni, Aldo Pacchiano, Marcello Restelli, Alessandro
  Lazaric, and Matteo Pirotta.
\newblock Reinforcement learning in linear mdps: Constant regret and
  representation selection.
\newblock \emph{Advances in Neural Information Processing Systems}, 34, 2021.

\bibitem[Pathak et~al.(2017)Pathak, Agrawal, Efros, and
  Darrell]{pathak2017curiosity}
Deepak Pathak, Pulkit Agrawal, Alexei~A Efros, and Trevor Darrell.
\newblock Curiosity-driven exploration by self-supervised prediction.
\newblock In \emph{International Conference on Machine Learning}, 2017.

\bibitem[Pollard(2012)]{pollard2012convergence}
David Pollard.
\newblock \emph{Convergence of stochastic processes}.
\newblock Springer Science \& Business Media, 2012.

\bibitem[Ren et~al.(2021)Ren, Zhang, Szepesv{\'a}ri, and Dai]{ren2021free}
Tongzheng Ren, Tianjun Zhang, Csaba Szepesv{\'a}ri, and Bo~Dai.
\newblock A free lunch from the noise: Provable and practical exploration for
  representation learning.
\newblock \emph{arXiv preprint arXiv:2111.11485}, 2021.

\bibitem[Sekar et~al.(2020)Sekar, Rybkin, Daniilidis, Abbeel, Hafner, and
  Pathak]{sekar2020planning}
Ramanan Sekar, Oleh Rybkin, Kostas Daniilidis, Pieter Abbeel, Danijar Hafner,
  and Deepak Pathak.
\newblock Planning to explore via self-supervised world models.
\newblock In \emph{International Conference on Machine Learning}, 2020.

\bibitem[Sun et~al.(2019{\natexlab{a}})Sun, Jiang, Krishnamurthy, Agarwal, and
  Langford]{sun2018model}
Wen Sun, Nan Jiang, Akshay Krishnamurthy, Alekh Agarwal, and John Langford.
\newblock Model-based {RL} in contextual decision processes: {PAC} bounds and
  exponential improvements over model-free approaches.
\newblock In \emph{Conference on Learning Theory}, 2019{\natexlab{a}}.

\bibitem[Sun et~al.(2019{\natexlab{b}})Sun, Jiang, Krishnamurthy, Agarwal, and
  Langford]{sun2019model}
Wen Sun, Nan Jiang, Akshay Krishnamurthy, Alekh Agarwal, and John Langford.
\newblock Model-based rl in contextual decision processes: Pac bounds and
  exponential improvements over model-free approaches.
\newblock In \emph{Conference on learning theory}, pages 2898--2933. PMLR,
  2019{\natexlab{b}}.

\bibitem[Szepesv{\'a}ri(2010)]{szepesvari2010algorithms}
Csaba Szepesv{\'a}ri.
\newblock Algorithms for reinforcement learning.
\newblock \emph{Synthesis lectures on artificial intelligence and machine
  learning}, 4\penalty0 (1):\penalty0 1--103, 2010.

\bibitem[Uehara et~al.(2021)Uehara, Zhang, and Sun]{uehara2021representation}
Masatoshi Uehara, Xuezhou Zhang, and Wen Sun.
\newblock Representation learning for online and offline rl in low-rank mdps.
\newblock In \emph{International Conference on Learning Representations}, 2021.

\bibitem[Van~Roy and Dong(2019)]{van2019comments}
Benjamin Van~Roy and Shi Dong.
\newblock Comments on the {D}u-{K}akade-{W}ang-{Y}ang lower bounds.
\newblock \emph{arXiv:1911.07910}, 2019.

\bibitem[Vershynin(2010)]{vershynin2010introduction}
Roman Vershynin.
\newblock Introduction to the non-asymptotic analysis of random matrices.
\newblock \emph{arXiv preprint arXiv:1011.3027}, 2010.

\bibitem[Wagenmaker et~al.(2022)Wagenmaker, Chen, Simchowitz, Du, and
  Jamieson]{wagenmaker2022reward}
Andrew Wagenmaker, Yifang Chen, Max Simchowitz, Simon~S Du, and Kevin Jamieson.
\newblock Reward-free rl is no harder than reward-aware rl in linear markov
  decision processes.
\newblock \emph{arXiv preprint arXiv:2201.11206}, 2022.

\bibitem[Wang et~al.(2020{\natexlab{a}})Wang, Du, Yang, and
  Salakhutdinov]{wang2020on}
Ruosong Wang, S.~Simon Du, F.~Lin Yang, and Ruslan Salakhutdinov.
\newblock On reward-free reinforcement learning with linear function
  approximation.
\newblock In \emph{Advances in Neural Information Processing Systems},
  2020{\natexlab{a}}.

\bibitem[Wang et~al.(2020{\natexlab{b}})Wang, Salakhutdinov, and
  Yang]{wang2020provably}
Ruosong Wang, Ruslan Salakhutdinov, and Lin~F Yang.
\newblock Provably efficient reinforcement learning with general value function
  approximation.
\newblock \emph{arXiv:2005.10804}, 2020{\natexlab{b}}.

\bibitem[Yang and Wang(2020)]{yang2019reinforcement}
Lin~F Yang and Mengdi Wang.
\newblock Reinforcement leaning in feature space: Matrix bandit, kernels, and
  regret bound.
\newblock In \emph{International Conference on Machine Learning}, 2020.

\bibitem[Yang et~al.(2020)Yang, Jin, Wang, Wang, and Jordan]{yang2020bridging}
Zhuoran Yang, Chi Jin, Zhaoran Wang, Mengdi Wang, and Michael~I Jordan.
\newblock Bridging exploration and general function approximation in
  reinforcement learning: Provably efficient kernel and neural value
  iterations.
\newblock \emph{arXiv:2011.04622}, 2020.

\bibitem[Zanette et~al.(2020)Zanette, Lazaric, Kochenderfer, and
  Brunskill]{zanette2020provably}
Andrea Zanette, Alessandro Lazaric, Mykel~J Kochenderfer, and Emma Brunskill.
\newblock Provably efficient reward-agnostic navigation with linear value
  iteration.
\newblock In \emph{Advances in Neural Information Processing Systems}, 2020.

\bibitem[Zhang et~al.(2020)Zhang, McAllister, Calandra, Gal, and
  Levine]{zhang2020learning}
Amy Zhang, Rowan~Thomas McAllister, Roberto Calandra, Yarin Gal, and Sergey
  Levine.
\newblock Learning invariant representations for reinforcement learning without
  reconstruction.
\newblock In \emph{International Conference on Learning Representations}, 2020.

\bibitem[Zhang et~al.(2019)Zhang, Dai, Li, and Schuurmans]{zhang2019gendice}
Ruiyi Zhang, Bo~Dai, Lihong Li, and Dale Schuurmans.
\newblock Gendice: Generalized offline estimation of stationary values.
\newblock In \emph{International Conference on Learning Representations}, 2019.

\bibitem[Zhang et~al.(2021{\natexlab{a}})Zhang, He, Zhou, Zhang, and
  Gu]{zhang2021provably}
Weitong Zhang, Jiafan He, Dongruo Zhou, Amy Zhang, and Quanquan Gu.
\newblock Provably efficient representation learning in low-rank markov
  decision processes.
\newblock \emph{arXiv preprint arXiv:2106.11935}, 2021{\natexlab{a}}.

\bibitem[Zhang et~al.(2021{\natexlab{b}})Zhang, Zhou, and Gu]{zhang2021reward}
Weitong Zhang, Dongruo Zhou, and Quanquan Gu.
\newblock Reward-free model-based reinforcement learning with linear function
  approximation.
\newblock \emph{Advances in Neural Information Processing Systems}, 34,
  2021{\natexlab{b}}.

\bibitem[Zhang et~al.(2022)Zhang, Song, Uehara, Wang, Sun, and
  Agarwal]{zhang2022efficient}
Xuezhou Zhang, Yuda Song, Masatoshi Uehara, Mengdi Wang, Wen Sun, and Alekh
  Agarwal.
\newblock Efficient reinforcement learning in block mdps: A model-free
  representation learning approach.
\newblock \emph{arXiv preprint arXiv:2202.00063}, 2022.

\bibitem[Zhang et~al.(2021{\natexlab{c}})Zhang, Du, and Ji]{zhang2021near}
Zihan Zhang, Simon Du, and Xiangyang Ji.
\newblock Near optimal reward-free reinforcement learning.
\newblock In \emph{International Conference on Machine Learning}, pages
  12402--12412. PMLR, 2021{\natexlab{c}}.

\end{thebibliography}

\clearpage
\tableofcontents
\clearpage
\appendix

\section{Comparisons Among the Closely Related Works}
\label{app:comp}
In this section, we provide more details about comparisons with 
\textsc{Olive}\xspace, \textsc{Witness rank}\xspace, and \textsc{BLin-Ucb}\xspace. They are statistically efficient for more general settings beyond low-rank MDPs. 
Strictly speaking, their realizability assumptions and sample complexity terms are different from what we present in \Cref{table:comparison_table}. \textsc{Olive}\xspace requires the realizability of the value function class $Q^*\in\Fcal^{\textsc{class}}$ and has $\log(|\Fcal^{\textsc{class}}|)$ dependence. \textsc{Witness rank}\xspace requires the realizability of the model class $\Mcal^*\in\Mcal^{\textsc{class}}$ and an induced value function $\Fcal^{\textsc{class}}$ class from $\Mcal^{\textsc{class}}$, and consequently pays $\log(|\Fcal^{\textsc{class}}||\Mcal^{\textsc{class}}|)$. \textsc{BLin-Ucb}\xspace makes a more complicated realizability assumption on a hypothesis function class $\Hcal^{\textsc{class}}$, whose complexity $\log(|\Hcal^{\textsc{class}}|)$ shows up on the bound (please refer to \citet{du2021bilinear} for more details). Here we add the superscript $\textsc{class}$ to function classes to differentiate them from the notations in other parts of the paper.

For the purpose of comparison, we instantiate their sample complexity bounds in our setting. We design function classes $\Hcal^{\textsc{class}}=\Fcal^{\textsc{class}}=\Fcal_0^{\textsc{class}}\times\ldots \times\Fcal_{H-1}^{\textsc{class}} $, where $\Fcal_h^{\textsc{class}} = \big\{ f_h(x_h,a_h) = R_h(x_h,a_h)+\inner{\phi_h(x_h,a_h)}{\theta_h} : \phi_h \in \Phi_h, \|\theta_h\|_2 \le \sqrt{d} \big\}$. For \textsc{Witness rank}\xspace, we additionally construct $\Mcal^{\textsc{class}}=\{\langle\phi,\mu\rangle:\phi\in\Phi,\mu\in\Upsilon\}$. The sample complexities are then obtained by calculating the complexity of these function classes and multiplying an $H^2$ factor to translate the results from the bounded total reward setting ($0\le \sum_{h=0}^{H-1} r_h\le 1$) in \citet{jiang2018open} to our uniformly bounded reward setting ($r_h\in[0,1],\forall h\in[H]$). 

\section{The Analysis of Elliptical Planner} 
\label{app:FQI_ellip_planning}
In this section, we show the iteration and sample complexities and the estimation guarantee for \emph{offline} ``elliptical planner'' (\pref{alg:FQI_ell_planner}). The algorithm and analysis follows a similar approach as Algorithm 2 in \citet{agarwal2020flambe}, while the major difference here is that we call \fqi for the policy optimization step because we do not have the model. In addition, we cannot directly estimate the covariance matrix by sampling data from the estimated model as in \citet{agarwal2020flambe}.

Inspired by \citet{huang2021towards}, we perform Fitted Q-Evaluation (FQE) with the exploratory data in the prior levels to substitute the Monte Carlo estimation counterpart used in the online ``elliptical planner''. Different from \citet{huang2021towards}, we no longer run our algorithm on the discretized value functions and rewards. In contrast, we directly use the original elliptical reward and perform \fqi and \fqe on the original continuous function class. This makes the algorithm more computationally handy. 

The detailed algorithm is shown in \pref{alg:FQI_ell_planner}. The algorithm proceeds in iterations. In each round, we first use the current covariance matrix $\Gamma_{t-1}$ to set the elliptical reward \pref{line:set_elliptical_reward}. Then in \pref{line:fqi} we call \fqi (\pref{alg:linear_fqi}) to get policy $\pi_t$ that explores the uncovered direction set by the elliptical reward. Next, we call \fqe (\pref{alg:linear_fqe}) to estimate the covariance matrix $\widehat \Sigma_{\pi_t}$ (more specifically, each $(i,j)$-th coordinate, $i,j\in \{1,\ldots,d\}$ in the covariance matrix respectively) for all policy $\pi_t$ (\pref{line:est_cov_mat}-12). The covariance matrix $\Gamma_t$ is updated in \pref{line:update_cov}. \fqe is also used to the expected return (\pref{line:fqe_expected_return}) to check the stopping condition.

\begin{algorithm}[htb]
\begin{algorithmic}[1]
	\STATE \textbf{input:} Features $\hat\phi$, exploratory dataset $\Dcal \coloneqq \Dcal_{0:\tilde{H}}$ with size $n$ at each level $h\in[\tilde H]$, and threshold $\beta > 0$.
	\STATE Initialize $\Gamma_0 = I_{d \times d}$.
	\FOR{$t = 1,2,\ldots,$}
	\STATE Define the elliptical reward $R^{\fqi,t}$ as $R_{\tilde H}^{\fqi,t}=\left\|\hat \phi_{\tilde{H}}\right\|^2_{\Gamma^{-1}_{t-1}}$ and $R_h^{\fqi,t}=\zero,\forall h\in[\tilde H]$.\label{line:set_elliptical_reward}
	\STATE Using \pref{alg:linear_fqi}, compute 
	\[\pi_t = \textsc{FQI-ELLIPTICAL}\rbr{ \Dcal, R^{\fqi,t}}.\] \label{line:fqi}
	\vspace{-1em}
	\STATE Estimate feature covariance matrix $\widehat{\Sigma}_{ \pi_t}$ as\label{line:est_cov_mat}
	\FOR{$i=1,\ldots, d$}
    \FOR{$j=1,\ldots, d$}
    \STATE Define reward function $R^{\fqe,ij}$ as $R^{\fqe,ij}_{\tilde H}(\cdot,\cdot)=\frac{1+\hat\phi_{\tilde H}(\cdot,\cdot)[i]\hat\phi_{\tilde H}(\cdot,\cdot)[j]}{2}$ and $R^{\fqe,ij}_{h}=\zero,\forall h\in[\tilde H]$.
    \STATE 	Estimate the $(i,j)$-th coordinate of $\widehat \Sigma_{\pi_t}$ using \fqe (\pref{alg:linear_fqe})
    \[\widehat \Sigma_{\pi_t}[i,j]:=2\sbr{\textsc{FQE}\rbr{\Dcal,R^{\fqe,ij},\pi_t}}-1.\] 
    \vspace{-1em}
	\ENDFOR
	\ENDFOR
	\STATE Update $\Gamma_t \gets \Gamma_{t-1} + \widehat{\Sigma}_{\pi_t}$.\label{line:update_cov}
	\STATE Estimate the expected return of $\pi_t$ under the elliptical reward $R^t$ as 
	\[\hat v_t^{\pi_t}:=\textsc{FQE}\rbr{\Dcal,R^{\fqi,t},\pi_t}.\]\label{line:fqe_expected_return}
	\vspace{-1em}
	\STATE If the estimated objective $\hat v_t^{\pi_t}\le\frac{3\beta}{4}$, halt and output $\rho \coloneqq \unif(\{\pi_\tau\}_{1\le \tau \le t})$. \label{line:ell_planner_output}
	\ENDFOR
\end{algorithmic}
\caption{Elliptical Planner with \fqi and \fqe}
\label{alg:FQI_ell_planner}
\end{algorithm}

On the technical side, because the number of arbitrary policies (the policy set of $\pi_t$) and the value function class are exponentially large, we need to invoke covering argument on the infinite function class. We apply the more involved concentration analysis (uniform Bernstein’s inequality) for the infinite function class to achieve sharp rates. By adapting the tools and analysis from \citet{dong2019sqrt}, we show a key concentration result in \pref{corr:uni_bern_conf}, which is then applied in the squared loss deviation result in \pref{app:dev_bound}. As discussed in \pref{sec:proof_sketch}, \fqe procedure is not the only solution for the ``elliptical planner''. Instead of running \fqe in the offline ``elliptical planner'', in each iteration we can also collect new data according to policy $\pi_t$ and use Monte Carlo evaluation to estimate the covariance matrix, which yields the online ``elliptical planner''. However, it leads to a worse rate due to additional collection and inefficient usage of prior data. Another advantage of using \fqe is that it matches the optimal $\tilde \Omega(H)$ deployment complexity as discussed in \citet{huang2021towards}.

Now we state and prove the theoretical guarantee in \pref{lem:FQI_ellip_planning}.

\begin{lemma}[Estimation and iteration guarantees for  \pref{alg:FQI_ell_planner}]
\label{lem:FQI_ellip_planning}
If \pref{alg:FQI_ell_planner} is run with a dataset of size $n\ge \tilde O(\frac{H^4d^6 \kappa K^2 \log(|\Phi|/\delta)}{\beta^2})$ for a fix $\beta > 0,\delta\in(0,1)$, then upon termination, it outputs a matrix $\Gamma_T$ and a policy $\rho$ that with probability at least $1-\delta$
\begin{align}
\label{eq:fqi_term_1}
    \forall \pi: \EE_\pi\sbr{ \hphitilH^\top\rbr{\Gamma_T}^{-1} \hphitilH} \leq O(\beta),
\end{align}
\begin{align}
\label{eq:fqi_term_2}
    \nbr{\frac{\Gamma_T}{T}  - \rbr{\EE_\rho \sbr{ \hphitilH\hphitilH^\top}+\frac{I_{d\times d}}{T}}}_{\mathrm{op}} \leq O(\nicefrac{\beta}{d}).
\end{align}
Further, the iteration complexity is also bounded $T\le \frac{8d}{\beta}\log\rbr{1+\frac{8}{\beta}}$.
\end{lemma}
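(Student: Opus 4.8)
\textbf{Proof plan for \pref{lem:FQI_ellip_planning}.}
The plan is to follow the elliptical-planner argument of \citet{agarwal2020flambe} and \citet{huang2021towards}, but with the Monte-Carlo estimation steps replaced by \fqe, so the main work lies in controlling the \fqi and \fqe estimation errors uniformly over the infinite function class. First I would set up the per-iteration guarantees. For the policy optimization step (\pref{line:fqi}), I would invoke the \fqi analysis (\pref{alg:linear_fqi}) to certify that $\pi_t$ is a near-optimal policy for the elliptical reward $R^{\fqi,t}=\nbr{\hphitilH}^2_{\Gamma_{t-1}^{-1}}$, with a value error governed by the squared-loss deviation bound. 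For the covariance estimation step (\pref{line:est_cov_mat}--12), I would apply the \fqe analysis (\pref{alg:linear_fqe}) coordinate-wise to each $R^{\fqe,ij}$, so that $\widehat\Sigma_{\pi_t}$ approximates the true feature covariance $\Sigma_{\pi_t}=\EE_{\pi_t}[\hphitilH\hphitilH^\top]$ in operator norm up to an error that shrinks as $1/\sqrt n$. The crucial technical ingredient feeding both is the uniform Bernstein concentration result (\pref{corr:uni_bern_conf}), which lets us handle the exponentially large policy set and the continuous value-function class via a covering argument rather than explicit discretization; this is the refinement over \citet{huang2021towards}.

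Next I would establish the iteration complexity bound $T\le \frac{8d}{\beta}\log(1+8/\beta)$ via an elliptical potential argument. The key observation is that whenever the algorithm does \emph{not} terminate, the estimated objective $\hat v_t^{\pi_t}$ exceeds $\tfrac{3\beta}{4}$, which (after accounting for the \fqi suboptimality and \fqe estimation error) implies the \emph{true} value $\max_\pi \EE_\pi[\nbr{\hphitilH}^2_{\Gamma_{t-1}^{-1}}]$ is at least order $\beta$. Since $\Gamma_t=\Gamma_{t-1}+\widehat\Sigma_{\pi_t}$ and $\widehat\Sigma_{\pi_t}\approx\EE_{\pi_t}[\hphitilH\hphitilH^\top]$, adding this covariance increases $\log\det(\Gamma_t)$ by a nontrivial amount on every non-terminal round. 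Using that $\nbr{\hphitilH}_2\le 1$ bounds the trace of each increment and that $\log\det(\Gamma_0)=0$ with $\Gamma_T\preceq (T+1)I$, the standard potential/trace-determinant inequality caps the number of rounds at the stated bound, matching the $\tilde\Omega(H)$ deployment-complexity structure noted in the text.

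Finally I would derive the two output guarantees upon termination. For \Cref{eq:fqi_term_1}, termination means $\hat v_T^{\pi_T}\le \tfrac{3\beta}{4}$; combining the \fqe estimation guarantee with the \fqi near-optimality certificate upgrades this to $\max_\pi \EE_\pi[\hphitilH^\top \Gamma_T^{-1}\hphitilH]\le O(\beta)$, since $\pi_T$ is (approximately) the maximizer of this elliptical objective and $\Gamma_T\succeq\Gamma_{T-1}$. For \Cref{eq:fqi_term_2}, I would accumulate the per-round operator-norm estimation errors $\nbr{\widehat\Sigma_{\pi_t}-\Sigma_{\pi_t}}_{\op}$ across the $T$ iterations; dividing $\Gamma_T=\sum_{t=1}^T\widehat\Sigma_{\pi_t}+I$ by $T$ and comparing to $\EE_\rho[\hphitilH\hphitilH^\top]+I/T$ (where $\rho=\unif(\{\pi_\tau\})$) reduces the claim to bounding the averaged coordinate-wise \fqe errors by $O(\beta/d)$.

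\textbf{Main obstacle.} The hardest part is the uniform concentration over the infinite value-function and policy classes: because the elliptical reward and the policies $\pi_t$ are themselves data-dependent (chained through previous iterations), I must ensure the \fqi and \fqe deviation bounds hold \emph{uniformly} and with the Bernstein-type variance-dependent rate, so that the $d^2$ coordinate-wise estimates aggregate to an $O(\beta/d)$ operator-norm error without blowing up the sample size. Getting the sample size $n=\tilde O(H^4 d^6 \kappa K^2\log(|\Phi|/\delta)/\beta^2)$ to absorb the factor-of-$d^2$ coordinates, the distribution-shift factor $\kappa K$, and the $1/\beta^2$ accuracy demand simultaneously—while keeping everything inside a single union bound over all $T$ rounds—is where the delicate bookkeeping lies.
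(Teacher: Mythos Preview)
Your overall plan matches the paper's proof closely: per-iteration \fqi/\fqe error control, an elliptical-potential bound on the number of rounds via $\sum_t \tr(\widehat\Sigma_{\pi_t}\Gamma_{t-1}^{-1})\le 2d\log(1+T/d)$, and the termination argument for \Cref{eq:fqi_term_1}. Where you diverge from the paper is in how the operator-norm bound $\|\widehat\Sigma_{\pi_t}-\Sigma_{\pi_t}\|_{\op}\le \beta/(8d)$ is obtained, and this is the step that determines the $d^6$ in the sample size.

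You propose to control each of the $d^2$ entries via the coordinate-wise \fqe calls and then aggregate to an operator-norm bound. That works, but going through $\|\cdot\|_{\op}\le \|\cdot\|_F\le d\max_{i,j}|\cdot|$ forces each entry error down to $O(\beta/d^2)$, which costs an extra $d^4$ in the \fqe sample size and lands at $\tilde O(H^4 d^7\kappa K^2\log(|\Phi|/\delta)/\beta^2)$, not $d^6$. The paper instead takes a $\gamma'$-net $\Ncal_{\gamma'}$ of the unit sphere, uses $\|A\|_{\op}\le(1-2\gamma')^{-1}\sup_{v\in\Ncal_{\gamma'}}|v^\top A v|$, and then invokes the key observation (via the \fqe/plug-in equivalence of \citet{duan2020minimax}) that $v^\top\widehat\Sigma_{\pi_t}v=\hat v^{\pi_t}_{R^v}$ for the quadratic reward $R^v_{\tilde H}=(v^\top\hat\phi_{\tilde H})^2=\sum_{i,j}v[i]v[j]\hat\phi[i]\hat\phi[j]$, even though \fqe is never actually run with $R^v$. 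This reduces the operator-norm bound to a single \fqe error per direction $v$, with reward class $\{R^v:v\in\Ncal_{\gamma'}\}$ of size $(64d/\beta)^d$; the required \fqe accuracy is only $O(\beta/d)$, and the union bound over the net contributes $d\log(d/\beta)$ to the log term, yielding the stated $d^6$.

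So your plan would prove a valid version of the lemma but with a looser $d$-dependence; the paper's sphere-net plus \fqe-linearity trick is the sharper route and is worth internalizing. Everything else in your outline (the iteration bound, the termination chain $\max_\pi v_T^\pi\le v_T^{\pi_T}+\beta/8\le \hat v_T^{\pi_T}+\beta/4\le \beta$, and averaging the per-round covariance errors for \Cref{eq:fqi_term_2}) lines up with the paper.
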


\begin{proof}
As notation, we use $v_t^{\pi}$ to denote the expected return of any policy $\pi$ under the elliptical reward $R^{\fqi,t}$ ($R^{\fqi,t}$ is defined as $R_{\tilde H}^{\fqi,t}=\|\hat \phi_{\tilde{H}}\|^2_{\Gamma^{-1}_{t-1}}$ and $R_h^{\fqi,t}=\zero,\forall h\in[\tilde H]$). 

We start with showing
\begin{align}\label{eq:fqi_sample_2}
\max_{t \in [T]}\max\cbr{ d \cdot \nbr{\widehat{\Sigma}_{ \pi_t} - \Sigma_{\pi_t}}_{\mathrm{op}}, \abr{\hat{v}_t^{ \pi_t} - v_t^{\pi_t}}, \max_\pi v_t^\pi - v_t^{\pi_t}} \leq \beta/8.
\end{align}

For the second term in \Cref{eq:fqi_sample_2}, from 
\pref{corr:fqe_entire_class} we know that if $n\ge \tilde O(\frac{H^4d^5\kappa K^2\log(|\Phi|/\delta)}{\beta^2})$, then with probability at least $1-\delta/3$, we have for $\abr{\hat{v}_t^{ \pi_t} - v_t^{\pi_t}}\le\beta/8$ any $t\in[T]$. 

For the third term in \Cref{eq:fqi_sample_2}, \pref{lem:planning_sparse_reward} tells us that if $n\ge \tilde O(\frac{H^4d^3\kappa K\log(|\Phi|/\delta)}{\beta^2})$, then with probability at least $1-\delta/3$, we have $\max_\pi v_t^\pi - v_t^{\pi_t}$ for any $t\in[T]$.

Then we consider the more complicated first term in \Cref{eq:fqi_sample_2}. We will show that for any policy $\pi_t$, such quantity can be upper bound by some \fqe errors.

For any fixed policy $\pi_t$, noticing the spectral norm result from Lemma 5.4 of \citet{vershynin2010introduction}, we get 
$$\nbr{\widehat{\Sigma}_{\pi_t} - \Sigma_{\pi_t}}_{\mathrm{op}}\le(1-2\gamma')^{-1}\sup_{v\in\mathcal{N}_{\gamma'}}|\langle (\widehat \Sigma_{\pi_t} - \Sigma_{\pi_t})v,v\rangle|,$$ 
where we use $\mathcal{N}_{\gamma'}$ to denote the $\ell_2$-cover of the unit ball  $\{v\in \RR^d: \|v\|_2\le 1\}$ at scale $\gamma'$ and its size is $(\frac{2}{\gamma'})^d$.

For given $\pi_t$ and $v$, we consider the reward function $R^v$ with $R^v_{\tilde{H}}(x,a) = (v^\top \hat \phi_{\tilde{H}}(x,a))^2= v^\top \hat \phi_{\tilde{H}}(x,a)\hat \phi_{\tilde{H}}(x,a)^\top v$ $(=\sum_{i=1}^d\sum_{j=1}^d v[i]\hat\phi_{\tilde{H}}(x,a)[i]\hat \phi_{\tilde{H}}(x,a)[j]v[j])$ and $R^v_h=\zero,\forall h\in[\tilde H]$. In addition, we use $v^{\pi_t}_{R^{v}}$ to denote the expected return of policy $\pi_t$ under reward function $R^{v}$ and use $\hat v^{\pi_t}_{R^{v}}$ to denote $\textsc{FQE}\rbr{\Dcal,R^{v},\pi_t}$.

One thing to notice is that here we do not really run \fqe with reward $R^{v}$ for all $v\in\Ncal_{\gamma'}$ to get the estimate $\hat v^{\pi_t}_{R^{v}}$ in \pref{alg:FQI_ell_planner}. Instead, it only appears in the analysis. Once we calculate $\widehat \Sigma_{\pi_t}$ by estimating the covariance matrix (\pref{line:est_cov_mat}-12 in \pref{alg:FQI_ell_planner}), we can immediately get $v^\top \widehat \Sigma_{\pi_t} v$, which it is equivalent to the real output of \fqe (i.e., $\hat v^{\pi_t}_{R^{v}}$). More specifically, from the equivalence between \fqe and a model-based plug-in formulation \citep[Theorem 1]{duan2020minimax} and noticing the definition of $R^v_{\tilde{H}},R^{\fqe,ij}$, we have the following crucial equation
$$\hat v^{\pi_t}_{R^{v}}=\sum_{i=1}^d\sum_{j=1}^d v[i](2\hat v^{\pi_t}_{R^{\fqe,ij}}-1)v[j]=\sum_{i=1}^d\sum_{j=1}^d v[i]\widehat{\Sigma}_\pi[i,j]v[j]= v^\top \widehat \Sigma_{\pi_t} v.$$ 

This further implies that $v^{\pi_t}_{R^v} = v^\top \Sigma_{\pi_t} v$ and $|\langle (\widehat \Sigma_{\pi} - \Sigma_{\pi})v,v\rangle|=|\hat v^{\pi_t}_{R^v}-v^{\pi_t}_{R^v}|$ is indeed the \fqe error. Therefore, bounding the operator norm $\|\widehat{\Sigma}_{\pi_t} - \Sigma_{\pi_t}\|_{\mathrm{op}}$ can be reduced to bounding the \fqe error for a class of rewards.

Applying \pref{lem:fqe} with $\Rcal=\{R^v:v\in\Ncal_{\gamma'}\}$ and $\gamma'=\frac{\beta}{32d}$, we get that if $n\ge \tilde O(\frac{H^4d^6 \kappa K^2 \log(|\Phi|/\delta)}{\beta^2})$, then (assuming $\frac{\beta}{32d}\le \frac{1}{4}$) with probability at least $1-\delta/3$, we have for any $t\in[T]$
\begin{align*}
\nbr{\widehat{\Sigma}_{\pi_t} - \Sigma_{\pi_t}}_{\mathrm{op}}\le(1-2\gamma')^{-1}\sup_{v\in\mathcal{N}_{\gamma'}}|\langle (\widehat \Sigma_{\pi_t} - \Sigma_{\pi_t})v,v\rangle|
=(1-2\gamma')^{-1}\sup_{v\in\mathcal{N}_{\gamma'}}|\hat v^{\pi_t}_{R^v}-v^{\pi_t}_{R^v}|\le\frac{\beta}{8d}.
\end{align*}
In summary, the required number of samples for \Cref{eq:fqi_sample_2} is 
\[
n\ge \tilde O\rbr{\frac{H^4d^6 \kappa K^2 \log(|\Phi|/\delta)}{\beta^2}}.
\]
Now, if we terminate in iteration $T$, we know that $\hat v_T^{\pi_T} \leq 3\beta/4$. This implies
\begin{align}
\label{eq:fqi_sample_3}
\max_\pi v_T^{\pi} \leq v_T^{\pi_T} + \beta/8  \leq \hat{v}_T^{\pi_T} + \beta/4 \leq \beta.
\end{align}

Therefore \Cref{eq:fqi_term_1} holds. From \Cref{eq:fqi_sample_2}, it is also easy to see that \Cref{eq:fqi_term_2} holds.

Then, we turn to the iteration complexity. Similarly to \Cref{eq:fqi_sample_3}, we have
\begin{align*}
&~T\rbr{3\beta/4 - \beta/4} \leq \sum_{t=1}^T\rbr{ \hat{v}_t^{\pi_t} - \beta/4} \leq \sum_{t=1}^T \rbr{v_t^{\pi_t} - \beta/8} = \sum_{t=1}^T \rbr{\EE_{\pi_t}\sbr{\hat \phi_{\tilde H}^\top \Gamma_{t-1}^{-1}\hat\phi_{\tilde H}} - \beta/8} 
\\= &~\sum_{t=1}^T \rbr{ \tr(\Sigma_{\pi_t}\Gamma_{t-1}^{-1}) - \beta/8}
\leq \sum_{t=1}^T \tr(\widehat{\Sigma}_{\pi_t} \Gamma_{t-1}^{-1}) \leq 2d \log\rbr{1 + \frac{T}{d}}.
\end{align*}
In the last step, we apply elliptical potential lemma (e.g., Lemma 26 of \cite{agarwal2020flambe}).

Reorganizing the equation yields $T\le \frac{4d}{\beta}\log\rbr{1+\frac{T}{d}}$. Further, if $T\le \frac{8d}{\beta}\log\rbr{1+\frac{8}{\beta}}$, then we have
\begin{align*}
T &{} \le \frac{4d}{\beta}\log\rbr{1+T/d} \le \frac{4d}{\beta}\log\rbr{1+\frac{ 8\log\rbr{1+8/\beta}}{\beta}}\nonumber\\
&{} \le \frac{4d}{\beta}\log\rbr{1+\rbr{ 8/\beta}^2}\le\frac{8d}{\beta}\log\rbr{1+8/\beta}.
\end{align*}
Therefore, we obtain an upper bound on $T$ by this set and guess approach.
\end{proof}

\section{Supporting Sample Complexity Results for \pref{sec:proof_main}}
\label{app:supp_dev_bounds_main}
In this section, we state and prove the deviation bounds used for the proofs in \pref{sec:proof_main}.

\subsection{Deviation Bound for \textsc{Flo}} 
We start by stating the sample complexity result for the min-max-min oracle \textsc{flo}, defined in \cref{eq:flo_obj} in \pref{sec:flo_analysis}.
\begin{lemma}
\label{lem:dev_bound_flo}
If the feature learning objective \Cref{eq:emp_minmax_obj} is solved by the \textsc{Flo} for a sample of size $n$, then for $\Vcal \subseteq (\Xcal \rightarrow [0,L])$, $\Vcal\coloneqq \{v(x_{h+1}) = \mathrm{clip}_{[0,L]} $ $(\EE_{a_{h+1}
\sim \pi_{h+1}(x_{h+1})}$ $[R(x_{h+1},a_{h+1}) + \langle\phi_{h+1}(x_{h+1},a_{h+1}),\theta\rangle]): \phi_{h+1} \in \Phi_{h+1}, \|\theta\|_2 \le L\sqrt{d}, R \in \Rcal \}$,  where $\pi$ is the greedy policy or the uniform policy, with probability at least $1-
\delta$, we have
\begin{align*}
    \max_{v \in \Vcal} \be{\rhoxa, \hat \phi_h, v; L\sqrt{d}} \le \frac{16\cc L^2d^2 \log( 2nL\sqrt d|\Phi_h||\Phi_{h+1}||\Rcal|/\delta)}{n},
\end{align*}
where $\cc$ is the constant in \pref{lem:fqi_fastrate_unclipped}.
\end{lemma}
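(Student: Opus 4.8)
The plan is to reduce the oracle objective to a family of squared-loss regression problems through the variance-correction identity, and then to transfer the empirical optimality enjoyed by $\hat\phi_h$ into a population Bellman-error guarantee via a fast-rate uniform deviation bound.

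\emph{Population identity.} First I would show that the \emph{population} variance-corrected objective coincides exactly with the Bellman backup error. For any $\phi\in\Phi_h$ and $v\in\Vcal$, write $m_v(x_h,a_h):=\EE[v(x_{h+1})\mid x_h,a_h]$; the bias--variance decomposition gives $\Lcal_{\rhoxa}(\phi,w,v)=\EE_{\rhoxa}[(\inner{\phih}{w}-m_v(x_h,a_h))^2]+\EE_{\rhoxa}[\VV[v(x_{h+1})\mid x_h,a_h]]$. Since every $v\in\Vcal$ maps into $[0,L]$, applying \pref{lem:linMDP_expn} to $v/L$ yields $m_v=\inner{\sphih}{\theta^*_v}$ with $\nbr{\theta^*_v}_2\le L\sqrt d$. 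Realizability ($\phi^*_h\in\Phi_h$) then makes the correction term $\min_{\tilde\phi\in\Phi_h,\nbr{\tilde w}_2\le L\sqrt d}\Lcal_{\rhoxa}(\tilde\phi,\tilde w,v)$ attain the conditional-variance term $\EE_{\rhoxa}[\VV[v\mid x_h,a_h]]$ (at $\phi^*_h,\theta^*_v$), so subtracting leaves precisely $\be{\rhoxa,\phi,v;B}$.

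\emph{Empirical optimality.} Next, the same realizability shows that $\phi=\phi^*_h$ renders the empirical objective nonpositive for every $v$: choosing $\tilde\phi=\phi^*_h$ in the correction term and enlarging the $w$-ball from $L\sqrt d$ to $B\ge L\sqrt d$ can only decrease $\min_{w}\Lcal_{\Dcal}(\phi^*_h,w,v)$, so the bracket in \Cref{eq:flo_obj} is $\le 0$. As $\hat\phi_h$ is the minimizer over $\Phi_h$, its empirical objective is likewise $\le 0$; that is, for every $v\in\Vcal$ we have $\min_{\nbr{w}_2\le B}\Lcal_{\Dcal}(\hat\phi_h,w,v)-\min_{\tilde\phi,\tilde w}\Lcal_{\Dcal}(\tilde\phi,\tilde w,v)\le 0$ (this sign is scale-invariant, so the sum-versus-expectation normalization between $\Lcal_{\Dcal}$ and $\Lcal_{\rhoxa}$ is immaterial here).

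\emph{Concentration and transfer.} The decisive step is to pass from this empirical inequality to the population Bellman error $\be{\rhoxa,\hat\phi_h,v;B}$, uniformly over $v\in\Vcal$. Here I would invoke the fast-rate squared-loss deviation bound \pref{lem:fqi_fastrate_unclipped}, applied to both the min-$w$ term and the correction term. Because the two terms share the same conditional-variance contribution, it cancels, and the Bernstein/self-bounding structure of the fast-rate bound converts the nonpositive empirical gap into $\be{\rhoxa,\hat\phi_h,v;B}\le O(L^2d^2\log(\cdots)/n)$: the factor $L^2d$ is the range of the per-sample loss (since $\abr{\inner{\phih}{w}}\le L\sqrt d$ and $v\in[0,L]$), and the extra $d$ together with the logarithm come from the complexity of the predictor and discriminator classes. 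The logarithmic factor is assembled by union-bounding over $\phi\in\Phi_h$, $\tilde\phi\in\Phi_h$, $\phi_{h+1}\in\Phi_{h+1}$ and $R\in\Rcal$, and by placing a $1/n$-net over the norm-bounded $\theta$ and $w$ in $\R^d$; the net has log-cardinality of order $d\log(nL\sqrt d)$, which is the source of the $nL\sqrt d$ appearing inside the logarithm, and its discretization error is kept at the $O(1/n)$ scale so as not to degrade the rate. Taking the maximum over $v$ at the very end, through the uniform control rather than a pointwise bound, yields the claimed estimate.

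\emph{Main obstacle.} I expect the concentration step to be the crux. The two difficulties are (i) securing the fast $1/n$ rate instead of $1/\sqrt n$, which relies crucially on the variance-corrected objective so that the conditional-variance term cancels exactly and a Bernstein-type self-bounding argument applies, and (ii) the uniform control over the continuous discriminator class, which requires a careful $1/n$-net of the $\theta$- and $w$-balls together with a verification that the resulting discretization error stays negligible relative to the target accuracy.
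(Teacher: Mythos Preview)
Your proposal is correct and follows essentially the same route as the paper: establish the population identity via bias--variance and \pref{lem:linMDP_expn}, use realizability of $\phi^*_h$ to make the empirical min--max--min objective nonpositive at $\hat\phi_h$, and then invoke the fast-rate deviation bound \pref{lem:fqi_fastrate_unclipped} (which already encapsulates the covering over $w,\theta$ and the union bound over $\Phi_h,\Phi_{h+1},\Rcal$) to transfer this to the population Bellman error. The paper's proof organizes the last step slightly differently---it applies the concentration lemma directly to the \emph{difference} $\Lcal(\phi,w,v)-\Lcal(\phi^*,\theta^*_v,v)$ (so the conditional variance is already cancelled) and chains two applications, one for the inner ERM $(\tilde\phi_v,\tilde w_v)$ and one for $(\hat\phi_h,\hat w_v)$---but this is only an expository difference, not a substantive one.
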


\begin{proof}
Firstly, note the term $\be{\rhoxa, \hat \phi_h, v; L\sqrt{d}}$ is a shorthand for
\begin{align*}
    \min_{\|w\|_2 \le L\sqrt{d}} \Lcal_{\rhoxa} (\hat \phi_h, w, v) - \Lcal_{\rhoxa}(\phi^*_h, \theta^*_v, v)
\end{align*}
where $\theta^*_v = \argmin_{\|\theta\|_2 \le L\sqrt{d}} \Lcal_{\rhoxa} (\phi^*_h, \theta, v)$.

Now, using the result in \pref{lem:fqi_fastrate_unclipped} from \pref{app:prob} and denoting $\Lcal_{\rhoxa}(\cdot)$ as $\Lcal(\cdot)$, with probability at least $1-\delta$, we have
\begin{align*}
    & \abr{\Lcal( \phi,w,v) - \Lcal(  \phi^*,\theta^*_v,v) - \rbr{ \Lcal_{\Dcal}( \phi,w,v) - \Lcal_{\Dcal}( \phi^*,\theta^*_v,v)}} \\
    \le~ &\frac{1}{2}\rbr{ \Lcal( \phi,w,v) - \Lcal( \phi^*,\theta^*_v,v)} + \frac{4\cc L^2d^2 \log( 2nL\sqrt d|\Phi_h||\Phi_{h+1}||\Rcal|/\delta)}{n}
\end{align*}
for all $\|w\|_2 \le L\sqrt{d}$ ($B = L\sqrt{d}$), $\phi \in \Phi_h$ and $v \in \Vcal$. 

By the definition of \textsc{flo}, for any $v \in \Vcal$, we know that there exists $\theta^*_v$ such that $(\phi^*, \theta^*_v)$ is the population minimizer $\argmin_{\tilde{\phi} \in \Phi_h, \|\tilde{w}\|_2 \le L\sqrt{d}} \Lcal(\tilde{\phi}, \tilde{w}, v)$. Using this and the concentration result, for all $v \in \Vcal$, with $(\tilde{\phi}_v, \tilde{w}_v)$ as the solution of the innermost min in \Cref{eq:flo_obj}, we have
\begin{align*}
    &\Lcal_{\Dcal}(\phi^*, \theta^*_v, v) -  \Lcal_{\Dcal}(\tilde{\phi}_v, \tilde{w}_v, v) \nonumber\\
    \le {} & \frac{3}{2}\rbr{\Lcal(\phi^*, \theta^*_v, v) -  \Lcal(\tilde{\phi}_v, \tilde{w}_v, v)} + \frac{4\cc L^2d^2 \log( 2nL\sqrt d|\Phi_h||\Phi_{h+1}||\Rcal|/\delta)}{n}\\
    \le {} & \frac{4\cc L^2d^2 \log( 2nL\sqrt d|\Phi_h||\Phi_{h+1}||\Rcal|/\delta)}{n}.
\end{align*}
Now, for the oracle solution $\hat \phi$, for all $v \in \Vcal$, we have
\begin{align*}
   &  {}   \Lcal_{\Dcal}(\hat \phi, \hat w_v, v) - \Lcal_{\Dcal}(\tilde{\phi}_v, \tilde{w}_v, v) \\
    =  {} &\Lcal_{\Dcal}(\hat \phi, \hat w_v, v) - \Lcal_{\Dcal}(\phi^*, \theta^*_v, v) + \Lcal_{\Dcal}(\phi^*, \theta^*_v, v) -  \Lcal_{\Dcal}(\tilde{\phi}_v, \tilde{w}_v, v) \\
  \ge   {}&   \frac{1}{2}  \rbr{\Lcal(\hat \phi, \hat w_v, v) - \Lcal(\phi^*, \theta^*_v, v)} - \frac{4\cc L^2d^2 \log( 2nL\sqrt d|\Phi_h||\Phi_{h+1}||\Rcal|/\delta)}{n}.
\end{align*}
Combining the two chains of inequalities, we get
\begin{align*}
    & \Lcal(\hat \phi, \hat w_v, v) - \Lcal(\phi^*, \theta^*_v, v)\nonumber\\ 
    \le {} & 2 \rbr{\Lcal_{\Dcal}(\hat \phi, \hat w_v, v) - \Lcal_{\Dcal}(\tilde{\phi}_v, \tilde{w}_v, v)} + \frac{8\cc L^2d^2 \log( 2nL\sqrt d|\Phi_h||\Phi_{h+1}||\Rcal|/\delta)}{n}\\
    \le {} & 2 \max_{g \in \Vcal}  \rbr{\Lcal_{\Dcal}(\hat \phi, \hat w_g, g) - \Lcal_{\Dcal}(\tilde{\phi}_g, \tilde{w}_g, g)} + \frac{8\cc L^2d^2 \log( 2nL\sqrt d|\Phi_h||\Phi_{h+1}||\Rcal|/\delta)}{n} \\
    \le {} & 2 \max_{g \in \Vcal}  \rbr{\Lcal_{\Dcal}(\phi^*, \theta^*_g, g) - \Lcal_{\Dcal}(\tilde{\phi}_g, \tilde{w}_g, g)} + \frac{8\cc L^2d^2 \log( 2nL\sqrt d|\Phi_h||\Phi_{h+1}||\Rcal|/\delta)}{n} \\
    \le {} & \frac{16\cc L^2d^2 \log( 2nL\sqrt d|\Phi_h||\Phi_{h+1}||\Rcal|/\delta)}{n}.
\end{align*}
Hence, we have proved the desired result.
\end{proof}
Here, we explicitly give a result for the discriminator function classes used by \alg. However, a similar result can be easily derived for a general discriminator class $\Vcal$ with the dependence $\log (N)$ where $N$ is either the cardinality or an appropriate complexity measure of $\Vcal$.

\subsection{Deviation Bounds for Greedy Selection} 
Below, we state the deviation bounds used in the proof of \pref{lem:dev_bound_greedy_final} in \pref{sec:greedy_analysis}.
\begin{lemma}
\label{lem:dev_bound_greedy}
Let 
\[\tilde{\veps} = \frac{2\cc d(B+L\sqrt{d})^2 \log (n(B+L\sqrt d)|\Phi||\Phi'||\Rcal|/\delta)}{n},\]
where $\cc$ is the constant in \pref{lem:fqi_fastrate_unclipped}. If \pref{alg:greedy_selection} is called with a dataset $\Dcal$ of size $n$ and termination loss cutoff $3\veps_1/2 + \tilde{\veps}$, then with probability at least $1-\delta$, for all $v \in \Vcal \subseteq (\Xcal \rightarrow [0,L])$, $\Vcal \coloneqq \{v(x_{h+1}) = \mathrm{clip}_{[0,L]} (\EE_{a_{h+1}
\sim \pi_{h+1}(x_{h+1})}[R(x_{h+1},a_{h+1}) +$ $\langle\phi_{h+1}(x_{h+1},a_{h+1}),\theta\rangle]): \phi_{h+1} \in \Phi_{h+1}, \|\theta\|_2 \le L\sqrt{d}, R \in \Rcal \}$, $\|w\|_2 \le B_t$, and $t \le T$, we have
\begin{gather*}
    \sum_{i \le t} \EE_{\rhoxa} \sbr{\rbr{ \hat \phi_{t,h}(x_h,a_h)^{\top} w_{t,i} - \phi^*_h(x_h,a_h)^{\top} \theta^*_i}^2} \le {}  t \tilde{\veps} 
    \\
    \EE_{\rhoxa} \sbr{\rbr{ \hat \phi_{t,h}(x_h,a_h)^{\top} w - \phi^*_h(x_h,a_h)^{\top} \theta^*_{t+1}}^2} \ge \veps_1
\end{gather*}
where $w_{t,i} = \argmin_{\|\tilde w\|_2 \le B_t} \Lcal_{\Dcal}(\hat \phi_{t,h}, \tilde w, v_i)$, $\theta^*_i = \argmin_{\|\tilde{w}\|_2 \le L\sqrt{d}} \Lcal_{\rhoxa}($ $\phi^*_h, \tilde{w}, v_i)$, $B\ge B_t = \frac{L\sqrt{dt}}{2}$ in \pref{alg:greedy_selection}, and policy $\pi$ is  the greedy policy or the uniform policy.

Further, at termination, the learned feature $\hat \phi_{T,h}$ satisfies
\begin{align*}
    \max_{v \in \Vcal} \be{\rhoxa, \hat \phi_{T,h}, v; B} \le {} & 3\veps_1 + 4 \tilde{\veps}.
\end{align*}
\end{lemma}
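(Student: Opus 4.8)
The proof rests on the fast-rate (Bernstein-type) uniform deviation bound for squared losses, \pref{lem:fqi_fastrate_unclipped}, together with the variance-cancellation identity noted in \pref{sec:min_max_min_rep_learn}. Recall that by \pref{lem:linMDP_expn} and realizability (\pref{assum:realizability}), for every $v \in \Vcal$ the population conditional-variance term equals $\Lcal_{\rhoxa}(\phi^*_h, \theta^*_v, v)$ with $\theta^*_v = \argmin_{\|\theta\|_2 \le L\sqrt d}\Lcal_{\rhoxa}(\phi^*_h, \theta, v)$, so the Bellman error admits the clean form $\be{\rhoxa, \phi_h, v; B} = \min_{\|w\|_2 \le B}\Lcal_{\rhoxa}(\phi_h, w, v) - \Lcal_{\rhoxa}(\phi^*_h, \theta^*_v, v)$. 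The plan is to first instantiate \pref{lem:fqi_fastrate_unclipped} uniformly over $\phi \in \Phi_h$, over $v \in \Vcal$ (whose effective complexity is $\log(|\Phi_{h+1}||\Rcal|)$ after covering $\theta$), and over $w$ in the ball of radius $B$. This yields, on a single event of probability $1-\delta$, that the empirical excess loss $\Lcal_{\Dcal}(\phi,w,v) - \Lcal_{\Dcal}(\phi^*_h, \theta^*_v, v)$ and its population counterpart differ by at most a multiplicative $\tfrac12$ of the population excess loss plus an additive term of order $\tilde\veps$. Read in both directions, this one inequality drives all three claims.

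For the regression guarantee, I would invoke the optimality of the fit-feature step (\Cref{eq:fit_feature}): since minimizing $\sum_{i\le t}\Lcal_{\Dcal}(\phi_h, W^i, v_i)$ decouples once $\phi_h$ is fixed, the columns of $W_t$ coincide with $w_{t,i} = \argmin_{\|w\|_2\le B_t}\Lcal_{\Dcal}(\hat\phi_{t,h}, w, v_i)$, and the empirical objective at $(\hat\phi_{t,h}, W_t)$ is no larger than at the realizable competitor $(\phi^*_h, A_t)$ with $A_t = [\theta^*_1 \mid \cdots \mid \theta^*_t]$ (admissible since $\|\theta^*_i\|_2 \le L\sqrt d \le B_t$). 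Summing the concentration inequality over $i \le t$ and subtracting the common variance terms $\Lcal_{\Dcal}(\phi^*_h, \theta^*_i, v_i)$, the empirical excess is controlled by the population excess plus $t\tilde\veps/2$; since the empirical excess is nonpositive by optimality, the population sum $\sum_{i\le t}\be{\rhoxa, \hat\phi_{t,h}, v_i; B_t}$, which is exactly $\sum_{i\le t}\EE_{\rhoxa}[(\hat\phi_{t,h}^\top w_{t,i} - (\phi^*_h)^\top\theta^*_i)^2]$, is bounded by $t\tilde\veps$.

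For the test-loss lower bound and the termination guarantee, I would analyze the find-witness step (\Cref{eq:adv_func}), whose value at any $v$ is the empirical Bellman error of $\hat\phi_{t,h}$ relative to the best empirical $(\tilde\phi, \tilde w)$, a concentration-controlled proxy for $(\phi^*_h, \theta^*_v)$. If the algorithm does \emph{not} terminate, the max-over-$v$ objective exceeds the cutoff $3\veps_1/2 + \tilde\veps$ at $v_{t+1}$; reading the concentration inequality in the ``empirical-large-implies-population-large'' direction (the $\tfrac32$-inflation of the empirical excess) converts this into $\EE_{\rhoxa}[(\hat\phi_{t,h}^\top w - (\phi^*_h)^\top\theta^*_{t+1})^2] \ge \veps_1$ for the relevant $w$. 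Conversely, at termination the same max-over-$v$ empirical quantity lies below the cutoff for \emph{every} $v \in \Vcal$, so the ``empirical-small-implies-population-small'' direction (the factor-$2$ inflation) yields $\max_{v\in\Vcal}\be{\rhoxa, \hat\phi_{T,h}, v; B} \le 3\veps_1 + 4\tilde\veps$. The slightly inflated constants $3$ and $4$ are precisely the combined cost of the multiplicative-$\tfrac12$ Bernstein term and of relating the best empirical feature to $\phi^*_h$ (which contributes the extra $\tilde\veps$ absorbed into the cutoff).

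The main obstacle I anticipate is keeping the concentration uniform as the admissible norm of $w$ grows: the relaxation in \Cref{eq:adv_func} permits $\|w\|_2 \le B_t = L\sqrt{dt}/2$, which increases with the iteration count. Consequently $\tilde\veps$ must be stated with the enlarged radius $B \ge B_T$, explaining the $(B + L\sqrt d)^2$ factor, and the covering scale inside \pref{lem:fqi_fastrate_unclipped} must be fine enough to absorb this radius while retaining only polylogarithmic dependence on $n$ and $B$. Care is also needed to support all $t \le T$ on a single high-probability event; since $T$ is itself bounded by the elliptic-potential argument of \pref{lem:dev_bound_greedy_final}, one application of the uniform bound over the full parameter range suffices, avoiding a union bound over iterations that would otherwise degrade the rate.
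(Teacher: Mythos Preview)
Your proposal is correct and follows essentially the same approach as the paper: a single invocation of the Bernstein-type uniform deviation bound (\pref{lem:fqi_fastrate_unclipped}) over $\phi\in\Phi_h$, $v\in\Vcal$, and $w$ in the radius-$B$ ball, then reading that inequality in the appropriate direction for each of the three claims (optimality of the fit-feature step for the sum bound, non-termination of the find-witness step for the lower bound, and termination for the final $\be{\cdot}$ bound). Your treatment of the growing norm constraint $B_t$ and the observation that no per-iteration union bound is needed also match the paper's handling exactly.
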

\begin{proof}
We again denote $\Lcal_{\rhoxa}(\cdot)$ as $\Lcal(\cdot)$ and set $\tilde{\veps} = \frac{2\cc d(B+L\sqrt{d})^2 \log (n(B+L\sqrt d)|\Phi||\Phi'||\Rcal|/\delta)}{n}$. Further, we remove the subscript $h$ for simplicity unless not clear by context. We begin by using the result in \pref{lem:fqi_fastrate_unclipped} such that, with probability at least $1-\delta$, for all $\|w\|_2 \le B$ ($B \ge L\sqrt{d}$), $\phi \in \Phi_h$ and $v \in \Vcal$, we have
\begin{align*}
    &\abr{\Lcal( \phi,w,v) - \Lcal(  \phi^*,\theta^*_v,v) - \rbr{ \Lcal_{\Dcal}( \phi,w,v) - \Lcal_{\Dcal}( \phi^*,\theta^*_v,v)}} \nonumber\\
    \le {} &~  \frac{1}{2} \rbr{ \Lcal( \phi,w,v) - \Lcal( \phi^*,\theta^*_v,v)} + \tilde{\veps}/2.
\end{align*}

Thus, for the feature fitting step in \pref{line:min} of \pref{alg:greedy_selection} in iteration $t$, with probability at least $1-\delta$ we have
\begin{align*}
    &~\sum_{v_i \in \Vcal^t} \EE \sbr{\rbr{ \hat \phi_{t}^\top w_{t,i} - \phi^{*\top} \theta^*_i}^2} = \sum_{v_i \in \Vcal^t}\rbr{ \Lcal(\hat \phi_t, w_{t,i}, v_i) - \Lcal(\phi^*, \theta^*_i, v_i)} 
    \\
    \le&~ \sum_{v_i \in \Vcal^t} 2  \rbr{ \Lcal_{\Dcal}( \hat \phi_{t},w_{t,i},v_i) - \Lcal_{\Dcal}( \phi^*,\theta^*_i,v_i)} + |\Vcal^t|\tilde{\veps} 
    \le~ t\tilde{\veps},
\end{align*}
which means the first inequality in the lemma statement holds.

For the adversarial test function at iteration $t$ with $B_t \le B$, let $\bar{w} \coloneqq \argmin_{\|w\|_2 \le B_t} \Lcal_{\Dcal} (\hat \phi_t,$  $w, v_{t+1})$. Using the same sample size for the adversarial test function at each non-terminal iteration with loss cutoff $c_{\textrm{cutoff}}$, for any vector $w \in \R^d$ with $\|w\|_2 \le B_t$ we get
\begin{align*}
    &\EE \sbr{\rbr{ \hat \phi_t^\top w - \phi^{*\top} \theta^*_{t+1}}^2} \nonumber= \Lcal (\hat \phi_t, w, v_{t+1}) - \Lcal (\phi^*, \theta^*_{t+1}, v_{t+1}) \\
    \ge {} & \frac{2}{3} \rbr{\Lcal_{\Dcal} (\hat \phi_t, w, v_{t+1}) - \Lcal_{\Dcal} (\phi^*, \theta^*_{t+1}, v_{t+1})} - \frac{\tilde{\veps}}{3}\\
    \ge {} & \frac{2}{3} \rbr{\Lcal_{\Dcal} (\hat \phi_t, \bar{w}, v_{t+1}) - \Lcal_{\Dcal} (\phi^*, \theta^*_{t+1}, v_{t+1})} - \frac{\tilde{\veps}}{3}\\
    \ge {} & \frac{2c_{\textrm{cutoff}}}{3} + \frac{2}{3} \rbr{\min_{\tilde{\phi} \in \Phi_h, \|\tilde{w}\|_2 \le L\sqrt{d}} \Lcal_{\Dcal} ( \tilde \phi, \tilde w, v_{t+1}) - \Lcal_{\Dcal} (\phi^*, \theta^*_{t+1}, v_{t+1})} - \frac{\tilde{\veps}}{3}\\
    \ge {} & \frac{2c_{\textrm{cutoff}}}{3} + \frac{1}{3} \rbr{\Lcal (\tilde \phi_{t+1}, \tilde w_{t+1}, v_{t+1}) - \Lcal (\phi^*, \theta^*_{t+1}, v_{t+1})} - \frac{2\tilde{\veps}}{3} 
    \ge {} \frac{2c_{\textrm{cutoff}}}{3} - \frac{2\tilde{\veps}}{3}.
\end{align*}
In the first inequality, we invoke \pref{lem:fqi_fastrate_unclipped} to move to empirical losses. In the third inequality, we add and subtract the ERM loss over $(\phi, w)$ pairs along with the fact that the termination condition is not satisfied for $v_{t+1}$. In the next step, we again use \pref{lem:fqi_fastrate_unclipped} for the ERM pair $(\tilde \phi_{t+1}, \tilde w_{t+1})$ for $v_{t+1}$.

Thus, if we set the cutoff $c_{\textrm{cutoff}}$ for test loss to $3\veps_1/2 + \tilde{\veps}$, for a non-terminal iteration $t$, for any $w \in \R^d$ with $\|w\|_2 \le B_t$, we have
\begin{align}
    \EE \sbr{\rbr{ \hat \phi_t^\top w - \phi^{*\top} \theta^*_{t+1}}^2} \ge \veps_1,
\end{align}
which implies the second inequality in the lemma statement holds.

At the same time, for the last iteration, for all $v \in \Vcal$, the feature $\hat \phi_{T}$ satisfies
\begin{align*}
    &\min_{\|w\|_2 \le B} \EE \sbr{\rbr{ \hat \phi_T^\top w - \phi^{*\top} \theta^*_v}^2} 
    \le {} 2\rbr{\Lcal_{\Dcal}(\hat \phi_T, \hat w_v, v) - \Lcal_{\Dcal}(\phi^*, \theta^*_v, v)} + \tilde{\veps}\\
    \le {} & 2 \rbr{\Lcal_{\Dcal}(\hat \phi_T, \hat w_v, v) - \Lcal_{\Dcal}(\tilde{\phi}_v, \tilde{w}_v, v) + \Lcal_{\Dcal}(\tilde{\phi}_v, \tilde{w}_v, v) - \Lcal_{\Dcal}(\phi^*, \theta^*_v, v)} + \tilde{\veps}\\
    \le {} & 2 \rbr{\Lcal_{\Dcal}(\hat \phi_T, \hat w_v, v) - \Lcal_{\Dcal}(\tilde{\phi}_v, \tilde{w}_v, v)} + \tilde{\veps} 
    \le {}  3\veps_1 + 4\tilde{\veps}.
\end{align*}
This gives us the third inequality in the lemma, thus completing the proof.
\end{proof}

\subsection{Deviation Bound for Enumerable Feature Setting}
We now show that using the ridge estimator for an enumerable feature class, discriminator class $\Fcal_{h+1}$ and an appropriately set value of $\lambda$ still allows us to establish a feature approximation result similar to \textsc{Flo} (\Cref{eq:emp_minmax_obj}) and iterative greedy feature selection (\pref{alg:greedy_selection}).

\begin{lemma}
\label{lem:ridge_error}
For the feature $\hat \phi_h$ learned via the ridge estimator \Cref{eq:ev-enum_pf}, with $B=\frac{1}{\lambda}$ and $\lambda = \tilde{\Theta}\rbr{\frac{1}{n^{1/3}}}$ for any function $f \in \Fcal_{h+1}$, with probability at least $1-\delta$, we have
\begin{align*}
   \max_{f\in\Fcal_{h+1}}\be{\rhoxa, \hat \phi_h, f; B} \le \tilde{O}\rbr{\frac{d^2 \log(2n|\Phi_h||\Phi_{h+1}|/\delta)}{n^{1/3}}}
\end{align*}
where the discriminator function class is $\Fcal_{h+1}\defeq\{f(x_{h+1}) =\EE_{\unif(\Acal)}\sbr{\inner{\phi_{h+1}(x_{h+1},a)}{\theta}}:\phi_{h+1}$ $\in\Phi_{h+1},\|\theta\|_2\le\sqrt d\}$.
\end{lemma}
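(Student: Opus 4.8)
The plan is to follow the same template as the \textsc{Flo} deviation bound (\pref{lem:dev_bound_flo}), but to carefully account for the two effects introduced by replacing the constrained least-squares solver with a ridge solver: a regularization \emph{bias} and a norm bound that now grows like $1/\lambda$. First I would record that the ridge solution $w_f = \rbr{\tfrac1n X^\top X + \lambda I}^{-1}\rbr{\tfrac1n X^\top f(\Dcal)}$ used in \Cref{eq:max_f_ridge} has norm $\nbr{w_f}_2 \le \tilde O(1/\lambda)$: the eigenvalues of $\rbr{\tfrac1n X^\top X + \lambda I}^{-1}$ are at most $1/\lambda$, and $\nbr{\tfrac1n X^\top f(\Dcal)}_2 \le \sqrt d$ because $\nbr{\phi}_2 \le 1$ and $|f|\le\sqrt d$ for $f\in\Fcal_{h+1}$ (\Cref{eq:def_Fcal_unclipped}). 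This justifies setting $B = 1/\lambda$ (up to the $\sqrt d$ factor) so that the ridge minimizer lies inside the norm ball defining $\be{\cdot}$, and it feeds $B$ into the concentration step below.

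Next I would invoke the uniform Bernstein-type fast-rate concentration (\pref{lem:fqi_fastrate_unclipped}, built on \pref{corr:uni_bern_conf}) to relate empirical excess squared losses to their population counterparts, uniformly over $\phi,\tilde\phi\in\Phi_h$ and $f\in\Fcal_{h+1}$. Because the predictors $\inner{\phi_h(x_h,a_h)}{w}$ now range over a ball of radius $B = 1/\lambda$, the deviation term scales as $\tilde O\rbr{d B^2 \log(\cdot)/n} = \tilde O\rbr{d\log(\cdot)/(\lambda^2 n)}$ instead of the $\tilde O(d^2/n)$ that appears for $B=\sqrt d$ in \pref{lem:dev_bound_flo}. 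This is precisely where the dependence on $\lambda$ enters the estimation error.

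The genuinely new step is controlling the ridge bias. For the realizable feature $\phi^*_h$, by \pref{lem:linMDP_expn} the population-optimal linear map $\theta^*_f$ satisfies $\inner{\phi^*_h(x_h,a_h)}{\theta^*_f} = \EE[f(x_{h+1})\mid x_h,a_h]$ with $\nbr{\theta^*_f}_2\le\sqrt d$. Since the ridge solution minimizes $\tfrac1n\nbr{Xw-f(\Dcal)}_2^2 + \lambda\nbr{w}_2^2$, comparing its regularized objective value against $\theta^*_f$ shows that its \emph{empirical} excess squared loss over the exact minimizer is at most $\lambda\nbr{\theta^*_f}_2^2 \le \lambda d$. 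I would then carry this additive $\tilde O(\lambda d)$ bias through the same chain of inequalities as in \pref{lem:dev_bound_flo}: realizability together with the bias shows $\phi^*_h$ attains a small value in the empirical min-max-min objective (\Cref{eq:ridge_objective}, reduced to \Cref{eq:ev-enum_pf}); optimality of $\hat\phi_h$ transfers this upper bound to $\hat\phi_h$; and the concentration step transfers the empirical guarantee back to the population quantity $\be{\rhoxa,\hat\phi_h,f;B}$ for every $f\in\Fcal_{h+1}$.

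Putting the pieces together yields a population Bellman error bounded by $\tilde O\rbr{d\log(\cdot)/(\lambda^2 n) + \lambda d}$, and the two terms are balanced by choosing $\lambda = \tilde\Theta(n^{-1/3})$, which produces the $n^{-1/3}$ rate; the final $d^2$ follows after bookkeeping the remaining $d$-factors coming from the range $|f|\le\sqrt d$ (i.e. $L=\sqrt d$) of the unclipped discriminators in the $(B+L\sqrt d)^2$ term of the concentration bound. I expect the main obstacle to be exactly this bias--variance tradeoff forced by the coupling $B = 1/\lambda$: the norm bound required to make the ridge solution admissible in $\be{\cdot}$ inflates the concentration term quadratically in $1/\lambda$, while the regularization bias grows linearly in $\lambda$, and it is this tension (absent in the constrained \textsc{Flo} analysis, where the minimizer is automatically norm-bounded) that degrades the rate from $n^{-1}$ to $n^{-1/3}$ and demands careful tracking of the $\lambda$- and $d$-dependencies at each inequality.
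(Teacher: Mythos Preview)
Your proposal is correct and follows essentially the same template as the paper: apply \pref{lem:fqi_fastrate_unclipped} with the inflated norm bound $B=1/\lambda$ to get a concentration term of order $d/(\lambda^2 n)$, add the ridge bias, and balance via $\lambda=\tilde\Theta(n^{-1/3})$.

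The one genuine difference is how you bound the bias. You compare the \emph{empirical} ridge solution for $\phi^*$ against $\theta^*_f$ through the regularized objective, obtaining $\Lcal_{\Dcal}(\phi^*,w^{\mathrm{ridge}}_f,f)-\Lcal_{\Dcal}(\phi^*,\theta^*_f,f)\le\lambda\nbr{\theta^*_f}_2^2$. The paper instead introduces the \emph{population} ridge estimator $w^*_f=(\Sigma^*+\lambda I)^{-1}\Sigma^*\theta^*_f$ and bounds the \emph{population} excess $\Lcal(\phi^*,w^*_f,f)-\Lcal(\phi^*,\theta^*_f,f)$ via an eigendecomposition of $\Sigma^*$, obtaining $\sum_i \tfrac{\lambda_i\lambda^2}{(\lambda_i+\lambda)^2}\inner{v_i}{\theta^*_f}^2\le\tfrac{\lambda}{4}\nbr{\theta^*_f}_2^2$. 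Your route is more elementary (no spectral argument needed) and plugs directly into the empirical objective; the paper's route routes through one more intermediate quantity but keeps the bias bound purely in population terms. Both yield the same $O(\lambda\nbr{\theta^*_f}_2^2)$ and hence the same final rate.

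One bookkeeping point: because $\Fcal_{h+1}$ is unclipped, $|f|\le\sqrt d$ rather than $|f|\le 1$, so \pref{lem:linMDP_expn} gives $\nbr{\theta^*_f}_2\le d$ (not $\sqrt d$), making the bias $\lambda d^2$ rather than $\lambda d$; the paper records this explicitly. You already flag this in your last paragraph, so after that correction the balancing and the final $d^2\log(\cdot)/n^{1/3}$ bound go through exactly as you describe.
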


\begin{proof}
Firstly, from the definition of $\Fcal_{h+1}$, we note that note the term $\be{\rhoxa, \hat \phi_h, f; 1/\lambda}$ can be written as
\begin{align*}
    \min_{\|\hat w_f\|_2 \le 1/\lambda} \Lcal_{\rhoxa} (\hat \phi_h, \hat w_f, f) - \Lcal_{\rhoxa}(\phi^*_h, \theta^*_f, f)
\end{align*}
where $\theta^*_f = \argmin_{\|\theta\|_2 \le \sqrt d} \Lcal_{\rhoxa} (\phi^*_h, \theta, f)$.

We again denote $\Lcal_{\rhoxa}(\cdot)$ as $\Lcal(\cdot)$, $\Fcal_{h+1}$ as $\Fcal$. Firstly, as the discriminator function class $\Fcal$ is defined without clipping, we now have: $\EE[f(x')|x,a] = \langle \phi^*(x,a), \theta^*_f\rangle$ with $\|\theta^*_f\|_2 \le d$. Also, the scale of the ridge estimator $w_f = \rbr{\frac{1}{n}X^\top X + \lambda I_{d \times d}}^{-1}\rbr{\frac{1}{n} X^\top f}$ now scales as $\frac{1}{\lambda}$. Now, applying \pref{lem:fqi_fastrate_unclipped}, for all $\phi \in \Phi_h$, $\|w\|_2 \le 1/\lambda$ and $f \in \Fcal$, we have
\begin{align*}
    & \abr{\Lcal( \phi,w,f) - \Lcal(  \phi^*,\theta^*_f,f) - \rbr{ \Lcal_{\Dcal}( \phi,w,f) - \Lcal_{\Dcal}( \phi^*,\theta^*_f,f)}} \\
    \le~&  \frac{1}{2} \rbr{ \Lcal( \phi,w,f) - \Lcal( \phi^*,\theta^*_f,f)} + 
     \frac{\cc d(1/\lambda+\sqrt{d})^2 \log (n(1/\lambda+\sqrt d)|\Phi_h||\Phi_{h+1}|/\delta)}{n},
\end{align*}
where $\cc$ is the constant in \pref{lem:fqi_fastrate_unclipped}. Now, let $w^*_f$ denote the population ridge regression estimator for target $f \in \Fcal$ for features $\phi^*$. Assume $\lambda \le 1/d$ Then an upper bound on the second term in the RHS is $\gamma\coloneqq\frac{4\cc d \log (2n(1/\lambda)|\Phi_h||\Phi_{h+1}|/\delta)}{\lambda^2n}$. For the selected feature $\hat \phi$, we have
\begin{align*}
    &\Lcal_{\Dcal}(\hat \phi, \hat w_f, f) - \Lcal_{\Dcal}(\tilde{\phi}_f, \tilde{w}_f, f) \nonumber\\
    = {} & \Lcal_{\Dcal}(\hat \phi, \hat w_f, f) - \Lcal_{\Dcal}(\phi^*, \theta^*_f, f) + \Lcal_{\Dcal}(\phi^*, \theta^*_f, f) -  \Lcal_{\Dcal}(\tilde{\phi}_f, \tilde{w}_f, f) \\
    \ge {} & \frac{1}{2}  \rbr{\Lcal(\hat \phi, \hat w_f, f) - \Lcal(\phi^*, \theta^*_f, f)} + \Lcal_{\Dcal}(\phi^*, \theta^*_f, f) -  \Lcal_{\Dcal}(\tilde{\phi}_f, \tilde{w}_f, f) - \gamma \\
    \ge {} & \frac{1}{2}  \rbr{\Lcal(\hat \phi, \hat w_f, f) - \Lcal(\phi^*, \theta^*_f, f)} + \Lcal_{\Dcal}(\phi^*, \theta^*_f, f) -  \Lcal_{\Dcal}(\phi^*, w^*_f, f) - \gamma.
\end{align*}

Thus, with the feature selection output, we have
\begin{align*}
    & \Lcal( \hat \phi,\hat w_f,f) - \Lcal(  \phi^*,\theta^*_f,f)\\ \le {} & 2 \rbr{\Lcal_{\Dcal}(\hat \phi, \hat w_f, f) - \Lcal_{\Dcal}(\tilde{\phi}_f, \tilde{w}_f, f)} + 2\rbr{\Lcal_{\Dcal}(\phi^*, w^*_f, f) - \Lcal_{\Dcal}(\phi^*, \theta^*_f, f)}  + 2\gamma\\
    \le {} & 2\max_{g \in \Fcal} \rbr{\Lcal_{\Dcal}(\hat \phi, \hat w_g, g) - \Lcal_{\Dcal}(\tilde{\phi}_g, \tilde{w}_g, g)}  + 3\rbr{\Lcal(\phi^*, w^*_f, f) - \Lcal(\phi^*, \theta^*_f, f)}  + 4\gamma\\
    \le {} & 2\max_{g \in \Fcal} \rbr{\Lcal_{\Dcal}(\phi^*, w^*_g, g) - \Lcal_{\Dcal}(\tilde{\phi}_g, \tilde{w}_g, g)} +  3\rbr{\Lcal(\phi^*, w^*_f, f) - \Lcal(\phi^*, \theta^*_f, f)}  + 4\gamma\\
    \le {} & 2\max_{g \in \Fcal} \rbr{\Lcal_{\Dcal}(\phi^*, w^*_g, g) - \Lcal_{\Dcal}(\phi^*, \theta^*_g, g)}+ 2\max_{g \in \Fcal} \rbr{\Lcal_{\Dcal}(\phi^*, \theta^*_g, g) - \Lcal_{\Dcal}(\tilde{\phi}_g, \tilde{w}_g, g)} \\
    & {} +  3\rbr{\Lcal(\phi^*, w^*_f, f) - \Lcal(\phi^*, \theta^*_f, f)}  + 4\gamma\\
    \le {} & 2 \max_{g \in \Fcal} \rbr{\Lcal_{\Dcal}(\phi^*, w^*_g, g) - \Lcal_{\Dcal}(\phi^*, \theta^*_g, g)}  + 3\rbr{\Lcal(\phi^*, w^*_f, f) - \Lcal(\phi^*, \theta^*_f, f)} + 6\gamma\\
    \le {} & 6\max_{g \in \Fcal} \rbr{\Lcal(\phi^*, w^*_g, g) - \Lcal(\phi^*, \theta^*_g, g)} + 8\gamma.
\end{align*}
The third inequality uses the fact that $\hat \phi$ is the solution of the ridge regression based feature selection objective. Further, in all steps, we repeatedly apply the deviation bound from \pref{lem:fqi_fastrate_unclipped} to move from $\Lcal_{\Dcal}(\cdot)$ to $\Lcal(\cdot)$.

Now, for ridge regression estimate $w^*_g$, we can bound the bias term on the RHS as follows:
\begin{align*}
    {} &\Lcal(\phi^*, w^*_g, g) - \Lcal(\phi^*, \theta^*_g, g) = {}  \EE\sbr{\rbr{\inner{\phi^*}{w^*_g} - \inner{\phi^*}{\theta^*_g}}^2} 
    \\
    = {} & \|w^*_g - \theta^*_g\|_{\Sigma^*}^2 = \sum_{i=1}^d \lambda_i \inner{v_i}{w^*_g - \theta^*_g}^2 
    = {}  \sum_{i=1}^d \lambda_i \rbr{\frac{\lambda_i}{\lambda + \lambda_i}\inner{v_i}{\theta^*_g} - \inner{v_i}{\theta^*_g}}^2 \\
    = {} & \sum_{i=1}^d \frac{\lambda_i \lambda^2 \inner{v_i}{\theta^*_g}^2}{(\lambda_i + \lambda)^2} \le \frac{\lambda}{4}\|\theta^*_g\|_2^2 \le \frac{\lambda d^2}{4},
\end{align*}
where $(\lambda_i, v_i)$ denote the $i$-th eigenvalue-eigenvector pair of the population covariance matrix $\Sigma^*$ for feature $\phi^*$. In the derivation above, we use the fact that $w^*_g = (\Sigma + \lambda I)^{-1}\EE[\phi^* g] = \frac{\lambda_i}{\lambda + \lambda_i}\inner{v_i}{\theta^*_g}$.

Therefore, the final deviation bound for $\hat \phi$ is
\begin{align*}
    \Lcal( \hat \phi,\hat w_f,f) - \Lcal(  \phi^*,\theta^*_f,f) \le {} & \frac{3\lambda d^2}{2} + \frac{32\cc d \log (2n(1/\lambda)|\Phi_h||\Phi_{h+1}|/\delta)}{\lambda^2n}.
\end{align*}

Setting $\lambda = \tilde{O}\rbr{\frac{1}{n^{1/3}}}$ gives us the final result.
\end{proof}

\section{FQI Planning Results}
\label{app:fqi}
In this section, we provide various FQI (Fitted Q-iteration) planning results. In \pref{app:fqi_framework}, we provide the general framework of FQI algorithms. In \pref{app:fqi_full_class}, we show the sample complexity of \textsc{fqi-full-class} that handles the offline planning for a class of rewards. In \pref{app:fqi_repr}, we discuss the sample complexity result of planning with the learned feature $\bar\phi$. In \pref{app:fqi_ellip}, we provide the sample complexity guarantee for planning for the elliptical reward class. We want to mention that we abuse some notations in this section. For example, $\Fcal_h,\Gcal_h,\Vcal_h$ may have different meanings from the main text. However, they should be clear within the context. 

For simplicity, we use the horizon $H$ in \pref{alg:linear_fqi} and all statements. When \textsc{FQI} is called with a smaller horizon $\tilde H\le H$, we can just replace all $H$ by $\tilde H$. The analyses still go through and the sample complexity will not exceed the one instantiated with $H$.

\subsection{FQI Planning Algorithm}\label{app:fqi_framework}
In this part, we present the general framework of FQI planner in \pref{alg:linear_fqi}, which subsumes three different algorithms: \textsc{fqi-full-class}, \textsc{fqi-representation}, and \textsc{fqi-elliptical}. \textsc{fqi-full-class} and \textsc{fqi-representation} will be used to plan for a finite deterministic reward class, while \textsc{fqi-elliptical} is specialized in planning for the elliptical reward class. 

\begin{algorithm}[tbh]
\begin{algorithmic}[1]
\STATE \textbf{input:} (1) exploratory dataset $\{\Dcal\}_{0:H-1}$ sampled from $\rho_{h-3}^{+3}$ with size $n$ at each level $h\in[H]$, (2) reward function $R=R_{0:H-1}$ with $R_h:\Xcal\times\Acal \to [0,1],\forall h\in[H]$, (3) function class: (i) for \textsc{FQI-full-class}, $\Fcal_h(R_h)\coloneqq \{R_h+\langle \phi_h,w_h\rangle: \|w_h\|_2 \le H\sqrt{d},\phi_h \in \Phi_h\}, h\in[H]$; (ii) for \textsc{FQI-representation}, $\Fcal_h(R_h)\coloneqq \{R_h+\mathrm{clip}_{[0,H]}\rbr{\langle \bar \phi_h,w_h\rangle}: \|w_h\|_2 \le B\}, h\in[H]$; (iii) for \textsc{FQI-elliptical}, $\Fcal_h(R_h)\coloneqq \{R_h+\langle  \phi_h,w_h\rangle:\|w_h\|_2 \le \sqrt{d},\phi_h\in\Phi_h\}, h\in[H]$.
\STATE Set $\hat{V}_H(x) = 0$.
\FOR{$h= H-1,\ldots,0$}
\STATE Pick $n$ samples $\cbr{\rbr{x^{(i)}_h,a^{(i)}_h,x^{(i)}_{h+1}}}_{i=1}^n$ from the exploratory dataset $\Dcal_h$.
\STATE Solve least squares problem:
\begin{align}\label{eq:ls_fqi}
\hat{f}_{h,R_h} \gets \argmin_{f_h\in \Fcal_h(R_h)} \Lcal_{\Dcal_h,R_h}(f_h,\hat V_{h+1}),
\end{align}
where
$\Lcal_{\Dcal_h,R_h}(f_h,\hat V_{h+1}):=\sum_{i=1}^n\rbr{ f_h\rbr{x^{(i)}_h,a^{(i)}_h} - R_h\rbr{x_h^{(i)},a_h^{(i)}} - \hat V_{h+1}\rbr{x_{h+1}^{(i)}}}^2$.
\IF {\textsc{FQI-full-class} or \textsc{FQI-representation}}
\STATE Define $\hat{\pi}_h(x) = \argmax_{a} \hat{f}_{h,R_h}(x,a)$ and $\hat{V}_{h}(x) = \mathrm{clip}_{[0,H]}\rbr{\max_a \hat{f}_{h,R_h}(x,a)}$.
\ELSIF {\textsc{FQI-elliptical}}
\STATE Define $\hat{\pi}_h(x) = \argmax_{a} \hat{f}_{h,R_h}(x,a)$ and $\hat{V}_h(x) = \mathrm{clip}_{[0,1]}\rbr{\max_a \hat{f}_{h,R_h}(x,a)}$. 
\ENDIF
\ENDFOR
\STATE \textbf{return} $\hat{\pi} = (\hat{\pi}_0,\ldots,\hat{\pi}_{H-1})$.
\end{algorithmic}
\caption{\textsc{FQI}: Fitted Q-Iteration}
\label{alg:linear_fqi}
\end{algorithm}

This leads to the different bounds of parameters in the Q-value function classes and different clipping thresholds of the state-value functions. In addition, for \textsc{fqi-full-class} and \textsc{fqi-elliptical}, we use all features in $\Phi$ to construct the Q-value function classes, while in \textsc{fqi-representation} we only use the the learned representation $\bar \phi$. The details can be found below. When calling \pref{alg:linear_fqi} and there is no confusion, we sometimes drop the input (3) function class for simplicity.

Unlike the regression problem in the main text, the objective here includes an additional reward function component. Therefore, we define a new loss function $\Lcal_{D_h,R_h}$, and will use $\Lcal_{\rho_{h-3}^{+3},R_h}$ to denote its population version. Notice that the function class $\Fcal_h(R_h)$ in \pref{alg:linear_fqi} also depends on the reward function $R$. If we pull out the reward term from $\hat f_{h,R_h}$, we can obtain an equivalent solution of the least squares problem \Cref{eq:ls_fqi} as below 
\begin{align*}
\hat{f}_{h,R_h} =R_h + \argmin_{f_h\in \Fcal_h(\mathbf{0})} \Lcal_{\Dcal_h}(f_h,\hat V_{h+1}),
\end{align*}
where
\begin{align*}
\Lcal_{\Dcal_h}(f_h,\hat V_{h+1}):=\sum_{i=1}^n\left( f_h\rbr{x^{(i)}_h,a^{(i)}_h}- \hat V_{h+1}\rbr{x_{h+1}^{(i)}}\right)^2.
\end{align*}
Intuitively, the reward function $R_h$ only makes the current least squares solution offset the original (reward-independent) least squares solution by $R_h$.

\subsection{Planning for a Reward Class with the Full Representation Class}\label{app:fqi_full_class}
In this part, we first establish the sample complexity of planning for a prespecified deterministic reward function $R$ in \pref{lem:real_world_optimization}. We will choose \textsc{fqi-full-class} as the planner, where the Q-value function class consists of linear function of all features in the feature class with reward appended. Specifically, we have $\Fcal_h(R_h)\coloneqq \{R_h+\langle \phi_h,w_h\rangle: \|w_h\|_2 \le H\sqrt{d},\phi_h \in \Phi_h\}, h\in[H]$. Equipped with this lemma, we also provide the sample complexity of planning for a finite deterministic reward class $\Rcal$ in \pref{corr:fqi_full_class}. The analysis is similar to that of \citet{chen2019information,agarwal2020flambe}.

\begin{lemma}[Planning for a prespecified reward with the entire representation class]
\label{lem:real_world_optimization}
Assume that we have the exploratory dataset $\{\Dcal\}_{0:H-1}$, which is collected from $\rho_{h-3}^{+3}$ and satisfies~\Cref{eq:dist_shift} for all $h \in [H]$. For a prespecified deterministic reward function $R=R_{0:H-1}, R_h:
\Xcal \times \Acal \to [0,1],\forall h\in[H]$ and $\delta \in (0,1)$, if we set
\begin{align*}
n\geq \frac{32\cc H^6 d^2\kappa K }{\beta^2} \log \rbr{\frac{16\cc H^6 d^2\kappa K }{\beta^2}} + \frac{32\cc H^6 d^2\kappa K }{\beta^2} \log\rbr{ \tfrac{4|\Phi|H^3d}{\delta}},
\end{align*}
where $\cc$ is the constant in \pref{lem:fqi_fastrate_unclipped}, then with probability at least $1-\delta$, the policy $\hat{\pi}$ returned
by \textsc{FQI-full-class} satisfies
\begin{align*}
\EE_{\hat{\pi}}\sbr{\sum_{h=0}^{H-1} R_h(x_h,a_h)} \geq \max_{\pi} \EE_\pi\sbr{\sum_{h=0}^{H-1}R_h(x_h,a_h)} - \beta.
\end{align*}
\end{lemma}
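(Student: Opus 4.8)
The plan is to verify the three standard ingredients for an \fqi guarantee---realizability, completeness, and coverage---and then run the usual backward error-propagation argument. Because \textsc{FQI-full-class} builds its regressor class from the \emph{entire} feature set $\Phi_h$, both realizability and completeness will hold \emph{exactly}, which is the key simplification relative to the $\bar\phi$-based planner. Concretely, for any $V:\Xcal\to[0,H]$, \pref{lem:linMDP_expn} gives $\EE\sbr{V(x_{h+1})\mid x_h,a_h}=\inner{\sphih}{\theta}$ for some $\nbr{\theta}_2\le H\sqrt d$; since $\phi^*_h\in\Phi_h$ by \pref{assum:realizability}, the Bellman backup $\mathcal{T}_h V:=R_h+\EE\sbr{V(x_{h+1})\mid\cdot,\cdot}$ lies in $\Fcal_h(R_h)$. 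Applying this to $V=V^*_{h+1}$ gives $Q^*_h\in\Fcal_h(R_h)$ (realizability), and applying it to every clipped iterate $\hat V_{h+1}$ that \fqi may produce gives exact completeness. Coverage is supplied directly by the hypothesis \Cref{eq:dist_shift}, which states that $\rhoxa$ dominates every policy's occupancy up to the factor $\kappa K$.

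Next I would analyze one backward step of \fqi. Because completeness is exact, the conditional-mean regression target for \Cref{eq:ls_fqi}, namely $\mathcal{T}_h\hat V_{h+1}$, is realizable in $\Fcal_h(R_h)$, so the least-squares solution $\hat f_{h,R_h}$ enjoys a fast $1/n$ rate. The crucial subtlety is that $\hat V_{h+1}$ is itself a random function of the later-level data, so a pointwise regression bound does not suffice; I would invoke a \emph{uniform} squared-loss deviation bound over the class (in the spirit of \pref{lem:fqi_fastrate_unclipped}), paying a $\log\rbr{|\Phi|/\delta}$ term from a covering of $\Phi_h$ together with an $\veps$-net of the norm ball $\nbr{w}_2\le H\sqrt d$. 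After a union bound over the $H$ levels this yields, with probability at least $1-\delta$,
\begin{align*}
\EE_{\rhoxa}\sbr{\rbr{\hat f_{h,R_h}(x_h,a_h)-\mathcal{T}_h\hat V_{h+1}(x_h,a_h)}^2}\le \veps_{\mathrm{reg}}=\Otilde\rbr{\frac{\cc\,H^2 d^2\log\rbr{|\Phi|H^3 d/\delta}}{n}}
\end{align*}
for every $h\in[H]$. Note that only $\Phi$ (not $\Rcal$) enters the union bound, since the reward $R$ is fixed; this matches the $\log(4|\Phi|H^3d/\delta)$ appearing in the claimed bound.

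Finally I would propagate these per-level regression errors into a value bound. Using the finite-horizon performance-difference decomposition, $v^*_R-v^{\hat\pi}_R$ is controlled by a telescoping sum over $h$ of the on-policy absolute Bellman residuals $\EE_\pi\abr{\hat f_{h,R_h}-\mathcal{T}_h\hat V_{h+1}}$ evaluated under $\pi\in\cbr{\pi^*_R,\hat\pi}$. For each such term I would (i) pass from the on-policy distribution to $\rhoxa$ by applying \Cref{eq:dist_shift} to the \emph{squared} residual, costing a factor $\kappa K$, and (ii) pass from the squared to the absolute residual by Jensen/Cauchy--Schwarz, so that each term is at most $\sqrt{\kappa K\,\veps_{\mathrm{reg}}}$. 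Summing the $O(H)$ terms, substituting $\veps_{\mathrm{reg}}$, setting the total at most $\beta$, and solving for $n$ (carrying the remaining horizon factors from the decomposition and the $[0,H]$ clipping bookkeeping) reproduces the stated threshold $n\gtrsim \frac{\cc H^6 d^2\kappa K}{\beta^2}\log\rbr{\tfrac{|\Phi|H^3 d}{\delta}}$.

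The main obstacle is the second step: securing the fast-rate squared-loss guarantee \emph{uniformly} over the continuous regressor class while the regression target $\hat V_{h+1}$ is data-dependent. Handling this cleanly requires the careful covering/concentration machinery underlying \pref{lem:fqi_fastrate_unclipped}, and it is the source of both the $1/n$ (rather than $1/\sqrt n$) rate and the exact $\log|\Phi|$ (rather than $\poly(|\Phi|)$) dependence; everything else is the routine \fqi error-propagation calculation.
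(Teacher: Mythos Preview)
Your proposal is correct and matches the paper's argument in all the essential pieces: exact realizability/completeness from $\phi^*_h\in\Phi_h$, the uniform fast-rate regression bound (the paper wraps your invocation of \pref{lem:fqi_fastrate_unclipped} into \pref{lem:fqi_arbitrary}), and change of measure via \Cref{eq:dist_shift}. The one difference worth flagging is the telescoping step: the paper first unrolls $v^*_R-v^{\hat\pi}_R$ into terms of the form $\EE_\pi\sbr{Q^*_h-\hat f_{h,R_h}}$ and then unrolls \emph{each} of those again into Bellman residuals $\abr{\mathcal{T}_\tau\hat V_{\tau+1}-\hat f_{\tau,R_\tau}}$ over $\tau\ge h$, summing $O(H^2)$ residuals in total---this double unrolling is where the $H^2$ prefactor (and hence the $H^6$ threshold) comes from. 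Your single-pass performance-difference decomposition into $O(H)$ Bellman residuals under $\pi\in\{\pi^*_R,\hat\pi\}$ is the more standard \fqi route and, carried through, would actually yield $H^4$ rather than $H^6$; so your parenthetical about ``carrying the remaining horizon factors'' is covering for a looseness in the paper's decomposition, not a gap in yours.
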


\begin{proof}
We first bound the difference in cumulative rewards between $\hat{\pi}
\defeq \hat{\pi}_{0:H-1}$ and the optimal policy $\pi^*$ for the given
reward function. Recall that $\hat{\pi}_0$ is greedy w.r.t. $\hat f_{0,R_0}$, which
implies that $\hat{f}_{0,R_0}(x_0,\hat{\pi}_0(x_0)) \geq
\hat{f}_0(x_0,\pi^*(x_0))$ for all $x_0$. Hence, we have
\begin{align*}
&~v^*_R - v^{\hat{\pi}}_R \\
=&~ \EE_{\pi^*}\sbr{R_0(x_0,a_0) + V^{*}_1(x_1)} - \EE_{\hat{\pi}}\sbr{R_0(x_0,a_0) + V^{\hat{\pi}}_1(x_1)}\\
\leq&~ \EE_{\pi^*}\sbr{R_0(x_0,a_0) + V^*_1(x_1) - \hat{f}_0(x_0,a_0)} - \EE_{\hat{\pi}}\sbr{R_0(x_0,a_0) + V^{\hat{\pi}}_1(x_1) - \hat{f}_{0,R_0}(x_0,a_0)}\\
=&~ \EE_{\pi^*}\sbr{R_0(x_0,a_0)+V^*_1(x_1) - \hat{f}_{0,R_0}(x_0,a_0)} - \EE_{\hat\pi}\sbr{R_0(x_0,a_0)+V^{*}_1(x_1) - \hat{f}_{0,R_0}(x_0,a_0) } \\
& ~~~~ + \EE_{\hat{\pi}}\sbr{V^*_1(x_1) - V^{\hat{\pi}}_1(x_1) }\\
=&~  \EE_{\pi^*}\sbr{Q_0^*(x_0,a_0) - \hat{f}_{0,R_0}(x_0,a_0)} - \EE_{\hat\pi}\sbr{Q_0^*(x_0,a_0) - \hat{f}_{0,R_0}(x_0,a_0)  } \\
& ~~~~+ \EE_{\hat{\pi}}\sbr{V^*_1(x_1) - V^{\hat{\pi}}_1(x_1)}.
\end{align*}

Continuing unrolling to $h=H - 1$, we get
\begin{align*}
&~v^*_R - v^{\hat{\pi}}_R \\
\le &~ \sum_{h=0}^{H-1}\EE_{\hat{\pi}_{0:h-1}\circ\pi^*}\sbr{Q_h^*(x_h,a_h) - \hat f_{h,R_h}(x_h,a_h)} - \sum_{h=0}^{H-1} \EE_{\hat{\pi}_{0:h}}\sbr{Q_h^*(x_h,a_h) - \hat f_{h,R_h}(x_h,a_h) }.
\end{align*}
Now we bound each of these terms. The two terms only differ in the policies that generate the data and can be handled similarly. Therefore, in the following, we focus on just one of them. For any function $V_{h+1}:\Xcal \rightarrow \RR$, we introduce Bellman backup operator $(\Tcal_h V_{h+1})(x_h,a_h)\defeq R_h(x_h,a_h)+\EE\sbr{V_{h+1}(x_{h+1})\mid x_h,a_h}$.
Let's call the roll-in policy $\pi$ and drop the dependence on $h$. This gives us 
\begin{align*}
&\abr{\EE_\pi\sbr{Q^*(x,a)- \hat f_R(x,a) }}=\abr{\EE_\pi\sbr{R(x,a) + \EE\sbr{V^*(x')\mid x,a} - \hat f_R(x,a)}}\\
\le{}&\EE_\pi\sbr{\abr{R(x,a) + \EE\sbr{V^*(x')\mid x,a} - \hat f_R(x,a)}} \\
\leq{} &\EE_\pi\sbr{\abr{\EE\sbr{V^*(x')\mid x,a} - \EE\sbr{\hat{V}(x') \mid x,a}} + \abr{R(x,a) + \EE\sbr{\hat{V}(x') \mid x,a} - \hat f_R(x,a)}}\\
\leq{}&\EE_\pi\sbr{\abr{V^*(x') - \hat{V}(x')}+ \abr{(\Tcal \hat V)(x,a) - \hat f_R(x,a)}},
\end{align*}
where the last inequality is due to Jensen's inequality.

From the definition of $\hat V(x')$, we have
\begin{align*}
\EE_\pi\sbr{\abr{V^*(x') - \hat{V}(x')}} 
&\leq  \EE_\pi\sbr{\abr{\max_{a} Q^*(x',a) - \max_{a'}\hat f_R(x',a')}}\\
&\leq \EE_{\pi \circ \tilde{\pi}} \sbr{ \abr{Q^*(x',a') - \hat f_R(x',a')}}.
\end{align*}
In the last inequality, we define $\tilde{\pi}$ to be the greedy one between two actions, that is we set $\tilde{\pi}(x') =
\argmax_{a'} \max\{Q^*(x',a'),\hat f (x',a')\}$. This expression has
the same form as the initial one, while at the next timestep. Keep unrolling yields
\begin{align*}
&\abr{\EE_\pi \sbr{Q^*_h(x_h, a_h)- \hat f_{h,R_h}(x_h,a_h)}}\\
\leq&~   \sum_{\tau=h}^{H-1} \max_{\pi_\tau} \EE_{\pi_\tau}\sbr{\abr{(\Tcal_\tau \hat V_{\tau+1})(x_\tau,a_\tau) - \hat f_{\tau,R_\tau}(x_\tau,a_\tau)}}\notag\\
\leq&~\sum_{\tau=h}^{H-1} \max_{\pi_\tau}\sqrt{ \EE_{\pi_\tau}\sbr{\sbr{(\Tcal_\tau \hat V_{\tau+1})(x_\tau,a_\tau) - \hat f_{\tau,R_\tau}(x_\tau,a_\tau)}^2}} \notag\\
\leq &~ \sum_{\tau=h}^{H-1} \sqrt{\kappa K \EE_{\rho_{\tau-3}^{
+3}}\sbr{\sbr{(\Tcal_\tau \hat V_{\tau+1})(x_\tau,a_\tau) - \hat f_{\tau,R_\tau}(x_\tau,a_\tau)}^2}}, \label{eq:dist_fqi}
\end{align*}
where the last inequality is due to condition~\Cref{eq:dist_shift}.

Further, we have that with probability at least $1-\delta$,
\begin{align*}
&\EE_{\rho_{\tau-3}^{+3}}\sbr{\rbr{(\Tcal_\tau \hat V_{\tau+1})(x_\tau,a_\tau) - \hat f_{\tau,R_\tau}(x_\tau,a_\tau)}^2} \nonumber\\
= {} & \EE_{\rho_{\tau-3}^{+3}}\Big[\big(R_\tau(x_\tau,a_\tau)+\hat{V}_{\tau+1}(x_{\tau +1}) - \hat f_{\tau,R_\tau}(x_\tau,a_\tau)\big)^2\\
&\quad -\big( R_\tau(x_\tau,a_\tau) + \hat V_{\tau+1}(x_{\tau+1})- (\Tcal_\tau \hat V_{\tau+1})(x_\tau,a_\tau)\big)^2\Big]\\
={}&\EE\sbr{\Lcal_{\Dcal_\tau,R_\tau}(\hat f_{\tau,R_\tau},\hat V_{\tau+1})} - \EE\sbr{\Lcal_{\Dcal_\tau,R_\tau}( \Tcal_\tau \hat V_{\tau+1},\hat V_{\tau+1})}\\
\le{}&\frac{16\cc H^2d^2 \log (4nH^3d|\Phi|/\delta)}{n}. \tag{Step (*), \pref{lem:fqi_arbitrary}}
\end{align*}

Plugging this back into the overall value performance difference, the bound is
\begin{align*}
v^*_R - v^{\hat{\pi}}_R \leq H^2\sqrt{\kappa K}\sqrt{\frac{16\cc H^2d^2 \log (4nH^3d|\Phi|/\delta)}{n}}.
\end{align*}
Setting RHS to be less than $\beta$ and reorganize, we get  $$n\geq \frac{16\cc H^6 d^2\kappa K \log (4nH^3d|\Phi|/\delta)}{\beta^2}.$$

A sufficient condition for the inequality above is 
$$n\geq \frac{32\cc H^6 d^2\kappa K }{\beta^2} \log \rbr{\frac{16\cc H^6 d^2\kappa K }{\beta^2}} + \frac{32\cc H^6 d^2\kappa K }{\beta^2} \log\rbr{ \tfrac{4|\Phi|H^3d}{\delta}},$$
which completes the proof.
\end{proof}

\begin{corollary}[Planning for a reward class with a full representation class] \label{corr:fqi_full_class}\ \\
Assume that we have the exploratory dataset $\{\Dcal\}_{0:H-1}$, which is collected from $\rho_{h-3}^{+3}$ and satisfies~\Cref{eq:dist_shift} for all $h \in [H]$, and we are given a finite deterministic reward class $\Rcal =\Rcal_0\times\ldots\times\Rcal_{H-1},\Rcal_h\subseteq(
\Xcal \times \Acal \to [0,1]),\forall h\in[H]$. For $\delta \in (0,1)$ and any reward function $R
\in \Rcal$, if we set
\begin{align*}
n\geq \frac{32\cc H^6 d^2\kappa K }{\beta^2} \log \rbr{\frac{16\cc H^6 d^2\kappa K }{\beta^2}} + \frac{32\cc H^6 d^2\kappa K }{\beta^2} \log\rbr{ \tfrac{4|\Phi|\Rcal||H^3d}{\delta}},
\end{align*}
where $\cc$ is the constant in \pref{lem:fqi_fastrate_unclipped}, then with probability at least $1-\delta$, the policy $\hat{\pi}$ returned
by \textsc{FQI-full-class} satisfies
\begin{align*}
\EE_{\hat{\pi}}\sbr{\sum_{h=0}^{H-1} R_h(x_h,a_h)} \geq \max_{\pi} \EE_\pi\sbr{\sum_{h=0}^{H-1}R_h(x_h,a_h)}  - \beta.
\end{align*}
\end{corollary}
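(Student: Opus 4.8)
The plan is to reduce the corollary to \pref{lem:real_world_optimization} by a union bound over the finite reward class $\Rcal$. Since \pref{lem:real_world_optimization} already establishes the $\beta$-optimality guarantee for a single, arbitrary prespecified reward $R$, the only additional work is to ensure that one \emph{shared} exploratory dataset $\{\Dcal\}_{0:H-1}$ succeeds simultaneously for every $R \in \Rcal$.

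First I would inspect the proof of \pref{lem:real_world_optimization} to isolate its single probabilistic step. The entire argument is deterministic — the telescoping decomposition of $v_R^* - v_R^{\hat\pi}$, the application of the distribution-shift bound \Cref{eq:dist_shift}, and the Jensen/Cauchy--Schwarz manipulations — except for Step $(*)$, where \pref{lem:fqi_arbitrary} is invoked to control the per-level squared regression error $\EE[\Lcal_{\Dcal_\tau,R_\tau}(\hat f_{\tau,R_\tau},\hat V_{\tau+1})] - \EE[\Lcal_{\Dcal_\tau,R_\tau}(\Tcal_\tau \hat V_{\tau+1},\hat V_{\tau+1})]$. Thus it suffices to make this concentration event hold uniformly over $\Rcal$.

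Second, I would obtain the uniform event in either of two equivalent ways. The cleanest is to apply \pref{lem:real_world_optimization} once per reward with failure probability $\delta' \defeq \delta/|\Rcal|$, and then union bound: the good events $E_R$ all live on the same probability space (the randomness of the shared dataset), so $\PP[\bigcap_{R\in\Rcal} E_R] \ge 1 - |\Rcal|\,\delta' = 1-\delta$. On the intersection $\bigcap_{R} E_R$, the deterministic part of the lemma's argument runs verbatim for each $R$, yielding a $\beta$-optimal policy. Equivalently, one can re-run Step $(*)$ directly with the enlarged function class indexed by pairs $(\phi,R) \in \Phi\times\Rcal$, so that the deviation bound of \pref{lem:fqi_fastrate_unclipped} picks up cardinality $|\Phi||\Rcal|$ in place of $|\Phi|$.

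Finally, I would verify the sample-size bookkeeping: substituting $\delta \mapsto \delta/|\Rcal|$ into the threshold of \pref{lem:real_world_optimization} turns $\log(4|\Phi|H^3 d/\delta)$ into $\log(4|\Phi||\Rcal|H^3 d/\delta)$, which is exactly the stated requirement, while the first (reward-independent) term is unchanged. I do not anticipate a genuine obstacle here; the only point requiring care is conceptual — because the value iterates $\hat V$ computed inside \fqi are reward-dependent, one must confirm that a single dataset controls the regression error for \emph{all} reward-induced targets at once, rather than only along the \fqi trajectory of one fixed reward. This is guaranteed precisely because the underlying estimate is a \emph{uniform} deviation bound over the relevant function class, so enlarging that class by the finite factor $|\Rcal|$ costs only the logarithmic term above.
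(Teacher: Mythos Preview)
Your proposal is correct and mirrors the paper's own proof: apply \pref{lem:real_world_optimization} with failure probability $\delta' = \delta/|\Rcal|$ for each fixed $R$, then union bound over $\Rcal$, which replaces $\log(4|\Phi|H^3 d/\delta)$ by $\log(4|\Phi||\Rcal|H^3 d/\delta)$ in the sample-size requirement. The paper's argument is exactly this two-line reduction, without the alternative route through enlarging the function class.
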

\begin{proof}
For any fixed reward $R\in\Rcal$, applying \pref{lem:real_world_optimization} yields that with probability $1-\delta'$,  
\begin{align*}
\EE_{\hat{\pi}}\sbr{\sum_{h=0}^{H-1} R_h(x_h,a_h)} \geq \max_{\pi} \EE_\pi\sbr{\sum_{h=0}^{H-1}R_h(x_h,a_h)}  - \beta,
\end{align*}
if we set $n$ according to the lemma statement.

Then union bounding over $R\in\Rcal$ and setting $\delta=\delta'/|\Rcal|$ gives us the desired result.
\end{proof}

\begin{lemma}[Deviation bound for \pref{lem:real_world_optimization}]
\label{lem:fqi_arbitrary} Given a deterministic reward function $R=R_{0:H-1}, R_h:\Xcal\times \Acal \rightarrow [0,1],\forall h\in[H]$ and a dataset $\{\Dcal\}_{0:H-1}$ collected from $\rho_{h-3}^{+3}$, where $\Dcal_{h}$ is  $\cbr{\rbr{x_h^{(i)},a_h^{(i)},x_{h+1}^{(i)}}}_{i=1}^n$. With probability at least $1-\delta$,  $\forall\,h\in[H],V_{h+1} \in \Vcal_{h+1}(R_{h+1})$, we have
\begin{align*}
\abr{\EE\sbr{\Lcal_{\Dcal_h,R_h}(\hat f_{h,R_h}, V_{h+1})} - \EE\sbr{\Lcal_{\Dcal_h,R_h}( \Tcal_h V_{h+1}, V_{h+1})}} \le \frac{16\cc H^2d^2 \log (4nH^3d|\Phi|/\delta)}{n},
\end{align*}
Here $\cc$ is the constant in \pref{lem:fqi_fastrate_unclipped}, $\Vcal_{h+1}(R_{h+1}):=\{\mathrm{clip}_{[0,H]}\rbr{\max_a f_{h+1,R_{h+1}}(x_{h+1},a)}:f_{h+1,R_{h+1}}\in \Fcal_{h+1}(R_{h+1})\}$ for $h\in[H-1]$ and $\Vcal_{H}=\{\mathbf{0}\}$ is the  state-value function class, and $\Fcal_h(R_h)\coloneqq \{R_h+\langle \phi_h,w_h\rangle: \|w_h\|_2 \le H\sqrt{d},\phi_h \in \Phi_h\}$ for $h\in[H]$ is the reward dependent Q-value function class.
\end{lemma}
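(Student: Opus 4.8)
The plan is to recognize the left-hand side as the \emph{population excess squared-loss risk} of the fitted-Q least-squares solution $\hat f_{h,R_h}$ relative to the Bayes-optimal predictor $\Tcal_h V_{h+1}$, and to control it with the uniform fast-rate deviation bound \pref{lem:fqi_fastrate_unclipped}. For a fixed target $V_{h+1}$, the regression target is $Y = R_h(x_h,a_h) + V_{h+1}(x_{h+1})$, whose conditional mean given $(x_h,a_h)$ is exactly $(\Tcal_h V_{h+1})(x_h,a_h) = R_h(x_h,a_h) + \EE[V_{h+1}(x_{h+1}) \mid x_h,a_h]$. Since the reward term cancels out of both the predictor $\hat f_{h,R_h}$ and the target, the difference $\EE[\Lcal_{\Dcal_h,R_h}(\hat f_{h,R_h},V_{h+1})] - \EE[\Lcal_{\Dcal_h,R_h}(\Tcal_h V_{h+1},V_{h+1})]$ equals the reward-free quantity $\EE_{\rhoxa}[(\langle\hat\phi_h,\hat w_h\rangle - \EE[V_{h+1}(x_{h+1})\mid x_h,a_h])^2]$, i.e. an instance of the excess risk $\be{\cdot}$ in \Cref{eq:be}.

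The first key step is realizability. Applying \pref{lem:linMDP_expn} to $V_{h+1}/H \in [0,1]$ gives $\EE[V_{h+1}(x_{h+1})\mid x_h,a_h] = \langle \phi_h^*(x_h,a_h), \theta^*\rangle$ with $\|\theta^*\|_2 \le H\sqrt d$; together with $\phi_h^* \in \Phi_h$ (\pref{assum:realizability}) and the fact that the norm budget in $\Fcal_h(R_h)$ is exactly $H\sqrt d$, this shows $\Tcal_h V_{h+1} \in \Fcal_h(R_h)$. Consequently the \emph{empirical} excess loss of the ERM $\hat f_{h,R_h}$ over $\Tcal_h V_{h+1}$ is nonpositive, which is what lets us convert a deviation bound into a one-sided $1/n$ risk bound.

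The second key step is to invoke \pref{lem:fqi_fastrate_unclipped} with range $L = H$ and norm bound $B = H\sqrt d$, yielding a localized two-sided estimate $|\,\text{pop.\ excess} - \text{emp.\ excess}\,| \le \tfrac12(\text{pop.\ excess}) + \tfrac{c_3 d B^2 \log(\cdots)}{n}$ holding simultaneously over all predictors $R_h + \langle\phi_h,w_h\rangle \in \Fcal_h(R_h)$ and all targets $V_{h+1} \in \Vcal_{h+1}(R_{h+1})$. Combining with the nonpositivity of the empirical excess from the previous step and solving the resulting inequality for the population excess gives a bound of order $c_3 d B^2 \log(\cdots)/n$; substituting $d B^2 = H^2 d^2$ and carrying the factors of $2$ from the localization and ERM-optimality manipulations through the bookkeeping produces the stated $16 c_3 H^2 d^2 \log(\cdots)/n$.

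The uniformity over the target class $\Vcal_{h+1}(R_{h+1})$ and the hypothesis class is where the $\log(4nH^3 d|\Phi|/\delta)$ factor is born and is the main technical obstacle. Both classes are indexed by a discrete feature choice ($\phi_h \in \Phi_h$ and $\phi_{h+1}\in\Phi_{h+1}$, contributing the $\log|\Phi|$ term through a union bound) together with continuous weight vectors $w_h,w_{h+1}$ of bounded norm. The plan is to discretize the weight balls on an $\epsilon$-net at scale $\epsilon \sim 1/(nH)$; since the map $w_{h+1}\mapsto \mathrm{clip}_{[0,H]}(\max_a(R_{h+1} + \langle\phi_{h+1},w_{h+1}\rangle))$ defining $V_{h+1}$ is $1$-Lipschitz (clip and max are nonexpansive) and the squared loss is Lipschitz over its bounded range, the net error transfers cleanly and contributes only a $d\log(nH^3 d)$-type term inside the logarithm. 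Finally I would union bound over the $H$ levels $h\in[H]$, rescaling $\delta$ by $H$, to obtain the stated simultaneous guarantee. The delicate part is choosing the net scale so that the discretization error is dominated by the $1/n$ statistical term while keeping the logarithmic factor exactly as written.
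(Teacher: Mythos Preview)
Your proposal is correct and follows essentially the same route as the paper: strip out the reward term so that the problem becomes a pure squared-loss excess risk for the reward-free target $V_{h+1}$, invoke \pref{lem:linMDP_expn} to certify that $\Tcal_h V_{h+1}-R_h=\langle\phi_h^*,\theta^*_{V_{h+1}}\rangle$ lies in the linear class with norm bound $H\sqrt d$, apply \pref{lem:fqi_fastrate_unclipped} with $L=H$, $B=H\sqrt d$, $\Rcal=\{R\}$ and $\pi'$ the greedy policy, use ERM optimality to kill the empirical excess, and union bound over $h\in[H]$.

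One clarification on your last paragraph: you do \emph{not} need to perform the $\epsilon$-net discretization yourself. The whole point of \pref{lem:fqi_fastrate_unclipped} is that it already delivers a deviation bound that is uniform over all $\phi_h\in\Phi_h$, all $\|w_h\|_2\le B$, and all $V\in\Vcal$ of the stated clipped-greedy form; the covering of the weight balls and the resulting $\log(n(B+L\sqrt d)|\Phi_h||\Phi_{h+1}||\Rcal|/\delta)$ factor are baked into that lemma's statement. So the paper's proof is a direct black-box invocation of \pref{lem:fqi_fastrate_unclipped} followed by the ERM step and the union bound over $h$, with no separate covering argument. Your description of the netting is an accurate summary of what happens \emph{inside} \pref{lem:fqi_fastrate_unclipped}, but it is redundant at the level of proving \pref{lem:fqi_arbitrary}.
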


\begin{proof}
Recall the definition, we have $\hat f_{h,R_h}=R_h+\hat f_h$, $\hat{f}_h = \argmin_{f_h\in \Fcal_h(\mathbf{0})} \Lcal_{\Dcal_h}(f_h, V_{h+1})$, $\Fcal_h(\mathbf{0})\coloneqq\{\langle \phi_h,w_h\rangle: \|w_h\|_2 \le H\sqrt{d},\phi_h \in \Phi_h\}$, and $\Lcal_{\Dcal_h}(f_h, V_{h+1}):=\sum_{i=1}^n\left( f_h\rbr{x^{(i)}_h,a^{(i)}_h}-\right.$ $\left. V_{h+1}\rbr{x_{h+1}^{(i)}}\right)^2$. Therefore we have $\Lcal_{\Dcal_h,R_h}(\hat f_{h,R_h}, V_{h+1}) =\Lcal_{\Dcal_h}(\hat f_h, V_{h+1})$ and  
$\Lcal_{\Dcal_h,R_h}( \Tcal_h V_{h+1},$ $ V_{h+1})=\Lcal_{\Dcal_h}( \Tcal_h V_{h+1}-R_h, V_{h+1})=\Lcal_{\Dcal_h}(\langle\phi^*_h,\theta_{V_{h+1}}^*\rangle, V_{h+1}).$
Then it suffices to show the bound between $\Lcal_{\Dcal_h}(\hat f_h, V_{h+1})$ and $\Lcal_{\Dcal_h}(\langle\phi^*_h,\theta_{V_{h+1}}^*\rangle, V_{h+1})$.

Firstly, we fix $h\in[H]$. Noticing the structure of $\Fcal_h(\mathbf{0})$, for any $f_h\in\Fcal_h(\mathbf{0})$, we can associate it with some $\phi_h\in\Phi_h$ and $w_h$ that satisfies $\|w_h\|_2 \le H\sqrt d$. Therefore, we can equivalently write $\Lcal_{D_h}(f_h,V_{h+1})$ as $\Lcal_{D_h}(\phi_h,w_h,V_{h+1})$. Also noticing the structure of $\Vcal_{h+1}(R_{h+1})$, we can directly apply \pref{lem:fqi_fastrate_unclipped} with $\rho_{h-3}^{+3}$, $\Phi_h$, $\Phi_{h+1}$, $B=H\sqrt d$, $L=H$, $\Rcal=\{R\}$, and $\pi'$ be the greedy policy. 

This implies that for all $\|w_h\|_2\le H\sqrt d$, $\phi_h \in \Phi_h$, and $V_{h+1} \in \Vcal_{h+1}(R_{h+1})$, with probability at least $1-\delta/H$, we have
\begin{align}
&\Lcal_{\rho_{h-3}^{+3}}( \phi_h,w_h,V_{h+1}) - \Lcal_{\rho_{h-3}^{+3}}(  \phi^*_h,\theta^*_{V_{h+1}},V_{h+1})\nonumber 
\\
\le{}&
2\rbr{  \Lcal_{\Dcal_h}( \phi_h,w_h,V_{h+1}) - \Lcal_{\Dcal_h}( \phi^*_h,\theta^*_{V_{h+1}},V_{h+1})}+\frac{16\cc H^2d^2 \log (4nH^2d|\Phi|H/\delta)}{n}.
\label{eq:used_in_fqe}
\end{align}
Notice that here we use the property that $\phi_h^*\in\Phi_h$ and $\|\theta^*_{V_{h+1}}\|_2\le H\sqrt d$ from  \pref{lem:linMDP_expn}.

From the definition, we have $\hat f_h=\argmin_{f_h\in\Fcal_h(\mathbf{0})} \Lcal_{\Dcal_h}(f_h,V_{h+1})$. Noticing the structure of $\Fcal_h(\mathbf{0})$, we can write $\hat f_h=\langle \hat \phi_h, \hat w_h\rangle$, where $\hat \phi_h,\hat w_h=\argmin_{\phi_h\in\Phi_h,\|w_h\|_2\le H\sqrt d} \Lcal_{\Dcal_h}(\phi_h,w_h,V_{h+1})$ (here we abuse the notation of $\hat \phi_h$, which is reserved for the learned feature).

This implies $\Lcal_{\Dcal_h}(\hat \phi_h,\hat w_h,V_{h+1}) - \Lcal_{\Dcal_h}( \phi^*_h,\theta^*_{V_{h+1}},V_{h+1})\le 0$. Therefore, with probability at least $1-\delta/H$
\begin{align*}
\abr{ \Lcal_{\rho_{h-3}^{+3}}( \hat \phi_h,\hat w_h,V_{h+1}) - \Lcal_{\rho_{h-3}^{+3}}(  \phi^*_h,\theta^*_{V_{h+1}},V_{h+1})}\le \frac{16\cc H^2d^2 \log (4nH^3d|\Phi|/\delta)}{n}. 
\end{align*}


Finally, by definition $\EE\sbr{\Lcal_{\Dcal_h}(\hat f_h, V_{h+1})}$ $=\Lcal_{\rho_{h-3}^{+3}}( \hat \phi_h,\hat w_h,V_{h+1})$ and $\EE[\Lcal_{\Dcal_h,R_h}(\Tcal_h V_{h+1},$ $ V_{h+1})]=\Lcal_{\rho_{h-3}^{+3}}(  \phi^*_h,\theta^*_{V_{h+1}},V_{h+1})$, union bounding over $h\in[H]$, we complete the proof.
\end{proof}

\subsection{Planning for a Reward Class with the Learned Representation Function}\label{app:fqi_repr}
In this part, we will show that the learned feature $\bar \phi$ enables the downstream policy optimization for a finite deterministic reward class $\Rcal$. The sample complexity is shown in \pref{lem:fqi_reward_class}. We will choose \textsc{fqi-representation} as the planner, where the Q-value function class only consists of linear function of learned feature $\bar \phi$ with reward appended. Specifically, we have $\Fcal_h(R_h)\coloneqq \{R_h+\mathrm{clip}_{[0,H]}\rbr{\langle \bar \phi_h,w_h\rangle}: \|w_h\|_2 \le B\}, h\in[H]$. In addition to constructing the function class with learned feature itself, we also perform clipping in $\Fcal_h(R_h)$. This clipping variant helps us avoid the poly($B$) dependence in the sample complexity. Notice that clipped Q-value function classes also work for \textsc{fqi-full-class} and \textsc{fqi-elliptical}, and would save $d$ factor. We only introduce this variant here because $B$ is much larger than $H\sqrt d$ or $d$. 

\begin{lemma}[Planning for a reward class with a learned representation] \label{lem:fqi_reward_class}
Assume that we have the exploratory dataset $\{\Dcal\}_{0:H-1}$ (collected from $\rho_{h-3}^{+3}$ and satisfies~\Cref{eq:dist_shift} for all $h \in [H]$), a learned feature $\bar \phi_h$ that satisfies the condition in \Cref{eq:fqi_rep_error}, and a finite deterministic reward class $\Rcal =\Rcal_0\times\ldots\times\Rcal_{H-1},\Rcal_h\subseteq(
\Xcal \times \Acal \to [0,1]),\forall h\in[H]$. For $\delta\in(0,1)$ and any reward function $R\in \Rcal$, if we set
\begin{align*}
n\geq \frac{2\cd H^6 d\kappa K} {\beta^2}\log\rbr{\frac{\cd H^6 d\kappa K} {\beta^2}}+\frac{2\cd H^6 d\kappa K} {\beta^2}\log \rbr{\tfrac{|\Rcal|BH}{\delta}},
\end{align*}
where $\cd$ is the constant in \pref{lem:dev_fqi_reward_class}, then with probability at least $1-\delta$, the policy $\hat{\pi}$ returned
by \textsc{FQI-Representation} satisfies
\begin{align*}
\EE_{\hat{\pi}}\sbr{\sum_{h=0}^{H-1} R_h(x_h,a_h)} \geq \max_{\pi} \EE_\pi\sbr{\sum_{h=0}^{H-1}R_h(x_h,a_h)}  - \beta - H^2\sqrt{\kappa K\veps_{\mathrm{apx}}}.
\end{align*}
\end{lemma}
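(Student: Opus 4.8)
The plan is to follow the performance-difference argument used for \pref{lem:real_world_optimization}, and to track the single extra ingredient: the learned feature $\bar\phi$ only satisfies \emph{approximate} Bellman completeness, so the per-level squared error carries an irreducible bias of order $\veps_{\mathrm{apx}}$ in addition to the statistical estimation error. Fix a reward $R\in\Rcal$ and let $\hat\pi$ be the greedy policy produced by \textsc{FQI-Representation}. First I would telescope $v^*_R - v^{\hat\pi}_R$ exactly as in \pref{lem:real_world_optimization}: greediness of each $\hat\pi_h$ w.r.t.\ $\hat f_{h,R_h}$ lets me write the value gap as a sum over $h\in[H]$ of two roll-in expectations of $Q^*_h - \hat f_{h,R_h}$, and then unroll each such term through the Bellman backup $\Tcal_h$, using Jensen's inequality and the ``greedy-between-two-actions'' trick, to reach
\[
v^*_R - v^{\hat\pi}_R \le 2\sum_{h=0}^{H-1}\sum_{\tau=h}^{H-1} \max_{\pi_\tau} \EE_{\pi_\tau}\sbr{\abr{(\Tcal_\tau \hat V_{\tau+1})(x_\tau,a_\tau) - \hat f_{\tau,R_\tau}(x_\tau,a_\tau)}}.
\]

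Next I would convert each inner term to a root-mean-square via Cauchy--Schwarz and apply the change-of-measure guarantee \Cref{eq:dist_shift} to transfer the expectation from an arbitrary $\pi_\tau$ to the exploratory policy $\rhoxa$ at a cost of $\sqrt{\kappa K}$, obtaining a bound in terms of $\sqrt{\kappa K\,\EE_{\rhoxa}[(\Tcal_\tau \hat V_{\tau+1} - \hat f_{\tau,R_\tau})^2]}$. The crux is to decompose this per-level squared Bellman residual into two pieces. The statistical piece compares the empirical least-squares solution $\hat f_{\tau,R_\tau}$ against the best clipped-linear-in-$\bar\phi$ predictor, and is controlled by the deviation bound \pref{lem:dev_fqi_reward_class} at rate $\tilde O(H^2 d\log(|\Rcal|BH/\delta)/n)$ --- here the clipping of $\Fcal_\tau(R_\tau)$ bounds the range of the regression targets to $[0,H]$, which turns the naive $\poly(B)$ dependence into a $\log B$ dependence and saves a factor of $d$ relative to the unclipped bound of \pref{lem:real_world_optimization}. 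The approximation piece is exactly the left-hand side of \Cref{eq:fqi_rep_error}: since $\hat V_{\tau+1}$ is (a clip of the action-max of) a reward-appended linear function of $\bar\phi_{\tau+1}$, it belongs to the discriminator class $\Gcal_{\tau+1}$, so $\be{\rhoxa,\bar\phi_\tau,\hat V_{\tau+1};B}\le\veps_{\mathrm{apx}}$, certifying that the backup $\Tcal_\tau\hat V_{\tau+1}$ is approximable to within $\veps_{\mathrm{apx}}$ in the learned features.

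Combining the two pieces with $\sqrt{a+b}\le\sqrt a + \sqrt b$ and summing the $O(H^2)$ telescoped terms, the approximation contribution aggregates to $H^2\sqrt{\kappa K\veps_{\mathrm{apx}}}$ --- exactly the misspecification term in the statement --- while the statistical contribution aggregates to $H^2\sqrt{\kappa K\cdot\tilde O(H^2 d/n)}$; setting the latter $\le\beta$ yields $n\gtrsim H^6 d\kappa K\log(|\Rcal|BH/\delta)/\beta^2$, which is the stated sample size after making the $\log n$ term self-bounding (giving the two-term form). A union bound over the finite reward class $\Rcal$ --- rather than over $\Phi$, since the single learned feature $\bar\phi$ is fixed --- supplies the $\log|\Rcal|$ dependence and the overall $1-\delta$ success probability.

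I expect the main obstacle to be the clean separation of statistical and approximation error inside the squared Bellman residual, together with the verification that every FQI iterate $\hat V_{\tau+1}$ genuinely lies in the discriminator class $\Gcal_{\tau+1}$ for which \Cref{eq:fqi_rep_error} was established (modulo the inner-versus-outer clipping in the definition of $\Gcal_{\tau+1}$). Once realizability of $Q^*_R$ and this completeness-under-$\Gcal$ property are in hand, the remaining factor bookkeeping in $H$, $d$, $\kappa$, and $K$ is routine and parallels \pref{lem:real_world_optimization}.
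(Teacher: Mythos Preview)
Your proposal is correct and follows essentially the same approach as the paper: telescope the value gap via the performance-difference argument of \pref{lem:real_world_optimization}, transfer to the exploratory distribution via \Cref{eq:dist_shift}, and then invoke \pref{lem:dev_fqi_reward_class} --- which packages exactly your statistical-plus-approximation decomposition --- to bound the per-level squared Bellman residual by $\tilde O(H^2d/n)+\veps_{\mathrm{apx}}$, after which $\sqrt{a+b}\le\sqrt a+\sqrt b$ and the self-bounding on $\log n$ yield the stated sample size. Your anticipated obstacle about the inner-versus-outer clipping is precisely what the paper addresses inside the proof of \pref{lem:dev_fqi_reward_class}, where it verifies that the two clipping orders coincide so that $\hat V_{\tau+1}$ indeed lies in $\Gcal_{\tau+1}$.
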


\begin{proof}
Following similar steps in the proof of \pref{lem:real_world_optimization} and replacing \pref{lem:fqi_arbitrary} with \pref{lem:dev_fqi_reward_class} in Step(*), with probability at least $1-\delta$, we have
\begin{align*}
v^*_R - v^{\hat{\pi}}_R \leq & H^2\sqrt{\kappa K}\sqrt{\frac{\cd dH^2 \log (nB|\Rcal|H/\delta)}{n}  +\veps_{\mathrm{apx}}}\\
\le& H^2\sqrt{\kappa K}\sqrt{\frac{\cd dH^2 \log (nB|\Rcal|H/\delta)}{n}  }+H^2\sqrt{\kappa K\veps_{\mathrm{apx}}}.
\end{align*}

Setting RHS to be less than $\beta+H^2\sqrt{\kappa K\veps_{\mathrm{apx}}}$ and reorganize, we get the condition $$n\geq \frac{\cd H^6 d\kappa K \log(n|\Rcal|BH/\delta)}{\beta^2}.$$

A sufficient condition for the inequality above is 
$$n\geq \frac{2\cd H^6 d\kappa K} {\beta^2}\log\rbr{\frac{\cd H^6 d\kappa K} {\beta^2}}+\frac{2\cd H^6 d\kappa K} {\beta^2}\log \rbr{\tfrac{|\Rcal|BH}{\delta}},$$
which completes the proof.
\end{proof}

\begin{lemma}[Deviation bound for \pref{lem:fqi_reward_class}]\label{lem:dev_fqi_reward_class}
Assume that we have an exploratory dataset  $\Dcal_h\coloneqq \cbr{\rbr{x_h^{(i)},a_h^{(i)},x_{h+1}^{(i)}}}_{i=1}^n$ collected from $\rho_{h-3}^{+3}$, $h\in[H]$, a learned feature $\bar \phi_h$ that satisfies the condition in \Cref{eq:fqi_rep_error}, and a finite deterministic reward class $\Rcal =\Rcal_0\times\ldots\times\Rcal_{H-1},\Rcal_h\subseteq(
\Xcal \times \Acal \to [0,1]),\forall h\in[H]$. Then, with probability at least $1-\delta$, $\forall R\in \Rcal,h\in[H],V_{h+1} \in\Vcal_{h+1}(R_{h+1})$, we have
\begin{align*}
\abr{\EE\sbr{\Lcal_{\Dcal_h,R_h}(\hat f_{h,R_h}, V_{h+1})} - \EE\sbr{\Lcal_{\Dcal_h,R_h}( \Tcal_h V_{h+1}, V_{h+1})}}\le \frac{\cd dH^2 \log (nB|\Rcal|H/\delta)}{n} + \veps_{\mathrm{apx}},
\end{align*}
Here $\cd$ is some universal constant, $\Vcal_{h+1}(R_{h+1}):=\{\mathrm{clip}_{[0,H]}\rbr{\max_a f_{h+1,R_{h+1}}(x_{h+1},a)}:f_{h+1,R_{h+1}}\in \Fcal_{h+1}(R_{h+1})\}$ for $h\in[H-1]$ and $\Vcal_{H}=\{\mathbf{0}\}$ are the state-value function class, $\Fcal_h(R_h)\coloneqq\{R_h+\mathrm{clip}_{[0,H]}\rbr{\langle \bar \phi_h,w_h\rangle}: \|w_h\|_2 \le B\}$  for $h\in[H]$ is the reward dependent Q-value function class.
\end{lemma}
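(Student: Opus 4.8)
The plan is to follow the template of \pref{lem:fqi_arbitrary}, modifying it in exactly two places: the regressor class is now built from the single learned feature $\bar\phi_h$ rather than the entire class $\Phi$, and realizability holds only approximately. First I would rewrite the left-hand side as a population excess risk. Writing the regression label as $y = R_h(x_h,a_h) + V_{h+1}(x_{h+1})$, whose conditional mean given $(x_h,a_h)$ is exactly $(\Tcal_h V_{h+1})(x_h,a_h)$, the standard bias--variance identity gives $\EE[\Lcal_{\Dcal_h,R_h}(\hat f_{h,R_h},V_{h+1})] - \EE[\Lcal_{\Dcal_h,R_h}(\Tcal_h V_{h+1},V_{h+1})] = \EE_{\rhoxa}[(\hat f_{h,R_h}-\Tcal_h V_{h+1})^2]\ge 0$, so the absolute value is just this excess risk and it suffices to bound it.

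Second I would establish approximate realizability from \Cref{eq:fqi_rep_error}. Every $V_{h+1}\in\Vcal_{h+1}(R_{h+1})$ is, by the construction of $\Gcal_{h+1}$ in \Cref{eq:def_Gcal_main} (and because clipping is monotone, so that $\max_a$ and $\mathrm{clip}_{[0,H]}$ commute, together with $\bar\phi_{h+1}\in\Phi_{h+1}$), a member of the discriminator class $\Gcal_{h+1}$ --- this is precisely the completeness property for which $\Gcal$ was designed. Hence \Cref{eq:fqi_rep_error} applies with $g=V_{h+1}$ and yields a weight $w^\star$ with $\|w^\star\|_2\le B$ such that $\EE_{\rhoxa}[(\langle\bar\phi_h,w^\star\rangle - \EE[V_{h+1}(x_{h+1})\mid x_h,a_h])^2]\le\veps_{\mathrm{apx}}$. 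Setting $f^\star = R_h + \mathrm{clip}_{[0,H]}(\langle\bar\phi_h,w^\star\rangle)\in\Fcal_h(R_h)$, and noting that $\EE[V_{h+1}\mid x_h,a_h]\in[0,H]$ so that clipping toward this target only decreases the squared error, I obtain a genuine member of the regressor class with $\EE_{\rhoxa}[(f^\star-\Tcal_h V_{h+1})^2]\le\veps_{\mathrm{apx}}$; that is, $\Fcal_h(R_h)$ is $\veps_{\mathrm{apx}}$-approximately realizable for the target $\Tcal_h V_{h+1}$.

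Third, I would invoke a Bernstein-type fast-rate deviation bound for squared-loss regression (the clipped counterpart of \pref{lem:fqi_fastrate_unclipped}). The crucial role of the inner clipping in $\Fcal_h(R_h)$ is that it confines every candidate to $[0,H]$, so the relevant complexity is the $\ell_2$-covering number of the radius-$B$ weight ball for the single fixed feature $\bar\phi_h$, which contributes $d\log(nB)$ rather than $\mathrm{poly}(B)$; a union bound over the $|\Rcal|$ rewards and the $H$ levels then produces the factor $\log(nB|\Rcal|H/\delta)$ and the prefactor $dH^2$ (the latter from the squared-loss range $H^2$). Combining the two-sided uniform convergence with the optimality of the empirical minimizer $\hat f_{h,R_h}$ and the approximate realizer $f^\star$ --- the empirical excess risk of $\hat f_{h,R_h}$ is at most that of $f^\star$, which transfers to a population bound of order $\veps_{\mathrm{apx}} + \cd dH^2\log(nB|\Rcal|H/\delta)/n$ --- yields the claim after absorbing constants into $\cd$. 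The main obstacle is the approximate-realizability bookkeeping: I must thread the misspecification $\veps_{\mathrm{apx}}$ through the fast-rate (local-Bernstein) argument so that it enters additively and at unit scale, rather than degrading to a slow $\sqrt{\veps_{\mathrm{apx}}/n}$ rate or incurring a multiplicative blow-up, while simultaneously relying on the clipping to keep the estimation term logarithmic in the large radius $B$.
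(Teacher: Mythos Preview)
Your proposal is correct and follows essentially the same route as the paper's proof: reduce to a population excess risk, use the commutativity of $\max_a$ with $\mathrm{clip}_{[0,H]}$ to place every $V_{h+1}$ in $\Gcal_{h+1}$ so that \Cref{eq:fqi_rep_error} yields an approximate realizer, invoke a clipped variant of \pref{lem:fqi_fastrate_unclipped} with the singleton feature class $\{\bar\phi_h\}$ (which is exactly what makes the estimation term $O(dH^2\log(nB)/n)$ rather than $\mathrm{poly}(B)/n$), and combine with ERM optimality. The paper carries out the same decomposition, centering the Bernstein bound at the true Bayes point $(\phi_h^*,\theta^*_{V_{h+1}})$ and splitting off the $\veps_{\mathrm{apx}}$ term via the population minimizer $\bar w_{V_{h+1}}$ in the $\bar\phi_h$ class, which is just a slightly different bookkeeping of the same additive-misspecification argument you outline.
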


\begin{proof}
We start with any fixed $h\in[H]$. Recall the definition $\hat f_{h,R_h}=R_h+\hat f_h$, where 
$\hat{f}_h = \argmin_{f_h\in \Fcal_h(\mathbf{0})}\Lcal_{\Dcal_h}(f_h, V_{h+1})$, $\Lcal_{\Dcal_h}(f_h, V_{h+1}):=\sum_{i=1}^n\left( f_h\rbr{x^{(i)}_h,a^{(i)}_h} - V_{h+1}\rbr{x_{h+1}^{(i)}}\right)^2$, and $\Fcal_h(\mathbf{0})\coloneqq\{\mathrm{clip}_{[0,H]}\rbr{\langle \bar \phi_h,w_h\rangle}: \|w_h\|_2 \le B\}$. Therefore we get $\Lcal_{\Dcal_h,R_h}(\hat f_{h,R_h}, V_{h+1}) =\Lcal_{\Dcal_h}(\hat f_h, V_{h+1})$ and
$\Lcal_{\Dcal_h,R_h}( \Tcal_h V_{h+1},$ $ V_{h+1})=\Lcal_{\Dcal_h}( \Tcal_h V_{h+1}-R_h, V_{h+1})=\Lcal_{\Dcal_h}(\langle\phi^*_h,\theta_{V_{h+1}}^*\rangle, V_{h+1}).$
Then it suffices to show the bound between $\Lcal_{\Dcal_h}(\hat f_h, V_{h+1})$ and $\Lcal_{\Dcal_h}(\langle\phi^*_h,\theta_{V_{h+1}}^*\rangle, V_{h+1})$.

Firstly, we show that condition \Cref{eq:fqi_rep_error} implies the following approximation guarantee for $\bar\phi_h$. For all $h ,V_{h+1}\in\{ \mathrm{clip}_{[0,H]}$ $(\max_{a}(R_{h+1}(x_{h+1},a) +\mathrm{clip}_{[0,H]}\rbr{\inner{\phi_{h+1}(x_{h+1},a)}{\theta_{h+1}}})):\phi_{h+1} \in \Phi_{h+1}, \|\theta_{h+1}\|_2 \le B,R\in\Rcal\}$, let $\bar w_{V_{h+1}} = \argmin_{\|w_h\|_2 \le B} \Lcal_{\Dcal_h}(\bar \phi_h, $ $w_h, V_{h+1})$ with $B\geq H\sqrt d$, then we have 
\begin{align*}
     \EE_{\rho_{h-3}^{+3}}\sbr{ \rbr{\inner{\bar \phi_h(x_h,a_h)}{\bar w_{V_{h+1}}} - \EE\sbr{V_{h+1}(x_{h+1}) | x_h,a_h} }^2 } \le \veps_{\mathrm{apx}}.
\end{align*}

This is because the order of taking max and clipping doesn't matter:
\begin{align*}
&\mathrm{clip}_{[0,H]}\rbr{\max_{a}\rbr{R_{h+1}(x_{h+1},a)+\mathrm{clip}_{[0,H]}\rbr{\inner{ \phi_{h+1}(x_{h+1},a)}{\theta_{h+1}}}}}\\
={}&\mathrm{clip}_{[0,H]}\rbr{\max_{a}\rbr{R_{h+1}(x_{h+1},a)+\inner{ \phi_{h+1}(x_{h+1},a)}{\theta_{h+1}}}}.
\end{align*} 

Since we have clipping, we now define $\Lcal_{\rho_{h-3}^{+3}}^{\mathrm{c}}(\phi_h,w_h,V_{h+1})\coloneqq \EE_{\rho_{h-3}^{+3}}[(\mathrm{clip}_{[0,H]}\rbr{\inner{\phi_h}{w_h}}-V_{h+1})^2]$ and $\Lcal_{\Dcal_h}^{\mathrm{c}}(\cdot)$ as its empirical version. Then we can follow the similar steps in \pref{lem:fqi_fastrate_unclipped} and get the concentration result. For any $R\in\Rcal,V_{h+1}\in\Vcal_{h+1}(R)$ and $\|w_h\|_2\le B$, we have that with probability at least $1-\delta'$
\begin{align*}
    &     \Big| \Lcal_{\rho_{h-3}^{+3}}^{\mathrm{c}}(\bar \phi_h,w_h,V_{h+1}) - \Lcal_{\rho_{h-3}^{+3}}^{\mathrm{c}}(  \phi_h^*,\theta^*_{V_{h+1}},V_{h+1})
    \\
    &~~~~~~~~~~~~- \rbr{ \Lcal_{\Dcal_h}^{\mathrm{c}}(\bar \phi_h,w_h,V_{h+1}) - \Lcal_{\Dcal_h}^{\mathrm{c}}( \phi^*_h,\theta^*_{V_{h+1}},V_{h+1})}\Big| \\  
    \le &~\frac{1}{2} \rbr{ \Lcal_{\rho_{h-3}^{+3}}^{\mathrm{c}}(\bar \phi_h,w_h,V_{h+1}) - \Lcal_{\rho_{h-3}^{+3}}^{\mathrm{c}}(  \phi_h^*,\theta^*_{V_{h+1}},V_{h+1})} + \frac{\cd' dH^2 \log (nB|\Rcal|/\delta')}{n},
\end{align*}
where $\cd'$ is some universal constant. Note that the slight difference is that here we will set the feature class to be $\{\bar\phi_h\}$ and $\{\bar\phi_{h+1}\}$ when calling \pref{lem:fqi_fastrate_unclipped}, change the norm constraints of the value function classes, and add add the clipping on the corresponding function classes there. This changes the range of the hypothesis functions to $16H^2$ and gets rid of the union bound over the feature classes. Then we have
\begin{align*}
    &~\Lcal_{\rho_{h-3}^{+3}}^{\mathrm{c}}(\bar \phi_h,w_h,V_{h+1}) - \Lcal_{\rho_{h-3}^{+3}}^{\mathrm{c}}(  \phi_h^*,\theta^*_{V_{h+1}},V_{h+1})
    \\
    \le&~    \rbr{\Lcal_{\rho_{h-3}^{+3}}^{\mathrm{c}}(\bar \phi_h,w_h,V_{h+1}) - \Lcal_{\rho_{h-3}^{+3}}^{\mathrm{c}}(\bar \phi_h,\bar w_{V_{h+1}},V_{h+1})}
    \\
    &\quad\quad +    \abr{\Lcal_{\rho_{h-3}^{+3}}^{\mathrm{c}}(\bar \phi_h,\bar w_{V_{h+1}},V_{h+1}) - \Lcal_{\rho_{h-3}^{+3}}^{\mathrm{c}}(  \phi_h^*,\theta^*_{V_{h+1}},V_{h+1})}
    \\
    \le&~ \rbr{\Lcal_{\rho_{h-3}^{+3}}^{\mathrm{c}}(\bar \phi_h,w_h,V_{h+1}) - \Lcal_{\rho_{h-3}^{+3}}^{\mathrm{c}}(\bar \phi_h,\bar w_{V_{h+1}},V_{h+1})} + \veps_{\mathrm{apx}}
    \\
    \le&~2\rbr{\Lcal_{\Dcal_h}^{\mathrm{c}}(\bar \phi_h,w_h,V_{h+1}) - \Lcal_{\Dcal_h}^{\mathrm{c}}(\bar \phi_h,\bar w_{V_{h+1}},V_{h+1})} + \frac{2\cd' dH^2 \log (nB|\Rcal|/\delta')}{n} + \veps_{\mathrm{apx}}
\end{align*}
The second inequality above is due to
\begin{align*}
&~\abr{\Lcal_{\rho_{h-3}^{+3}}^{\mathrm{c}}(\bar \phi_h,\bar w_{V_{h+1}},V_{h+1}) - \Lcal_{\rho_{h-3}^{+3}}^{\mathrm{c}}(  \phi_h^*,\theta^*_{V_{h+1}},V_{h+1})}
\\
=&~ \EE_{\rho_{h-3}^{+3}}\sbr{\rbr{\mathrm{clip}_{[0,H]}\rbr{\inner{\bar \phi_h(x_h,a_h)}{\bar w_{V_{h+1}}}} - \EE\sbr{V_{h+1}(x_{h+1}) | x_h,a_h} }^2 } 
\\
\le&~ \EE_{\rho_{h-3}^{+3}}\sbr{ \rbr{\inner{\bar \phi_h(x_h,a_h)}{\bar w_{V_{h+1}}} - \EE\sbr{V_{h+1}(x_{h+1}) | x_h,a_h} }^2 } \le \veps_{\mathrm{apx}}.
\end{align*}

From the definition, we have $\hat f_h=\argmin_{f_h\in\Fcal_h(\mathbf{0})} \Lcal_{\Dcal_h}^{\mathrm{c}}(f_h,V_{h+1})$. Noticing the structure of $\Fcal_h(\mathbf{0})$, we can write $\hat f_h=\mathrm{clip}_{[0,H]}\rbr{\langle \bar \phi_h, \hat w_h\rangle}$, where $\hat w_h=\argmin_{\|w_h\|_2\le B} \Lcal_{\Dcal_h}^{\mathrm{c}}(\bar \phi_h,w_h,V_{h+1})$. This implies $\Lcal_{\Dcal_h}^{\mathrm{c}}( \bar \phi_h,\hat w_h,V_{h+1}) - \Lcal_{\Dcal_h}^{\mathrm{c}}(  \bar \phi_h,\tilde w_{V_{h+1}},V_{h+1})\le 0.$

Union bounding over $h\in[H]$, and setting $\delta = \delta'/H$, we have that with probability at least $1-\delta$,
\begin{align*}
\abr{ \Lcal_{\rho_{h-3}^{+3}}^{\mathrm{c}}( \bar \phi_h,\hat w_h,V_{h+1}) - \Lcal_{\rho_{h-3}^{+3}}^{\mathrm{c}}(  \phi^*_h,\theta^*_{V_{h+1}},V_{h+1})}\le\frac{2\cd' dH^2 \log (nB|\Rcal|H/\delta)}{n} + \veps_{\mathrm{apx}}.
\end{align*}
Finally, noticing the property that $\EE\sbr{\Lcal_{\Dcal_h}(\hat f_h, V_{h+1})}=\Lcal_{\rho_{h-3}^{+3}}^{\mathrm{c}}(\bar\phi_h,\hat w_h,V_{h+1})$ and $\EE[\Lcal_{\Dcal_h,R_h}($ $\Tcal_h V_{h+1}, V_{h+1})]=\Lcal_{\rho_{h-3}^{+3}}(  \phi^*_h,\theta^*_{V_{h+1}},V_{h+1})=\Lcal_{\rho_{h-3}^{+3}}^{\mathrm{c}}(  \phi^*_h,\theta^*_{V_{h+1}},V_{h+1})$, we complete the proof.
\end{proof}

\subsection{Planning for Elliptical Reward Functions}\label{app:fqi_ellip}
In this part, we show the sample complexity of \textsc{fqi-elliptical}, which is specialized in planning for the elliptical reward class defined in \pref{lem:planning_sparse_reward}. The Q-value function class consists of linear function of all features in the feature class with reward appended. Specifically, we have $\Fcal_h(R_h)\coloneqq \{R_h+\langle \phi_h,w_h\rangle: \|w_h\|_2 \le \sqrt{d},\phi_h \in \Phi_h\}, h\in[H]$. We still use the full representation class, but compared with \textsc{fqi-full-class}, we use a different bound on the norm of the parameters. 

\begin{lemma}[Planning for elliptical reward functions]
\label{lem:planning_sparse_reward}
Assume that we have the exploratory dataset $\{\Dcal\}_{0:H-1}$ (collected from $\rho_{h-3}^{+3}$ and satisfies~\Cref{eq:dist_shift} for all $h \in [H]$). For $\delta \in (0,1)$ and any deterministic elliptical reward function $R\in\Rcal$, where $\Rcal\coloneqq \{R_{0:H-1} :R_{0:H-2}=\mathbf{0},R_{H-1} \in  \{\phi_{H-1}^\top \Gamma^{-1} \phi_{H-1}:\phi_{H-1}\in\Phi_{H-1}, \Gamma\in\RR^{d\times d}, \lambda_{\min} (\Gamma)\ge 1\}\}$, if we set
\begin{align*}
n\geq \frac{136\cc H^4d^3 \kappa K}{\beta^2}\log\rbr{\frac{68\cc  H^4d^3 \kappa K}{\beta^2}}+\frac{136\cc  H^4d^3 \kappa K}{\beta^2}\log\rbr{ \tfrac{2|\Phi|H}{\delta}},
\end{align*}
where $\cc$ is the constant in \pref{lem:fqi_fastrate_unclipped}, 
then with probability at least $1-\delta$, the policy $\hat{\pi}$ returned
by \textsc{FQI-elliptical} satisfies
\begin{align*}
\EE_{\hat{\pi}}\sbr{\sum_{h=0}^{H-1} R_h(x_h,a_h)} \geq \max_{\pi} \EE_\pi\sbr{\sum_{h=0}^{H-1} R_h(x_h,a_h)} - \beta.
\end{align*}
\end{lemma}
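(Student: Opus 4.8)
The plan is to follow the template of \pref{lem:real_world_optimization}, exploiting the sparse structure of the elliptical reward to obtain a sharper dependence on $H$ and a tighter norm bound. The first observation is that the reward is uniformly bounded: since $\|\phi_{H-1}(x,a)\|_2\le 1$ and $\lambda_{\min}(\Gamma)\ge 1$ imply $\Gamma^{-1}\preceq I$, we have $R_{H-1}(x,a)=\phi_{H-1}(x,a)^\top\Gamma^{-1}\phi_{H-1}(x,a)\in[0,1]$, and since $R_h=\mathbf{0}$ for $h<H-1$, every value function $V^*_h$ (and every iterate $\hat V_h$) lies in $[0,1]$ rather than $[0,H]$. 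This is precisely why \textsc{FQI-elliptical} clips to $[0,1]$ and uses the smaller parameter ball $\|w_h\|_2\le\sqrt d$, and it is the structural feature responsible for the improved $H^4$ (rather than $H^6$) scaling.

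Next I would verify realizability and completeness for the class $\Fcal_h(R_h)=\{R_h+\langle\phi_h,w_h\rangle:\|w_h\|_2\le\sqrt d,\ \phi_h\in\Phi_h\}$. By \pref{lem:linMDP_expn}, for any $V_{h+1}:\Xcal\to[0,1]$ the backup satisfies $\EE[V_{h+1}(x_{h+1})\mid x_h,a_h]=\langle\phi^*_h(x_h,a_h),\theta^*\rangle$ with $\|\theta^*\|_2\le\sqrt d$; combined with $\phi^*_h\in\Phi_h$ (\pref{assum:realizability}) this shows $\Tcal_h V_{h+1}=R_h+\langle\phi^*_h,\theta^*\rangle\in\Fcal_h(R_h)$, giving completeness, and in particular $Q^*_h\in\Fcal_h(R_h)$ for every $h$ (at $h=H-1$ trivially with $w_{H-1}=0$). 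The $[0,1]$ range is exactly what makes $\|\theta^*\|_2\le\sqrt d$, justifying the choice $B=\sqrt d$.

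With realizability and completeness in hand, the proof proceeds by the standard FQI performance-difference telescoping as in \pref{lem:real_world_optimization}: unroll $v^*_R-v^{\hat\pi}_R$ into a double sum of per-step terms $\EE_\pi[Q^*_h-\hat f_{h,R_h}]$, and bound each by $\sum_{\tau\ge h}\sqrt{\kappa K\,\be{\cdots}}$ using Jensen's inequality, the greedy-policy argument, and the change-of-measure guarantee \Cref{eq:dist_shift}, yielding $v^*_R-v^{\hat\pi}_R\le O(H^2)\sqrt{\kappa K\,\max_\tau\be{\cdots}}$. The per-step squared Bellman error $\be{\cdots}$ is then controlled by a deviation bound analogous to \pref{lem:fqi_arbitrary}, invoking the uniform Bernstein concentration \pref{lem:fqi_fastrate_unclipped} (which rests on \pref{corr:uni_bern_conf}) but with value range $[0,1]$ and norm bound $B=\sqrt d$ in place of $[0,H]$ and $H\sqrt d$. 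This gives a per-step error of order $\tilde O(\poly(d)\log(|\Phi|/\delta)/n)$; setting $O(H^2)\sqrt{\kappa K\,\be{\cdots}}\le\beta$, squaring, and solving for $n$ (absorbing the resulting $\log n$ into the stated $\log(\cdot/\beta^2)$ form) produces the claimed sample complexity.

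The main obstacle is the deviation step: the hypotheses $\langle\phi_h,w_h\rangle$ are \emph{unclipped} and range over $[-\sqrt d,\sqrt d]$, while the regression targets $\hat V_{h+1}$ are data-dependent maxima over the infinite feature class $\Phi_{h+1}$, so controlling the excess squared loss uniformly requires the Bernstein-type uniform concentration together with careful tracking of the range/variance trade-off (this is where the polynomial-in-$d$ factor is determined). By contrast, the sparse-reward structure is a simplification rather than a difficulty—it collapses the value range to $[0,1]$ and trivializes completeness at the terminal step—so the only genuine work beyond re-instantiating \pref{lem:real_world_optimization} is re-deriving the deviation bound with the tighter range and norm and then reading off the resulting constant.
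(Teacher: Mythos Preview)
Your approach is essentially the same as the paper's: follow the telescoping argument of \pref{lem:real_world_optimization}, exploit the $[0,1]$ value range and the $\|w\|_2\le\sqrt d$ norm bound, and swap in a tighter deviation lemma at Step~(*). The paper states exactly this, deferring the work to \pref{lem:fqi_sparse}.

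There is, however, one genuine ingredient you miss in the deviation step. The lemma demands the guarantee hold \emph{for all} $R$ in the elliptical class, and that class is infinite: it is parameterized by $\Gamma\in\RR^{d\times d}$ with $\lambda_{\min}(\Gamma)\ge 1$. The value function $\hat V_{H-1}(x)=\mathrm{clip}_{[0,1]}(\max_a R_{H-1}(x,a))$ therefore depends on $\Gamma$, and uniformity over $\Gamma$ is not covered by the feature-class argument you describe (the class $\Phi$ is finite in this paper, so union-bounding over it is routine). The paper handles this via \pref{lem:covering_reward_cls}: since $\|\Gamma^{-1}\|_{\mathrm F}\le\sqrt d$, one builds a $\gamma$-cover of the reward class of size $|\Phi_{H-1}|(2\sqrt d/\gamma)^{d^2}$, applies the single-reward deviation bound to each cover element, union-bounds, and then pays an $\ell_\infty$ discretization error. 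Choosing $\gamma=\sqrt d/n$ contributes a $d^2\log n$ term to the log, which together with the existing $d$ from \pref{lem:fqi_fastrate_unclipped} yields the $d^3$ in the stated sample complexity. Your explanation attributes the polynomial-$d$ factor to the range/variance trade-off in Bernstein, but that only accounts for one power of $d$; without the $\Gamma$-covering you would arrive at $d$ rather than $d^3$ and the argument would not yield uniformity over $\Rcal$.
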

\begin{remark}
Notice that the elliptical reward function only has a non-zero value at timestep $H-1$.
\end{remark}

\begin{proof}
The proof mostly follows the steps in \pref{lem:real_world_optimization}. Since we consider the deterministic  elliptical reward function class, we apply \pref{lem:fqi_sparse} instead of \pref{lem:fqi_arbitrary} in Step (*). Then following a similar calculation gives us the result immediately.
\end{proof}

\begin{lemma}[Deviation bound for \pref{lem:planning_sparse_reward}]
\label{lem:fqi_sparse}
Consider the deterministic elliptical reward function classes $\Rcal\coloneqq \{R_{0:H-1} :R_{0:H-2}=\mathbf{0},R_{H-1} \in \Rcal_{H-1} \coloneqq \{\phi_{H-1}^\top \Gamma^{-1} \phi_{H-1}:\phi_{H-1}\in\Phi_{H-1}, \Gamma\in\RR^{d\times d}, \lambda_{\min} (\Gamma)\ge 1\}\}$, an exploratory dataset  $\Dcal_h\coloneqq \cbr{\rbr{x_h^{(i)},a_h^{(i)},x_{h+1}^{(i)}}}_{i=1}^n$ collected from $\rho_{h-3}^{+3}$, $h\in[H]$. With probability at least $1-\delta$,  $\forall\,R\in \Rcal, h\in[H],V_{h+1} \in \Vcal_{h+1}(R_{h+1})$, we have
\begin{align*}
\abr{\EE\sbr{\Lcal_{\Dcal_h,R_h}(\hat f_{h,R_h}, V_{h+1})} - \EE\sbr{\Lcal_{\Dcal_h,R_h}( \Tcal_h V_{h+1}, V_{h+1})}} \le\frac{68\cc d^3 \log (2n|\Phi|H/\delta)}{n},
\end{align*}
where $\cc$ is the constant in \pref{lem:fqi_fastrate_unclipped}, $\Vcal_{h+1}(R_{h+1}):=\{\mathrm{clip}_{[0,1]}\rbr{\max_a f_{h+1,R_{h+1}}(x_{h+1},a)}:f_{h+1,R_{h+1}}\in \Fcal_{h+1}(R_{h+1})\}$ for $h\in[H]$ and $\Vcal_{H}=\{\mathbf{0}\}$ is the state-value function class, and $\Fcal_h(R_h)\coloneqq \{R_h+\langle \phi_h,w_h\rangle: \|w_h\|_2 \le \sqrt{d},\phi_h \in \Phi_h\}$ for $h\in[H]$ is the reward dependent Q-value function class.

\end{lemma}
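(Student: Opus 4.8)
The plan is to mirror the argument of \pref{lem:fqi_arbitrary}, isolating the single feature that is genuinely new here: the elliptical reward class $\Rcal_{H-1}$ is infinite. First I would record the reduction common to all the FQI deviation lemmas in this appendix. Writing $\hat f_{h,R_h}=R_h+\hat f_h$ with $\hat f_h=\argmin_{f_h\in\Fcal_h(\mathbf{0})}\Lcal_{\Dcal_h}(f_h,V_{h+1})$, the reward cancels from the loss gap, so $\Lcal_{\Dcal_h,R_h}(\hat f_{h,R_h},V_{h+1})=\Lcal_{\Dcal_h}(\hat f_h,V_{h+1})$ and, via \pref{lem:linMDP_expn}, $\Lcal_{\Dcal_h,R_h}(\Tcal_h V_{h+1},V_{h+1})=\Lcal_{\Dcal_h}(\langle\phi^*_h,\theta^*_{V_{h+1}}\rangle,V_{h+1})$ with $\|\theta^*_{V_{h+1}}\|_2\le\sqrt d$ (the $\sqrt d$ rather than $H\sqrt d$ is because every $V_{h+1}$ is clipped to $[0,1]$). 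Thus it suffices to bound the excess squared loss of the ERM predictor uniformly over $V_{h+1}\in\Vcal_{h+1}(R_{h+1})$.

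Two observations keep the bound horizon-free. The reward is supported only at step $H-1$ and lies in $[0,1]$: since $\lambda_{\min}(\Gamma)\ge1$ gives $\lambda_{\max}(\Gamma^{-1})\le1$ and $\|\phi_{H-1}\|_2\le1$, we have $0\le\phi_{H-1}^\top\Gamma^{-1}\phi_{H-1}\le1$. Consequently the optimal value lies in $[0,1]$, the clipping to $[0,1]$ is consistent, and the effective range parameter is $L=1$ at every step. For every $h$ with $h+1\ne H-1$ the class $\Vcal_{h+1}$ does not involve the reward family at all (either $R_{h+1}=\mathbf{0}$ or $V_H=\mathbf{0}$), so I would invoke \pref{lem:fqi_fastrate_unclipped} verbatim with $L=1$, $B=\sqrt d$ and a singleton reward, producing an $O(d^2\log(\cdot)/n)$ term that is dominated by the final bound.

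The crux is the step $h=H-2$, where $\Vcal_{H-1}$ is indexed by the infinite family $\Rcal_{H-1}=\{\phi_{H-1}^\top\Gamma^{-1}\phi_{H-1}\}$. The plan is to replace $|\Rcal|$ in \pref{lem:fqi_fastrate_unclipped} by a covering number: view $\Gamma^{-1}$ as a PSD matrix of operator norm at most $1$, take an $\epsilon$-net of such matrices whose log-cardinality is $O(d^2\log(1/\epsilon))$, and note that perturbing $\Gamma^{-1}$ by $\epsilon$ in operator norm changes $R_{H-1}$, and hence the $1$-Lipschitz clipped-max value $V_{H-1}$ and the bounded loss, by $O(\epsilon)$. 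Choosing $\epsilon\asymp 1/n$ makes the discretization error $O(1/n)$, and running the fast-rate deviation bound against this finite net lets the $O(d^2)$ covering complexity of $\Gamma$ contribute one extra power of $d$ on top of the baseline $L^2d^2=d^2$ rate of \pref{lem:fqi_fastrate_unclipped}, producing the claimed $d^3$ factor while the net size is absorbed into $\log(2n|\Phi|H/\delta)$. Finally I would union bound over $h\in[H]$ (the source of the $\log H$) and take the worst case, arriving at $\frac{68\cc d^3\log(2n|\Phi|H/\delta)}{n}$.

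The main obstacle is exactly this covering of the elliptical reward parameter $\Gamma$: one must verify that the clipped, $\max$-over-actions value function inherits Lipschitz dependence on $\Gamma^{-1}$ so the net controls $\Vcal_{H-1}$ uniformly, and must route the $O(d^2)$-dimensional cover through the fast-rate lemma so that it contributes only one extra power of $d$ (giving $d^3$, not $d^4$) and leaves the logarithmic factor free of any polynomial-in-$d$ covering cost. Keeping the realizable backup target inside the $\|\theta\|_2\le\sqrt d$ ball throughout is what ensures the range stays $L=1$ and the bound remains horizon-free.
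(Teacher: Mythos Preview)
Your proposal is correct and follows essentially the same approach as the paper: reduce to the excess squared-loss of the ERM predictor as in \pref{lem:fqi_arbitrary} with $L=1$ and $B=\sqrt d$, and handle the infinite elliptical reward family by a matrix covering of $\Gamma^{-1}$ whose $O(d^2)$ log-cardinality supplies the extra factor of $d$. The paper packages this slightly differently---it invokes the cover of \pref{lem:covering_reward_cls} (Frobenius rather than operator norm, giving size $|\Phi_{H-1}|(2\sqrt d/\gamma)^{d^2}$), applies \pref{lem:fqi_arbitrary} for each fixed reward in the net \emph{across all $h$} and union bounds, then sets $\gamma=\sqrt d/n$---but your isolation of the single step $h=H-2$ where the reward actually enters $\Vcal_{h+1}$ is equivalent and arguably cleaner.
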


\begin{proof}
Firstly, from \pref{lem:covering_reward_cls}, we know that there exists a $\gamma$-cover $\Ccal_{\Rcal_{H-1}^\textsc{ell},\gamma}$ for the reward class $\Rcal$. For any fixed $\tilde R\in\Ccal_{\Rcal_{H-1}^\textsc{ell},\gamma}$, we can follow the similar steps in \pref{lem:fqi_arbitrary} to get a concentration result. The differences are that the norm of $w_h$ is now bounded by $\sqrt d$ instead of $H\sqrt d$, and we clip to $[0,1]$. Therefore, for this fixed $\tilde R$, with probability at least $1-\delta'$, we have that $\forall h\in[H],\tilde V_{h+1}\in\Vcal_{h+1}(\tilde R_{h+1})$,
\begin{align*}
\abr{\EE\sbr{\Lcal_{\Dcal_h,\tilde R_h}(\hat f_{h,\tilde R_h}, \tilde V_{h+1})} - \EE\sbr{\Lcal_{\Dcal_h,\tilde R_h}( \Tcal_h \tilde V_{h+1}, \tilde V_{h+1})}} \le  \frac{16\cc d \log (4nH|\Phi|/\delta)}{n}.
\end{align*}

Union bounding over all $\tilde R\in\Ccal_{\Rcal_{H-1}^\textsc{ell},\gamma}$ and setting $\delta=\delta'/|\Ccal_{\Rcal_{H-1}^\textsc{ell},\gamma}|$, with probability at least $1-\delta$, we have
\begin{align*}
\abr{\EE\sbr{\Lcal_{\Dcal_h,\tilde R_h}(\hat f_{h,\tilde R_h},\tilde V_{h+1})}\hspace{-.2em}  -\hspace{-.1em}  \EE\sbr{\Lcal_{\Dcal_h,\tilde R_h}( \Tcal_h \tilde V_{h+1}, \tilde V_{h+1})}}\hspace{-.2em} \le \hspace{-.2em} 
\frac{16\cc d \log (4nH|\Phi||\Ccal_{\Rcal_{H-1}^\textsc{ell},\gamma}|/\delta)}{n}.
\end{align*}

Notice that $\Ccal_{\Rcal_{H-1}^\textsc{ell},\gamma}$ is a $\gamma$-cover of $\Rcal$, for any $R\in\Rcal$, there exists $\tilde R\in \Ccal_{\Rcal_{H-1}^\textsc{ell},\gamma}$, such that $\|R_h-\tilde R_h\|_\infty\le \gamma$. Therefore for any $f_{h,R_h}\in \Fcal_{h}(R_h)$ and $V_{h}\in\Vcal_h(R_h)$, there exists some $f_{h,\tilde R_h}\in \Fcal(\tilde R_h)$ and $ V_h\in\Vcal(\tilde R_h)$ that satisfy $ \| f_{h,R_h}- f_{h,\tilde R_h}\|_\infty\le \gamma$ and $\|\tilde V_h-V_h\|_\infty\le \gamma$. Hence, for any $R\in\Rcal, h\in[H],V_{h+1} \in \Vcal_{h+1}(R_{h+1})$, with probability at least $1-\delta$,
\begin{align*}
&\abr{\EE\sbr{\Lcal_{\Dcal_h,R_h}(\hat f_{h,R_h}, V_{h+1})} - \EE\sbr{\Lcal_{\Dcal_h,R_h}( \Tcal_h V_{h+1}, V_{h+1})}} \\
\le{} &\abr{\EE\sbr{\Lcal_{\Dcal_h,\tilde R_h}(\hat f_{h,\tilde R_h}, \tilde V_{h+1})} - \EE\sbr{\Lcal_{\Dcal_h,\tilde R_h}( \Tcal_h \tilde V_{h+1}, \tilde V_{h+1})}}+36\sqrt d \gamma\\
\le{} & \frac{16\cc d \log (4nH|\Phi||\Ccal_{\Rcal_{H-1}^\textsc{ell},\gamma}|/\delta)}{n} + 36\sqrt d \gamma
\le {} \frac{68\cc d^3 \log (2n|\Phi|H/\delta)}{n}.
\end{align*}
The last inequality is obtained by choosing $\gamma=\frac{\sqrt d}{n}$ and noticing $|\Ccal_{\Rcal_{H-1}^\textsc{ell},\gamma}| = |\Phi_{H-1}|(2\sqrt d/\gamma)^{d^2}$ $\le |\Phi|(2n)^{d^2}$. This completes the proof.
\end{proof}

\section{\fqe Result} \label{app:fqe}
In this section, we provide \fqe \citep{le2019batch} analysis. In \pref{app:fqe_framework}, we discuss the \fqe algorithm. For simplicity, we use the horizon $H$ in \pref{alg:linear_fqe} and all statements. When \fqe is called with a smaller horizon $\tilde H\le H$, we can just replace all $H$ by $\tilde H$. The analyses still go through and the sample complexity will not exceed the one instantiated with $H$. We want to mention that we abuse some notations in this section. For example, $\Fcal_h,\Vcal_h$ may have different meanings from the main text. However, they should be clear within the context.

\subsection{\fqe Algorithm}
\label{app:fqe_framework}
In this part, we present FQE algorithm (\pref{alg:linear_fqe}), which is similar to \pref{alg:linear_fqi}. The difference is that we now approximate the Bellman equation instead of the Bellman optimality equation. More specifically, $\hat V$ is defined by evaluating $\hat f$ with $\pi$ rather than maximizing over all actions in $\hat f$. In addition, we return the estimated expected return instead of the greedy policy. The details can be found below. When calling \pref{alg:linear_fqe} and there is no confusion, we sometimes drop the input function class (see \pref{line:fqe_input} in \pref{alg:linear_fqe}) for simplicity. 

\begin{algorithm}[htp]
\begin{algorithmic}[1]
\STATE \textbf{input:} (1) exploratory dataset $\{\Dcal\}_{0:H-1}$ sampled from $\rho_{h-3}^{+3}$ with size $n$ at each level $h\in[H]$, (2) reward function $R=R_{0:H-1}$ with $R_h:\Xcal\times\Acal \to [0,1],\forall h\in[H]$, (3) function class: $\Fcal_h(R_h)\coloneqq \{R_h+\langle  \phi_h,w_h\rangle:\|w_h\|_2 \le \sqrt{d},\phi_h\in\Phi_h\}, h\in[H]$, (4) evaluated policy $\pi$. \label{line:fqe_input}
\STATE Set $\hat{V}_H(x) = 0$.
\FOR{$h= H-1,\ldots,0$}
\STATE Pick $n$ samples $\left\{\rbr{x^{(i)}_h,a^{(i)}_h,x^{(i)}_{h+1}}\right\}_{i=1}^n$ from the exploratory dataset $\Dcal_h$.
\STATE Solve least squares problem:
\begin{align*}
\hat{f}_{h,R_h} \gets \argmin_{f_h\in \Fcal_h(R_h)} \Lcal_{\Dcal_h,R_h}(f_h,\hat V_{h+1}),
\end{align*}
where
$\Lcal_{\Dcal_h,R_h}(f_h,\hat V_{h+1}):=\sum_{i=1}^n\rbr{ f_h\rbr{x^{(i)}_h,a^{(i)}_h} - R_h\rbr{x_h^{(i)},a_h^{(i)}} - \hat V_{h+1}\rbr{x_{h+1}^{(i)}}}^2$.
\STATE Define $\hat{V}_{h}(x) = \mathrm{clip}_{[0,1]}\rbr{\hat{f}_{h,R_h}(x,\pi)}$.
\ENDFOR
\STATE \textbf{return} $\hat v^\pi=\hat V_0(x_0)$.
\end{algorithmic}
\caption{\textsc{FQE}: Fitted Q-Evaluation}
\label{alg:linear_fqe}
\end{algorithm}

\subsection{\fqe Analysis}
\begin{lemma}[\fqe for a reward class]
\label{lem:fqe}
Assume that we have the exploratory dataset $\{\Dcal\}_{0:H-1}$ (collected from $\rho_{h-3}^{+3}$ and satisfies~\Cref{eq:dist_shift} for all $h \in [H]$), and we are given a finite deterministic reward class $\Rcal =\Rcal_0\times\ldots\times\Rcal_{H-1}$ with $\Rcal_{H-1}\subseteq(
\Xcal \times \Acal \to [0,1])$ and $\Rcal_h=\{\zero\},\forall h\in[H-1]$. For $\delta \in (0,1)$, any policy $\pi$ encountered when running \pref{alg:FQI_ell_planner} (see the statement of \pref{lem:dev_fqe} for details), and any reward function $R\in\Rcal$, if we set
\begin{align*}
n\geq \frac{2\ce H^4 d^3\kappa K^2 \log^2(Kd)}{\beta^2} \rbr{\log \rbr{\frac{\ce H^4 d^3\kappa K^2 \log^2(Kd)}{\beta^2}} +  \log\rbr{ \tfrac{2|\Phi||\Rcal|H}{\delta}}},
\end{align*}
where $\ce$ is the constant in \pref{lem:dev_fqe}, then with probability at least $1-\delta$, the value $\hat v^\pi$ returned
by \textsc{\fqe} (\pref{alg:linear_fqe}) satisfies
\begin{align*}
\abr{\EE_{\pi}\sbr{\sum_{h=0}^{H-1} R_h(x_h,a_h)}-\hat v^\pi}=\abr{v^\pi_R-\hat v^\pi} \leq  \beta.
\end{align*}
\end{lemma}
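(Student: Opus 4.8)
The plan is to mimic the standard value-difference telescoping used for \fqe, adapted to the fact that the data at each level is drawn from $\rhoxa$ rather than from $\pi$ itself. First I would unroll the evaluation error along the roll-in of $\pi$. Writing $\hat V_h(x)=\mathrm{clip}_{[0,1]}(\hat f_{h,R_h}(x,\pi))$ and recalling that the population target of the regression in \pref{alg:linear_fqe} is $\Tcal_h\hat V_{h+1}$, a one-step inequality gives $\abr{\hat V_h-V^\pi_h}\le\abr{(\hat f_{h,R_h}-\Tcal_h\hat V_{h+1})(\cdot,\pi)}+\EE[\abr{\hat V_{h+1}-V^\pi_{h+1}}\mid\cdot,\pi]$, where the non-expansiveness of $\mathrm{clip}_{[0,1]}$ is used together with the fact that $V^\pi_h\in[0,1]$ (the reward is supported only at level $H-1$). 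Unrolling this recursion under the law of $\pi$ yields
\begin{align*}
\abr{v^\pi_R-\hat v^\pi}\le\sum_{h=0}^{H-1}\EE_\pi\sbr{\abr{\hat f_{h,R_h}(x_h,a_h)-(\Tcal_h\hat V_{h+1})(x_h,a_h)}},
\end{align*}
and Jensen's inequality converts each absolute Bellman residual into a root-mean-square residual under $\pi$.

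Next I would transfer each RMS residual from the law of $\pi$ to the data distribution $\rhoxa$ using the concentrability guarantee \Cref{eq:dist_shift}, which costs a factor $\kappa K$ inside the square root, leaving $\sum_h\sqrt{\kappa K\,\EE_{\rhoxa}[(\hat f_{h,R_h}-\Tcal_h\hat V_{h+1})^2]}$. To control the population squared Bellman residual on $\rhoxa$ I would invoke realizability: by \pref{lem:linMDP_expn} and \pref{assum:realizability}, $\Tcal_h\hat V_{h+1}=R_h+\inner{\phi^*_h}{\theta^*_{\hat V_{h+1}}}$ with $\nbr{\theta^*_{\hat V_{h+1}}}_2\le\sqrt d$, so the Bayes regressor lies in the class $\Fcal_h(R_h)$ used by \fqe. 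The fast-rate squared-loss deviation bound \pref{lem:dev_fqe} then bounds $\EE_{\rhoxa}[(\hat f_{h,R_h}-\Tcal_h\hat V_{h+1})^2]$ by a term of order $\tilde O\big(d^3K^2\log^2(Kd)\log(|\Phi||\Rcal|H/\delta)/n\big)$, uniformly over $h$, over rewards, and over the relevant policies. Summing the $H$ resulting terms, requiring the total to be at most $\beta$, and solving the self-bounding inequality $n\gtrsim(\cdots)\log n$ for $n$ produces exactly the stated sample-size condition with the constant $\ce$ drawn from \pref{lem:dev_fqe}.

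The main obstacle is the uniformity of the deviation bound in the second step. The policies $\pi$ fed to \fqe are themselves outputs of the \fqi calls inside \pref{alg:FQI_ell_planner}, hence data-dependent, and the intermediate value functions $\hat V_{h+1}$ are random and statistically coupled with the very dataset used for the regression at level $h$. The resolution, which I would defer entirely to \pref{lem:dev_fqe}, is to establish the squared-loss concentration uniformly over the whole (infinite) class of candidate $Q$-functions $\Fcal_h$, over the induced evaluation policies, and over the finite reward class $\Rcal$. This in turn rests on a covering argument over $\Phi$ and the norm-bounded linear parameter $w$, combined with the uniform Bernstein inequality of \pref{corr:uni_bern_conf}; the covering of the policy-conditioned value functions is what generates the $\log^2(Kd)$ and $d^3$ factors, while exploiting realizability to obtain a fast $1/n$ (rather than $1/\sqrt n$) rate is what keeps the dependence on the target accuracy at $\beta^{-2}$ instead of $\beta^{-4}$.
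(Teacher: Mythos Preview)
Your proposal is correct and follows essentially the same route as the paper: unroll the \fqe evaluation error along $\pi$ to a sum of Bellman residuals, pass to the data distribution $\rhoxa$ via the concentrability bound \Cref{eq:dist_shift} (incurring $\kappa K$), and control the population squared residuals uniformly via \pref{lem:dev_fqe}, then solve the resulting self-bounding inequality for $n$. Your identification of the main subtlety---uniformity over the data-dependent \fqi policies and the coupled $\hat V_{h+1}$---and its deferral to \pref{lem:dev_fqe} matches the paper exactly; one minor slip is that the per-step deviation from \pref{lem:dev_fqe} scales with $Kd^3\log^2(Kd)$, not $K^2d^3\log^2(Kd)$, with the second $K$ in the final rate coming from the $\kappa K$ transfer factor.
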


\begin{proof}
This proof shares some similarity as the proof of \pref{lem:real_world_optimization}, so we will only show the different steps. For any fixed $R\in\Rcal$, we have 
\begin{align*}
{}&\abr{v^\pi_R - \hat v^{\pi}}={}\abr{V^\pi_0(x_0)-\hat f_{0,R_0}(x_0)} 
={}\abr{\EE_{\pi}\sbr{Q^\pi_0(x_0,a_0)-\hat f_{0,R_0}(x_0,a_0)}}
\\
={}&\abr{\EE_\pi\sbr{R_0(x_0,a_0) + \EE\sbr{V^\pi_1(x_1)\mid x_0,a_0} - \hat f_{0,R_0}(x_0,a_0)}}\\
\le{}&\EE_\pi\sbr{\abr{R_0(x_0,a_0) + \EE\sbr{V^\pi_1(x_1)\mid x_0,a_0} - \hat f_{0,R_0}(x_0,a_0)}} \\
\leq{}& \EE_\pi\sbr{\abr{\EE\sbr{V^\pi_1(x_1)\mid x_0,a_0} - \EE\sbr{\hat{V}_1(x_1) \mid x_0,a_0}}}\\
&\quad + \EE_\pi\sbr{\abr{R_0(x_0,a_0) + \EE\sbr{\hat{V}_1(x_1) \mid x_0,a_0} - \hat f_{0,R_0}(x_0,a_0)}}\\
\leq{}&\EE_\pi\sbr{\abr{V^\pi_1(x_1) - \hat{V}_1(x_1)}+ \abr{(\Tcal_0 \hat V_1)(x_0,a_0) - \hat f_{0,R_0}(x_0,a_0)}},
\end{align*}
where the last inequality is due to Jensen's inequality.

Continuing unrolling to $h=H-1$, we get
\begin{align*}
\abr{v^\pi_R - \hat v^{\pi}} \leq{}& \sum_{h=0}^{H-1}\EE_{\pi}\abr{(\Tcal_h \hat V_h)(x_h,a_h) - \hat f_{h,R_h}(x_h,a_h)}\\
\leq {} & \sum_{h=0}^{H-1} \sqrt{\kappa K \EE_{\rho_{h-3}^{
+3}}\sbr{\sbr{(\Tcal_h \hat V_{h+1})(x_h,a_h) - \hat f_{h,R_h}(x_h,a_h)}^2}}, 
\end{align*}
where the last inequality is due to condition~\Cref{eq:dist_shift}.

Further, we have that with probability at least $1-\delta$,
\begin{align*}
&\EE_{\rho_{h-3}^{+3}}\sbr{\rbr{(\Tcal_h \hat V_{h+1})(x_h,a_h) - \hat f_{h,R_h}(x_h,a_h)}^2} \nonumber\\
={}&\EE\sbr{\Lcal_{\Dcal_h,R_h}(\hat f_{h,R_h},\hat V_{h+1})} - \EE\sbr{\Lcal_{\Dcal_h,R_h}( \Tcal_h \hat V_{h+1},\hat V_{h+1})}\\
\le{}& \frac{\ce K d^3\log^2(Kd)\log\rbr{n|\Phi||\Rcal|H/\delta}}{n}. \tag{\pref{lem:dev_fqe}}
\end{align*}

Plugging this back into the overall value performance difference, the bound is
\begin{align*}
\abr{v^\pi_R - \hat v^{\pi}} \leq H^2\sqrt{\kappa K}\sqrt{ \frac{\ce K d^3\log^2(Kd)\log\rbr{n|\Phi||\Rcal|H/\delta}}{n}}.
\end{align*}
Setting RHS to be less than $\beta$ and reorganize, we get  $$n\geq \frac{\ce H^4\kappa K^2 d^3\log^2(Kd)\log\rbr{n|\Phi||\Rcal|H/\delta}}{\beta^2}.$$

A sufficient condition for the inequality above is 
$$n\geq \frac{2\ce H^4 d^3\kappa K^2 \log^2(Kd)}{\beta^2} \rbr{\log \rbr{\frac{\ce H^4 d^3\kappa K^2 \log^2(Kd)}{\beta^2}} +  \log\rbr{ \tfrac{2|\Phi||\Rcal|H}{\delta}}}.$$
Notice that all the analyses above and \pref{lem:dev_fqe} hold for any $R\in\Rcal$, we complete the proof.
\end{proof}

\begin{lemma}[Deviation bound for \pref{lem:fqe}]\label{lem:dev_fqe}
Assume that we have an exploratory dataset  $\Dcal_h\coloneqq \cbr{\rbr{x_h^{(i)},a_h^{(i)},x_{h+1}^{(i)}}}_{i=1}^n$ collected from $\rho_{h-3}^{+3}$, $h\in[H]$, and a finite deterministic reward class $\Rcal =\Rcal_0\times\ldots\times\Rcal_{H-1}$ with $\Rcal_{H-1}\subseteq(
\Xcal \times \Acal \to [0,1])$ and $\Rcal_h=\{\zero\},\forall h\in[H-1]$. Then, with probability at least $1-\delta$, $\forall R\in \Rcal,\pi\in\Pi,h\in[H],V_{h+1} \in\Vcal_{h+1}(R_{h+1})$, we have
\begin{align*}
\abr{\EE\sbr{\Lcal_{\Dcal_h,R_h}(\hat f_{h,R_h}, V_{h+1})} - \EE\sbr{\Lcal_{\Dcal_h,R_h}( \Tcal_h V_{h+1}, V_{h+1})}}\le \frac{\ce K d^3\log^2(Kd)\log\rbr{n|\Phi||\Rcal|H/\delta}}{n}.
\end{align*}
Here $\ce$ is some universal constant. The policy class is defined as $\Pi=\Pi_0\times\ldots\Pi_{H-1}$, where $\Pi_h=\{\argmax_a (\langle\phi_h(x_h,a),w_h\rangle:\phi_h\in\Phi_h,\|w_h\|_2\le\sqrt d \}$ for $h\in[H-1]$ and $\Pi_{H-1}=\{\argmax_a (R_{H-1}(x_{H-1},a)):R\in \Rcal_{ H-1}^\textsc{ell}\}$ ($\Rcal_{ H-1}^\textsc{ell}$ is defined in \pref{lem:covering_reward_cls}). Additionally, $\Vcal_{h+1}(R_{h+1}):=\{\mathrm{clip}_{[0,1]}\rbr{f_{h+1,R_{h+1}}(x_{h+1},\pi_{h+1}(x_{h+1})}:f_{h+1,R_{h+1}}\in \Fcal_{h+1}(R_{h+1}),$ $\pi_{h+1}\in \Pi_{h+1}\}$ for $h\in[H-1]$, $\Vcal_{H}=\{\mathbf{0}\}$ are the state-value function classes and $\Fcal_h(R_h)\coloneqq\{R_h+\langle \phi_h,w_h\rangle: \phi_h\in\Phi_h,\|w_h\|_2 \le \sqrt d\}$  for $h\in[H]$ is the reward dependent Q-value function class.
\end{lemma}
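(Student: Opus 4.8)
The plan is to mirror the FQI deviation bounds \pref{lem:fqi_arbitrary} and \pref{lem:dev_fqi_reward_class}, reducing the claim to a one-sided excess squared-loss risk and then invoking the Bernstein-type uniform deviation result. Fix $h\in[H]$ and any $R\in\Rcal$. Since $\hat f_{h,R_h}=R_h+\hat f_h$ with $\hat f_h=\argmin_{f_h\in\Fcal_h(\zero)}\Lcal_{\Dcal_h}(f_h,V_{h+1})$, we have $\Lcal_{\Dcal_h,R_h}(\hat f_{h,R_h},V_{h+1})=\Lcal_{\Dcal_h}(\hat f_h,V_{h+1})$, and by \pref{lem:linMDP_expn} the true backup obeys $\Tcal_h V_{h+1}-R_h=\inner{\sphih}{\theta^*_{V_{h+1}}}$ with $\|\theta^*_{V_{h+1}}\|_2\le\sqrt d$ (this is the $L=1$ regime, as $V_{h+1}\in[0,1]$). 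Writing $\hat f_h=\inner{\hat\phi_h(x_h,a_h)}{\hat w_h}$ (abusing the $\hat\phi$ notation for the reward-free ERM feature), it thus suffices to bound the population excess risk $\Lcal_{\rhoxa}(\hat\phi_h(x_h,a_h),\hat w_h,V_{h+1})-\Lcal_{\rhoxa}(\sphih,\theta^*_{V_{h+1}},V_{h+1})$ uniformly over the relevant classes.

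Next I would apply \pref{lem:fqi_fastrate_unclipped} (equivalently the uniform Bernstein inequality \pref{corr:uni_bern_conf}) over $\phi_h\in\Phi_h$, $\|w_h\|_2\le\sqrt d$, and $V_{h+1}\in\Vcal_{h+1}(R_{h+1})$. The empirical ERM optimality gives $\Lcal_{\Dcal_h}(\hat f_h,V_{h+1})-\Lcal_{\Dcal_h}(\sphih,\theta^*_{V_{h+1}},V_{h+1})\le 0$, so the multiplicative slack in the fast-rate bound collapses the two-sided estimate to a purely additive term of order $(\mathrm{range})^2\cdot\log(\mathrm{cover})/n$. Since $V_{h+1}\in[0,1]$ and $\|w_h\|_2\le\sqrt d$, the squared range is $O(d)$ rather than the $O(H^2 d)$ of \pref{lem:fqi_arbitrary}, which is why no $H^2$ survives; a union bound over the finite classes $\Rcal$ and $h\in[H]$ then yields the stated failure probability.

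The crux, and the place where the extra $K$, $d$, and $\log^2(Kd)$ factors arise, is controlling $\log(\mathrm{cover})$ for the state-value class $\Vcal_{h+1}$, whose members $x\mapsto\mathrm{clip}_{[0,1]}\rbr{f_{h+1,R_{h+1}}(x,\pi_{h+1}(x))}$ depend on the \emph{greedy} policy $\pi_{h+1}\in\Pi_{h+1}$; because the argmax is discontinuous in the policy parameter, an $\ell_2$-net over that parameter is inadmissible. My plan is to decouple the two roles played by the weights: the value part $f_{h+1,R_{h+1}}=R_{h+1}+\inner{\phi_{h+1}}{w_{h+1}}$ enters Lipschitz-continuously in $w_{h+1}$ once the selected action branch is fixed, and is handled by an $\ell_2$-net at scale $1/n$, contributing $O(d\log n)$; the policy $\pi_{h+1}$ is handled combinatorially, by bounding the number of distinct greedy action-assignments $\rbr{\pi_{h+1}(x^{(i)})}_{i=1}^n$ realizable by linear (or, at the last layer, elliptical-reward, via \pref{lem:covering_reward_cls}) argmax rules. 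Each pairwise comparison ``action $a$ beats action $b$'' is a halfspace in the $d$-dimensional policy parameter, so the $n\binom{K}{2}$ induced halfspaces admit at most $\poly(n,K)^{d}$ sign patterns, and interleaving this count with the net over the $K$ possible selected branches is what produces the multiplicative $K$ and the $\log^2(Kd)$ term. I expect this policy-covering step --- reconciling the discontinuous argmax with the variance-sensitive Bernstein machinery adapted from \citet{dong2019sqrt} --- to be the main obstacle; once it is in place, substituting the covering estimate into the fast-rate bound and verifying the sample-size condition is routine.
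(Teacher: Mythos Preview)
Your plan is correct and is essentially the same argument as the paper's, but argued at a lower level of abstraction. The paper's proof simply observes that (i) at $h=H-1$ the bound is trivially zero since $\hat V_H=\zero$ forces $\hat f_{H-1}=\zero$; and (ii) for $h\le H-2$, because $\pi_{h+1}$ is greedy one may rewrite $V_{h+1}(x')=\EE_{a'\sim\pi_{h+1}}[\mathrm{clip}_{[0,1]}(f_{h+1,R_{h+1}}(x',a'))]$, which is exactly the format required by the pre-built concentration lemmas \pref{lem:fastrate_sqloss_fqe_1} (for the linear-greedy policy class at levels $h\le H-3$) and \pref{lem:fastrate_sqloss_fqe_2} (for the elliptical-greedy policy at level $H-2$). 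Those lemmas control the composite hypothesis class via its pseudo-dimension using \pref{lem:pseudodim_com}: the policy class contributes its Natarajan dimension ($O(d\log d)$ for linear argmax, $O(d^2\log d)$ for elliptical argmax after the $d^2$-dimensional linear embedding), and the sum-over-$K$-actions in part~3 of \pref{lem:pseudodim_com} is exactly what yields the $K$ and $\log^2(Kd)$ factors; the $d^2$ case is what drives the final $d^3$. Your halfspace sign-pattern count is the hands-on version of the Natarajan bound the paper borrows from \citet{daniely2011multiclass}, so the two routes coincide once unpacked.

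One small caveat: your parenthetical ``at the last layer, elliptical-reward, via \pref{lem:covering_reward_cls}'' does not work as written, because that lemma covers reward \emph{values} in $\ell_\infty$ and, as you yourself note, $\argmax$ is discontinuous in its argument---two $\gamma$-close elliptical rewards can induce completely different action assignments. The correct move (which the paper makes in \pref{lem:fastrate_sqloss_fqe_2}) is to observe that $\phi^\top\Gamma^{-1}\phi$ is linear in the $d^2$ entries of $\Gamma^{-1}$, so the elliptical-greedy policy is a linear argmax over a $d^2$-dimensional parameter, and your halfspace argument applies verbatim with $d$ replaced by $d^2$.
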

\begin{remark} From \pref{alg:linear_fqi}, we know that the policy class defined in the lemma statement includes all possible policies that can be encountered when running \pref{alg:linear_fqi}.
\end{remark}

\begin{proof}
For the last level $h=H-1$, we know that $\hat f_{H-1}=\zero$ and $\hat f_{H-1,R_{H-1}}=R_{H-1}+\hat f_h=R_{H-1}$ because $\hat V_{H}=\zero$ and the reward $R_{H-1}$ is known. Thus, for $h=H-1$, we have
\[\abr{\EE\sbr{\Lcal_{\Dcal_h,R_h}(\hat f_{h,R_h}, V_{h+1})} - \EE\sbr{\Lcal_{\Dcal_h,R_h}( \Tcal_h V_{h+1}, V_{h+1})}}=0.\]

Then we consider the cases that $h\le H-3$ and $h=H-2$. Notice that any $\pi\in\Pi$ is a greedy policy, thus we have
\[
\mathrm{clip}_{[0,1]}\rbr{f_{h+1,R_{h+1}}(x_{h+1},\pi_{h+1}(x_{h+1})}=\EE_{a_{h+1}\sim\pi_{h+1}}\sbr{\mathrm{clip}_{[0,1]}\rbr{f_{h+1,R_{h+1}}(x_{h+1},a_{h+1})}},
\]
which implies that the $\Vcal$ class has the same format as that in \pref{lem:fastrate_sqloss_fqe_1} and \pref{lem:fastrate_sqloss_fqe_2}. 

Now we can invoke these lemmas by setting $L=1$ and $B=\sqrt d$ and get concentration results. The remaining steps follow similar ones starting at \Cref{eq:used_in_fqe} in the proof of \pref{lem:fqi_arbitrary}.
\end{proof}

\begin{corollary}[\fqe for the elliptical reward class]
\label{corr:fqe_entire_class}
Assume that we have the exploratory dataset $\{\Dcal\}_{0:H-1}$ (collected from $\rho_{h-3}^{+3}$ and satisfies~\Cref{eq:dist_shift} for all $h \in [H]$) and consider the elliptical reward class $\Rcal\coloneqq \{R_{0:H-1} :R_{0:H-2}=\mathbf{0},R_{H-1} \in  \{\phi_{H-1}^\top \Gamma^{-1} \phi_{H-1}:\phi_{H-1}\in\Phi_{H-1}, \Gamma\in\RR^{d\times d}, \lambda_{\min} (\Gamma)\ge 1\}\}$. For $\delta \in (0,1)$, any policy $\pi$ encountered when running \pref{alg:FQI_ell_planner} (see the statement of \pref{lem:dev_fqe} for details), and any reward function $R\in\Rcal$, if we set
\begin{align*}
n\geq \frac{4\cf  H^4 d^5\kappa K^2 \log^2(Kd)}{\beta^2} \rbr{\log \rbr{\frac{2\cf  H^4 d^5\kappa K^2 \log^2(Kd)}{\beta^2}} +  \log\rbr{ \tfrac{2|\Phi|H}{\delta}}}.
\end{align*}
then with probability at least $1-\delta$, the value $\hat v^\pi$ returned
by \textsc{\fqe} (\pref{alg:linear_fqe}) satisfies
\begin{align*}
\abr{\EE_{\pi}\sbr{\sum_{h=0}^{H-1} R_h(x_h,a_h)}-\hat v^\pi}=\abr{v^\pi_R-\hat v^\pi} \leq  \beta.
\end{align*}
\end{corollary}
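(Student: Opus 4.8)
The plan is to reduce the infinite elliptical reward class to the finite-reward-class guarantee of \pref{lem:fqe} by a covering argument, in exactly the same way that \pref{lem:fqi_sparse} reduces the elliptical planning bound to the finite-class deviation bound \pref{lem:fqi_arbitrary}. First I would invoke \pref{lem:covering_reward_cls} to obtain a $\gamma$-cover $\Ccal_{\Rcal_{H-1}^\textsc{ell},\gamma}$ of the elliptical reward class $\Rcal$. Since every $R\in\Rcal$ is supported only at timestep $H-1$, only the terminal reward $R_{H-1}$ needs to be covered, and the cover has cardinality $|\Phi_{H-1}|(2\sqrt d/\gamma)^{d^2}$. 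Treating this cover as a finite deterministic reward class, I would apply \pref{lem:fqe} directly and union bound over its elements, so that with probability at least $1-\delta$ the \fqe output satisfies $\abr{v^\pi_{\tilde R}-\hat v^\pi}\le \beta/2$ simultaneously for every $\tilde R$ in the cover and every policy $\pi$ in the class of \pref{lem:dev_fqe} (the policy side is already handled uniformly inside \pref{lem:fqe}, so no further covering of $\Pi$ is needed).

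Next I would transfer this guarantee from the cover to an arbitrary $R\in\Rcal$. Choosing $\tilde R$ in the cover with $\nbr{R_{H-1}-\tilde R_{H-1}}_\infty\le\gamma$, the true expected returns differ by at most $\gamma$, because the reward enters the return only once, at step $H-1$. The \fqe estimates $\hat v^\pi$ for $R$ and $\tilde R$ differ by an $O(\sqrt d\,\gamma)$ quantity, which I would obtain by propagating the reward perturbation through the backward recursion of \pref{alg:linear_fqe}: at each level the clipped least-squares solution is $1$-Lipschitz in its regression target, so the single terminal perturbation does not amplify as the recursion unrolls. Combining the two estimates via the triangle inequality then yields $\abr{v^\pi_R-\hat v^\pi}\le\beta$ for all $R\in\Rcal$, once $\gamma$ is taken sufficiently small.

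Finally, I would set $\gamma=\Theta(\sqrt d/n)$, so that $\log\abr{\Ccal_{\Rcal_{H-1}^\textsc{ell},\gamma}}=\log|\Phi_{H-1}|+d^2\log(2\sqrt d/\gamma)\le\log|\Phi|+\tilde O(d^2)$. Substituting this covering number in place of the $\log|\Rcal|$ term of \pref{lem:fqe} and solving the resulting self-referential inequality for $n$ absorbs the $\log n$ into the $\tilde O(\cdot)$ polylog factors and multiplies the leading coefficient by $d^2$, thereby upgrading the $d^3$ dependence of \pref{lem:fqe} to the $d^5$ dependence and removing $|\Rcal|$ from the log, matching the stated sample size with the new constant $\cf$. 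The main obstacle is the middle step — arguing cleanly that the entire \fqe pipeline, not merely the terminal regression but all intermediate value functions $\hat V_h$ and least-squares fits $\hat f_{h,R_h}$, is stable under an $\ell_\infty$-perturbation of the reward; this is precisely what the covering bound in \pref{lem:dev_fqe} is designed to absorb, and because the elliptical reward is nonzero only at the last step the perturbation is introduced a single time and clipped at every subsequent backup, which keeps the stability argument benign.
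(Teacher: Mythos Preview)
Your high-level plan---cover the elliptical reward class via \pref{lem:covering_reward_cls}, reduce to a finite class, and pay $\log|\Ccal_{\Rcal_{H-1}^\textsc{ell},\gamma}|=\log|\Phi|+\tilde O(d^2)$ in place of $\log|\Rcal|$---is exactly right and matches the paper. The difference is \emph{where} you insert the covering. You apply \pref{lem:fqe} to the finite cover and then try to transfer the final output $\hat v^\pi$ from $\tilde R$ to $R$ by arguing that the whole \fqe recursion is $\ell_\infty$-stable. The paper instead applies the covering one level deeper, at the \emph{deviation bound} \pref{lem:dev_fqe}: it instantiates \pref{lem:dev_fqe} with the cover, extends it to all $R\in\Rcal$ by Lipschitzness of the loss functional (exactly as in \pref{lem:fqi_sparse}), obtains the bound
\[
\abr{\EE\sbr{\Lcal_{\Dcal_h,R_h}(\hat f_{h,R_h}, V_{h+1})}-\EE\sbr{\Lcal_{\Dcal_h,R_h}(\Tcal_h V_{h+1}, V_{h+1})}}\le \tfrac{2\cf K d^5\log^2(Kd)\log(n|\Phi|H/\delta)}{n},
\]
and then reruns the proof of \pref{lem:fqe} verbatim with this bound in place of \pref{lem:dev_fqe}.

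The gap in your route is the claim that ``the clipped least-squares solution is $1$-Lipschitz in its regression target.'' The Q-class here is $\Fcal_h(\zero)=\bigcup_{\phi_h\in\Phi_h}\{\langle\phi_h,w\rangle:\|w\|_2\le\sqrt d\}$, a \emph{non-convex} union of balls in different linear spaces; projection onto such a set is not $1$-Lipschitz (the argmin can jump discontinuously when the target crosses between basins of different $\phi$'s), so you cannot directly bound $\|\hat V_h(R)-\hat V_h(\tilde R)\|_\infty$ or even its empirical $L^2$ analogue. The paper's approach sidesteps this because the concentration inequality underlying \pref{lem:dev_fqe} is uniform over \emph{all} $(\phi_h,w_h)$: one applies it at the nearby $\tilde V_{h+1}$ with $(\phi_h,w_h)$ taken to be the $V_{h+1}$-empirical-minimizer, and then uses Lipschitzness of $\Lcal$ in $V_{h+1}$---never Lipschitzness of the argmin---to close the $O(\sqrt d\,\gamma)$ gap. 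Your last paragraph gestures at this (``precisely what the covering bound in \pref{lem:dev_fqe} is designed to absorb''), and that is indeed the fix: move the covering from the output of \pref{lem:fqe} to the input of \pref{lem:dev_fqe}.
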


\begin{proof}
Similar as the proof of \pref{lem:fqi_sparse}, we first instantiate the result \pref{lem:dev_fqe} with the finite reward class $\Ccal_{\Rcal_{H-1}^\textsc{ell},\gamma}$, where $\gamma=\frac{\sqrt{d}}{n}$. Then we bound the difference between any state function and its closest function in the cover. This will give us a version of \pref{lem:dev_fqe} with the elliptical class $\Rcal\coloneqq \{R_{0:H-1} :R_{0:H-2}=\mathbf{0},R_{H-1} \in  \{\phi_{H-1}^\top \Gamma^{-1} \phi_{H-1}:\phi_{H-1}\in\Phi_{H-1}, \Gamma\in\RR^{d\times d}, \lambda_{\min} (\Gamma)\ge 1\}\}$ and 
\begin{align}
\label{eq:fqe_entire_ellip}
\abr{\EE\hspace{-.15em}\sbr{\Lcal_{\Dcal_h,R_h}(\hat f_{h,R_h},\hspace{-.25em} V_{h+1})} \hspace{-.25em}-\hspace{-.15em} \EE\hspace{-.15em}\sbr{\Lcal_{\Dcal_h,R_h}( \Tcal_h V_{h+1},\hspace{-.15em} V_{h+1})}}\hspace{-.25em}\le\hspace{-.15em} \frac{2\cf K d^5\log^2(Kd)\log\rbr{n|\Phi|H/\delta}}{n}.
\end{align}

The final result can be obtained by following the proof of \pref{lem:fqe}, while we use \Cref{eq:fqe_entire_ellip} instead of the result in \pref{lem:dev_fqe}.
\end{proof}

\section{Auxiliary Results}
\label{app:aux_lemmas}
In this section, we provide detailed proofs for auxiliary lemmas.
\subsection{Proof of \pref{lem:linMDP_expn}}\label{app:pf_lem1}
In this part, we provide the proof of \pref{lem:linMDP_expn} for completeness. This result is widely used throughout the paper.
\begin{lemma}[Restatement of \pref{lem:linMDP_expn}]
For a low-rank MDP $\Mcal$ with embedding dimension $d$, for any function $f : \Xcal \rightarrow [0,1]$, we have
\begin{align*}
    \EE \sbr{f(x_{h+1})|x_h,a_h } = \inner{\sphih}{\theta^*_f}
\end{align*}
where $\theta^*_f \in \R^d$ and we have $\|\theta^*_f\|_2 \le \sqrt{d}$. A similar linear representation is true for $\EE_{a \sim \pi_{h+1}}[f(x_{h+1}, a)|x_h,a_h]$ where $f: \Xcal \times \Acal \rightarrow [0,1]$ and a policy $\pi_{h+1}: \Xcal \rightarrow \Delta(\Acal)$.
\end{lemma}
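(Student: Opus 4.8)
The plan is to prove this directly from the low-rank factorization in \pref{def:lowrank}, since the statement is essentially a restatement of linearity of the Bellman backup. First I would write out the conditional expectation as an integral against the transition kernel and substitute the factorization $T_h(x'\mid x_h,a_h)=\inner{\sphih}{\mu^*_h(x')}$:
\begin{align*}
    \EE\sbr{f(x_{h+1})\mid x_h,a_h} = \int f(x') T_h(x'\mid x_h,a_h)\,dx' = \int f(x') \inner{\sphih}{\mu^*_h(x')}\,dx'.
\end{align*}
Pulling $\sphih$ out of the integral (it does not depend on $x'$) and exchanging the inner product with the integral gives $\inner{\sphih}{\theta^*_f}$ with the natural definition $\theta^*_f := \int f(x')\mu^*_h(x')\,dx'$.

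The only substantive point is the norm bound, and this is where I would lean on the second normalization condition built into \pref{def:lowrank}, namely that $\nbr{\int g(x)\mu^*(x)\,dx}_2\le\sqrt d$ for every $g:\Xcal\rightarrow[0,1]$. Since $f$ is assumed to take values in $[0,1]$, applying this condition with $g=f$ immediately yields $\nbr{\theta^*_f}_2\le\sqrt d$, completing the first claim.

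For the policy-averaged version, the idea is to reduce it to the first part. I would define $g(x'):=\EE_{a\sim\pi_{h+1}(x')}[f(x',a)]$ and observe that, because $f$ maps into $[0,1]$ and $g$ is a convex average of such values, we still have $g:\Xcal\rightarrow[0,1]$. Then $\EE_{a\sim\pi_{h+1}}[f(x_{h+1},a)\mid x_h,a_h]=\EE[g(x_{h+1})\mid x_h,a_h]$, and invoking the first part with $g$ in place of $f$ gives the desired linear representation $\inner{\sphih}{\theta^*_g}$ with $\nbr{\theta^*_g}_2\le\sqrt d$.

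Honestly there is no hard part here; the proof is a one-line computation followed by a direct appeal to the definition. If anything, the step requiring the most care is simply verifying that the relevant integrand ($f$, or $g$ in the second case) lies in $[0,1]$ so that the normalization condition of \pref{def:lowrank} is applicable, and implicitly that Fubini/interchange of integral and inner product is justified (which holds since $\phi^*_h$ has bounded norm, $f$ is bounded, and $T_h(\cdot\mid x_h,a_h)$ is a probability measure).
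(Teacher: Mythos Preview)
Your proof is correct and essentially identical to the paper's: both expand the conditional expectation via the low-rank factorization, define $\theta^*_f=\int f(x')\mu^*_h(x')\,dx'$, and invoke the normalization condition in \pref{def:lowrank} for the $\sqrt{d}$ bound. The only cosmetic difference is that for the policy-averaged case you reduce to the first part via $g(x')=\EE_{a\sim\pi_{h+1}(x')}[f(x',a)]$, whereas the paper writes the double integral directly; these yield the same $\theta^*$.
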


\begin{proof}
For state-value function $f$, we have 
\begin{align*}
    \EE \sbr{f(x_{h+1})|x_h,a_h } &= \int f(x_{h+1})T_{h}(x_{h+1}|x_h,a_h) d(x_{h+1})
    \\
    &=\int f(x_{h+1})\inner{\sphih}{\mu_h^*(x_{h+1})} d(x_{h+1})
    \\
    &=\inner{\sphih}{\int f(x_{h+1})\mu_h^*(x_{h+1})d(x_{h+1})}=\inner{\sphih}{\theta_f^*}, 
\end{align*}
where $\theta_f^*\coloneqq \int f(x_{h+1})\mu_h^*(x_{h+1})d(x_{h+1})$ is a function of $f$. Additionally, we obtain $\|\theta_f^*\|_2\le \sqrt d$ from \pref{def:lowrank}.

For Q-value function $f$, we similarly have
\begin{align*}
    \EE_{a\sim\pi_{h+1}} \sbr{f(x_{h+1},a)|x_h,a_h } =\inner{\sphih}{\theta_f^*}, 
\end{align*}
where $\theta_f^*\coloneqq \iint f(x_{h+1},a_{h+1})\pi(a_{h+1}|x_{h+1}) \mu_h^*(x_{h+1})d(x_{h+1})d(a_{h+1})$ and $\|\theta_f^*\|_2 \le \sqrt d$.
\end{proof}

\subsection{Deviation Bounds for Regression with Squared Loss}
\label{app:dev_bound}
In this section, we derive a generalization error bound for squared loss for a class $\Fcal$ which subsumes the discriminator classes $\Fcal_{h}$ and $\Gcal_h$ in the main text. In this section, we prove the bounds for a prespecified $h\in[H]$ and drop $h$ and $h+1$ subscripts for simplicity. When we apply \pref{lem:fastrate_sqloss_fqe_1} in other parts of the paper, usually $\rbr{x^{(i)}, a^{(i)}, x'^{(i)}}$ tuples stands for $\rbr{x_{h}^{(i)}, a_h^{(i)}, x_{h+1}^{(i)}}$ tuples, function classes $\Phi,\Phi'$ refers to $\Phi_h,\Phi_{h+1}$, and $\Rcal$ refers to $\Rcal_h$. We abuse the notations of $\Fcal,\Gcal,\Phi,\Rcal,h,w, \Theta, \varepsilon$ and they have different meanings from the main text. 

\begin{lemma}
\label{lem:fastrate_sqloss_fqe_1}
For a dataset $\Dcal \coloneqq \cbr{\rbr{x^{(i)}, a^{(i)}, x'^{(i)}}}_{i=1}^n \sim \rho$, finite feature classes $\Phi$ and $\Phi'$, and a  finite reward function class $\Rcal\subseteq ( \Xcal \times \Acal \rightarrow [0,1])$, we can show that, with probability at least $1-\delta$,
\begin{align*}
    & \abr{\Lcal_{\rho}( \phi,w,V) - \Lcal_{\rho}(  \phi^*,\theta^*_V,V) - \rbr{ \Lcal_{\Dcal}( \phi,w,V) - \Lcal_{\Dcal}( \phi^*,\theta^*_V,V)}} \\
    \le{}& \frac{1}{2} \rbr{ \Lcal_{\rho}( \phi,w,V) - \Lcal_{\rho}( \phi^*,\theta^*_V,V)} + \frac{\ca K d\log^2(Kd)(B+L\sqrt d)^2\log\rbr{n|\Phi||\Phi'||\Rcal|/\delta}}{n}
\end{align*}
for all $\phi \in \Phi$, $\|w\|_2 \le B, V\in\Vcal \coloneqq \{\EE_{a'\sim \pi'(x')}[f(x',a')]: f\in\Fcal,\pi'\in\Pi'\}$, where $\Fcal \coloneqq \{f(x',a') = \mathrm{clip}_{[0,L]} (R(x',$ $a') + \langle\phi'(x',a'),\theta\rangle): \phi' \in \Phi', \|\theta\|_2 \le B, R \in \Rcal\}$, $\Pi'=\{\argmax_{a'}(\langle \phi'(x',a'),\theta \rangle),\phi'\in\Phi',\|\theta\|_2 \le B\}$, and $\ca$ is some universal constant.
\end{lemma}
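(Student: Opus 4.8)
The plan is to prove a Bernstein-type (``fast-rate'') deviation bound for the squared loss, exploiting the realizability afforded by \pref{lem:linMDP_expn}. First I would observe that, for any fixed $V \in \Vcal$, the population minimizer $\inner{\phi^*(x,a)}{\theta^*_V}$ equals the Bayes-optimal predictor $\EE\sbr{V(x') \mid x,a}$ exactly, by \pref{lem:linMDP_expn} (the functions in $\Vcal$ are clipped and bounded in $[0,L]$, so the lemma applies with $\nbr{\theta^*_V}_2 \le L\sqrt d$). Introduce the per-sample excess-loss variable
\begin{align*}
Z = \rbr{\inner{\phi(x,a)}{w} - V(x')}^2 - \rbr{\inner{\phi^*(x,a)}{\theta^*_V} - V(x')}^2.
\end{align*}
Conditioning on $(x,a)$ and using that $\inner{\phi^*}{\theta^*_V}$ is the conditional mean of $V(x')$, the cross term vanishes and $\EE\sbr{Z \mid x,a} = \rbr{\inner{\phi}{w} - \inner{\phi^*}{\theta^*_V}}^2$. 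Hence $\EE\sbr{Z} = \Lcal_\rho(\phi,w,V) - \Lcal_\rho(\phi^*,\theta^*_V,V)$ is exactly the population excess risk and is nonnegative.

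The second step is the variance-mean relationship that powers the fast rate. Writing $Z$ as a product $\rbr{\inner{\phi}{w} - \inner{\phi^*}{\theta^*_V}}\rbr{(\inner{\phi}{w} - V) + (\inner{\phi^*}{\theta^*_V} - V)}$ and bounding the second factor by $O(B + L\sqrt d)$ using $\nbr{\phi}_2,\nbr{\phi^*}_2 \le 1$, $\nbr{w}_2\le B$, $\nbr{\theta^*_V}_2 \le L\sqrt d$ and $V \in [0,L]$, I obtain $\VV\sbr{Z} \le \EE\sbr{Z^2} \le c\,(B + L\sqrt d)^2\, \EE\sbr{Z}$. For a single fixed $(\phi,w,V)$, one-sided Bernstein together with a weighted AM-GM split then absorbs the variance term into $\tfrac12\rbr{\Lcal_\rho(\phi,w,V) - \Lcal_\rho(\phi^*,\theta^*_V,V)}$ and leaves a residual of order $(B+L\sqrt d)^2\log(1/\delta)/n$, which is precisely the shape of the claimed bound.

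To make the bound hold uniformly over $\phi \in \Phi$, $\nbr{w}_2 \le B$ and $V \in \Vcal$, I would invoke the uniform Bernstein concentration result (\pref{corr:uni_bern_conf}) rather than a crude single-scale cover, since a naive cover destroys the multiplicative structure needed to re-absorb the variance into $\tfrac12$ of the excess risk. The continuous parameters $w$ and $\theta$ each range over a $d$-dimensional Euclidean ball, contributing the $d\,\log(\cdot)$ factor, while a union bound over the finite index sets $\Phi$, $\Phi'$, and $\Rcal$ supplies the $\log\rbr{n|\Phi||\Phi'||\Rcal|/\delta}$ term.

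The main obstacle is controlling the complexity of $\Vcal$ itself: each $V$ is $\EE_{a'\sim\pi'}$ of a clipped linear function, where $\pi'$ is a greedy $\argmax$ over $K$ actions. I would handle the clipping via its $1$-Lipschitz contraction property (so that covering the pre-clip linear functions suffices) and handle the greedy policy by covering the parameter driving the $\argmax$; the composition of the max over $K$ actions with a multi-scale cover is what produces the extra $K$ and $\log^2(Kd)$ factors. Threading the variance-mean inequality through this covering while preserving the fast $1/n$ rate---that is, ensuring the absorption into $\tfrac12 \EE\sbr{Z}$ survives the union bound uniformly over the infinite parameter set---is the delicate part, and is exactly where \pref{corr:uni_bern_conf} does the heavy lifting.
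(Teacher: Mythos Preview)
Your proposal is structurally correct and closely matches the paper's argument: the realizability via \pref{lem:linMDP_expn}, the variance--mean self-bounding $\VV[Z] \lesssim (B+L\sqrt d)^2\,\EE[Z]$, the invocation of the uniform Bernstein bound \pref{corr:uni_bern_conf}, and the union bound over the finite index sets $\Phi,\Phi',\Rcal$ are all exactly what the paper does.

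The one place where your sketch diverges is in how the greedy policy $\pi' = \argmax_{a'}\langle\phi'(\cdot,a'),\theta\rangle$ is handled. You propose to ``cover the parameter driving the $\argmax$,'' but this does not work directly: the $\argmax$ is discontinuous in $\theta$, so an $\ell_2$ cover of $\theta$ does not yield an $\ell_1$ (or $\ell_\infty$) cover of the resulting $V$ functions---two arbitrarily close $\theta$'s can induce different actions on a set of states of nonvanishing measure, and $f(x',\pi'(x'))$ can then jump by $O(L)$. The paper instead writes each hypothesis in the form containing $\sum_{a'\in\Acal} f(x',a')\one[a'=\pi'(x')]$ (cf.\ \Cref{eq:formula_h}), bounds the Natarajan dimension of the linear multiclass policy class $\Pi'(\phi'_{\Pi'})$ by $O(d\log d)$, and then applies the composition rules of \pref{lem:pseudodim_com} to bound the pseudo-dimension of the full loss-difference class by $O(Kd\log^2(Kd))$; this pseudo-dimension is what is fed into \pref{corr:uni_bern_conf}. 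So the $K\log^2(Kd)$ factor you correctly anticipate arises from a combinatorial complexity argument (Natarajan dimension plus composition), not from a metric cover of the $\argmax$ parameter.
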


\begin{proof}
Firstly recall that from \pref{lem:linMDP_expn}, for any $V\in\Vcal$, we have $\EE[V(x')|x,a]=\langle \phi^*(x,a),\theta^*_V \rangle$ with $\|\theta^*_V\|_2\le L\sqrt d$. From the definition, we can decompose the policy class $\Pi'$ and the value function class $\Fcal$ according to corresponding features and rewards. We have that $\Pi'=\bigcup_{\phi'_{\Pi'}\in\Phi'}\Pi'(\phi'_{\Pi'})$ and $\Fcal=\bigcup_{\phi'_{\Fcal}\in\Phi',R\in\Rcal}\Fcal(\phi'_{\Fcal},R)$, where $\Pi'(\phi'_{\Pi'})\coloneqq\{\argmax_{a'}(\langle\phi_{\Pi'}'(x',a'),\theta \rangle):\|\theta\|_2 \le B\}$ and $\Fcal(\phi'_{\Fcal},R)\coloneqq \{f(x',a') = \mathrm{clip}_{[0,L]} (R(x',$ $a')  +\langle\phi_{\Fcal}'(x',a'),\theta\rangle): \|\theta\|_2 \le B\}$.

Then, we start with fixed $\phi,\phi_{\Pi'}',\phi'_{\Fcal},R$ and show the concentration result related to $\phi,\Fcal(\phi'_{\Fcal},R)$ and $\Pi'(\phi'_{\Pi'})$. We also define function classes $\Gcal_1(\phi)=\{\langle\phi,w_1\rangle:\|w_1\|_2\le B\},$ and $\Gcal_2=\{\langle\phi^*,w_2\rangle:\|w_2\|_2\le L\sqrt d\}$. From the fact of pseudo dimension of linear function class and Natarajan dimension \citep[Theorem 21]{daniely2011multiclass} and notice that the usage of clipping in $\Fcal(\phi'_{\Fcal},R)$ will not increase the pseudo dimension, we know that 
\begin{align*}
\Pdim(\Fcal(\phi'_{\Fcal},R))\le  d+1,\quad  \Pdim(\Gcal_1)\le d, \quad \Pdim(\Gcal_2)\le   d, \quad \Ndim(\Pi'(\phi'_{\Pi'}))\le c _1 d\log (d),
\end{align*}
where $c_1>0$ is some universal constant.

Consider the hypothesis class $\Hcal(\phi,\phi_{\Pi'}',\phi'_{\Fcal},R)$
\begin{align}
    \Hcal(\phi,\phi_{\Pi'}',\phi'_{\Fcal},R)&:=\Big\{(x,a,x')\rightarrow (g_1(x,a)-f(x',\pi'))^2 - (g_2(x,a)-f(x',\pi'))^2:
    \nonumber\\
    & \quad\quad f\in\Fcal(\phi'_{\Fcal},R),g_1\in\Gcal_1,g_2\in\Gcal_2, \pi'\in\Pi'(\phi_{\Pi'}')\Big\}.\label{eq:formula_h}
\end{align}
For any $h\in\Hcal(\phi,\phi_{\Pi'}',\phi'_{\Fcal},R)$, we can write it as
\begin{align*}
h(x,a,x')=(g_1(x,a))^2-(g_2(x,a))^2-\rbr{ 2\sum_{a'\in\Acal}f(x',a')\one[a'=\pi'(x')]} \rbr{g_1(x,a) - g_2(x,a)},
\end{align*}
where $\one[\cdot]$ is the indicator function. 

Here $\Hcal(\phi,\phi_{\Pi'}',\phi'_{\Fcal},R)$ is a composited class of $\Fcal(\phi'_{\Fcal},R), \Gcal_1,\Gcal_2, \Pi'(\phi_{\Pi'}')$ and the compositions belong to \pref{lem:pseudodim_com}. Hence we obtain that 
\[
\Pdim(\Hcal(\phi,\phi_{\Pi'}',\phi'_{\Fcal},R))\le c_2 K d\log^2(Kd)=:d',
\]
where $c_2$ is some universal constant. We briefly discuss how to use the compositions to bound the pseudo dimension of the most complex term $\sum_{a'\in\Acal}f(x',a')\one[a'=\pi'(x')]$, since other compositions are easy to see. We first need to augment $a'$ to the domain (i.e., use domain $(x',a')$, whether keeping $(x,a)$ does not matter because it do not depend on $(x,a)$). Then notice that for any fixed $\tilde a'\in\Acal$, $f(x',\tilde a')\one[\tilde a'=\pi'(x')]$ is a map from $(x',a')$ to $\RR$, we can apply part 3 of \pref{lem:pseudodim_com}. Finally, by part 2 of \pref{lem:pseudodim_com}, we take the summation over $\Acal$ we get the final bound. The term $a'$ does not show up in the domain of $\Hcal$ as its dependence disappears after we take the summation over $\Acal$.


For any $h\in\Hcal(\phi,\phi_{\Pi'}',\phi'_{\Fcal},R)$, its range is bounded as $|h(\cdot)|\le 4(B+L\sqrt d)^2$. By \pref{corr:uni_bern_conf}, with probability at least $1-\delta'$, we have the concentration result 
\begin{align*}
	&~\abr{\EE[h(x,a,x')]-\frac{1}{n}\sum_{i=1}^n h\rbr{x^{(i)},a^{(i)},x'^{(i)}}} 
	\\
	\le&~\sqrt{\frac{768d'\VV[h(x,a,x')]\log\rbr{n/\delta'}}{n}}+\frac{6144d'(B+L\sqrt d)^2\log\rbr{n/\delta'}}{n}  \;\;\forall h\in\Hcal(\phi,\phi_{\Pi'}',\phi'_{\Fcal},R).
    \end{align*}

Union bounding over $\phi_{\Pi'}'\in\Phi',\phi'_{\Fcal}\in\Phi',R\in\Rcal$, we get that for any fixed $\phi$ and any $\|w_1\|_2\le B, V\in\Vcal$, with probability at least $1-|\Phi'|^2|\Rcal|\delta'$ we have 
\begin{align}
\label{eq:fqe_bern_1}
    & \abr{\Lcal_{\rho}( \phi,w_1,V) - \Lcal_{\rho}(  \phi^*,\theta^*_V,V) - \rbr{ \Lcal_{\Dcal}( \phi,w_1,V) - \Lcal_{\Dcal}( \phi^*,\theta^*_V,V)}} 
    \notag\\
    \le&~\sqrt{\frac{768d'\VV[(\langle\phi(x,a),w_1\rangle-V(x'))^2 - (\langle \phi^*(x,a),\theta^*_V\rangle -V(x'))^2]\log\rbr{n/\delta'}}{n}}
    \notag\\
    &\quad +\frac{6144d' (B+L\sqrt d)^2\log\rbr{n/\delta'}}{n}.
\end{align}

For the variance term, we can bound it as the following
\begin{align*}
&~\VV[(\langle\phi(x,a),w_1\rangle-V(x'))^2 - (\langle \phi^*(x,a),\theta^*_V\rangle -V(x'))^2]
\\
\le&~\EE\sbr{\rbr{(\langle\phi(x,a),w_1\rangle-V(x'))^2 - (\langle \phi^*(x,a),\theta^*_V\rangle -V(x'))^2}^2}
\\
\le&~4(B+L\sqrt d)^2\EE\sbr{\rbr{\langle\phi(x,a),w_1\rangle-\langle\phi^*(x,a),\theta^*_V\rangle}^2}\\
=&~4(B+L\sqrt d)^2\rbr{\Lcal_{\rho}( \phi,w_1,V) - \Lcal_{\rho}( \phi^*,\theta^*_V,V)}.
\end{align*}

Plugging this into \Cref{eq:fqe_bern_1} and invoking AM-GM inequality gives us that for any fixed $\phi$ and any $\|w_1\|_2\le B, V\in\Vcal$, with probability at least $1-|\Phi'|^2|\Rcal|\delta'$ 
\begin{align*}
&\abr{\Lcal_{\rho}( \phi,w_1,V) - \Lcal_{\rho}(  \phi^*,\theta^*_V,V) - \rbr{ \Lcal_{\Dcal}( \phi,w_1,V) - \Lcal_{\Dcal}( \phi^*,\theta^*_V,V)}} 
\notag\\
\le&~\frac{1}{2}\rbr{\Lcal_{\rho}( \phi,w_1,V) - \Lcal_{\rho}( \phi^*,\theta^*_V,V)}+\frac{(768+6144)d'(B+L\sqrt d)^2\log\rbr{n/\delta'}}{n}.
\end{align*}

The final bound is obtained by union bounding over $\phi\in\Phi$, setting $\delta'=\delta/(|\Phi||\Phi'|^2|\Rcal|)$, and noticing the definition of $d'$.
\end{proof}
\begin{lemma}
\label{lem:fastrate_sqloss_fqe_2}
For a dataset $\Dcal \coloneqq \cbr{\rbr{x^{(i)}, a^{(i)}, x'^{(i)}}}_{i=1}^n \sim \rho$, finite feature classes $\Phi$ and $\Phi'$, and a  finite reward function class $\Rcal\subseteq( \Xcal \times \Acal \rightarrow [0,1])$, we can show that, with probability at least $1-\delta$,
\begin{align*}
    & \abr{\Lcal_{\rho}( \phi,w,V) - \Lcal_{\rho}(  \phi^*,\theta^*_V,V) - \rbr{ \Lcal_{\Dcal}( \phi,w,V) - \Lcal_{\Dcal}( \phi^*,\theta^*_V,V)}} \\
    \le{}& \frac{1}{2} \rbr{ \Lcal_{\rho}( \phi,w,V) - \Lcal_{\rho}( \phi^*,\theta^*_V,V)} + \frac{\cb K d^2\log^2(Kd)(B+L\sqrt d)^2\log\rbr{n|\Phi||\Phi'||\Rcal|/\delta}}{n}
\end{align*}
for all $\phi \in \Phi$, $\|w\|_2 \le B, V\in\Vcal \coloneqq \{\EE_{a'\sim \pi'(x')}[f(x',a')]: f\in\Fcal,\pi'\in\Pi'\}$, where $\Fcal \coloneqq \{f(x',a') = \mathrm{clip}_{[0,L]} (R(x',$ $a') + \langle\phi'(x',a'),\theta\rangle): \phi' \in \Phi', \|\theta\|_2 \le B, R \in \Rcal\}$, $\Pi'=\{\argmax_{a'}(f^{\textsc{ell}}(x',a')):f^{\textsc{ell}}\in\{\phi^{'\top} \Gamma^{-1} \phi':\phi'\in\Phi', \Gamma\in\RR^{d\times d}, \lambda_{\min} (\Gamma)\ge 1\}\}$, and $\cb$ is some universal constant.
\end{lemma}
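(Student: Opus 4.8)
The plan is to follow the proof of \pref{lem:fastrate_sqloss_fqe_1} essentially verbatim, since the two statements are identical in form: both are one-sided (variance-sensitive) Bernstein deviation bounds for the squared-loss difference $\Lcal(\phi,w,V)-\Lcal(\phi^*,\theta^*_V,V)$, uniform over $\phi\in\Phi$, $\|w\|_2\le B$, and $V\in\Vcal$. The only structural change is the policy class $\Pi'$ defining $\Vcal$: here the greedy action is taken with respect to the \emph{elliptical} reward $\phi'(x',a')^\top\Gamma^{-1}\phi'(x',a')$ rather than the linear reward $\inner{\phi'(x',a')}{\theta}$. Accordingly I would reuse, unchanged, the decomposition of $\Pi'$ and $\Fcal$ by their underlying features and rewards, the hypothesis class $\Hcal(\phi,\phi'_{\Pi'},\phi'_{\Fcal},R)$ of \Cref{eq:formula_h} and its rewriting via $\one[a'=\pi'(x')]$, the composition bound \pref{lem:pseudodim_com}, and the uniform Bernstein inequality \pref{corr:uni_bern_conf}. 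The entire chain goes through once I update the pseudo-dimension $d'$ of $\Hcal$.

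The single new ingredient is the Natarajan dimension of the elliptical-greedy policy class. The key observation I would record is that $\phi'(x',a')^\top\Gamma^{-1}\phi'(x',a')=\inner{\mathrm{vec}(\phi'(x',a')\phi'(x',a')^\top)}{\mathrm{vec}(\Gamma^{-1})}$ is a \emph{linear} function of the lifted $d^2$-dimensional feature $\mathrm{vec}(\phi'\phi'^\top)$, parametrized by the $d^2$ entries of $\Gamma^{-1}$. Hence the induced $\argmax$-over-actions policy class is a multiclass predictor built from a $d^2$-dimensional linear scoring family, and the same tool used in the linear case \citep[Theorem 21]{daniely2011multiclass} gives $\Ndim(\Pi'(\cdot))\le c\, d^2\log d$ in place of the $c\, d\log d$ bound of \pref{lem:fastrate_sqloss_fqe_1}. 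The pseudo-dimensions of $\Fcal(\phi'_{\Fcal},R)$, $\Gcal_1$, $\Gcal_2$ remain $O(d)$. Feeding these into \pref{lem:pseudodim_com} exactly as before, the only composed term whose complexity grows is the one involving $\one[a'=\pi'(x')]$, so the resulting pseudo-dimension becomes $d'=O(K d^2\log^2(Kd))$, i.e.\ one extra factor of $d$ relative to the previous lemma, which is precisely the discrepancy between the two stated bounds.

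With this $d'$ in hand the remainder is mechanical and copied from \pref{lem:fastrate_sqloss_fqe_1}: apply \pref{corr:uni_bern_conf} to $\Hcal$ (whose range is bounded by $4(B+L\sqrt d)^2$), bound the variance of the loss difference by $4(B+L\sqrt d)^2\big(\Lcal_\rho(\phi,w,V)-\Lcal_\rho(\phi^*,\theta^*_V,V)\big)$, use AM-GM to absorb the square-root term into $\tfrac12$ of the population excess risk, and finally union-bound over $\phi\in\Phi$, $\phi'_{\Pi'},\phi'_{\Fcal}\in\Phi'$, and $R\in\Rcal$ with $\delta'=\delta/(|\Phi||\Phi'|^2|\Rcal|)$. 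I expect no difficulty in these steps. The one place demanding care—and the main obstacle—is the $\Ndim\le O(d^2\log d)$ claim: I must verify that the lift $\phi'\mapsto\mathrm{vec}(\phi'\phi'^\top)$ renders the elliptical score genuinely linear so the multiclass Natarajan machinery applies verbatim, and note that the constraint $\lambda_{\min}(\Gamma)\ge 1$ only rescales the score and therefore imposes no additional restriction on the dimension count.
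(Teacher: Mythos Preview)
Your proposal is correct and matches the paper's own proof essentially line for line: the paper also reduces everything to the previous lemma, lifts $\phi'\mapsto\mathrm{vec}(\phi'\phi'^{\top})$ so that the elliptical score is linear in a $d^2$-dimensional feature, bounds $\Ndim(\Pi'(\phi'_{\Pi'}))\le c\,d^2\log(d^2)$, and then notes this costs exactly one extra factor of $d$ in the final deviation term.
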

\begin{proof}
The proof is almost the same as the proof of \pref{lem:fastrate_sqloss_fqe_1}. The only difference is that we have a different policy class $\Pi'$. It can be decomposed as $\Pi'=\bigcup_{\phi'_{\Pi'}\in\Phi'}\Pi'(\phi'_{\Pi'})$, where $\Pi'(\phi'_{\Pi'})=\{\argmax_{a'}(f^{\textsc{ell}}(x',a')):f^{\textsc{ell}}\in\{\phi^{'\top}_{\Pi'}\Gamma^{-1} \phi'_{\Pi'}:\Gamma\in\RR^{d\times d}, \lambda_{\min} (\Gamma)\ge 1\}\}$. We can show that $\Pi'(\phi'_{\Pi'})$ can be represented as the greedy policy of a linear class with dimension $d^2$
\begin{align*}
\Pi'(\phi'_{\Pi'})\subseteq \{\argmax_{a'}\langle\varphi'_{\phi'_{\Pi'}}(x',a'),w\rangle:w\in\RR^{d^2}\},
\end{align*}
where $\varphi'_{\phi'_{\Pi'}}$ is a $d^2$ dimension function and for any $i,j\in[d]$, $\varphi'_{\phi'_{\Pi'}}(x',a')[i+jd]=\phi'_{\Pi'}(x',a')[i]$ $\phi'_{\Pi'}(x',a')[j]$.

Therefore, we have $\Ndim(\Pi'(\phi'_{\Pi'}))\le c_3 d^2\log(d^2)$. Following the steps in \pref{lem:fastrate_sqloss_fqe_1}, it is easy to see that we only need to pay such additional $d$ factor in the final result.
\end{proof}
\begin{lemma}
\label{lem:fqi_fastrate_unclipped}
For a dataset $\Dcal \coloneqq \cbr{\rbr{x^{(i)}, a^{(i)}, x'^{(i)}}}_{i=1}^n \sim \rho$, finite feature classes $\Phi$ and $\Phi'$ and finite reward function class $\Rcal\subseteq(\Xcal \times \Acal \rightarrow [0,1])$, we can show that, with probability at least $1-\delta$
\begin{align*}
    & \abr{\Lcal_{\rho}( \phi,w,V) - \Lcal_{\rho}(  \phi^*,\theta^*_f,V) - \rbr{ \Lcal_{\Dcal}( \phi,w,V) - \Lcal_{\Dcal}( \phi^*,\theta^*_V,V)}} \\
    \le{}& \frac{1}{2} \rbr{ \Lcal_{\rho}( \phi,w,V) - \Lcal_{\rho}( \phi^*,\theta^*_V,V)} + \frac{\cc d(B+L\sqrt{d})^2 \log (n(B+L\sqrt d)|\Phi||\Phi'||\Rcal|/\delta)}{n}
\end{align*}
for all $\phi \in \Phi$, $\|w\|_2 \le B, V\in\Vcal \coloneqq \{\mathrm{clip}_{[0,L]} (\EE_{a'\sim \pi_f'(x')}[R(x',a') + \langle\phi'(x',a')],\theta\rangle): \phi' \in \Phi', \|\theta\|_2 \le B, R \in \Rcal\}$, where $\pi_f'$ is a policy determined by $f$ (e.g.., its induced greedy policy or the uniform policy), and $\cc$ is some universal constant.
\end{lemma}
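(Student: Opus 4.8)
The plan is to mirror the argument already used for \pref{lem:fastrate_sqloss_fqe_1} and \pref{lem:fastrate_sqloss_fqe_2}, localizing a uniform Bernstein concentration bound over a suitably constructed class of squared-loss differences. First I would invoke \pref{lem:linMDP_expn}: since every $V\in\Vcal$ maps into $[0,L]$, the conditional mean $\EE[V(x')\mid x,a]$ equals $\inner{\phi^*(x,a)}{\theta^*_V}$ for some $\theta^*_V$ with $\|\theta^*_V\|_2\le L\sqrt d$, so that $g_2:=\inner{\phi^*}{\theta^*_V}$ is the exact population regressor for the target $V$. This realizer is what produces the Pythagorean identity $\Lcal_\rho(\phi,w,V)-\Lcal_\rho(\phi^*,\theta^*_V,V)=\EE[(\inner{\phi}{w}-\inner{\phi^*}{\theta^*_V})^2]$, i.e.\ the excess risk equals the expected squared difference of the two regressors, which is the quantity we want to localize against.

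Next I would fix $\phi\in\Phi$, $\phi'\in\Phi'$ and $R\in\Rcal$ and form the composite hypothesis class $\Hcal(\phi,\phi',R)$ of functions $h(x,a,x')=(g_1(x,a)-V(x'))^2-(g_2(x,a)-V(x'))^2$, where $g_1$ ranges over $\{\inner{\phi}{w}:\|w\|_2\le B\}$, $g_2$ over $\{\inner{\phi^*}{\theta^*_V}\}$, and $V$ over the slice of $\Vcal$ obtained by varying $\theta$. Expanding $h=g_1^2-g_2^2-2V(g_1-g_2)$ shows each $h$ is bounded in absolute value by $4(B+L\sqrt d)^2$. The key structural point distinguishing this lemma from \pref{lem:fastrate_sqloss_fqe_1} is that the clip is applied \emph{outside} the expectation and the evaluation policy $\pi_f'$ is induced by the same parameters as the Q-function, so $V$ is a clipped max (or clipped average) over the $K$ actions of the \emph{unclipped} linear-plus-reward map $R(x',a')+\inner{\phi'(x',a')}{\theta}$. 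Using the pseudo-dimension of linear classes, the Natarajan-dimension bound for the induced greedy policy, and the composition rules of \pref{lem:pseudodim_com}, I would bound $\Pdim(\Hcal(\phi,\phi',R))\le c\,Kd\log^2(Kd)=:d'$. Because the policy and the Q-function now share the feature $\phi'$, only a single factor of $|\Phi'|$ (rather than $|\Phi'|^2$) appears in the subsequent union bound, which is exactly what the stated $\log(\,\cdot\,|\Phi||\Phi'||\Rcal|/\delta)$ term reflects.

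With the pseudo-dimension in hand I would apply the uniform Bernstein inequality \pref{corr:uni_bern_conf} to $\Hcal(\phi,\phi',R)$, then union bound over $\phi'\in\Phi'$, $R\in\Rcal$ and $\phi\in\Phi$. To convert the variance term into a fast rate I would write $h=(g_1-g_2)(g_1+g_2-2V)$, bound $|g_1+g_2-2V|\le 2(B+L\sqrt d)$, and use that $g_2$ is the true regressor to get $\VV[h]\le\EE[h^2]\le 4(B+L\sqrt d)^2\rbr{\Lcal_\rho(\phi,w,V)-\Lcal_\rho(\phi^*,\theta^*_V,V)}$. A weighted AM--GM step then splits the $\sqrt{\VV[h]\log(\cdots)/n}$ contribution into $\tfrac12\rbr{\Lcal_\rho(\phi,w,V)-\Lcal_\rho(\phi^*,\theta^*_V,V)}$ plus a term of order $d'(B+L\sqrt d)^2\log(\cdots)/n$; setting $\delta'=\delta/(|\Phi||\Phi'||\Rcal|)$ and substituting the value of $d'$ yields the claimed bound.

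The main obstacle I anticipate is the pseudo-dimension computation for $\Hcal$: the target $V$ couples a clip, a maximum (or average) over $K$ actions, a reward lookup, and an inner product, and obtaining the $Kd\log^2(Kd)$ bound cleanly requires careful bookkeeping through the composition lemma --- in particular augmenting the action $a'$ into the domain to handle the term $\sum_{a'}f(x',a')\one[a'=\pi_f'(x')]$ before summing it away, exactly as in the proof of \pref{lem:fastrate_sqloss_fqe_1}. Everything else (the Bernstein application, the variance localization, and the nested union bounds) is routine once this dimension estimate is in place.
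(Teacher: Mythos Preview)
Your overall skeleton --- Bernstein concentration on the difference-of-squared-losses class, variance localization via the Pythagorean identity, then AM--GM --- matches the paper, but the way you propose to control the complexity of $\Hcal$ does not recover the stated bound. Going through \pref{lem:pseudodim_com} and the Natarajan-dimension argument gives $\Pdim(\Hcal)\le cKd\log^2(Kd)$, and substituting this leaves an additive term of order $Kd\log^2(Kd)(B+L\sqrt d)^2\log(\cdots)/n$. The lemma claims only $d$ in place of $Kd\log^2(Kd)$, and $K$ is a problem parameter that cannot be absorbed into the universal constant $\cc$. A telltale sign that the intended argument is different is the factor $(B+L\sqrt d)$ appearing \emph{inside} the logarithm in the statement: pseudo-dimension is scale-free and would never produce such a term.

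The paper bypasses pseudo-dimension entirely and builds an explicit $\ell_\infty$ cover of $\Hcal$. The key observation is that when $\pi_f'$ is greedy or uniform with respect to the same $(\phi',\theta,R)$, the target $V(x')=\mathrm{clip}_{[0,L]}\bigl(\max_{a'}[R(x',a')+\inner{\phi'(x',a')}{\theta}]\bigr)$ (or the averaged version) is $1$-Lipschitz in $\theta$ for every $x'$, since $\max$, average, and clip are all $1$-Lipschitz and $\|\phi'\|_2\le 1$. Hence an $\ell_2$ $\gamma$-net for $\{\theta:\|\theta\|_2\le B\}$ of size $(2B/\gamma)^d$, together with analogous nets for $w$ and for $\theta^*_V$, yields a finite $\overline\Hcal$ with $|\overline\Hcal|\le|\Rcal||\Phi||\Phi'|(2B/\gamma)^{2d}(2L\sqrt d/\gamma)^d$ that $\ell_\infty$-approximates $\Hcal$ to within $O(\gamma(B+L\sqrt d))$. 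This gives $\Ncal_1(\varepsilon,\Hcal,n)\le|\Rcal||\Phi||\Phi'|\bigl(32(B+L\sqrt d)^2/\varepsilon\bigr)^{3d}$, which the paper plugs directly into the Bernstein computation underlying \pref{corr:uni_bern_conf}, skipping the pseudo-dimension-to-covering step. The $3d$ exponent is what delivers the clean $d$ factor, and $\log((B+L\sqrt d)^2/\varepsilon)$ is what places $(B+L\sqrt d)$ inside the logarithm.
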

\begin{proof}
This proof mostly follows from \pref{lem:fastrate_sqloss_fqe_1}. The main difference is that we do not establish the bound of the covering number through the pseudo dimension, while here we directly show it. We first similarly define the hypothesis class $\Hcal$ as
\begin{align*}
    \Hcal:=\cbr{(x,a,x')\rightarrow (g_1(x,a)-f(x',\pi'))^2 - (g_2(x,a)-f(x',\pi'))^2:
     f\in\Fcal,g_1\in\Gcal_1,g_2\in\Gcal_2},
\end{align*}
where $\Gcal_1=\{\langle\phi,w_1\rangle:\|w_1\|_2\le B,\phi\in\Phi\},$ $\Gcal_2=\{\langle\phi^*,w_2\rangle:\|w_2\|_2\le L\sqrt d\}$, and $\Fcal\coloneqq \{\mathrm{clip}_{[0,L]}(\EE_{a'\sim\pi'_f(x')}[R(x',$ $a') 
+\langle\phi'(x',a'),\theta\rangle]): \phi'\in\Phi', \|\theta\|_2 \le B,R\in\Rcal\}$.

From the standard covering result, we know that there exists an $\ell_2$ cover $\overline{\Wcal}_1$ of $\Wcal_1=\{w_1:\|w_1\|_2\le B\}$ with scale $\gamma$ of size $\rbr{\frac{2B}{\gamma}}^d$. Similarly we construct an $\ell_2$ cover of $\overline{\Wcal}_2=\{w_2:\|w_2\|_2\le L\sqrt d\}$ and an $\ell_2$ cover $\overline\Theta$ of $\Theta=\{\theta:\|\theta\|_2\le B\}$. They both have scale $\gamma$ and but with different sizes $\rbr{\frac{2L\sqrt d}{\gamma}}^d$ and $\rbr{\frac{2B}{\gamma}}^d$ respectively.

Then we define $\overline\Gcal_1=\{\langle\phi,w_1\rangle:w_1\in\overline{\Wcal}_1,\phi\in\Phi\},$ $\overline\Gcal_2=\{\langle\phi^*,w_2\rangle:w_2\in \overline{\Wcal}_2\}$, and $\overline\Fcal\coloneqq \{ \mathrm{clip}_{[0,L]} (\EE_{a'\sim\pi'_f(x')}[R(x',$ $a') 
+\langle\phi'(x',a'),\theta\rangle]): \phi'\in\Phi', \theta\in\overline\Theta,R\in\Rcal\}$. 
Now we define a function class
\begin{align*}
    \overline\Hcal:=\Big\{(x,a,x')\rightarrow (g_1(x,a)-f(x',\pi'))^2 - (g_2(x,a)-f(x',\pi'))^2:
     f\in\overline\Fcal,g_1\in\overline\Gcal_1,g_2\in\overline\Gcal_2\Big\}.
\end{align*}

It is easy to see that for any $h\in\Hcal$, there exists $h'\in\overline\Hcal$ such that $\|h-h'\|_\infty\le 16\gamma(B+L\sqrt d)$ and $|\overline \Hcal|=|\Rcal||\Phi||\Phi'|\rbr{\frac{2B}{\gamma}}^{2d}\rbr{\frac{2L \sqrt d}{\gamma}}^d$. From the definition of $\ell_1$ cover (\pref{def:covnum}), we can see that $\overline\Hcal$ is a $16\gamma(B+L\sqrt d)$ resolution $\ell_1$ cover of $\Hcal$. By setting $\gamma=\frac{\veps}{16(B+L\sqrt d)}$, we get that for any $\varepsilon,n$, 
\begin{align}
\Ncal_1(\varepsilon,\Hcal,n)\le |\Rcal||\Phi||\Phi'|\rbr{2B/\gamma}^{2d}\rbr{2L \sqrt d/\gamma}^d \le |\Rcal||\Phi||\Phi'|\rbr{32(B+L\sqrt d)^2/\veps}^{3d}.
\label{eq:l1_cover_H}
\end{align}

In \pref{corr:uni_bern_conf}, instead of bounding the covering number from the pseudo dimension, we now directly substitute \Cref{eq:l1_cover_H} into \Cref{eq:bernstein_n_eq}. Following the remaining steps in \pref{corr:uni_bern_conf}, with probability at least $1-\delta$, for any $h\in\Hcal$ we have
\begin{align*}
	\Big|\EE[h(x,a,x')]-\frac{1}{n}\sum_{i=1}^n h\Big(x^{(i)},a^{(i)},&x^{'(i)}\Big)\Big|\le\sqrt{\frac{\cc' d \VV[h(z)]\log\rbr{n(B+L\sqrt d)|R||\Phi||\Phi'|/\delta}}{n}}
	\\
	&+\frac{2\cc' d (B+L\sqrt d)^2\log\rbr{n(B+L\sqrt d)|R||\Phi||\Phi'|/\delta}}{n},
\end{align*}
where $\cc'$ is some universal constant.

Following remaining steps in \pref{lem:fastrate_sqloss_fqe_1} (without the union bound) completes the proof.
\end{proof}

\subsection{Generalized Elliptic Potential Lemma}
\label{app:gen_elliptic_pot}
The following lemma is adapted from Proposition 1 of \citet{alex2020elliptical}.
\begin{lemma}[Generalized elliptic potential lemma]
\label{lem:gen_elliptic_pot}
For any sequence of vectors $\theta^*_1, \theta^*_2,$ $\ldots, \theta^*_T \in \R^{d \times T}$ where $\|\theta^*_i\| \le L\sqrt{d}$, and any $\lambda \ge L^2d$, we have
\begin{align*}
    \sum_{t=1}^T \|\Sigma_t^{-1}\theta^*_{t+1}\|_2 \le 2\sqrt{\frac{dT}{\lambda}}.
\end{align*}
\end{lemma}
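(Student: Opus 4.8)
The claim is a generalized elliptic potential lemma: for vectors $\theta^*_1,\ldots,\theta^*_T$ with $\|\theta^*_i\|_2 \le L\sqrt{d}$ and $\lambda \ge L^2 d$, writing $\Sigma_t = A_t A_t^\top + \lambda I$ where $A_t = [\theta^*_1 \mid \cdots \mid \theta^*_t]$, we have $\sum_{t=1}^T \|\Sigma_t^{-1}\theta^*_{t+1}\|_2 \le 2\sqrt{dT/\lambda}$. Note the statement is cited as adapted from Proposition 1 of \citet{alex2020elliptical}, so the real work is in the reduction to a standard elliptic-potential argument rather than inventing something new.

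**Plan of attack.** The plan is to pass from the $\ell_2$ norm $\|\Sigma_t^{-1}\theta^*_{t+1}\|_2$ to the Mahalanobis norm $\|\theta^*_{t+1}\|_{\Sigma_t^{-1}}$, which is the quantity the classical lemma controls. First I would bound $\|\Sigma_t^{-1}\theta^*_{t+1}\|_2$ using the operator-norm estimate $\|\Sigma_t^{-1/2}\|_{\op} \le 1/\sqrt{\lambda}$ (since the smallest eigenvalue of $\Sigma_t$ is at least $\lambda$): this gives
\begin{align*}
\|\Sigma_t^{-1}\theta^*_{t+1}\|_2 = \|\Sigma_t^{-1/2}(\Sigma_t^{-1/2}\theta^*_{t+1})\|_2 \le \frac{1}{\sqrt{\lambda}}\,\|\Sigma_t^{-1/2}\theta^*_{t+1}\|_2 = \frac{1}{\sqrt{\lambda}}\,\|\theta^*_{t+1}\|_{\Sigma_t^{-1}}.
\end{align*}
Then I would apply Cauchy--Schwarz over the $T$ summands to replace the sum of square roots by $\sqrt{T}$ times a root-sum-of-squares:
\begin{align*}
\sum_{t=1}^T \|\Sigma_t^{-1}\theta^*_{t+1}\|_2 \le \frac{1}{\sqrt{\lambda}}\sum_{t=1}^T \|\theta^*_{t+1}\|_{\Sigma_t^{-1}} \le \frac{1}{\sqrt{\lambda}}\sqrt{T \sum_{t=1}^T \|\theta^*_{t+1}\|_{\Sigma_t^{-1}}^2}.
\end{align*}

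**Controlling the potential sum.** The remaining step is the heart of the standard elliptic potential lemma: bounding $\sum_{t=1}^T \|\theta^*_{t+1}\|_{\Sigma_t^{-1}}^2$. I would use the matrix-determinant/telescoping argument, relating $\|\theta^*_{t+1}\|_{\Sigma_t^{-1}}^2$ to $\log\det(\Sigma_{t+1}) - \log\det(\Sigma_t)$ via $\det(\Sigma_t + \theta^*_{t+1}\theta^{*\top}_{t+1}) = \det(\Sigma_t)(1 + \|\theta^*_{t+1}\|_{\Sigma_t^{-1}}^2)$ together with the inequality $x \le 2\log(1+x)$ valid whenever $x\in[0,1]$. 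The condition $\lambda \ge L^2 d$ is exactly what guarantees $\|\theta^*_{t+1}\|_{\Sigma_t^{-1}}^2 \le \|\theta^*_{t+1}\|_2^2/\lambda \le L^2 d/\lambda \le 1$, so the quadratic terms lie in the regime where $x\le 2\log(1+x)$ applies. Telescoping then yields $\sum_t \|\theta^*_{t+1}\|_{\Sigma_t^{-1}}^2 \le 2\log(\det(\Sigma_{T+1})/\det(\Sigma_1)) \le 2d\log(1 + TL^2d/(d\lambda)) \le 2d \cdot (TL^2/\lambda) \le 2dT$ after applying $\log(1+u)\le u$ and $\lambda\ge L^2 d$; substituting into the displayed Cauchy--Schwarz bound gives $\frac{1}{\sqrt{\lambda}}\sqrt{T\cdot 2dT} = \sqrt{2}\sqrt{dT^2/\lambda}$, and a slightly more careful bookkeeping of the constant delivers the claimed $2\sqrt{dT/\lambda}$.

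**Main obstacle.** The main subtlety I anticipate is getting the constants and the exact power of $T$ to land correctly: the naive telescoping gives a bound with an extra factor of $T$ relative to the target, so the key is that one factor of $T$ from Cauchy--Schwarz combines with a $\log\det$ bound that is genuinely $O(d\log T)$ rather than $O(dT)$ — i.e.\ the bound $\sum_t\|\theta^*_{t+1}\|_{\Sigma_t^{-1}}^2 \le 2d\log(1+TL^2/\lambda)$ must be used in logarithmic rather than linear form to recover the correct $\sqrt{dT/\lambda}$ scaling. I would therefore be careful to keep the logarithm and only crudely bound it at the very end, and to track whether the $L^2 d \le \lambda$ assumption is invoked both for the range condition on $x\le 2\log(1+x)$ and for simplifying the final determinant ratio. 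Since this is essentially a restatement of an established result, I would cite \citet{alex2020elliptical} for the determinant-telescoping core and present the $\ell_2$-to-Mahalanobis reduction and Cauchy--Schwarz as the adaptation specific to our setting.
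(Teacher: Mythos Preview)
Your reduction to the Mahalanobis norm followed by the standard log-determinant elliptic potential lemma cannot deliver the stated bound; it necessarily loses a $\sqrt{\log T}$ factor. Concretely, your chain gives
\[
\sum_{t=1}^T \|\Sigma_t^{-1}\theta^*_{t+1}\|_2 \;\le\; \frac{1}{\sqrt{\lambda}}\sqrt{T\sum_{t=1}^T \|\theta^*_{t+1}\|_{\Sigma_t^{-1}}^2} \;\le\; \sqrt{\frac{2dT\log\!\bigl(1+TL^2/\lambda\bigr)}{\lambda}},
\]
and no ``more careful bookkeeping of the constant'' removes the logarithm: the log-det telescope genuinely yields $O(d\log T)$, so after Cauchy--Schwarz you are stuck at $O\!\bigl(\sqrt{dT\log T/\lambda}\bigr)$, not $2\sqrt{dT/\lambda}$. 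The lossy step is the very first inequality $\|\Sigma_t^{-1}\theta\|_2 \le \lambda^{-1/2}\|\theta\|_{\Sigma_t^{-1}}$, which replaces a $\Sigma_t^{-2}$-type quantity by a $\Sigma_t^{-1}$-type one; the latter is controlled by $\log\det$, whereas the former is controlled by $\tr(\Sigma_t^{-1})$, which is what actually gives the log-free bound.

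The paper's proof takes a different route. It invokes Proposition~1 of \citet{alex2020elliptical} as a black box to obtain the \emph{aligned} sum $\sum_{t=1}^T \|\Sigma_t^{-1}\theta^*_t\|_2 \le \sqrt{dT/\lambda}$, and the only work is an index shift: since $\lambda \ge L^2 d$ implies $\theta^*_t\theta^{*\top}_t \preceq \lambda I \preceq \Sigma_{t-1}$, one has $\Sigma_t \preceq 2\Sigma_{t-1}$, and a short PSD argument then shows $\|\Sigma_{t-1}^{-1}x\|_2 \le 2\|\Sigma_t^{-1}x\|_2$ for all $x$, which turns $\|\Sigma_t^{-1}\theta^*_{t+1}\|_2$ into at most $2\|\Sigma_{t+1}^{-1}\theta^*_{t+1}\|_2$ and yields the factor of $2$ in the statement. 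If you want a self-contained argument, the right replacement for your log-det step is a \emph{trace} telescope: Sherman--Morrison gives $\tr(\Sigma_{t-1}^{-1})-\tr(\Sigma_t^{-1}) = \|\Sigma_{t-1}^{-1}\theta^*_t\|_2^2/(1+\|\theta^*_t\|_{\Sigma_{t-1}^{-1}}^2)$, and since $\|\theta^*_t\|_{\Sigma_{t-1}^{-1}}^2 \le 1$ this yields $\sum_t \|\Sigma_{t-1}^{-1}\theta^*_t\|_2^2 \le 2\tr(\Sigma_0^{-1}) = 2d/\lambda$, after which Cauchy--Schwarz gives the correct $\sqrt{dT/\lambda}$ scaling without any logarithm.
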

\begin{proof}
Proposition 1 from \citet{alex2020elliptical} shows that for any bounded sequence of vectors $\theta^*_1, \theta^*_2, \ldots,\theta^*_T \in \R^{d \times T}$, we have
\begin{align*}
    \sum_{t=1}^T \|\Sigma_t^{-1}\theta^*_{t}\|_2 \le \sqrt{\frac{dT}{\lambda}}.
\end{align*}
Now, we have
\begin{align*}
    \Sigma_{t} = \Sigma_{t-1} + \theta^*_t \theta^{*\top}_t \preceq \Sigma_{t-1} + \lambda I_{d\times d} \preceq 2\Sigma_{t-1},
\end{align*}
where we use the fact that $\|\theta^*_t \theta^{*\top}_t\|_2 \le L^2d \le \lambda$. Using this relation, we can show the property that for any vector $x\in \RR^d$, $4x^\top \Sigma_t^{-2} x \ge  x^\top \Sigma_{t-1}^{-2} x$.

First noticing the above p.s.d. dominance inequality, we have $I_{d\times d}/2 \preceq \Sigma_t^{-1/2}\Sigma_{t-1}\Sigma_t^{-1/2}$. Therefore, all the eigenvalues of $\Sigma_t^{-1/2}\Sigma_{t-1}\Sigma_t^{-1/2}$ (thus $\Sigma_t^{-1}\Sigma_{t-1}$) are no less than 1/2. Applying SVD decomposition, we can get all eigenvalues of matrix $\Sigma_{t-1}\Sigma_{t}^{-2}\Sigma_{t-1}=(\Sigma_t^{-1}\Sigma_{t-1})^\top(\Sigma_t^{-1}$ $\Sigma_{t-1})$ are no less than 1/4. Then consider any vector $y\in\RR^d$, we have $4y^\top \Sigma_{t-1}\Sigma_{t}^{-2}\Sigma_{t-1} y \ge y^\top y$. Let $x=\Sigma^{-1}_{y-1}y$, we get this property.

Applying the above result, we finally have
\begin{align*}
    \sum_{t=1}^T \|\Sigma_t^{-1}\theta^*_{t+1}\|_2 \le 2 \sum_{t=1}^T \|\Sigma_t^{-1}\theta^*_{t}\|_2 \le 2 \sqrt{\frac{dT}{\lambda}}.
\end{align*}
which completes the proof.
\end{proof}

\subsection{Covering Lemma for the Elliptical Reward Class}
In this part, we provide the statistical complexity of the  elliptical reward class. The result is used when we analyze the elliptical planner.
\begin{lemma}[Covering lemma for the elliptical reward class]
\label{lem:covering_reward_cls}
For any $h\in[H]$ and the elliptical reward class $\Rcal_h \coloneqq \{\phi_h^\top \Gamma^{-1} \phi_h:\phi_h\in\Phi_h, \Gamma\in\RR^{d\times d}, \lambda_{\min} (\Gamma)\ge 1\}$, there exists a $\gamma$-cover (in $\ell_\infty$ norm) $\Ccal_{\Rcal_h,\gamma}$ of size $|\Phi_h|(2\sqrt d/\gamma)^{d^2}$.

Moreover, for any $h\in[H]$, there exists a $\gamma$-cover (in $\ell_\infty$ norm) $\Ccal_{\Rcal_h^{\textsc{ell}},\gamma}$ of the reward class $\Rcal_h^{\textsc{ell}}\coloneqq \{R_{0:h} :R_{0:h-1}=\mathbf{0},R_{h} \in \Rcal_{h} \coloneqq \{\phi_{h}^\top \Gamma^{-1} \phi_{h}:\phi_{h}\in\Phi_{h}, \Gamma\in\RR^{d\times d}, \lambda_{\min} (\Gamma)\ge 1\}\}$ and $|\Ccal_{\Rcal_h^{\textsc{ell}},\gamma}|=|\Ccal_{\Rcal_{h},\gamma}|$.
\end{lemma}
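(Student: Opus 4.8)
The plan is to reduce both covers to a single covering-number computation for the matrices $\Gamma^{-1}$, exploiting the fact that, with the feature map held fixed, the reward $\phi_h^\top \Gamma^{-1} \phi_h$ is a \emph{linear} function of $\Gamma^{-1}$. First I would fix a feature map $\phi_h \in \Phi_h$ (there are $|\Phi_h|$ choices) and write the reward as $\inner{\phi_h(x,a)\phi_h(x,a)^\top}{M}$, where $M \coloneqq \Gamma^{-1}$ and $\inner{\cdot}{\cdot}$ denotes the Frobenius inner product. Since $\Gamma$ is a (symmetric PSD) covariance-type matrix with $\lambda_{\min}(\Gamma) \ge 1$, we have $\|M\|_{\op} = 1/\lambda_{\min}(\Gamma) \le 1$, and hence $\|M\|_F \le \sqrt{d}\,\|M\|_{\op} \le \sqrt d$. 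Thus every admissible $M$ lies in the Frobenius ball of radius $\sqrt d$ inside the $d^2$-dimensional space of matrices.

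The core step is then to take a standard $\gamma$-net (in Frobenius norm) of this ball, whose cardinality is at most $(2\sqrt d/\gamma)^{d^2}$ by the usual volumetric covering bound for a Euclidean ball of radius $\sqrt d$ in dimension $d^2$. For any two matrices $M, M'$ in the same net cell, Cauchy--Schwarz together with $\|\phi_h(x,a)\phi_h(x,a)^\top\|_F = \|\phi_h(x,a)\|_2^2 \le 1$ gives, uniformly over all $(x,a)$,
\begin{equation*}
\abr{\inner{\phi_h\phi_h^\top}{M} - \inner{\phi_h\phi_h^\top}{M'}} \le \|\phi_h\phi_h^\top\|_F \, \|M - M'\|_F \le \gamma,
\end{equation*}
so the induced family of reward functions is a genuine $\gamma$-cover in $\ell_\infty$. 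Taking the union of these per-feature nets over the $|\Phi_h|$ choices of $\phi_h \in \Phi_h$ yields $\Ccal_{\Rcal_h,\gamma}$ of size at most $|\Phi_h|(2\sqrt d/\gamma)^{d^2}$, as claimed.

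For the ``moreover'' part I would observe that $\Rcal_h^{\textsc{ell}}$ is simply $\Rcal_h$ embedded into length-$(h{+}1)$ reward sequences by zero-padding every coordinate except the $h$-th. This padding is an isometry for the $\ell_\infty$ metric (the sup over timesteps and over $(x,a)$), since the appended coordinates are identically zero for every member of the class. Hence applying the same zero-padding to each element of $\Ccal_{\Rcal_h,\gamma}$ produces a $\gamma$-cover $\Ccal_{\Rcal_h^{\textsc{ell}},\gamma}$ with $|\Ccal_{\Rcal_h^{\textsc{ell}},\gamma}| = |\Ccal_{\Rcal_h,\gamma}|$.

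The main technical care is in the second step: I must ensure the cover is in the \emph{sup} norm over $(x,a)$ rather than merely pointwise, which is exactly what the Frobenius--Cauchy--Schwarz bound delivers because the Lipschitz constant $\|\phi_h\phi_h^\top\|_F \le 1$ is uniform in $(x,a)$; and I must confirm the operator-norm bound $\|\Gamma^{-1}\|_{\op}\le 1$ so that all $M$ indeed fall in a radius-$\sqrt d$ Frobenius ball (only the symmetric part of $\Gamma^{-1}$ affects the quadratic form, so restricting to symmetric $M$ loses nothing). Matching the precise constant $2\sqrt d/\gamma$ relies on the standard Euclidean-ball covering estimate, which I would simply cite rather than rederive.
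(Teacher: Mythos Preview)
Your proposal is correct and follows essentially the same approach as the paper: bound $\|\Gamma^{-1}\|_F \le \sqrt{d}$, take a $\gamma$-net of the resulting Frobenius ball of $d\times d$ matrices, translate to an $\ell_\infty$ cover of the reward functions using $\|\phi_h(x,a)\|_2 \le 1$, and union over $\phi_h\in\Phi_h$, with the zero-padding observation handling the second part. Your derivation of $\|\Gamma^{-1}\|_F \le \sqrt d$ via $\|\Gamma^{-1}\|_{\op}\le 1$ is in fact cleaner than the paper's matrix-norm manipulation.
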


\begin{proof}
Firstly, for any $\Gamma\in\RR^{d\times d}$ with $\lambda_{\min}(\Gamma)\ge 1$, applying matrix norm inequality yields $\|\Gamma\|_{\text{F}} \ge \sqrt d$. Further, we have $$\|\Gamma^{-1}\|_{\text{F}} \le \frac{\|I_{d\times d}\|_{\text{F}}} {\|\Gamma\|_{\text{F}}} \le \sqrt d.$$
Next, consider the matrix class $\bar A\coloneqq\{A\in\RR^{d\times d}:\|A\|_{\text{F}}\le \sqrt d\}$. From the definition of the Frobenius norm, for any $A\in\bar A$ and any  $(i,j)$-th element, we have $|A_{ij}|\le \sqrt d$. Applying the standard covering argument for each of the $d^2$ elements, there exists a $\gamma$-cover of $\bar A$, whose size is upper bounded by $(2\sqrt d / \gamma)^{d^2}$. Denote this $\gamma$-cover as $\bar A_\gamma$. For any $\Gamma\in\RR^{d\times d}$ with $\lambda_{\min}(\Gamma)\ge 1$, we can pick some $A\in\bar A_\gamma$ so that $\|\Gamma^{-1}-A_\Gamma\|_{\text F} \le \gamma$. Then for any $h\in[H], \phi_h\in\Phi_h, (x,a)\in\Xcal\times\Acal$, we have
\[|\phi_h(x,a)^\top \Gamma^{-1}\phi_h(x,a) - \phi_h(x,a)^\top A_\Gamma \phi_h(x,a)|\le \sup_{v:\|v\|_2\le 1}|v^\top (\Gamma^{-1} -  A_\Gamma )v|\le \|\Gamma^{-1}-A_\Gamma \|_{\text F} \le \gamma.\]
This implies that $\|\phi_h^\top \Gamma^{-1}\phi_h - \phi_h^\top A_\Gamma \phi_h\|_\infty\le \gamma$ and thus
$\Ccal_{\Rcal_h,\gamma}\coloneqq \{\phi_h^\top A\phi_h: \phi_h\in\Phi_h, A\in \bar A_\gamma\}$ is a $\gamma$-cover of $\Rcal_h$. 

Finally, for any $h\in[H]$, from the definition of $\Rcal_h^{\textsc{ell}}$ we know that at level $0,\ldots,h-1$ the reward is $\zero$ for any reward $R\in \Rcal_h^{\textsc{ell}}$. Therefore, $\Ccal_{\Rcal_h^{\textsc{ell}},\gamma}$ is directly a $\gamma$-cover of reward class $\Rcal_h^{\textsc{ell}}$ and $|\Ccal_{\Rcal_h^{\textsc{ell}},\gamma}|=|\Ccal_{\Rcal_{h},\gamma}|$. This completes the proof.
\end{proof}

\subsection{Probabilistic Tools}
\label{app:prob}
In this part, we abuse some notations (e.g., $\Zcal,\Gcal,\Pi,n,\varepsilon,f,g,h,z$) and they have the different meaning from other parts of the paper. 
\begin{definition}[$\ell_1$ covering number]
	\label{def:covnum}
	Given a hypothesis class $\Hcal \subseteq (\Zcal \to \RR)$, $\varepsilon > 0$, and $Z^n = (z_1,\dotsc,z_n) \in \Zcal^n$. We define the $\ell_1$ covering number $\Ncal_1(\varepsilon,\Hcal,Z^n)$ as the minimal cardinality of a set $\Ccal \subseteq \Hcal$, such that for any $h \in \Hcal$, there exists $h'\in \Hcal$ such that  $\frac{1}{n}\sum_{i = 1}^n |h(z_i) - h'(z_i)| \leq \varepsilon.$
	We also define $\Ncal_1(\varepsilon,\Hcal,n):= \max_{Z^n\in\Zcal^n}\Ncal_1(\varepsilon,\Hcal,Z^n)$.
\end{definition}

\begin{lemma}[Uniform deviation bound using covering number (Hoeffding's version), Theorem 29.1 of \cite{devroye2013probabilistic}]\label{lem:covering}
	Let $\Hcal\subseteq (\Zcal\rightarrow[0,b])$ be a hypothesis class and $Z^n=(z_1,\ldots,z_n)\in\Zcal^n$, where $z_i$ are i.i.d. samples drawn from some distribution $\PP(z)$ supported on $\Zcal$. Then for any $n$ and $\varepsilon>0$, we have
	\begin{equation*}
		\PP\sbr{\sup_{h\in \Hcal}\left|\frac{1}{n}\sum_{i=1}^{n}h(z_i)-\EE[h(z)]\right|>\varepsilon}\le 8\Ncal_1(\varepsilon/8,\Hcal,n)\exp\left(-\frac{n\varepsilon^2}{128b^2}\right).
	\end{equation*}
\end{lemma}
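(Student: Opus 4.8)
The plan is to reproduce the classical symmetrization-and-covering argument underlying Theorem~29.1 of \cite{devroye2013probabilistic}; since the statement is a verbatim restatement of that result, the proof only needs to record the chain of reductions. Write $P_n h \defeq \frac1n\sum_{i=1}^n h(z_i)$ and $P h \defeq \EE[h(z)]$. I would first dispose of the degenerate regime: whenever $n\varepsilon^2 < 2b^2$ the exponent satisfies $e^{-n\varepsilon^2/(128b^2)} > e^{-1/64}$, so the right-hand side is at least $8\cdot 1\cdot e^{-1/64} > 1$ (using $\Ncal_1\ge 1$) and the bound holds trivially. It therefore suffices to treat $n\varepsilon^2 \ge 2b^2$, which is precisely the range in which the symmetrization step below is valid.

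In the main regime I would carry out the reduction in five steps, tracking the error budget. (i) \emph{First symmetrization.} Introducing an independent ghost sample $z_1',\dots,z_n'$ with empirical measure $P_n'$, the standard ghost-sample inequality gives $\PP\sbr{\sup_{h}\abr{P_n h - P h} > \varepsilon} \le 2\,\PP\sbr{\sup_{h}\abr{P_n h - P_n' h} > \varepsilon/2}$, valid for $n\varepsilon^2\ge 2b^2$. (ii) \emph{Random signs.} Since the pairs $(z_i,z_i')$ are i.i.d.\ and exchangeable, inserting Rademacher signs $\sigma_i \in \{\pm 1\}$ leaves the law unchanged, so the right-hand side equals $2\,\PP\sbr{\sup_h\abr{\frac1n\sum_i \sigma_i(h(z_i)-h(z_i'))} > \varepsilon/2}$. (iii) \emph{Decoupling.} By the triangle inequality and symmetry between the two samples, this is at most $4\,\PP\sbr{\sup_h\abr{\frac1n\sum_i \sigma_i h(z_i)} > \varepsilon/4}$, which now involves only the single sample $z_1^n$. (iv) \emph{Discretization.} Conditioning on $z_1^n$ and passing to a minimal $\ell_1$ cover $\Ccal$ of $\Hcal$ at resolution $\varepsilon/8$ (\pref{def:covnum}), replacing any $h$ by its nearest cover element perturbs $\frac1n\sum_i\sigma_i h(z_i)$ by at most $\frac1n\sum_i\abr{h(z_i)-h'(z_i)}\le \varepsilon/8$, so the event forces $\max_{h'\in\Ccal}\abr{\frac1n\sum_i\sigma_i h'(z_i)} > \varepsilon/8$; a union bound contributes the factor $\Ncal_1(\varepsilon/8,\Hcal,n)$ after taking the worst case over $z_1^n$. (v) \emph{Hoeffding.} For fixed $h'$ the terms $\sigma_i h'(z_i)$ are conditionally independent, mean zero, and supported in an interval of width $2\,h'(z_i)\le 2b$; Hoeffding's inequality at threshold $\varepsilon/8$ yields $2\,e^{-n\varepsilon^2/(128b^2)}$. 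Collecting the prefactors ($2$ from symmetrization, $2$ from decoupling, $2$ from the two-sided Hoeffding bound) multiplies to $8$ and produces the claimed inequality.

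The argument is conceptually routine, so the only real work is constant bookkeeping: one must check that the threshold halves twice (once in symmetrization, once in decoupling) and then loses a further $\varepsilon/8$ to the cover, leaving exactly $\varepsilon/8$ for Hoeffding, and that the width $2b$ together with this threshold reproduces the exponent $n\varepsilon^2/(128b^2)$. The one genuinely load-bearing point worth flagging is why the ghost sample in step (i) is indispensable: the cover in \pref{def:covnum} is \emph{data-dependent}, so it controls only empirical differences $\abr{P_n h - P_n h'}$, never the population difference $\abr{P h - P h'}$; symmetrization is exactly the device that replaces the uncontrolled population term $P h$ by an empirical quantity before the discretization is applied. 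Verifying that the symmetrization window $n\varepsilon^2\ge 2b^2$ is complementary to the trivial regime $n\varepsilon^2 < 2b^2$ closes the remaining gap and delivers the bound for all $n$ and $\varepsilon$.
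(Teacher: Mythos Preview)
The paper does not supply its own proof of this lemma; it is quoted verbatim as Theorem~29.1 of \cite{devroye2013probabilistic} and used as a black box. Your proposal correctly reproduces the classical symmetrization-and-covering argument that underlies that textbook result, with the constant bookkeeping (the $\varepsilon/2 \to \varepsilon/4 \to \varepsilon/8$ splits, the Hoeffding exponent $n\varepsilon^2/(128b^2)$, and the prefactor $8$) checked accurately, and the trivial regime $n\varepsilon^2 < 2b^2$ handled cleanly.
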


\begin{lemma}[An extension of the classical Bernstein's inequality, Lemma F.2 of \citet{dong2019sqrt}; Lemma 3.1 of \cite{massart1986rates}]
\label{lem:Bernstein_without_replacement}
For any $N\ge n \ge 1$, let $w$ be a uniformly random permutation over $1,\ldots,N$. For any $\xi\in \R^{N}$, we define 
$$\hat S_N=\sum_{i=1}^{N}\xi_i,\quad \hat S_{w,n}=\sum_{i=1}^{n}\xi_{w(i)},\quad \hat \sigma_N^2=\left(\frac{1}{N}\sum_{i=1}^{N}\xi_i^2\right)-\left(\frac{1}{N}\sum_{i=1}^{N}\xi_i\right)^2,$$ and $\hat U_N=\max_{1\le i\le N}\xi_i-\min_{1\le i\le N}\xi_i.$ Then for any $\varepsilon>0$, we have
\begin{equation*}
\PP\sbr{\left|\frac{\hat S_{w,n}}{n}-\frac{\hat S_N}{N}\right|>\varepsilon}\le 2\exp\left(-\frac{n\varepsilon^2}{2\hat \sigma_N^2+\varepsilon \hat U_N}\right).
\end{equation*}
\end{lemma}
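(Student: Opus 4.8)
The plan is to prove the one-sided tail bound $\PP\sbr{\hat S_{w,n}/n - \hat S_N/N > \varepsilon} \le \exp\rbr{-n\varepsilon^2/(2\hat\sigma_N^2 + \varepsilon\hat U_N)}$ and then recover the two-sided statement by running the identical argument on $-\xi$ and union bounding, which supplies the factor of $2$. First I would center the population: put $\eta_i := \xi_i - \hat S_N/N$, so that the $\eta_i$ have population mean zero, the same population variance $\hat\sigma_N^2 = \tfrac{1}{N}\sum_i \eta_i^2$, and the same range $\max_i\eta_i - \min_i\eta_i = \hat U_N$ (and in particular $|\eta_i|\le\hat U_N$, since a mean-zero set of range $\hat U_N$ straddles the origin). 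Under this normalization the quantity of interest is exactly $\tfrac{1}{n}\sum_{i=1}^n \eta_{w(i)}$, the sample-without-replacement average of a bounded, mean-zero population.

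The heart of the argument is an exponential Markov (Chernoff) bound together with a reduction from sampling without replacement to sampling with replacement. For any $\lambda > 0$, $\PP\sbr{\tfrac{1}{n}\sum_{i\le n}\eta_{w(i)} > \varepsilon} \le e^{-\lambda n\varepsilon}\,\EE\sbr{\exp\rbr{\lambda\sum_{i\le n}\eta_{w(i)}}}$, so everything reduces to controlling the without-replacement moment generating function. This is the step I expect to be the main obstacle: I would invoke Hoeffding's classical convexity comparison lemma, which states that for every continuous convex $g$ one has $\EE\, g\rbr{\sum_{i\le n}\eta_{w(i)}} \le \EE\, g\rbr{\sum_{i\le n} Y_i}$, where $Y_1,\dots,Y_n$ are i.i.d.\ uniform draws from $\cbr{\eta_1,\dots,\eta_N}$ (i.e.\ sampling with replacement). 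Taking $g(x)=e^{\lambda x}$ dominates the without-replacement MGF by the fully factorized $\rbr{\EE e^{\lambda Y}}^n$. Establishing this comparison is the delicate point: it follows by expressing the without-replacement sum as a conditional average of a with-replacement sample and applying Jensen's inequality, leveraging the exchangeability of the uniform permutation $w$; in the write-up I would either cite this lemma (as is done in \citet{dong2019sqrt} and \cite{massart1986rates}) or reproduce its short exchangeability argument.

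Once the problem is transferred to the i.i.d.\ regime, the remainder is the textbook Bernstein calculation and involves no new ideas. Since $Y$ is mean-zero with $\EE Y^2 = \hat\sigma_N^2$ and $|Y|\le \hat U_N$, the series estimate $\EE e^{\lambda Y} \le 1 + \tfrac{\lambda^2\hat\sigma_N^2}{2}\sum_{k\ge0}\rbr{\lambda\hat U_N/3}^k \le \exp\rbr{\tfrac{\lambda^2\hat\sigma_N^2/2}{1-\lambda\hat U_N/3}}$ holds for $0<\lambda<3/\hat U_N$. Substituting into the Chernoff bound and optimizing over $\lambda$ — the choice $\lambda = \varepsilon/\rbr{\hat\sigma_N^2 + \varepsilon\hat U_N/3}$ is the standard one — yields the rate $\exp\rbr{-n\varepsilon^2/(2\hat\sigma_N^2 + \tfrac{2}{3}\varepsilon\hat U_N)}$, which is stronger than the claimed $\exp\rbr{-n\varepsilon^2/(2\hat\sigma_N^2 + \varepsilon\hat U_N)}$; relaxing the constant in the denominator gives the displayed form exactly. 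Symmetrizing over $\pm\xi$ then completes the proof.
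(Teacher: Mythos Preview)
The paper does not prove this lemma: it is stated with attribution to Lemma~F.2 of \citet{dong2019sqrt} and Lemma~3.1 of \citet{massart1986rates}, and then used as a black box inside the proof of \pref{lem:uni_bern}. There is therefore no in-paper argument to compare your proposal against.

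That said, your sketch is the standard route to this inequality and is correct. The two ingredients you isolate --- Hoeffding's convex-ordering comparison between sampling without and with replacement, followed by the usual Bernstein moment-generating-function calculation for bounded, mean-zero i.i.d.\ variables --- are exactly the ones underlying the cited references. Your observation that the argument in fact delivers the sharper denominator $2\hat\sigma_N^2 + \tfrac{2}{3}\varepsilon\hat U_N$, which you then relax to match the stated form, is also standard. One minor point worth tightening in a full write-up: the claim $|\eta_i|\le \hat U_N$ is true but your parenthetical justification (``a mean-zero set of range $\hat U_N$ straddles the origin'') only gives $|\eta_i|\le \hat U_N$ because both $\max_i\eta_i\ge 0$ and $\min_i\eta_i\le 0$, so each $\eta_i$ lies within $\hat U_N$ of zero; this is fine, just make the step explicit.
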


\begin{lemma}[Uniform deviation bound using covering number (Bernstein's version), adapted from Lemma F.3 of \citet{dong2019sqrt}]
	\label{lem:uni_bern}
	Let $\Hcal\subseteq (\Zcal\rightarrow[0,b])$ be a hypothesis class and $Z^n=(z_1,\ldots,z_n)\in\Zcal^n$, where $z_i$ are i.i.d. samples drawn from some distribution $\PP(z)$ supported on $\Zcal$. Then for any $h\in\Hcal$, we have
	\begin{align*}
		\PP_{}\left[\abr{\EE[h(z)]-\frac{1}{n}\sum_{i=1}^n h(z_i)}>\varepsilon\right]
		\le&~\inf_{N\ge 2n+8b^2/\varepsilon^2}\left(32\Ncal_1(\varepsilon/32,\Hcal,N)\exp\rbr{-\frac{N\varepsilon^2}{2048b^2}}\right.\\
		&\left.\quad\quad+4\Ncal_1\rbr{\frac{\varepsilon n}{16N},\Hcal,N}\exp\rbr{-\frac{n\varepsilon^2}{128\VV[h(z)] +256\varepsilon b}}\right).
	\end{align*}
\end{lemma}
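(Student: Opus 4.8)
The plan is to prove this Bernstein-type uniform deviation bound by a double-sampling argument that represents the observed sample as a random subsample of a larger i.i.d.\ pool, and then invokes the without-replacement Bernstein inequality \pref{lem:Bernstein_without_replacement} conditionally on that pool. Throughout I would fix an arbitrary $N \ge 2n + 8b^2/\varepsilon^2$ and establish the bound for this $N$; the final infimum is taken only at the end. The scales $\varepsilon/32$ and $\varepsilon n/(16N)$ and the evaluation of $\Ncal_1$ at $N$ points in the statement are strong hints that the pool should have size exactly $N$ and that the observed $n$ points should be viewed as a uniform $n$-subset of it.

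First I would set up the coupling. Draw $V = (v_1,\dots,v_N)$ i.i.d.\ from $\PP$, let $w$ be an independent uniform permutation of $[N]$, and declare the observed sample to be $Z^n = (v_{w(1)},\dots,v_{w(n)})$; marginally this is i.i.d.\ from $\PP$, so it faithfully represents the original problem. Writing $\hat P_n h$ for the observed average and $\hat P_N h = \tfrac1N\sum_i h(v_i)$ for the full-pool average, I would define $A = \cbr{\exists h: \abr{\EE[h(z)] - \hat P_n h} > \varepsilon}$ and $B = \cbr{\exists h: \abr{\hat P_n h - \hat P_N h} > \varepsilon/4}$. To show $\PP\sbr{A} \le 2\,\PP\sbr{B}$, on $A$ I would take a witness $h^\star$ (depending only on $Z^n$) and use the identity $\hat P_n h^\star - \hat P_N h^\star = \tfrac{N-n}{N}\rbr{\hat P_n h^\star - \hat Q h^\star}$, where $\hat Q h^\star$ averages the $N-n$ unselected points and is independent of $h^\star$. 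Since $N - n \ge 8b^2/\varepsilon^2$, Chebyshev gives $\abr{\hat Q h^\star - \EE h^\star} \le \varepsilon/2$ with probability at least $1/2$; combined with $\abr{\hat P_n h^\star - \EE h^\star} > \varepsilon$ and $\tfrac{N-n}{N} \ge 1/2$ (from $N \ge 2n$), this forces $B$, and integrating yields the claim. This is exactly where both parts of the condition on $N$ are consumed.

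Next I would bound $\PP\sbr{B}$ by conditioning on $V$, so that only $w$ is random. For a fixed $h$, \pref{lem:Bernstein_without_replacement} with $\xi_i = h(v_i) \in [0,b]$ (hence $\hat U_N \le b$) bounds the conditional probability of $\abr{\hat P_n h - \hat P_N h} > \varepsilon/4$ by $2\exp\rbr{-\tfrac{n(\varepsilon/4)^2}{2\hat\sigma_N^2 + (\varepsilon/4)b}}$, where $\hat\sigma_N^2$ is the empirical variance over the pool. To reach all of $\Hcal$ I would take an $\ell_1$-cover of $\Hcal$ restricted to the $N$ pooled points at scale $\varepsilon n/(16N)$: any $h$ has a cover element $h'$ agreeing with it up to $\varepsilon n/(16N)$ on $\hat P_N$ and up to $\tfrac{N}{n}$ times that scale, i.e.\ $O(\varepsilon)$, on $\hat P_n$, so the event for $h$ implies a slightly shrunk event for $h'$, and a union bound over the $\Ncal_1(\varepsilon n/(16N),\Hcal,N)$ cover elements produces the second term.

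The remaining and, I expect, most delicate step is converting the \emph{empirical} variance $\hat\sigma_N^2$ into the \emph{population} variance $\VV[h(z)]$ demanded by the statement. Here I would use the coarse cover at scale $\varepsilon/32$ together with the Hoeffding-type uniform bound \pref{lem:covering}, applied both to $\Hcal$ and to $\cbr{h^2: h\in\Hcal}$ (whose covering number is controlled by that of $\Hcal$ via $\abr{h^2-h'^2}\le 2b\abr{h-h'}$): outside an event of probability at most $32\,\Ncal_1(\varepsilon/32,\Hcal,N)\exp\rbr{-N\varepsilon^2/2048b^2}$ — which is the first term, and whose constants match after choosing deviation level $\varepsilon/4$ so that $\tfrac{\varepsilon/4}{8}=\tfrac{\varepsilon}{32}$ — one has $\hat P_N(h^2)$ and $\hat P_N h$ close to their expectations uniformly, hence $\hat\sigma_N^2 \le \VV[h(z)] + O(\varepsilon b)$ for every $h$. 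Substituting into the denominator gives $2\hat\sigma_N^2 + (\varepsilon/4)b = O(\VV[h(z)] + \varepsilon b)$ and, after collecting the two contributions and loosening constants to absorb the covering slack, the stated denominator $128\VV[h(z)] + 256\varepsilon b$. The main obstacle throughout is precisely this bookkeeping: simultaneously keeping the two covering scales consistent with the thresholds in \pref{lem:Bernstein_without_replacement} and \pref{lem:covering}, translating the empirical second moment to the population variance \emph{without} re-introducing the very deviation being bounded, and tracking all constants so they land on the claimed values.
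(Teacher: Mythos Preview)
Your scheme is right in outline: the double-sampling coupling with a pool of size $N$, the conditional use of \pref{lem:Bernstein_without_replacement}, the $\ell_1$-cover at scale $\varepsilon n/(16N)$, and the separate Hoeffding-covering step at scale $\varepsilon/32$ to trade empirical for population variance are exactly the ingredients the paper uses. The gap is in how you thread $h$ through the argument. By setting $A=\{\exists h:\dots\}$, $B=\{\exists h:\dots\}$ and symmetrizing via a witness $h^\star$, you are carrying out the argument for the \emph{uniform} event, which is precisely the original Lemma~F.3 of \citet{dong2019sqrt}. After your union bound over the cover, the denominator in the exponent carries $\hat\sigma_N^2$ of the \emph{worst} cover element; your final substitution ``$\hat\sigma_N^2 \le \VV[h(z)]+O(\varepsilon b)$ for every $h$'' can therefore only yield $\sup_{h\in\Hcal}\VV[h(z)]$ in the bound, not the $h$-specific $\VV[h(z)]$ the statement demands. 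That dependence on the particular $h$ is the entire content of the adaptation here (the paper's remark after the lemma says exactly this).

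The paper's fix is to keep $h$ fixed from the start. The symmetrization $\PP[|\hat P_n h-\EE h|>\varepsilon]\le 2\,\PP[|\hat P_{w,n}h-\hat P_N h|>\varepsilon/4]$ then needs no witness. Conditionally on the pool one still maps this fixed $h$ to its cover element $h'_{\alpha(h)}$ and applies Bernstein (paying $|\Ccal|\le\Ncal_1(\varepsilon n/(16N),\Hcal,N)$, which is why that factor survives), obtaining $\hat\sigma_N^2(h'_{\alpha(h)})$ in the denominator. The step your plan is missing is the transfer back to $h$: using $|\hat\sigma_N^2(h)-\hat\sigma_N^2(h'_{\alpha(h)})|\le 4b\cdot\tfrac1N\sum_i|h(v_i)-h'_{\alpha(h)}(v_i)|\le 4b\cdot\tfrac{\varepsilon n}{16N}$, one pins the empirical variance to the specific $h$ \emph{before} the Hoeffding step converts $\hat\sigma_N^2(h)$ to $\VV[h(z)]$.
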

\begin{remark}
The major difference Lemma F.3 of \citet{dong2019sqrt} is that we have $\VV[h(z)]$ instead of its uniform upper bound on RHS.
\end{remark}
\begin{proof}
	The proof mostly follows from Lemma F.3 of \citet{dong2019sqrt}. Firstly, we define similar notations for the empirical sums. Let $n'=N-n$ and $Z^N=(z_1,\cdots,z_N)$ be $N$ i.i.d.~random samples. For any $h\in \Hcal$, we define the following: 
	\[
	\hat{\EE}_{n}[h(z)] := \frac{1}{n}\sum_{i=1}^{n}h(z_i),\quad \hat{\EE}_{n'}[h(z)]:= \frac{1}{n'}\sum_{i=n+1}^{N}h(z_i),\quad \hat{\EE}_{N}[h(z)]:= \frac{1}{N}\sum_{i=1}^{N}h(z_i).
	\]
	
	In addition, let $w$ be a random permutation over $1,\ldots,N$, which is independent of the choice of $(z_1,\cdots,z_{N}).$ The empirical sums of the permutation are defined as
	$$\hat{\EE}_{w,n}[h(z)]:= \frac{1}{n}\sum_{i=1}^{n}h(z_{w(i)}),\quad \hat{\EE}_{w,n'}[h(z)]:= \frac{1}{n'}\sum_{i=n+1}^{N}h(z_{w(i)}).$$ 
	
	Following the first two steps of the proof in \citet{dong2019sqrt}, for $N\ge 2n+8b^2/\varepsilon^2$ and any $h\in\Hcal$, we can easily get that
	\begin{align}
	\label{eq:cover_5}
		\PP\left[\left|\hat{\EE}_n[h(z)]-\EE [h(z)]\right|> \varepsilon\right]\le&~ 2\PP\left[\left|\hat{\EE}_n[h(z)]-\hat{\EE}_{n'}[h(z)]\right|> \varepsilon/2\right]\notag\\
		\le&~ 2\PP\left[\left|\hat{\EE}_{w,n}[h(z)]-\hat{\EE}_{N}[h(z)]\right|>\varepsilon/4\right].
	\end{align}
	The main difference in our proof is that we move $\sup_{h\in\Hcal}$ from inside $\PP[\cdot]$ to the outside (and change it to the argument ``for any $h\in\Hcal$''), and it can be easily verified.
	
	For any fixed $Z^N=(z_1,\cdots,z_N)\in\Zcal^N$, we use $\Ccal=\{h'_1,\cdots,h'_{|\Ccal|}\}$ to denote the minimal $(n\varepsilon/16N)$-cover over $\Hcal\rvert_{Z^N}$. Then we have $|\Ccal|\le \Ncal_1(n\varepsilon/16N,\Hcal,N),$ and there exists a mapping $\alpha:\Hcal\to \{1,\ldots,|\Ccal|\}$ such that, 
	\begin{align}
	\label{eq:cover_1}
    \frac{1}{N}\sum_{i=1}^{N}\abr{h(z_i)-h'_{\alpha(h)}(z_i)}\le n\varepsilon/16N,\quad\forall h\in \Hcal.
	\end{align}

	Following the analysis in \citet{dong2019sqrt}, we have that for any $h\in\Hcal$,
	\begin{align*}
		\left|\hat{\EE}_{w,n}[h(z)]-\hat{\EE}_{N}[h(z)]\right|\le\left|\hat{\EE}_{w,n}[h'_{\alpha(h)}(z)]-\hat{\EE}_{N}[h'_{\alpha(h)}(z)]\right|+\varepsilon/8.
	\end{align*}
	This implies that for fixed $Z^N$ and any $h\in\Hcal$, we have 
	\begin{align}
	\label{eq:cover_2}
	\PP\hspace{-.2em}\sbr{\left|\hat{\EE}_{w,n}[h(z)]-\hat{\EE}_{N}[h(z)]\right|\hspace{-.2em}>\hspace{-.2em}\varepsilon/4\hspace{-.2em}\mid\hspace{-.2em} Z^N}\hspace{-.2em}\le \PP\hspace{-.2em}\sbr{\left|\hat{\EE}_{w,n}[h'_{\alpha(h)}(z)]-\hat{\EE}_N[h'_{\alpha(h)}(z)]\right|\hspace{-.2em}>\hspace{-.2em}\varepsilon/8\hspace{-.2em}\mid\hspace{-.2em} Z^N}.
	\end{align}

    Now we define the empirical variance as
	\[
	\hat{\VV}_{N}[h(z)]:= \frac{1}{N}\sum_{i=1}^{N}h(z_i)^2-\left(\frac{1}{N}\sum_{i=1}^{N}h(z_i)\right)^2.
	\]
	Applying \pref{lem:Bernstein_without_replacement} and union bounding over $\Ccal$ yields that for any $h'_i\in\Ccal,i\in \{1,2,\ldots,|\Ccal|\}$, 
	\begin{align}
	\label{eq:cover_3}
		\PP\sbr{\left|\hat{\EE}_{w,n}[h'_i(z)]-\hat{\EE}_N[h'_i(z)]\right|> \varepsilon/8\mid Z^N}&\le~ 2|\Ccal|\exp\left(-\frac{n\varepsilon^2/64}{2\hat{\VV}_N[h'_i(z)]+\varepsilon b/8}\right).
	\end{align}
	
	Note that for $h$ and $h'_{\alpha(h)}$, we have
	\begin{align}
	\label{eq:cover_4}
	   &~\abr{\hat\VV_N[h(z)]-\hat\VV_N[h'_{\alpha(h)}(z)]}\notag\\
	   =&~\frac{1}{N}\sum_{i=1}^{N}\rbr{\rbr{h(z_i)-h'_{\alpha(h)}(z_i)}\rbr{h(z_i)+h'_{\alpha(h)}(z_i)}}\notag\\
	   &~\quad-\left(\frac{1}{N}\sum_{i=1}^{N}h(z_i)-\frac{1}{N}\sum_{i=1}^{N}h'_{\alpha(h)}(z_i)\right)\left(\frac{1}{N}\sum_{i=1}^{N}h(z_i)+\frac{1}{N}\sum_{i=1}^{N}h'_{\alpha(h)}(z_i)\right)\notag\\
	   \le&~2b\frac{1}{N}\sum_{i=1}^{N}\abr{h(z_i)-h'_{\alpha(h)}(z_i)}+2b\frac{1}{N}\sum_{i=1}^{N}\abr{h(z_i)-h'_{\alpha(h)}(z_i)}.
	\end{align}
	
    Combining \Cref{eq:cover_1,eq:cover_2,eq:cover_3,eq:cover_4}, for any fixed $Z^N$ and any $h\in\Hcal$, we have
    \begin{align}
    \label{eq:cover_9}
        \PP\sbr{\left|\hat{\EE}_{w,n}[h(z)]-\hat{\EE}_{N}[h(z)]\right|>\varepsilon/4\mid Z^N}\le2|\Ccal|\exp\left(-\frac{n\varepsilon^2/64}{2\hat{\VV}_N[h(z)]+\varepsilon b+n\varepsilon b/(2N)}\right).
    \end{align}
	
	For the empirical variance and population variance, we note that
	\begin{align}
	\label{eq:cover_7}
	&~\abr{\hat \VV_N[h(z)]-\VV[h(z)]}\notag\\
	=&~\abr{\rbr{\frac{1}{N}\sum_{i=1}^{N}h(z_i)^2-\EE[h(z)^2]}-\rbr{\left(\frac{1}{N}\sum_{i=1}^{N}h(z_i)\right)^2-(\EE[h(z)])^2}}\notag\\
	\le&~\abr{\frac{1}{N}\sum_{i=1}^{N}h(z_i)^2-\EE[h(z)^2]}+\abr{\rbr{\frac{1}{N}\sum_{i=1}^{N}h(z_i)-\EE[h(z)]}\rbr{\frac{1}{N}\sum_{i=1}^{N}h(z_i)+\EE[h(z)]}}\notag\\
    \le&~\abr{\frac{1}{N}\sum_{i=1}^{N}h(z_i)^2-\EE[h(z)^2]}+2b\abr{\frac{1}{N}\sum_{i=1}^{N}h(z_i)-\EE[h(z)]}.
	\end{align}
	Consequently,
    \begin{align}
    \label{eq:cover_8}
        &~\PP\sbr{\sup_{h\in \Hcal}\left|\hat \VV_N[h(z)]-\VV[h(z)]\right|>\varepsilon b}\notag\\
        \le&~   \PP\sbr{\sup_{h\in \Hcal}\left|\frac{1}{N}\sum_{i=1}^{N}h(z_i)^2-\EE[h(z)^2]\right|>\varepsilon b/2} +  \PP\sbr{\sup_{h\in \Hcal}\abr{\frac{1}{N}\sum_{i=1}^{N}h(z_i)-\EE[h(z)]}>\varepsilon /4}\notag\\
        \le&~8\Ncal_1\left(\varepsilon b/16,\Hcal^2,N\right)\exp\left(-\frac{N \varepsilon^2 }{512b^2}\right)+8\Ncal_1\left(\varepsilon /32,\Hcal,N\right)\exp\left(-\frac{N \varepsilon^2 }{2048b^2}\right)\notag
        \\
        \le&~16\Ncal_1\left(\varepsilon /32,\Hcal,N\right)\exp\left(-\frac{N \varepsilon^2 }{2048b^2}\right).
    \end{align}
Here, we define $\Hcal^2=\{h^2: h\in \Hcal\}$. The first inequality is due to \Cref{eq:cover_7}. The second inequality is due to \pref{lem:covering}. Notice that for any $h_1,h_2\in \Hcal,z\in \Zcal$, we have $|h_1(z)^2-h_2(z)^2|=|(h_1(z)-h_2(z))(h_1(z)+h_2(z))|\le 2b\left|h_1(z)-h_2(z)\right|$. This implies that $\Ncal_1\left(\varepsilon b/16,\Hcal^2,N\right)\le \Ncal_1\left(\varepsilon /32,\Hcal,N\right)$. Then it is easy to see the third inequality holds. 

Combining \Cref{eq:cover_9,eq:cover_8}, for $N\ge 2n+8b^2/\varepsilon^2$, any fixed $Z^N$ and any $h\in\Hcal$, we get that 
    \begin{align}
    \label{eq:cover_6}
        &~\PP\sbr{\left|\hat{\EE}_{w,n}[h(z)]-\hat{\EE}_{N}[h(z)]\right|>\varepsilon/4}\notag
        \\
        \le&~2|\Ccal|\exp\left(-\frac{n\varepsilon^2/64}{2\VV[h(z)]+4\varepsilon b}\right)+16\Ncal_1\left(\varepsilon /32,\Hcal,N\right)\exp\left(-\frac{N \varepsilon^2 }{2048b^2}\right).
    \end{align}

Finally, combining \Cref{eq:cover_5,eq:cover_6} and noticing the range of $N$ and the upper bound of $|\Ccal|$ completes the proof.
\end{proof}
\begin{corollary}[Uniform deviation bound using covering number (Bernstein's version, tail bound), adapted from Lemma F.4 of \citet{dong2019sqrt}]
\label{corr:uni_bern_tail}
For $b\ge 1$, let $\Hcal\subseteq (\Zcal\rightarrow[0,b])$ be a hypothesis class and $Z^n=(z_1,\ldots,z_n)\in\Zcal^n$, where $z_i$ are i.i.d. samples drawn from some distribution $\PP(z)$ supported on $\Zcal$. Then for any $h\in\Hcal$, we have
\begin{align*}
	\PP	\left[\abr{\EE[h(z)]-\frac{1}{n}\sum_{i=1}^n h(z_i)}>\varepsilon\right]
	\le 36\Ncal_1\rbr{\frac{\varepsilon^3 }{160b^2},\Hcal,\frac{10nb^2}{\varepsilon^2}}\exp\rbr{-\frac{n\varepsilon^2}{128\VV[h(z)] +256\varepsilon b}}	.
\end{align*}
\end{corollary}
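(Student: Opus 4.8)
The plan is to derive this tail bound as a direct specialization of the uniform Bernstein deviation bound in \pref{lem:uni_bern}, which already supplies the same exponential factor $\exp(-n\varepsilon^2/(128\VV[h(z)]+256\varepsilon b))$ inside an infimum over the free parameter $N\ge 2n+8b^2/\varepsilon^2$. First I would dispose of the trivial regime: since every $h\in\Hcal$ takes values in $[0,b]$, both $\EE[h(z)]$ and the empirical average lie in $[0,b]$, so their difference never exceeds $b$; hence for $\varepsilon>b$ the event has probability zero and the claimed bound holds vacuously. It therefore suffices to treat $\varepsilon\le b$, and I will use this together with $b\ge 1$ and $n\ge 1$ throughout.

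The key move is to instantiate the infimum at
\[
N=\frac{10nb^2}{\varepsilon^2}.
\]
I would first check feasibility: $N-\big(2n+8b^2/\varepsilon^2\big)=\big((10n-8)b^2-2n\varepsilon^2\big)/\varepsilon^2$, which is nonnegative because $\varepsilon^2\le b^2$ gives $2n\varepsilon^2\le 2nb^2\le(10n-8)b^2$ for $n\ge1$. Next I would compute the second covering radius at this $N$, namely $\varepsilon n/(16N)=\varepsilon^3/(160b^2)$, which is exactly the radius, and $N=10nb^2/\varepsilon^2$ the sample count, appearing in the corollary.

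It remains to collapse the two summands of \pref{lem:uni_bern} into a single term. For the covering numbers I would invoke monotonicity in the radius: since $\varepsilon^3/(160b^2)\le\varepsilon/32$ whenever $\varepsilon\le\sqrt5\,b$ (implied by $\varepsilon\le b$), the first-term radius $\varepsilon/32$ yields a covering number no larger than $\Ncal_1(\varepsilon^3/(160b^2),\Hcal,N)$, while the second term already uses precisely this covering number. For the exponentials I would show the first is dominated by the second: with the chosen $N$ the first exponent equals $N\varepsilon^2/(2048b^2)=5n/1024$, and the comparison $5n/1024\ge n\varepsilon^2/(128\VV[h(z)]+256\varepsilon b)$ reduces to $640\VV[h(z)]+1280\varepsilon b\ge 1024\varepsilon^2$, which holds because $\VV[h(z)]\ge0$ and $\varepsilon b\ge\varepsilon^2$ (again using $\varepsilon\le b$). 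Bounding both exponentials by $\exp(-n\varepsilon^2/(128\VV[h(z)]+256\varepsilon b))$ and adding the prefactors $32+4=36$ yields the stated inequality.

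The calculations are entirely routine, so there is no genuine obstacle; the only step demanding care is the exponential-domination inequality, where the constants $2048$, $128$ and $256$ must line up, and this is precisely where the hypotheses $\varepsilon\le b$ (for $\varepsilon b\ge\varepsilon^2$) and $b\ge1$ get used. One should also confirm that the reduction to $\varepsilon\le b$ is genuinely needed, since it underlies the feasibility of $N$, the covering-radius monotonicity, and the exponential comparison alike.
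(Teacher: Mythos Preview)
Your proposal is correct and follows essentially the same route as the paper: instantiate $N=10nb^2/\varepsilon^2$ in \pref{lem:uni_bern}, then use monotonicity of the covering number in the radius and compare the two exponentials to collapse both summands into one with prefactor $32+4=36$. You are in fact more careful than the paper, which silently relies on $\varepsilon\le b$ (needed for feasibility of $N$ and for $1280\varepsilon b\ge1024\varepsilon^2$) without disposing of the $\varepsilon>b$ case explicitly as you do.
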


\begin{proof}
In \pref{lem:uni_bern}, we can set $N=10nb^2/\varepsilon^2\ge 2n+8b^2/\varepsilon^2$, which indicates $\frac{N\varepsilon^2}{2048b^2}\ge\frac{n\varepsilon^2}{256\varepsilon b}\ge\frac{n\varepsilon^2}{128\VV[h(z)] +256\varepsilon b}$ and $\varepsilon/32\ge\varepsilon n/(16 N)$. Then noticing the monotonicity of covering number $\Ncal_1(\cdot,\Hcal,N)$ and $\exp(\cdot)$, we complete the proof.
\end{proof}

\begin{corollary}[Uniform deviation bound using covering number (Bernstein's version, confidence interval bound)]
	\label{corr:uni_bern_conf}
	For $b\ge 1$, let $\Hcal\subseteq (\Zcal\rightarrow[-b,b])$ be a hypothesis class with $\Pdim(\Hcal) \leq d_\Hcal$ and $Z^n=(z_1,\ldots,z_n)$ be i.i.d. samples drawn from some distribution $\PP(z)$ supported on $\Zcal$. Then with probability at least $1-\delta$, we have that for any $h\in\Hcal$,
	\begin{align*}
		\abr{\EE[h(z)]-\frac{1}{n}\sum_{i=1}^n h(z_i)} \le\sqrt{\frac{384d_\Hcal\VV[h(z)]\log\rbr{n/\delta}}{n}}+\frac{768d_\Hcal b\log\rbr{n/\delta}}{n}.
	\end{align*}
\end{corollary}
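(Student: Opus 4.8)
The plan is to deduce this confidence-interval bound from the tail bound in \pref{corr:uni_bern_tail} in three moves: reduce to a nonnegative class, replace the $\ell_1$ covering number by a pseudo-dimension bound, and invert the tail estimate at a judiciously chosen deviation level. First I would remove the sign by shifting: setting $g \defeq h + b$ for each $h \in \Hcal$ produces a class $\Gcal \defeq \cbr{h + b : h \in \Hcal}$ taking values in $[0,2b]$. Adding a constant leaves the mean deviation $\EE[h(z)] - \frac1n\sum_i h(z_i)$, the variance $\VV[h(z)]$, the pseudo-dimension, and all $\ell_1$ covering numbers unchanged, so it suffices to prove the bound for $\Gcal$ and translate back. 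Applying \pref{corr:uni_bern_tail} to $\Gcal$ with range parameter $2b$ yields, for every $h$,
\begin{align*}
\PP\sbr{\abr{\EE[h(z)]-\tfrac1n\textstyle\sum_{i=1}^n h(z_i)}>\veps} \le 36\,\Ncal_1\rbr{\tfrac{\veps^3}{640 b^2},\Hcal,\tfrac{40nb^2}{\veps^2}}\exp\rbr{-\tfrac{n\veps^2}{128\VV[h(z)]+512\veps b}}.
\end{align*}
I would then discard the covering number in favor of $d_\Hcal$ using the standard Haussler-type bound $\Ncal_1(\tau,\Hcal,N) \le e(d_\Hcal+1)(4eb/\tau)^{d_\Hcal}$, valid uniformly in $N$ for $[0,2b]$-valued classes of pseudo-dimension $d_\Hcal$.

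With the tail bound in hand, I would set $\veps$ equal to the claimed right-hand side, $\veps^\star \defeq \sqrt{384 d_\Hcal\VV[h(z)]\log(n/\delta)/n} + 768 d_\Hcal b\log(n/\delta)/n$, and verify that the exponent dominates the log of the prefactor, i.e.\ that $\frac{n(\veps^\star)^2}{128\VV[h(z)]+512\veps^\star b} \ge \log(36\,\Ncal_1(\cdot)/\delta)$. Writing $\veps^\star = A + C$ with $A$ the square-root term and $C$ the linear term and using $\frac{X}{a_1+a_2}\ge \frac12\min(X/a_1,X/a_2)$, the variance branch contributes $\frac{nA^2}{256\VV[h(z)]} = \tfrac{3}{2}d_\Hcal\log(n/\delta)$ and the range branch contributes $\frac{n\veps^\star}{1024 b}\ge \frac{nC}{1024 b} = \tfrac34 d_\Hcal\log(n/\delta)$; the smaller of these shows the exponent is at least a constant multiple of $d_\Hcal\log(n/\delta)$.

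The main obstacle is ensuring that $\log(36\,\Ncal_1(\cdot)/\delta)$ is genuinely $O(d_\Hcal\log(n/\delta))$, i.e.\ that neither the resolution $\tau = (\veps^\star)^3/(640b^2)$ nor the range $b$ reintroduces uncontrolled factors inside the logarithm. Here the choice of $\veps^\star$ pays off: since $\veps^\star \ge C = 768 d_\Hcal b\log(n/\delta)/n \ge 768 b\log(n/\delta)/n$, we get $b/\tau = 640 b^3/(\veps^\star)^3 \le 640\, n^3/(768\log(n/\delta))^3 \le n^3$, so the explicit $b^3$ in the numerator is cancelled by the $(\veps^\star)^3$ coming from the resolution, leaving $\log(4eb/\tau) = O(\log n)$ with no residual $b$-dependence. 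Consequently $\log\Ncal_1(\cdot) \le \log(e(d_\Hcal+1)) + d_\Hcal\log(4eb/\tau) = O(d_\Hcal\log n)$ and thus $\log(36\,\Ncal_1(\cdot)/\delta) = O(d_\Hcal\log(n/\delta))$. Choosing the universal constants $384,768$ so that this last quantity stays below the constant-times-$d_\Hcal\log(n/\delta)$ produced in the previous step makes the tail probability at $\veps^\star$ at most $\delta$, which, after undoing the shift, is exactly the stated confidence interval.
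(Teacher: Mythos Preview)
Your approach is essentially the paper's: shift to a nonnegative class, apply the Bernstein tail bound of \pref{corr:uni_bern_tail}, bound the $\ell_1$ covering number via pseudo-dimension (the Haussler bound, \pref{corr:l1cvnb}), and invert. The only difference is in the inversion mechanics. The paper sets the tail probability equal to $\delta$ and works with the resulting implicit equation for $\varepsilon$; from that equation it extracts $n \ge 512b/\varepsilon$, i.e.\ $b/\varepsilon \le n/512$, which is what eliminates the $b$-dependence inside the log of the covering number and yields $\log(36\,\Ncal_1(\cdot)/\delta)\le 3d_\Hcal\log(n/\delta)$. Your lower bound $\veps^\star \ge C \gtrsim b/n$ plays exactly the same role, so the two arguments are two executions of the same idea.

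One caveat on constants: your two-case split only gives the exponent $\ge \tfrac{3}{4}d_\Hcal\log(n/\delta)$, which does not dominate a log-covering-number of order $3d_\Hcal\log n$ with the stated constants $384$ and $768$; saying ``choosing the universal constants'' glosses over this, since those constants are fixed by the statement. The paper's implicit-equation inversion tracks the constants somewhat more tightly (the equation gives both the $\sqrt{128VL/n}+512bL/n$ form and the $b/\varepsilon$ bound simultaneously), so if you want to nail the constants you should follow that route rather than the direct plug-in.
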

\begin{remark}
The slight difference from the standard Bernstein's inequality is that we have $\log (n)$ term on the numerator. It is mostly due to the $\veps$ dependence in the covering number.
\end{remark}
\begin{proof}
	Let $\Hcal' = \{(h(\cdot) + b: h \in \Hcal \}\subseteq (\Zcal \to [0, 2b])$. We know that shifting only changes the range of the function, but does not change the variance of the function. Therefore, applying \pref{corr:uni_bern_tail} gives us that for any $h'\in\Hcal'$,
\begin{align*}
	\PP	\left[\abr{\EE[h'(z)]-\frac{1}{n}\sum_{i=1}^n h'(z_i)} >\varepsilon\right]\hspace{-.15em}
	\le 36\Ncal_1\rbr{\frac{\varepsilon^3 }{640b^2},\Hcal',\frac{40nb^2}{\varepsilon^2}}\exp\rbr{-\frac{n\varepsilon^2}{128\VV[h(z)] +512\varepsilon b}} \hspace{-.15em}	.
\end{align*}
	
	From \pref{def:covnum}, we know that $\Hcal$ and $\Hcal'$ have the same covering number, i.e., $\Ncal_1(\varepsilon',\Hcal',m)=\Ncal_1(\varepsilon',\Hcal,m)$ for any $\varepsilon'\in\RR_+$ and $m\in\NN_+$. In addition, we have for any $h'\in\Hcal'$, we have
	$$\abr{\frac{1}{n}\sum_{i=1}^n h'(z_i)-\EE[h'(z)]}=\abr{\frac{1}{n}\sum_{i=1}^n h(z_i)+b-\EE[h(z)+b]}=\abr{\frac{1}{n}\sum_{i=1}^n h(z_i)-\EE[h(z)]}.$$
	This implies that for any $h\in\Hcal$,
	\begin{align}
	\label{eq:bernstein}
	\PP	\sbr{\abr{\EE[h(z)]\hspace{-.15em}-\hspace{-.15em}\frac{1}{n}\sum_{i=1}^n h(z_i)}\hspace{-.15em}>\hspace{-.15em}\varepsilon\hspace{-.15em}}\hspace{-.15em}\le 36\Ncal_1\hspace{-.15em}\rbr{\hspace{-.15em}\frac{\varepsilon^3 }{640b^2},\Hcal,\frac{40nb^2}{\varepsilon^2}\hspace{-.15em}}\exp\rbr{\hspace{-.25em}\frac{-n\varepsilon^2}{128\VV[h(z)] +512\varepsilon b}\hspace{-.25em}}	.
\end{align}

	Setting RHS of \Cref{eq:bernstein} to be $\delta$, we get 
	\begin{align}
	\label{eq:bernstein_n_eq}
	n=&~\frac{(128\VV[h(z)] +512\varepsilon b)\log\rbr{36\Ncal_1\rbr{\frac{\varepsilon^3 }{640b^2},\Hcal,\frac{40nb^2}{\varepsilon^2}}/\delta}}{\varepsilon^2}.
	\end{align}
	This implies the following inequality
	\[
	\varepsilon \le \sqrt{\frac{128\VV[h(z)]\log\rbr{36\Ncal_1\rbr{\frac{\varepsilon^3 }{640b^2},\Hcal,\frac{40nb^2}{\varepsilon^2}}/\delta}}{n}}+\frac{512b\log\rbr{36\Ncal_1\rbr{\frac{\varepsilon^3 }{640b^2},\Hcal,\frac{40nb^2}{\varepsilon^2}}/\delta}}{n},
	\]
	which can be verified by substituting $n$ in \Cref{eq:bernstein_n_eq}.
	
    Applying \pref{corr:l1cvnb} and simplifying the expression yields
    \begin{align*}
	\log\rbr{36\Ncal_1\rbr{\frac{\varepsilon^3 }{640b^2},\Hcal,\frac{72nb^2}{\varepsilon^2}}/\delta}\le&3d_\Hcal\log\rbr{\frac{54b}{\varepsilon\delta}}.
	\end{align*}
	
	From \Cref{eq:bernstein_n_eq}, we also have $n\ge \frac{512b}{\varepsilon}$. Consequently, we have
	\begin{align*}
   	\varepsilon \le \sqrt{\frac{384d_\Hcal\VV[h(z)]\log\rbr{n/\delta}}{n}}+\frac{768d_\Hcal b\log\rbr{n/\delta}}{n}.
	\end{align*}
	Plugging this into \Cref{eq:bernstein} completes the proof.
\end{proof}
\begin{lemma}[Bounding covering number by pseudo dimension \citep{haussler1995sphere}]
	\label{lem:l1cvnb}
	Given a hypothesis class $\Hcal \subseteq (\Zcal \to [0,1])$ with $\Pdim(\Hcal) \leq d_\Hcal$, we have for any $Z^n \in \Zcal^n$,
	\[
	\Ncal_1(\varepsilon,\Hcal,Z^n) \leq e \left(d_\Hcal + 1\right) \left( \frac{2e}{\varepsilon} \right)^{d_\Hcal}.\]
\end{lemma}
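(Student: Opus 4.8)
The plan is to prove this classical bound by the standard two-move route: reduce covering to packing, then control the packing number combinatorially through the pseudo-dimension. First I would note that $\Ncal_1(\varepsilon,\Hcal,Z^n)$ depends on $\Hcal$ only through its restriction $\Hcal|_{Z^n}\subseteq[0,1]^n$, so it suffices to work with a fixed finite set of vectors in the cube under the normalized metric $\rho(u,v)=\frac1n\sum_{i=1}^n|u_i-v_i|$. A \emph{maximal} $\varepsilon$-separated (packing) subset $P\subseteq\Hcal|_{Z^n}$ is automatically an $\varepsilon$-cover in the sense of \pref{def:covnum}: any $h$ at distance $>\varepsilon$ from every element of $P$ could be added to $P$ while keeping pairwise distances $>\varepsilon$, contradicting maximality, so every $h$ lies within $\varepsilon$ of some packing point, and $P\subseteq\Hcal$ is an internal cover. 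Hence $\Ncal_1(\varepsilon,\Hcal,Z^n)\le M(\varepsilon)$, the $\varepsilon$-packing number, and the whole task reduces to bounding $M(\varepsilon)$ by $e(d_\Hcal+1)(2e/\varepsilon)^{d_\Hcal}$.

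Next I would bring in the pseudo-dimension. By definition $\Pdim(\Hcal)$ equals the VC dimension of the binary subgraph class $\{(z,t)\mapsto\one[h(z)\ge t]:h\in\Hcal\}$ over $\Zcal\times[0,1]$. The key combinatorial lever is therefore the Sauer--Shelah lemma applied to this binarized class: on any finite set of $m$ (point, threshold) pairs the number of realizable patterns is at most $\sum_{i=0}^{d_\Hcal}\binom{m}{i}\le(em/d_\Hcal)^{d_\Hcal}$. To convert a statement about $\ell_1$-separation of real vectors into one about Hamming-separation of binary strings, I would use a randomization argument: draw a random multiset of coordinates together with random thresholds, so that a pair of vectors that is $\varepsilon$-separated in $\rho$ is mapped, with controlled probability, to binary strings that differ on the subsample. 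Choosing the subsample size appropriately makes the induced map from $P$ into the binarized restriction almost injective, and then Sauer--Shelah on the subsample caps $|P|$.

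The main obstacle is obtaining the \emph{sharp} constant, i.e.\ the exponent exactly $d_\Hcal$ and the factor $(2e/\varepsilon)^{d_\Hcal}$ with no spurious $\log(1/\varepsilon)$ in the exponent; a naive discretize-then-Sauer--Shelah argument loses precisely such a $\log(1/\varepsilon)$ factor. This is exactly the content of Haussler's sphere-packing theorem, whose proof refines the randomization above with a one-inclusion/shifting analysis: it bounds the relevant ``density'' (the average out-degree of the one-inclusion graph on the restricted class) by the pseudo-dimension $d_\Hcal$ rather than by the much larger pattern cardinality. Concretely, I would invoke that density bound to show a random single-coordinate perturbation separates a $(1-d_\Hcal/m)$ fraction of adjacent packing elements, carry out the probabilistic averaging over the subsample size $m$, and optimize $m$ against $\varepsilon$; this optimization is what yields the clean $e(d_\Hcal+1)(2e/\varepsilon)^{d_\Hcal}$. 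Since this combinatorial core is precisely \citet{haussler1995sphere}, the cleanest route for the paper is to perform the reduction to the packing number (first two paragraphs) and then cite Haussler's theorem for the final constant, rather than re-deriving the one-inclusion argument in full.
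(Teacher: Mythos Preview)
Your proposal is a correct sketch of Haussler's sphere-packing argument, and your final recommendation---reduce to packing and then cite \citet{haussler1995sphere} for the sharp constant---is exactly what the paper does, except that the paper goes further and simply states the lemma with attribution and gives no proof whatsoever. In other words, the paper treats this as a black-box citation; your outline of the covering-to-packing reduction and the one-inclusion density bound is accurate but already exceeds what appears in the paper.
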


\begin{corollary}[Bounding covering number by pseudo dimension]
	\label{corr:l1cvnb}
	Given a hypothesis class $\Hcal \subseteq (\Zcal \to [a,b])$ with $\Pdim(\Hcal) \leq d_\Hcal$, for any $Z^n \in \Zcal^n$, we have
	\[
	\Ncal_1(\varepsilon,\Hcal,Z^n) \leq e \left(d_\Hcal + 1\right) \left( \frac{2e(b-a)}{\varepsilon} \right)^{d_\Hcal} \quad  \text{ and } \quad
	\Ncal_1(\varepsilon,\Hcal,n) \leq  \left( \frac{4e^2(b-a)}{\varepsilon} \right)^{d_\Hcal}.
	\]
\end{corollary}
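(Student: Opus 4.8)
The plan is to reduce the $[a,b]$-valued case to the $[0,1]$-valued case already established in \pref{lem:l1cvnb} via an affine rescaling. First I would introduce the rescaled class $\Hcal' := \{(h-a)/(b-a) : h \in \Hcal\} \subseteq (\Zcal \to [0,1])$. The key structural observation is that pseudo-dimension is invariant under the strictly increasing affine map $t \mapsto (t-a)/(b-a)$: a set of points is pseudo-shattered by $\Hcal$ with a given threshold vector if and only if it is pseudo-shattered by $\Hcal'$ with the affinely transformed thresholds, so $\Pdim(\Hcal') = \Pdim(\Hcal) \le d_\Hcal$.

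Next I would relate the two covering numbers exactly. For any $h_1, h_2 \in \Hcal$ with rescaled counterparts $g_1, g_2 \in \Hcal'$, one has $\frac{1}{n}\sum_{i} |g_1(z_i) - g_2(z_i)| = \frac{1}{b-a}\cdot\frac{1}{n}\sum_{i} |h_1(z_i) - h_2(z_i)|$. Since the rescaling is a bijection between $\Hcal$ and $\Hcal'$, an $\varepsilon$-cover of $\Hcal$ (in the sense of \pref{def:covnum}) corresponds precisely to an $(\varepsilon/(b-a))$-cover of $\Hcal'$, which gives the exact identity $\Ncal_1(\varepsilon,\Hcal,Z^n) = \Ncal_1(\varepsilon/(b-a),\Hcal',Z^n)$. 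Applying \pref{lem:l1cvnb} to $\Hcal'$ at resolution $\varepsilon/(b-a)$ then yields the first claimed bound $e(d_\Hcal+1)\rbr{\frac{2e(b-a)}{\varepsilon}}^{d_\Hcal}$.

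Finally, for the second (cleaner) bound I would take the maximum over $Z^n \in \Zcal^n$, which preserves the inequality because the right-hand side is independent of $Z^n$, and then absorb the prefactor into the exponentiated term. Concretely, it suffices to verify the elementary inequality $e(d_\Hcal+1) \le (2e)^{d_\Hcal}$ for $d_\Hcal \ge 1$; multiplying through by $(2e)^{d_\Hcal}\rbr{\frac{b-a}{\varepsilon}}^{d_\Hcal}$ converts the first bound into $\rbr{\frac{4e^2(b-a)}{\varepsilon}}^{d_\Hcal}$, as desired.

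I do not expect a substantial obstacle here, since the content is essentially a change of variables. The only two points requiring a sentence of care are the invariance of pseudo-dimension under affine rescaling (which I would state explicitly rather than assume) and the arithmetic check $e(d_\Hcal+1) \le (2e)^{d_\Hcal}$, which holds with equality at $d_\Hcal = 1$ and with increasing room to spare as $d_\Hcal$ grows.
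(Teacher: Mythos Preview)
Your proposal is correct and essentially identical to the paper's proof: both rescale to $\Hcal' = \{(h-a)/(b-a): h \in \Hcal\}$, note pseudo-dimension invariance under the affine map, use the exact identity $\Ncal_1(\varepsilon,\Hcal,Z^n) = \Ncal_1(\varepsilon/(b-a),\Hcal',Z^n)$, and apply \pref{lem:l1cvnb}. You are slightly more explicit than the paper about the arithmetic absorption $e(d_\Hcal+1) \le (2e)^{d_\Hcal}$ for the second bound, which the paper leaves as ``simple algebra.''
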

\begin{proof}
	Let $\Hcal' = \{(h(\cdot) - a)/(b - a): h \in \Hcal \}\subseteq (\Zcal \to [0, 1])$. From the definition of pseudo dimension, it is easy to see $d_{\Hcal'} = d_{\Hcal}$. Noticing \pref{def:covnum} and applying \pref{lem:l1cvnb}, we get
	\[
	\Ncal_1(\varepsilon,\Hcal,Z^n) = \Ncal_1(\varepsilon / (b - a),\Hcal',Z^n) = \left(d_{\Hcal} + 1\right) \left( \frac{2e (b - a)}{\varepsilon} \right)^{d_{\Hcal}}.\] 
	Noticing \pref{def:covnum} and following simple algebra, we can show the second part.
\end{proof}
\begin{definition}[VC-dimension]
For hypothesis class $\Hcal \subseteq (\Xcal \to \{0,1\})$, we define its VC-dimension $\textrm{VC-dim}(\Hcal)$ as the maximal cardinality of a set $X = \{x_1, \ldots, x_{|X|}\} \subseteq \Xcal$ that satisfies $|\Hcal_{X}| = 2^{|X|}$ (or $X$ is \emph{shattered} by $\Hcal$), where $\Hcal_{X}$ is the restriction of $\Hcal$ to $X$, i.e., $\{(h(x_1), \ldots, h(x_{|X|})): h\in\Hcal \}$. 
\end{definition}

\begin{definition}[Pseudo dimension \citep{haussler2018decision}]\label{def:pdim}
For hypothesis class $\Hcal \subseteq (\Xcal \to \RR)$, we define its pseudo dimension $\Pdim(\Hcal)$ as $\Pdim(\Hcal) = \VCdim(\Hcal^+)$, where $\Hcal^+ = \{(x, \xi) \mapsto \one[h(x) > \xi] : h \in \Hcal\} \subseteq (\Xcal \times \mathbb{R} \to \{0, 1\})$.
\end{definition}

\begin{lemma}[Sauer's lemma]
\label{lem:sauerlemma}
For the hypothesis class $\Hcal \subseteq (\Xcal \to \{0,1\})$ with $\VCdim(\Hcal) $ $= d_{\mathrm{VC}} < \infty$ and any $X = (x_1,x_2,\dotsc,x_n) \in \Xcal^n$, we have
\begin{align}
|\Hcal_X| \leq \left(n+1\right)^{d_\mathrm{VC}},
\end{align}
where $\Hcal_X \coloneqq \{(h(x_1), h(x_2), \dotsc, h(x_n)):h \in \Hcal\}$ is the restriction of $\Hcal$ to $X$.
\end{lemma}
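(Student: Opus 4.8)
The plan is to derive the stated bound from the stronger combinatorial fact known as Pajor's lemma, after which the claim follows by elementary counting. Concretely, I would first establish that for any class $\Hcal \subseteq (\Xcal \to \{0,1\})$ and any $X = (x_1, \ldots, x_n)$ (viewed as a set), the number of distinct labelings obeys
\[
\abr{\Hcal_X} \le \abr{\cbr{S \subseteq X : S \text{ is shattered by } \Hcal}},
\]
where $S$ is shattered if $\Hcal$ realizes all $2^{\abr{S}}$ binary labelings of $S$. Granting this, the conclusion is immediate: since $\VCdim(\Hcal) = d_{\mathrm{VC}}$, no subset of size exceeding $d_{\mathrm{VC}}$ is shattered, so the right-hand side is at most the number of subsets of an $n$-element set of size at most $d_{\mathrm{VC}}$, i.e. $\sum_{i=0}^{d_{\mathrm{VC}}} \binom{n}{i}$. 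A short padding injection — sending each subset $\cbr{a_1 < \cdots < a_k}$ with $k \le d_{\mathrm{VC}}$ to the tuple $(0,\ldots,0,a_1,\ldots,a_k) \in \cbr{0,1,\ldots,n}^{d_{\mathrm{VC}}}$ (whose nonzero entries recover $S$) — then yields $\sum_{i=0}^{d_{\mathrm{VC}}} \binom{n}{i} \le (n+1)^{d_{\mathrm{VC}}}$, establishing the lemma.

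The substance of the argument is Pajor's lemma, which I would prove by induction on $n = \abr{X}$. The base case $n=0$ is trivial: $\Hcal_X$ is at most the single empty labeling and the empty set is vacuously shattered. For the inductive step I would isolate the last coordinate $x_n$, set $X' = (x_1,\ldots,x_{n-1})$, and partition $\Hcal_X$ by its projection onto $X'$. Letting $\Hcal_0 \coloneqq \Hcal_{X'}$ denote that projection and $\Hcal_1$ denote the sub-family of $X'$-labelings that extend into $\Hcal_X$ under \emph{both} values of $x_n$, the key identity is $\abr{\Hcal_X} = \abr{\Hcal_0} + \abr{\Hcal_1}$, since each labeling in the projection accounts for one vector plus a second vector exactly when it lies in $\Hcal_1$.

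I would then apply the inductive hypothesis to $\Hcal_0$ and $\Hcal_1$ — both classes over the smaller ground set $X'$ — bounding each by its count of shattered subsets. The decisive structural facts are that (i) every $S \subseteq X'$ shattered by $\Hcal_0$ is also shattered by $\Hcal$ and avoids $x_n$, and (ii) every $S \subseteq X'$ shattered by $\Hcal_1$ produces $S \cup \cbr{x_n}$ shattered by $\Hcal$, since $\Hcal_1$ realizes all labelings of $S$ together with both settings of $x_n$. Because the first family consists of subsets omitting $x_n$ and the second of subsets containing it, the two are disjoint, so the number of subsets of $X$ shattered by $\Hcal$ is at least $\abr{\Hcal_0} + \abr{\Hcal_1} = \abr{\Hcal_X}$, completing the induction.

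The main obstacle I anticipate is formalizing fact (ii) and the disjointness cleanly: carefully verifying that doubling over $x_n$ in $\Hcal_1$ genuinely transfers shattering of $S$ to shattering of $S \cup \cbr{x_n}$ by $\Hcal$, and that no shattered subset is counted in both cases. The remaining pieces — the counting identity and the binomial-to-$(n+1)^{d_{\mathrm{VC}}}$ padding step — are routine.
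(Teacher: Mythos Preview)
Your argument via Pajor's lemma is correct and complete; the inductive identity $\abr{\Hcal_X}=\abr{\Hcal_0}+\abr{\Hcal_1}$, the transfer of shattering from $S$ to $S\cup\{x_n\}$, and the padding injection all go through as described. Note, however, that the paper does not supply its own proof of this lemma: it is stated as a classical result (Sauer's lemma) and invoked without argument, so there is no paper proof to compare against.
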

\begin{lemma}
	\label{lem:pseudodim_com}
	Let $\Zcal \coloneqq \Xcal \times \Acal$ with $|\Acal| = K$. Let $\Pi \subseteq (\Xcal \to \Acal)$ be a policy class with Natarajan dimension $\Ndim(\Pi) = d_\Pi\ge 6$, $\Fcal \subseteq (\Zcal \to [0,L])$ with pseudo dimension $\Pdim(\Fcal) = d_{\Fcal}\ge 6$, and $\Gcal_1,\Gcal_2 \subseteq (\Xcal \to [0,L])$ with pseudo dimension $\Pdim(\Gcal_1) = d_{\Gcal_1} \ge 6$ and $\Pdim(\Gcal_2) = d_{\Gcal_2} \ge 6$. Then we have the following:
	\begin{enumerate}
		\item The hypothesis class $\Hcal_1 = \{x \to g_1(x)g_2(x): g_1\in\Gcal_1,g_2 \in \Gcal_2\}$ has pseudo dimension $\Pdim(\Hcal_1) \leq 32 (d_{\Gcal_1} \log(d_{\Gcal_1})+d_{\Gcal_2} \log(d_{\Gcal_2}))$.
		\item The hypothesis class $\Hcal_2 = \{x \to g_1(x) + g_2(x): g_1,\in\Gcal_1,g_2 \in \Gcal_2\}$ has pseudo dimension $\Pdim(\Hcal_2) \leq 32 (d_{\Gcal_1} \log(d_{\Gcal_1})+d_{\Gcal_2} \log(d_{\Gcal_2}))$.
		\item The hypothesis class $\Hcal_3 = \{(x,a) \to  f(x,a)\one[a = \pi(x)]: f \in \Fcal, \pi \in \Pi\}$ has pseudo dimension $\Pdim(\Hcal_3) \leq 6(d_\Pi + d_\Fcal) \log(2eK(d_\Pi + d_\Fcal))$.
	\end{enumerate}
\end{lemma}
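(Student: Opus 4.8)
The plan is to prove all three bounds by the same two-step recipe: for a composite class $\Hcal$, first estimate the \emph{shatter coefficient} of its subgraph class $\Hcal^+$ (\pref{def:pdim}) --- i.e.\ the maximal number of distinct threshold-dichotomies $\sbr{\one[h(x_i) > \xi_i]}_{i=1}^n$ realizable on $n$ arbitrary augmented points --- in terms of the shatter coefficients of the constituent classes, and then invert Sauer's lemma (\pref{lem:sauerlemma}). Concretely, once one shows a bound of the form $\Pi_{\Hcal^+}(n) \le (c\,n)^{D}$ with $D$ a suitable combination of the input dimensions, shattering $n$ points forces $2^n \le (c\,n)^D$, and solving this inequality yields $n = O(D\log D)$; this is exactly where the $d\log d$ shape of the stated bounds comes from, and tracking the polynomial constants gives the factor $32$ (resp.\ $6$).

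For Part 3 I would exploit the multiplicative structure of the composite directly. Since $f \ge 0$, for a threshold $\xi_i \in [0,L)$ one has $\one[f(x_i,a_i)\one[a_i=\pi(x_i)] > \xi_i] = \one[a_i = \pi(x_i)]\cdot \one[f(x_i,a_i) > \xi_i]$ (thresholds outside $[0,L)$ produce forced bits and cost only constants). The final dichotomy is therefore the pointwise \textbf{AND} of a policy-indicator pattern $\sbr{\one[\pi(x_i)=a_i]}_i$ and an $\Fcal^+$-subgraph pattern, and an AND is determined by its two operands. Hence $\Pi_{\Hcal_3^+}(n)$ is at most the product of (i) the number of distinct policy-indicator patterns, bounded by the Natarajan growth function $\le n^{d_\Pi}K^{2d_\Pi}$ (Natarajan's lemma, as in \citet{daniely2011multiclass}), and (ii) the number of subgraph patterns of $\Fcal$, bounded by $(n+1)^{d_\Fcal}$ via \pref{lem:sauerlemma} applied to $\Fcal^+$. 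Multiplying and inverting as above gives $\Pdim(\Hcal_3) = O\rbr{(d_\Pi+d_\Fcal)\log\rbr{K(d_\Pi+d_\Fcal)}}$, matching the claimed $6(d_\Pi+d_\Fcal)\log(2eK(d_\Pi+d_\Fcal))$; the two $K$ factors in the Natarajan bound are precisely what produce the $\log K$ term.

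For Parts 1 and 2, I would first collapse the product into the sum: because $g_1,g_2$ are non-negative and $\Pdim$ is invariant under a fixed strictly increasing reparametrization of the output, for $\xi_i>0$ the event $\cbr{g_1 g_2 > \xi_i}$ equals $\cbr{\log g_1 + \log g_2 > \log \xi_i}$, so $\Pdim(\Hcal_1)=\Pdim(\cbr{\log g_1 + \log g_2})$ (the degenerate $\xi_i\le 0$ and zero-value cases contribute only determined bits). It then suffices to bound the sum class $\Hcal_2$. The main obstacle is exactly here: unlike the AND in Part 3, the event $\cbr{g_1(x_i)+g_2(x_i) > \xi_i}$ does \emph{not} factor into fixed threshold tests on $g_1$ and $g_2$ separately, so the growth function of $\Hcal_2^+$ cannot be controlled by a naive product. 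The plan is to bound $\Pi_{\Hcal_2^+}(n)$ by a sign-pattern/arrangement argument: view the achievable value vectors of $g_1$ and $g_2$ as points in $\R^n$, note that the $n$ dichotomy bits are the signs of the affine forms $u_i+v_i-\xi_i$, and bound the number of realizable sign vectors using the VC structure of the two coordinate-threshold set systems (equivalently, the bounded pseudo-dimensions $d_{\Gcal_1},d_{\Gcal_2}$). Carrying this through yields $\Pi_{\Hcal_2^+}(n)\le (c\,n)^{c'(d_{\Gcal_1}+d_{\Gcal_2})}$, and the common inversion step then delivers $32\rbr{d_{\Gcal_1}\log d_{\Gcal_1}+d_{\Gcal_2}\log d_{\Gcal_2}}$. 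I expect making this arrangement-counting step fully rigorous --- rather than the clean Sauer inversion that follows it --- to be the hardest part of the argument.
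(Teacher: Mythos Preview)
Your Part 3 is the paper's argument: the composite indicator is the AND of a policy-match bit and an $\Fcal^+$-subgraph bit, the growth function factors as a product, and Sauer (for $\Fcal^+$) together with the Natarajan growth bound (for $\Pi$) gives the polynomial you then invert. The paper also cites \citet{jiang2017contextual} for the bookkeeping, but the structure is identical to yours.

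For Parts 1 and 2 you diverge from the paper in two respects. First, you reduce the product case to the sum case via a logarithm; the paper does not do this and instead handles the two cases in parallel with the same decomposition device. Second --- and this is the substantive point --- the paper uses exactly the factorization you dismiss. For the product it writes $\one[g_1(x)g_2(x)>\zeta]=\one[g_1(x)>\xi]\cdot\one[g_2(x)>\zeta/\xi]$ ``for some $\xi\in\RR$'' and from this directly asserts $|\Hcal_{1,X}^+|\le|\Gcal_{1,X'}^+|\cdot|\Gcal_{2,X''}^+|$, applies \pref{lem:sauerlemma} to each factor, and inverts; Part 2 is dispatched in one line with the analogous split $\one[g_1(x)+g_2(x)>\zeta]=\one[g_1(x)>\xi]+\one[g_2(x)>\zeta-\xi]$. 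The threshold $\xi$ here is chosen per instance (it depends on $g_1,g_2,x,\zeta$), which is precisely the issue you flag: the paper does not explain why a varying $\xi$ still produces a \emph{fixed} pair $(X',X'')$ against which the product bound holds, and the displayed identity for the sum is not even an equality of $\{0,1\}$-valued functions. So your caution is well placed; the paper's route is much shorter than your arrangement plan but its product-of-growth-functions step carries more weight than the one line it is given.

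In summary: the paper does not go through your log reduction or any sign-pattern/arrangement machinery; it simply posits an indicator decomposition and a product bound, then runs the same Sauer-inversion you describe. Your proposal is more circumspect about the hard step but does not close it either --- you have not named the concrete lemma that bounds $\Pi_{\Hcal_2^+}(n)$ --- so on Parts 1 and 2 your plan and the paper's proof share the same soft spot, reached from different directions.
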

\begin{proof} 
Firstly, w.l.o.g. we assume that $L=1$ since in the pseudo dimension we can just scale all $\xi$ in \pref{def:pdim} by $1/L$.

\textbf{Part 1.} Let $\Hcal_1^+ \coloneqq \{(x,\zeta) \to \one[g_1(x)g_2(x) > \zeta]: g_1\in\Gcal_1,g_2 \in \Gcal_2\} \subseteq (\Xcal \times \RR \to \{0,1\})$. From the fact that $\Pdim(\Hcal)=\VCdim(\Hcal_1^+)$, it suffices to prove that $\log|\Hcal_{1,X}^+| \le n$ for any $X = ((x_1,\zeta_1),(x_2,\zeta_2),\dotsc,(x_n,\zeta_n)) \in (\Xcal \times \RR)^n$, where $n = 32 (d_{\Gcal_1} \log(d_{\Gcal_1})+d_{\Gcal_2} \log(d_{\Gcal_2}))$ and $\Hcal_{1,X}^+$ refers to the restriction of $\Hcal_1^+$ to $X$.

Similarly we define $\Gcal_{1}^+ \coloneqq \{(x,\xi) \to \one[g_1(x) > \xi]: g_1 \in \Gcal_1\} \subseteq (\Xcal \times \RR \to \{0,1\})$ and $\Gcal_{2}^+ \coloneqq \{(x,\xi') \to \one[g_2(x) > \xi']: g_2 \in \Gcal_2\} \subseteq (\Xcal \times \RR \to \{0,1\})$. 
For $X' = ((x_1,\xi_1),(x_2,\xi_2),\dotsc,(x_n,\xi_n))$ $\in (\Xcal \times \RR)^n,X'' =  ((x_1,\xi_1'),(x_2,\xi_2'),\dotsc,(x_n,\xi_n')) \in (\Xcal \times \RR)^n$, we use $\Gcal_{1,X'}^+$ to denote the restriction of $\Gcal_{1}^+$ to $X'$ and use $\Gcal_{2,X''}^+$ to denote the restriction of $\Gcal_{2}^+$ to $X''$. Since $\one[g_1(x)g_2(x) > \zeta]$ can be decomposed as $\one[g_1(x)> \xi]\one[g_2(x) > \zeta/\xi]$ for some $\xi\in\RR$, by setting $\xi'=\zeta/\xi$ we can see that 
$\left|\Hcal_{1,X}^+\right| \le \left|\Gcal_{1,X'}^+\right|\left|\Gcal_{2,X''}^+\right|.$

Applying Lemma~\ref{lem:sauerlemma}, we have
$\left|\Gcal_{1,X'}^+\right| \leq \left(n + 1\right)^{d_{\Gcal_1}}, \left|\Gcal_{2,X''}^+\right| \leq \left(n + 1\right)^{d_{\Gcal_2}}.$
Therefore,
\begin{align*}
\log\left|\Hcal_{1,X}^+\right| \leq &~  d_{\Gcal_1}\log(n+1)+d_{\Gcal_2}\log(n+1)
\\
\leq &~ (d_{\Gcal_1}+d_{\Gcal_2})\log(32 (d_{\Gcal_1} \log(d_{\Gcal_1})+d_{\Gcal_2} \log(d_{\Gcal_2}))+1)\\
\leq &~ (d_{\Gcal_1}+d_{\Gcal_2})\log(64 (d_{\Gcal_1}^2 +d_{\Gcal_2}^2))
\leq 2(d_{\Gcal_1}+d_{\Gcal_2})\log(64 (d_{\Gcal_1} +d_{\Gcal_2}))
\\
\le&~ 32 (d_{\Gcal_1} \log(d_{\Gcal_1}) + d_{\Gcal_2} \log(d_{\Gcal_2}))=n.
\end{align*}

\textbf{Part 2.} Note that we have the decomposition $\one[g_1(x)+g_2(x) > \zeta] = \one[g_1(x) > \xi] + \one[g_2(x) > \zeta - \xi]$. The remaining steps can be similarly followed from Part 1.

\textbf{Part 3.} We use $\Hcal_{3,X}^+$ to denote the restriction of $\Hcal_3^+\coloneqq\{(x,a,\zeta) \to \one[ f(x,a)\one[a = \pi(x)]]\ge \zeta: f \in \Fcal, \pi \in \Pi\}$ to $X=((x_1,a_1,\zeta_1),(x_2,a_2,\zeta_2),\ldots,(x_n,a_n,\zeta_n))$. It suffices to show an upper bound of $|\Hcal_{3,X}^+|$. 

For $h\in\Hcal_3^+$, we can decompose it as $\one[a=\pi(x)] \one[f(x,a)\ge \zeta]$. Following the proof of Lemma 21 in \citet{jiang2017contextual}, we can also get $|\Hcal_{3,X}^+|\le |\Pi_X||\Fcal_X^+|$, where $\Pi_X$ denotes the restriction of $\Pi$ to $(x_1,x_2,\ldots,x_n)$ and $\Fcal_X^+$ denotes the restriction of $\Fcal^+$ to $((x_1,a_1,\zeta_1),(x_2,a_2,\zeta_2),\ldots,(x_n,a_n,\zeta_n))$. Notice that here $\Hcal_{3,X}^+$ can not be produced by the Cartesian product of $\Pi_X$ and $\Fcal_X^+$, but the upper bound still holds. The remaining steps similarly follow from \citet{jiang2017contextual}.
\end{proof}

\end{document}